%% file: main.tex
\PassOptionsToPackage{table}{xcolor}
\documentclass[11pt]{article}
\usepackage{enumerate}
\usepackage[OT1]{fontenc}
\usepackage[usenames]{color}
\usepackage{smile}
\usepackage{booktabs}
\usepackage[colorlinks,
linkcolor=red,
anchorcolor=blue,
citecolor=blue
]{hyperref}
\usepackage{mathrsfs}
\usepackage{fullpage}
\usepackage{hyperref}
\usepackage[protrusion=true, expansion=true]{microtype}
\usepackage{float}
\usepackage{subfigure}
\usepackage{amsfonts,amsmath,amssymb,amsthm,url,xspace,mathtools,authblk}
\usepackage{tikz}
\usepackage{verbatim}
\usetikzlibrary{arrows,shapes}
\usepackage[bottom]{footmisc}
\usepackage{enumitem}
\usepackage{lscape,soul,diagbox}
\usepackage{caption}
\usepackage{algorithmicx,algpseudocode}
\usepackage{math}
\usepackage{extarrows}

\ifx\counterwithout\undefined\usepackage{chngcntr}\fi
\counterwithout{equation}{section}

\allowdisplaybreaks[1]
\mathtoolsset{showonlyrefs=true}

\usepackage{xargs}
\usepackage[colorinlistoftodos,prependcaption,textsize=tiny]{todonotes}
\newcommandx{\unsure}[2][1=]{\todo[linecolor=red,backgroundcolor=red!25,bordercolor=red,#1]{#2}}
\newcommandx{\change}[2][1=]{\todo[linecolor=blue,backgroundcolor=blue!25,bordercolor=blue,#1]{#2}}
\newcommandx{\info}[2][1=]{\todo[linecolor=OliveGreen,backgroundcolor=OliveGreen!25,bordercolor=OliveGreen,#1]{#2}}
\newcommandx{\improvement}[2][1=]{\todo[linecolor=Plum,backgroundcolor=Plum!25,bordercolor=Plum,#1]{#2}}

\usepackage{alphalph}
\definecolor{AveRSGreen}{RGB}{232,245,233}
\newcommand{\gc}[1]{\cellcolor{AveRSGreen}#1}
\newcommand{\gmc}[1]{\multicolumn{1}{|c|}{\cellcolor{AveRSGreen}#1}}
\newcommand{\Gthree}[3]{\gc{#1} & \gmc{#2} & \gc{#3}}

\newcommand{\gaXi}{\Xi^\star_{\text{gasn, ave}}}
\newcommand{\aaXi}{\Xi^\star_{\text{asn, ave}}}
\newcommand{\uaXi}{\Xi^\star_{\text{usn, ave}}}

\newcommand{\alXi}{\Xi^\star_{\text{asn, last}}}

\newcommand{\answerTODO}[1][]{\textcolor{red}{\bf [TODO]}}
\newcommand{\justificationTODO}[1][]{\textcolor{red}{\bf [TODO]}}

\newcommand{\papertitle}{Inference for Newton Methods with Accelerated Sketch-and-Project via Random Scaling}

\begin{document}
\date{}

\title{\papertitle}

\author[1]{Xinchen Du}
\author[2]{Elizaveta Rebrova}
\author[3]{Micha{\l} Derezi\'{n}ski}
\author[1]{Sen Na}

\affil[1]{School of Industrial and Systems Engineering, Georgia Institute of Technology}
\affil[2]{Department of Operations Research and Financial Engineering, Princeton University}
\affil[3]{Computer Science and Engineering, University of Michigan}

\maketitle

\begin{abstract}

We study an online sketched Newton method that approximates the Newton direction at each step via a state-of-the-art sketching solver, called the \textit{generalized accelerated sketch-and-project solver} (\texttt{GAS}), thereby mitigating the computational bottleneck of classical second-order methods. The \texttt{GAS} solver improves upon vanilla, unaccelerated sketch-and-project solvers by achieving accelerated convergence through Nesterov momentum updates, and accommodates a flexible projection metric whose proper choice further reduces computational cost. 
Building on this design, we establish asymptotic normality of the \textit{averaged} sketched Newton iterates and characterize their limiting covariance matrix. The resulting covariance recovers that of the unaccelerated sketched Newton method under a specific choice of acceleration parameters, converges more rapidly (in the number of sketching steps) to the minimax-optimal covariance in general, and is smaller than that of the last iterate produced by the accelerated method.
Finally, we strengthen these results by establishing a \textit{functional central limit theorem} for the Newton iterates, which allows us to bypass explicit covariance estimation and develop an online inference procedure based on random scaling. Specifically, we construct a pivotal test statistic by appropriately rescaling the averaged iterates, so that its limiting distribution is free of any unknown parameters, enabling asymptotically valid online inference. Numerical experiments demonstrate superior performance of the proposed inference procedure.

\end{abstract}

\section{Introduction}\label{sec:1}

We consider the following stochastic optimization problem:
\begin{equation}\label{prob}
\min_{\bx\in\mathbb{R}^d} f(\bx)=\mE_{\xi\sim \P}[F(\bx;\xi)],
\end{equation}
where $f: \mathbb{R}^d \rightarrow \mathbb{R}$ is a stochastic, strongly convex objective function, $F(\cdot;\xi)$ is its noisy realization, and $\xi \sim \P$ is a random variable. Despite its simplicity, Problem \eqref{prob} serves as a fundamental building block and arises in a wide range of online optimization tasks in data science, operations research, and machine learning \citep{ShalevShwartz2009Stochastic, Li2010Contextual, McMahan2013Ad, Du2022High}. This problem can be naturally cast as a parameter estimation problem: $\bx$ denotes the model parameter, $\xi$ corresponds to a data sample drawn from $\mathcal{P}$, and the minimizer $\bx^{\star} \coloneqq \argmin_{\bx \in \mR^d} f(\bx)$ represents the underlying true model parameter.

In recent years, a growing body of literature has focused on developing \textit{online} methods for stochastic optimization problems motivated by their computational and memory efficiency over offline methods. Stochastic gradient descent (SGD), introduced by \cite{Robbins1951Stochastic, Kiefer1952Stochastic}, is one of the most widely used online methods for Problem \eqref{prob}. Given a fresh sample $\xi_t$ at each step, SGD updates
\begin{equation}\label{equ:SGD_update}
\bx_{t+1} = \bx_t - \varphi_t \nabla F(\bx_t; \xi_t),
\end{equation}
with a decaying step size $\varphi_t$, and converges to $\bx^{\star}$ both almost surely and in mean square under standard conditions \citep{Robbins1971convergence, Bertsekas2000Gradient, Moulines2011Non}. 
However, modern applications increasingly involve high-stakes decision-making problems in economics \citep{Kleinberg2017Human, Athey2021Policy}, scientific machine learning \citep{Karniadakis2021Physics, Psaros2023Uncertainty}, and reinforcement learning \citep{Garcia2015Comprehensive, Gottesman2019Guidelines, Brunke2022Safe}. For such problems, reliable decisions require more than point estimates; it is essential to quantify uncertainty of online stochastic methods. To this end, one must understand the limiting distribution of the iterates $\bx_t$ and construct confidence sets around the true solution $\bx^{\star}$.
This need has inspired a series of work on online inference for SGD. In particular,  \cite{Ruppert1988Efficient, Polyak1992Acceleration} proposed averaging the SGD iterates: $\bar{\bx}_t = \sum_{i = 0}^{t-1} \bx_i /t$, and proved a central limit theorem of the form 
\begin{equation}\label{nsequ:sgd_normal}
\sqrt{t} (\bar{\bx}_t - \bx^{\star}) \stackrel{d}{\to} \mathcal{N}(0, \Omega^{\star})\quad \text{ with }\quad \Omega^{\star} = (\nabla^2 f(\bx^\star))^{-1}\text{cov}(\nabla F(\bx^{\star}; \xi)) (\nabla^2 f(\bx^\star))^{-1},
\end{equation}
where the limiting covariance $\tOmega$ also achieves \textit{asymptotic minimax optimality} in the sense of H{\'a}jek and Le Cam \citep{Hajek1972Local, LeCam1972Limits}. Subsequent work has extended this result to a broad class of gradient-based algorithms, including implicit, nonsmooth, and moment-adjusted variants \citep{Toulis2017Asymptotic, Liang2019Statistical, Duchi2021Asymptotic, Davis2024Asymptotic, Jiang2025Online}.

To facilitate online inference via asymptotic normality, various covariance matrix estimators have been proposed, including the plug-in estimator \citep{Chen2020Statistical} and the batch-means estimator \citep{Zhu2021Online, Roy2023Online, Jiang2025Online}. However, the former is intrusive, as it requires estimating the inverse Hessian that is not needed for SGD itself; while the latter, although computationally more efficient, converges slowly both in theory and practice.
In fact, covariance estimation is not necessary for inference; instead, one can construct pivotal statistics whose limiting distributions are free of unknown quantities, thereby bypassing covariance estimation altogether. One prominent approach is the \emph{random-scaling method} \citep{Lee2022Fast}, which self-normalizes the error of averaged SGD iterates via a random-scaling matrix. This framework has since been extended to ROOT-SGD \citep{Luo2022Covariance}, stochastic approximation under Markovian data \citep{Li2023Online}, weighted-averaged SGD \citep{Wei2023Weighted}, the Kiefer-Wolfowitz method \citep{Chen2024Online}, and sequential quadratic programming~\citep{Du2025Online}.

While first-order methods such as \eqref{equ:SGD_update} are efficient per step, their associated inference procedures still require updating proper covariance or random-scaling matrices, leading to an $O(d^2)$ time complexity. Moreover, stochastic gradient noise can induce high variance in the iterates, and first-order methods are sensitive to ill-conditioning \citep{Ruder2016Overview, Bottou2018Optimization}. These limitations motivate the use of Newton-type methods, which leverage curvature information to stabilize the iterates and address the issue of ill-conditioning.
Specifically, second-order (quasi-)Newton methods update the iterates~as:
\begin{equation}\label{equ:second_order_method}
\bx_{t+1} = \bx_t + \varphi_t \Delta \bx_t \quad \quad \text{with} \quad  \quad  B_t \Delta \bx_t = - \nabla F(\bx_t; \xi_t),
\end{equation}
where $\varphi_t$ is a decaying step size, and $B_t \approx \nabla^2 f(\bx_t)$ is a conditioning matrix approximating the~objective Hessian.
Recent work has increasingly focused on online inference for \eqref{equ:second_order_method}; see, e.g., \cite{Leluc2023Asymptotic, GodichonBaggioni2025Adaptive, Cenac2025efficient, Gao2025Online}. However, inverting the Hessian is computationally expensive, typically requiring $O(d^3)$ time per~step.

To address this challenge, \cite{Gower2015Randomized} introduced a sketch-and-project solver for solving linear systems with $O(d^2)$ time complexity. Leveraging this solver, \cite{Na2025Statistical} developed a sketched Newton method (for constrained problems), referred to as the unaccelerated sketched Newton (USN) method. The authors established the asymptotic normality of the last USN iterate $\bx_t$, while the subsequent work of \cite{Du2025Online} extended this result to the averaged USN iterate $\bar{\bx}_t = \sum_{i = 0}^{t-1} \bx_i /t$:
\begin{equation}\label{nsequ:normal}
1/\sqrt{\varphi_t}(\bx_t-\tx) \stackrel{d}{\to} \mathcal{N}(0, \Xi^\star_{\text{usn, last}})\quad\quad\quad \text{and} \quad\quad \quad \sqrt{t}(\bar{\bx}_t-\tx) \stackrel{d}{\to} \mathcal{N}(0, \Xi^\star_{\text{usn, ave}}).
\end{equation}
Moreover, \cite{Du2025Online} also showed that under the same scaling (i.e., $\varphi_t=1/t$), we have $\Xi^\star_{\text{usn, ave}}\preceq \Xi^\star_{\text{usn, last}}$, indicating that \textit{iterate averaging improves statistical efficiency}.

In parallel, recent advances in randomized numerical linear algebra have accelerated the convergence of the sketch-and-project solver in \cite{Gower2015Randomized}. In particular, \cite{Gower2018Accelerated, Derezinski2025Fine} incorporated Nesterov momentum~acceleration into the solver without increasing its computational cost. They showed that the expected squared error decays exponentially at rate $1 - \mu_t$ for vanilla unaccelerated solver, but at a faster~rate $1 - \sqrt{\mu_t / \nu_t}$ for accelerated solver. Here, $(\mu_t,\nu_t)$ are iteration-dependent parameters satisfying $1 \le \nu_t \le 1 / \mu_t$ (implying $1-\sqrt{\mu_t / \nu_t}\leq 1-\mu_t$; see \eqref{prop_mu_nu}). Inspired by this improved rate, \cite{Wang2026Inference} developed an online Newton method with accelerated sketch-and-project (ASN), and established asymptotic normality for the \textbf{last} ASN iterate $\bx_t$, analogous to the left result in \eqref{nsequ:normal} but with a different covariance~$\Xi^\star_{\text{asn, last}}$.

Despite these advances in the inference of USN and ASN methods \citep{Na2025Statistical, Kuang2025Online, Du2025Online, Wang2026Inference}, existing studies face \textbf{three key limitations}. \textbf{(a)} The sketch-and-project solvers used in these works, whether unaccelerated or accelerated, are less practical and general than those originally developed in \cite{Gower2018Accelerated, Derezinski2025Fine}. In particular, state-of-the-art solvers adopt a general projection metric, referred to as \textit{generalized accelerated sketch-and-project solver} (\texttt{GAS}), while the inference literature considers only the identity projection metric. With alternative choices of the projection metric, \texttt{GAS} can significantly reduce computational cost compared to the identity-metric setting. 
\textbf{(b)} While iterate averaging stabilizes the Newton iterates (cf. the right result of \eqref{nsequ:normal}), its integration with \texttt{GAS} remains open, and no online inference procedure has been developed in this setting to bypass covariance estimation.
\textbf{(c)} Multiple limiting covariance matrices arise in the analysis, including $\Xi^\star_{\text{usn, last}}$, $\Xi^\star_{\text{usn, ave}}$, $\Xi^\star_{\text{asn, last}}$, $\Xi^\star_{\text{asn, ave}}$, and the minimax-optimal~covariance $\tOmega$. While it is known that $\tOmega \preceq \Xi^\star_{\text{usn, ave}} \preceq \Xi^\star_{\text{usn, last}}$, the relationships between the covariances induced by accelerated sketching, $\Xi^\star_{\text{asn, last}}$ and $\Xi^\star_{\text{asn, ave}}$, and all others remain unclear. Understanding the ordering is crucial for characterizing \textit{computational-statistical trade-offs} of sketched~Newton~methods.

\vskip4pt
\noindent $\bullet$ \textbf{Our contributions.} In this paper, we propose an online Newton method with generalized accelerated sketch-and-project (GASN) by embedding \texttt{GAS} into \eqref{equ:second_order_method}. We quantify how \texttt{GAS} affects the uncertainty and statistical efficiency of the stochastic Newton iterates. In particular, we show both almost-sure~and $L^2$ convergence of GASN, and establish asymptotic normality for the \textbf{averaged} GASN iterate $\bar{\bx}_t$:
\begin{equation}\label{nsequ:gasn_normal}
\sqrt{t} (\bar{\bx}_t - \bx^{\star}) \stackrel{d}{\to} \mathcal{N}(\boldsymbol{0}, \Xi^{\star}_{\text{gasn, ave}}).
\end{equation}
Furthermore, we prove that $\Xi^\star_{\text{gasn, ave}}$ reveals a computational-statistical trade-off induced by the \texttt{GAS} solver, with three regimes that connect to existing methods.

\textbf{(a) Comparison with Exact Newton.} If the linear system in \eqref{equ:second_order_method} is solved exactly, then $\gaXi = \Omega^{\star}$ and we match the statistical efficiency of the averaged SGD estimator \eqref{nsequ:sgd_normal}, which is also minimax optimal \citep{Polyak1992Acceleration, Duchi2021Asymptotic, Davis2024Asymptotic}. 
When \texttt{GAS} is deployed, $\gaXi \succeq \Omega^{\star}$, but the gap is negligible as $\|\gaXi - \Omega^{\star}\| = O((1-\sqrt{\mu^\star/\nu^\star})^\tau)$, which decays exponentially in the number of sketching steps $\tau$. Here, $(\mu^\star, \nu^\star)$ are sketching-related quantities evaluated at $\tx$ (Proposition \ref{prop:comp_stats_tradeoff}).

\textbf{(b) Comparison with USN.} If we set $\mu_t \nu_t = 1$, i.e., the regime where provable acceleration of \texttt{GAS} may not be achieved, then $\gaXi = \uaXi$ and we match the efficiency of averaged USN iterates in \eqref{nsequ:normal}. For \texttt{GAS} with $\mu_t \nu_t < 1$, $\gaXi$ converges to $\tOmega$ with a provably \textbf{faster} rate than $\uaXi$, since $\|\uaXi - \Omega^{\star}\| = O((1-\mu^\star)^\tau)$ and $1-\mu^\star \geq 1-\sqrt{\mu^\star/\nu^\star}$. This suggests that~GASN improves computational efficiency \textbf{without} sacrificing statistical efficiency.

\textbf{(c) Comparison with ASN.} GASN reduces to ASN when using the identity projection metric. Compared with \cite{Wang2026Inference}, which studied the normality of the last ASN iterate, we can show $\aaXi \preceq \alXi$, indicating that iterate averaging also stabilizes the variability of the ASN method.

With the above complete understanding of uncertainty quantification for sketched Newton methods, we then turn these normality results into practical inference. Inspired by \cite{Lee2022Fast, Li2023Online, Chen2024Online, Du2025Online}, we avoid estimating the covariance $\gaXi$ but studentize $\bar{\bx}_t$ by a \textit{random scaling} matrix $V_t$ (see \eqref{def:V_t}). We then show that the resulting test statistic is \textit{asymptotically pivotal}: for any direction $\boldsymbol{w} \in \mR^{d}$,
\begin{equation*}
\frac{\sqrt{t} \cdot \boldsymbol{w}^{\top} (\bar{\bx}_t - \bx^{\star})}{\sqrt{\boldsymbol{w}^{\top} V_t \boldsymbol{w}}} \stackrel{d}{\longrightarrow} \frac{W_1(1)}{\sqrt{\int_{0}^1  \left(W_1(r) - rW_1(1) \right)^2 dr}},
\end{equation*}
where $W_1(\cdot)$ denotes standard one-dimensional Brownian motion, and the limiting distribution on the right-hand side is free of unknown parameters. This result requires us to strengthen \eqref{nsequ:gasn_normal} to a functional central limit theorem, and also allows us to perform inference under weaker moment conditions on the gradient noise and sketching distribution than those required for covariance estimation \citep{Kuang2025Online, Wang2026Inference}. We demonstrate the promising empirical performance of GASN and its associated inference procedure through experiments on linear and logistic regression problems.

\vskip4pt

\noindent$\bullet$ \textbf{Notation.} Throughout the paper, $\|\cdot\|$ denotes the $\ell_2$ norm for vectors and the spectral norm for matrices, $\|\cdot\|_{F}$ denotes the Frobenius norm, and $\mathrm{Tr}(\cdot)$ denotes the trace operator. For a positive definite matrix $B$, the matrix-induced norm is defined as $\|\bx\|_B \coloneqq \sqrt{\bx^{\top} B \bx}$. For sequences $\{a_t, b_t\}$, we write $a_t = O(b_t)$, or equivalently $a_t \lesssim b_t$, if $|a_t| \leq c\,|b_t|$ for some constant $c > 0$ and all sufficiently large $t$, and $a_t = o(b_t)$ if $|a_t/b_t| \to 0$ as $t \to \infty$. We denote $O_p, o_p$ the order in probability. We denote $I$ as the identity matrix, $\boldsymbol{0}$ as the zero vector or matrix, $\boldsymbol{e}_i$ as the $i$-th standard basis vector, and $\boldsymbol{1}$ as the all-ones vector; their dimensions are clear from the context. For a compatible sequence of matrices $\{A_i\}$, we let $\prod_{k=i}^{j} A_k = A_j A_{j-1} \cdots A_i$ when $j \geq i$ and $I$ when $j < i$. The Loewner ordering is written as $A \succeq B$ (resp.\ $A \preceq B$) to mean $A - B$ (resp.\ $B - A$) is positive semidefinite, and $\lambda_{\min}(A)$, $\lambda_{\max}(A)$ denote the smallest and largest eigenvalues of $A$. We write $f_t = f(\bx_t)$ and $f^\star = f(\bx^\star)$, with analogous shorthands for $\nabla f_t$, $\nabla^2 f_t$, and so on.

\section{Online Newton with Generalized Accelerated Sketch-and-Project}\label{sec:2}

In this section, we introduce an online Newton method based on a state-of-the-art sketching solver, the \textit{generalized accelerated sketch-and-project solver} \texttt{GAS} from \cite{Gower2018Accelerated, Derezinski2025Fine}. The method is a direct~extension of online Newton methods \citep{Na2022Hessian, Na2025Statistical, Kuang2025Online, Wang2026Inference, Gao2025Online} by replacing their linear system solvers with \texttt{GAS}.

At each outer iteration $t \geq 0$, we draw a sample $\xi_t \sim \mathcal{P}$, and compute the stochastic objective~gradient and Hessian estimates as $\bg_t = \nabla F(\bx_t; \xi_t)$ and $H_t = \nabla^2 F(\bx_t; \xi_t)$. We then compute the averaged Hessian $B_t \coloneqq \frac{1}{t}\sum_{i=0}^{t-1} H_i$ based on samples $\{\xi_i\}_{i = 0}^{t-1}$ (let $B_0=I$). Next, we apply the \texttt{GAS} solver and update as:
\begin{equation}\label{equ:update}
\bx_{t+1} = \bx_t + \varphi_t \cdot \texttt{GAS}(B_t,- \bg_t; E_t,\alpha_t,\beta_t,\gamma_t,\tau),
\end{equation}
where $\varphi_t = C_{\varphi} \cdot (t+1)^{-\varphi}$ is a decreasing step size sequence with $C_{\varphi} > 0$ and $\varphi \in (0.5, 1)$.  Here, \texttt{GAS}($\cdot$) returns an approximation to the Newton direction $\Delta \bx_t$:
\begin{equation}\label{equ:Newton_system} 
B_t \Delta \bx_t = - \bg_t,
\end{equation}
which can be performed in $O(d^2)$ time. To keep notation light, we drop the outer iteration index $t$ and summarize $\texttt{GAS}(B, -\bg; E, \alpha, \beta, \gamma, \tau)$ in Algorithm \ref{alg:generalized}. For this method, $(B, - \bg)$ are the left- and right-hand side matrix and vector; $E$ is the projection metric; $(\alpha, \beta,\gamma)$ are acceleration parameters;~and $\tau$ is the number of sketching steps. We refer to \cite{Gower2018Accelerated, Derezinski2025Fine} for further~details.

\begin{algorithm}[H]
\caption{\textbf{G}eneralized \textbf{A}ccelerated \textbf{S}ketch-and-Project Solver: $\texttt{GAS}(B, -\bg; E, \alpha, \beta, \gamma, \tau)$}\label{alg:generalized}
\begin{algorithmic}[1]
\State \textbf{Target:} $B\Delta\bx = - \bg$;
\State {\bfseries Initialize:} set initial values $\bz_{0} = \boldsymbol{v}_{0} = \boldsymbol{0}$, a positive definite matrix $E \in \mR^{d \times d}$;
\For{$j = 0, 1, \ldots \tau-1$}
\State $\by_{j} = \alpha \boldsymbol{v}_{j} + (1-\alpha) \bz_{j}$;
\State Generate a sketching vector/matrix $S_{j}\sim S\in\mathbb{R}^{d\times s}$;
\State $\boldsymbol{\omega}_{j} = E^{-1} B S_{j} (S_{j}^{\top} B E^{-1} B S_{j})^{\dagger} S_{j}^{\top}(B \by_{j} + \bg)$, where $\dagger$ denotes pseudoinverse;
\State $\bz_{j+1} = \by_{j} - \boldsymbol{\omega}_{j}$;
\State $\boldsymbol{v}_{j+1} = \beta \boldsymbol{v}_{j} + (1-\beta) \by_{j} - \gamma \boldsymbol{\omega}_{j}$;
\EndFor
\State \textbf{Output:} $\bz_\tau$.
\end{algorithmic}
\end{algorithm}

\vspace{-0.3cm}

\noindent $\bullet$ \textbf{Parameter setup and accelerated rate.} \cite{Gower2018Accelerated, Derezinski2025Fine} proposed defining three acceleration parameters $(\alpha,\beta,\gamma)$, governed by two parameters $(\mu, \nu)$, as
\begin{equation}\label{def:alpha_beta_gamma_t}
\alpha = 1 / (1 + \gamma \nu), \quad\quad \beta = 1 - \sqrt{\mu / \nu}, \quad\quad \gamma = 1/ \sqrt{\mu \nu},
\end{equation}
where $(\mu, \nu)$ are defined as 
\begin{equation}\label{def:mu_t_nu_t} 
\mu = \inf_{\ba\neq\0} \frac{\ba^\top Z \ba}{ \ba^\top\ba}, \quad\quad \nu = \sup_{\ba \neq\0} \frac{\ba^\top \mathbb{E}[\Tilde{Z} Z^{-1} \Tilde{Z}]\ba}{\ba^\top Z\ba}.
\end{equation}
Here, $\Tilde{Z} \coloneqq E^{-1/2} B S (S^{\top} B E^{-1} B S)^{\dagger}S^{\top}B E^{-1/2}$ is a projection matrix, and $Z \coloneqq \mathbb{E}[\Tilde{Z}]\in\mR^{d\times d}$ denotes its expectation taken over the randomness of the sketching distribution $S$ only (more precisely, in our case, the expectation is conditional on the current iterate $\bx_t$ and sample $\xi_t$). By \citet[Lemma~2]{Gower2018Accelerated}, we have the following property:
\begin{equation} \label{prop_mu_nu}
1 \leq \nu \leq 1/\mu =  \|Z^{-1}\|,
\end{equation}
implying that $\alpha\in(0,1)$, $\beta\in[0,1)$, $\gamma\geq 1$.

\begin{remark}[\textbf{Accelerated rate}]
\citet[Remark 6]{Derezinski2025Fine} shows that \texttt{GAS} achieves a linear convergence rate $\mathbb{E} \|\bz_j - \Delta\bx\|_E^2 \leq 2 (1 - \sqrt{\mu/\nu})^j \| \bz_0 - \Delta\bx \|_E^2$, where $\Delta\bx=-B^{-1}\bg$ is the exact solution. For the unaccelerated sketch-and-project solver \citep{Gower2015Randomized}, corresponding to $(\alpha, \beta,\gamma) = (0.5,0,1)$, \citet[Theorem 4.6]{Gower2015Randomized} shows the linear rate $\mathbb{E} \|\bz_j - \Delta\bx\|_E^2 \leq (1 - \mu)^j \| \bz_0 - \Delta\bx \|_E^2$. 
Comparing these two results, we observe that the provable acceleration disappears \textbf{if and only if} $\mu \nu = 1$, i.e., $\gamma = 1$. This condition is weaker than the parameter setting of the unaccelerated solver, as momentum can still be employed, although it no longer yields an improved convergence rate.

In addition, the accelerated rate indeed implies substantial and practically observable speedups. For coordinate sketches, i.e., sampling $S_j$ from canonical basis $\{\boldsymbol{e}_i\}_{i=1}^d$, also called randomized Kaczmarz method \citep{Strohmer2008Randomized}, \citet[Corollary 4]{Gower2018Accelerated} gives $\mu\nu = \lambda_{\min}(B)/\min_i B_{ii}$. Thus, the accelerated factor is strictly smaller, and the improvement becomes more pronounced when $\min_i B_{ii}$ is much larger~than $\lambda_{\min}(B)$. This includes ill-conditioned regimes where eigenvalues of $B$ concentrate near the~largest eigenvalue.

\end{remark}

\begin{remark}[\textbf{Comparison to prior works}]

The GASN method \eqref{equ:update} is a direct application of the sketch-and-project solver in \cite{Gower2018Accelerated, Derezinski2025Fine} to online Newton methods; however, our analysis of uncertainty quantification and inference for this method significantly extends prior works in \cite{Na2025Statistical, Du2025Online, Wang2026Inference}.

\textbf{First}, all prior works set the projection metric matrix of the sketch-and-project solver $E = I$, while \texttt{GAS} allows a general positive definite choice of $E$. This includes an important alternative $E = B$, which is particularly advantageous for positive definite systems. In this case, the sketched direction reduces to $\boldsymbol{\omega}_j = S_j (S_j^{\top} B S_j)^{\dagger} S_j^{\top}(B\boldsymbol{y}_j+ \bg)$. This form can be much computationally cheaper than the choice $E = I$, which involves the matrix $S_j^{\top} B^2 S_j$. Indeed, for coordinate sketches, $S_j^{\top} B S_j$ corresponds to a sampled entry (or, more generally, a small submatrix) of $B$. In contrast, forming $S_j^{\top} B^2 S_j = (B S_j)^{\top} (B S_j)$ requires accessing the full column $B S_j$, which is significantly more~expensive, especially for dense matrices.

\textbf{Second}, \cite{Na2025Statistical, Du2025Online} analyzed the asymptotic normality of the last and averaged USN iterates, while \cite{Wang2026Inference} analyzed the last ASN iterates. To our knowledge, no existing work has analyzed the averaged GASN iterates. More importantly, as discussed in the contributions in Section \ref{sec:1}, we compare our~resulting covariance with all existing methods and show that generalized accelerated sketch-and-project not only improves computational efficiency, but also achieves improved statistical efficiency. In other words, it provides a ``free" gain in statistical efficiency, without incurring more computations. 

\end{remark}

\section{Assumptions and Convergence Guarantees}\label{sec:3}

In this section, we introduce assumptions and present global convergence results for GASN. We first set up the filtration to capture the randomness in GASN (including random samples, random sketches, and random metrics). Define $\mF_{t-1} = \sigma(\{\xi_i,  \psi_i , \{S_{i, j}\}_{j=0}^{\tau-1}\}_{i=0}^{t-1})$ and $\mF_{t-0.5} =  \sigma(\{\xi_i, \psi_i, \{S_{i, j}\}_{j=0}^{\tau-1}\}_{i=0}^{t-1} \\ \cup \xi_t)$, and let $\mF_{-1}$ denote the trivial $\sigma$-algebra for notation consistency. Here, $\{\psi_i\}_{i=0}^{t-1}$ is an auxiliary sequence of random variables introduced into the filtration to provide extra flexibility in the construction of $E_t$. We assume $E_t$ is $\mF_{t-1}$-measurable; in practice, $E_t$ is simply $\sigma(\{\xi_i, \{S_{i, j}\}_{j=0}^{\tau-1}\}_{i=0}^{t-1})$-measurable. By definitions, $\mF_{t-1}$ includes all randomness of $\bx_{0:t}$, while $\mF_{t-0.5}$ additionally includes the sample~$\xi_t$.

We now present assumptions, all of which are very standard in optimization and online inference literature \citep{Chen2024Online, Cenac2025efficient, Na2025Statistical, Kuang2025Online, Du2025Online, Wang2026Inference}. The first assumption requires the objective to be strongly convex~with a Lipschitz-smooth gradient. The boundedness of Hessian can be readily relaxed when incorporating proper Hessian truncation \citep{Bercu2020Efficient,Leluc2023Asymptotic}.

\begin{assumption}\label{ass:1}
We assume that $f(\cdot)$ and $F(\cdot;\xi)$ are twice continuously differentiable. For $\forall \bx, \gamma_H \leq \lambda_{\min} \left(\nabla^2 F(\bx;\xi)\right) \leq \lambda_{\max}\left(\nabla^2 F(\bx;\xi)\right) \leq \Upsilon_H$ for some constants $0 < \gamma_H < \Upsilon_H$. We assume $\nabla^2f$ is $\Upsilon_L$-Lipschitz continuous, i.e., $\|\nabla^2 f(\bx) - \nabla^2 f(\bx^{\prime})\|\leq \Upsilon_{L}\|\bx-\bx^{\prime}\|$ for $\forall \bx,\bx'$.
\end{assumption}

The next assumption imposes a growth condition on the moment of the gradient and Hessian noise, which aligns with existing literature in online inference \citep{Chen2024Online, Du2025Online, Li2023Online}.

\vspace{-0.05cm}
\begin{assumption}\label{ass:2}
	
For any $t\geq 0$, we assume the gradient and Hessian estimates are unbiased: $\mathbb{E}[\bg_t\mid \mathcal{F}_{t-1}]=\nabla f_t$ and $\mathbb{E}[H_t\mid \mathcal{F}_{t-1}]=\nabla^2 f_t$; and we assume the growth conditions on their moments:
\begin{subequations}\label{equ:ass:2}
\begin{align}
\mE \! \left[\|\bg_t - \nabla f_t\|^{2+\epsilon} \mid \mathcal{F}_{t-1}\right] & \leq C_{g,1}\|\bx_t-\bx^\star\|^{2+\epsilon} + C_{g,2}, \label{ass:2:4th} \\
\mE \! \left[\|H_t-\nabla^2 f_t\|^2 \mid \mF_{t-1}\right] & \leq C_{H,1}\|\bx_t-\bx^\star\|^2 + C_{H,2}, \label{ass:3:4th}
\end{align}
\end{subequations}
for some constants $\epsilon, C_{g,1}, C_{g,2}, C_{H,1}, C_{H,2} > 0$.	
\end{assumption}

We would like to highlight that the above growth condition is weaker than those required by existing online inference procedures based on covariance matrix estimation, which typically impose at least a fourth-moment growth condition (i.e., $\epsilon = 2$) \citep{Chen2020Statistical, Zhu2021Online, Davis2024Asymptotic, Na2025Statistical, Kuang2025Online, Wang2026Inference}. It is also weaker than the bounded-moment condition $\mE[\|\bg_t - \nabla f_t\|^{2+\epsilon} \mid \bx_t] \leq C$ assumed in \cite{Du2025Online}. We finally state the following assumptions on the sketching distribution.

\vspace{-0.15cm}
\begin{assumption}\label{ass:4}
For any $t\geq 0$, the sketching matrices $S_{t,j} \stackrel{iid}{\sim} S$ are independent of $\xi_t$ and satisfy $\mE[B_tS(S^{\top}B_t^2S)^{\dagger}S^{\top}B_t \mid \mF_{t-1}] \succeq \gamma_S I$ and $\mE[\|S\|\|S^{\dagger}\|]\leq \Upsilon_S$ for some constants \mbox{$\gamma_S, \Upsilon_S>0$}.~Furthermore, the projection metric $E_t$ satisfies $\gamma_E I \preceq E_t \preceq \Upsilon_E I$ for some constants $\gamma_E, \Upsilon_E > 0$.
\end{assumption}

The two expectations above are taken over the randomness of the sketching distribution $S$, and it is worth noting that our moment condition on $\|S\| \, \|S^\dagger\|$ is weaker than the second-moment condition $\mE[(\|S\| \, \|S^\dagger\|)^2] \leq C$ imposed in \cite{Kuang2025Online, Wang2026Inference}. The lower bound on $B_t S (S^\top B_t^2 S)^\dagger S^\top B_t$ is a standard requirement for sketch-and-project solvers to ensure convergence \citep{Gower2015Randomized, Gower2018Accelerated, Derezinski2025Fine}, and it holds for common sketching distributions, including Gaussian sketches $S \sim \mN(\mathbf{0}, \Sigma)$ and coordinate sketches $S \sim \text{Unif}(\{\be_i\}_{i=1}^d)$ \citep{Strohmer2008Randomized, Na2025Statistical}. The boundedness of $E_t$ holds automatically for the canonical choices $E_t = I$ and $E_t = B_t$, and ensures the invertibility of $Z_t$ (see Lemma \ref{lem:4}), as required by $\nu_t$ in \eqref{def:mu_t_nu_t} (recall that we omit the outer iteration index $t$ in that definition).

\begin{lemma}\label{lem:4}
We recall from \eqref{def:mu_t_nu_t} that $Z_t = \mE[E_t^{-1/2}B_t S (S^{\top}B_t E_t^{-1} B_tS)^{\dagger}S^{\top}B_t E_t^{-1/2} \mid \mF_{t-1}]$. Under Assumption \ref{ass:4}, we have $Z_t \succeq (\gamma_E \gamma_S / \Upsilon_E) I$ for all $ t\geq 0$.
\end{lemma}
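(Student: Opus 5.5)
The plan is to compare the two projection matrices directly. Write $N := B_t S$ and $M := E_t^{-1/2} N$. Then $\Tilde{Z}_t = M (M^\top M)^\dagger M^\top$ is the Euclidean orthogonal projector onto $\mathrm{range}(M)$. Likewise, $P := N (N^\top N)^\dagger N^\top = B_t S(S^\top B_t^2 S)^\dagger S^\top B_t$ is the orthogonal projector onto $\mathrm{range}(N)$, and Assumption \ref{ass:4} controls exactly its conditional expectation. Since $B_t$ and $E_t$ are $\mF_{t-1}$-measurable and $S$ is independent of the past, the conditional expectation in $Z_t$ is just an average over $S$ with $B_t$ and $E_t$ held fixed. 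It therefore suffices to prove a deterministic, pointwise-in-$S$ inequality between the quadratic forms of $\Tilde{Z}_t$ and $P$, and then take expectations.

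The key step is the variational characterization of an orthogonal projector. For any matrix $M$ and vector $\ba$,
\begin{equation*}
\ba^\top M(M^\top M)^\dagger M^\top \ba = \sup_{\bu:\, M\bu\neq \0} \frac{(\ba^\top M\bu)^2}{\|M\bu\|^2},
\end{equation*}
with the value $0$ if $M=\0$. Fix $\ba\neq\0$ and set $\bc := E_t^{-1/2}\ba$. Then $\ba^\top M\bu = \bc^\top N\bu$. Because $E_t \succeq \gamma_E I$, we have $\|M\bu\|^2 = \bu^\top N^\top E_t^{-1} N\bu \le \gamma_E^{-1}\|N\bu\|^2$. Also $M\bu\neq\0$ if and only if $N\bu\neq \0$, since $E_t^{-1/2}$ is invertible. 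Taking the supremum over $\bu$ gives, pointwise in $S$,
\begin{equation*}
\ba^\top \Tilde{Z}_t \ba \;\ge\; \gamma_E\, \bc^\top P\, \bc .
\end{equation*}

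Taking the conditional expectation and applying Assumption \ref{ass:4} yields
\begin{equation*}
\ba^\top Z_t \ba \ge \gamma_E \gamma_S \|\bc\|^2 = \gamma_E\gamma_S\, \ba^\top E_t^{-1}\ba .
\end{equation*}
Finally, $E_t \preceq \Upsilon_E I$ implies $E_t^{-1}\succeq \Upsilon_E^{-1} I$, so $\ba^\top Z_t\ba \ge (\gamma_E\gamma_S/\Upsilon_E)\|\ba\|^2$ for every $\ba$, which is the claim.

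The only delicate point is the middle step: justifying the variational formula when $M^\top M$ is singular, so that the pseudoinverse is needed. I would handle this by noting that $M(M^\top M)^\dagger M^\top$ is the orthogonal projector onto $\mathrm{range}(M)$ (a standard pseudoinverse identity). Then $\ba^\top P_M \ba = \|P_M \ba\|^2 = \sup_{\bw\in\mathrm{range}(M),\,\bw\neq\0} (\ba^\top \bw)^2/\|\bw\|^2$ by Cauchy--Schwarz, and parametrizing $\bw = M\bu$ recovers the formula above.
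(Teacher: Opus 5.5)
Your proof is correct and follows the same route as the paper. Both arguments establish the pointwise bound $\tilde{Z}_t \succeq \gamma_E\, E_t^{-1/2} B_t S (S^\top B_t^2 S)^\dagger S^\top B_t E_t^{-1/2}$, take conditional expectations with Assumption~\ref{ass:4} to get $Z_t \succeq \gamma_E\gamma_S E_t^{-1}$, and finish with $E_t \preceq \Upsilon_E I$; the only difference is how the pointwise bound is justified. The paper uses order reversal of the pseudoinverse for positive semidefinite matrices with a common kernel, $(S^\top B_t E_t^{-1} B_t S)^\dagger \succeq \gamma_E (S^\top B_t^2 S)^\dagger$, whereas your variational (Cauchy--Schwarz) characterization of an orthogonal projector is more elementary and avoids invoking that monotonicity fact.
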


In the following theorem, we establish global convergence guarantees for the GASN method.

\begin{theorem}[\textbf{Almost-sure and $L^2$-convergence}]\label{thm:as_convergence}
Let $\{\bx_t\}$ be the iterates generated by \eqref{equ:update}. Under Assumptions \ref{ass:1}, \ref{ass:2}, \ref{ass:4}, and supposing the number of sketching steps satisfies $ \tau \cdot (1 - \sqrt{\mu_t/\nu_t})^{\tau - 2} \leq \gamma_H\sqrt{\gamma_E}/(4 \Upsilon_{H}\sqrt{\Upsilon_E})$ for all $t\geq 0$, then we have $\bx_t \rightarrow \bx^{\star}$ as $t \rightarrow \infty$ almost surely. In addition,
\begin{equation}\label{equ:xt_rate}
\mE[\|\bx_t - \bx^{\star}\|^2] \lesssim \varphi_t \quad \text{for any}\;  t\geq 0,
\end{equation}
where $\varphi_t = C_{\varphi} \cdot (t+1)^{-\varphi}$ is the step size sequence with $C_{\varphi} > 0$ and $\varphi \in (0.5, 1)$.
\end{theorem}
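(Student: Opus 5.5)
We first reduce the GASN step to a perturbed Newton step. Since \texttt{GAS} is linear in its right-hand side when started from $\bz_0=\boldsymbol{v}_0=\0$, its output has the form $\bz_{t,\tau} = -(I - \tilde C_t)B_t^{-1}\bg_t$. Here $\tilde C_t$ is a random matrix depending only on the sketches $\{S_{t,j}\}_{j<\tau}$ and on $(B_t,E_t)$. We obtain it by unrolling the three-term recursion $(\by_j,\bz_j,\boldsymbol{v}_j)$ for the error $\bz_j-\Delta\bx_t$, which is a product of $\tau$ random $2d\times 2d$ matrices applied to $(\Delta\bx_t,\Delta\bx_t)$. Since $S_{t,j}$ is independent of $\xi_t$, we get $\mE[\tilde C_t\mid\mF_{t-0.5}]=\mE[\tilde C_t\mid\mF_{t-1}]=:C_t$. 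The update therefore reads
\begin{equation*}
\bx_{t+1}-\bx^\star = \bx_t-\bx^\star - \varphi_t (I-C_t)B_t^{-1}\nabla f_t - \varphi_t \boldsymbol{\theta}_t,
\end{equation*}
where $\boldsymbol{\theta}_t=(I-\tilde C_t)B_t^{-1}\bg_t-(I-C_t)B_t^{-1}\nabla f_t$ is a martingale difference with respect to $\mF_{t-1}$.

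The key deterministic input is a bound on $C_t$ and $\tilde C_t$. From the accelerated rate of \citet[Remark 6]{Derezinski2025Fine}, stated in the $E_t$-norm, we convert to the Euclidean norm using $\gamma_E I\preceq E_t\preceq\Upsilon_E I$. We also use Lemma~\ref{lem:4} to guarantee $\nu_t<\infty$ and that $\mu_t$ is bounded below uniformly. This gives $\|C_t\|\le \sqrt{\Upsilon_E/\gamma_E}\,\rho_t$ with $\rho_t$ of order $\tau(1-\sqrt{\mu_t/\nu_t})^{\tau-2}$. The polynomial factor $\tau$ comes from the non-normal $2\times 2$ block structure of the momentum recursion: its Jordan-type growth appears when the mean iteration matrix is bounded in operator norm rather than in the Lyapunov norm of Derezinski. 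Separately, a crude almost-sure bound $\|\tilde C_t\|\le c(\tau)$ follows because each $\boldsymbol{\omega}_j$ is an $E$-orthogonal projection and $\alpha,\beta\in[0,1)$. The hypothesis on $\tau$ then gives $\|C_t\|\le \gamma_H/(4\Upsilon_H)$. Hence $(I-C_t)B_t^{-1}$ has symmetric part bounded below by roughly $\tfrac{1}{2\Upsilon_H}I$ on the relevant inner products, since $\gamma_H I\preceq B_t\preceq \Upsilon_H I$ holds by Assumption~\ref{ass:1} for $t\ge1$. We expect this step to be the main obstacle: it requires tracking the acceleration recursion carefully in expectation, not just in $E$-norm squared error. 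We would first try to write the mean error recursion using the Lyapunov function of \cite{Gower2018Accelerated} and then take a square root of the second-moment bound, via Jensen, to bound $\|C_t\|$.

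With this in hand, we run a Robbins--Siegmund argument on $\Phi_t=f(\bx_t)-f^\star$, or on $\|\bx_t-\bx^\star\|^2$. By smoothness,
\begin{equation*}
\mE[\Phi_{t+1}\mid\mF_{t-1}]\le \Phi_t-\varphi_t\nabla f_t^\top(I-C_t)B_t^{-1}\nabla f_t+\tfrac{\Upsilon_H}{2}\varphi_t^2\,\mE[\|\bz_{t,\tau}\|^2\mid\mF_{t-1}].
\end{equation*}
The middle term is at most $-c\varphi_t\|\nabla f_t\|^2\le -c'\varphi_t\Phi_t$ by the previous step and strong convexity. The last term is bounded by $\varphi_t^2(C\Phi_t+C_{g,2}')$, using the bound on $\tilde C_t$, Assumption~\ref{ass:2}, and Jensen to pass from the $(2+\epsilon)$-moment to the second moment. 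Since $\sum\varphi_t=\infty$ and $\sum\varphi_t^2<\infty$, Robbins--Siegmund yields $\Phi_t\to0$ almost surely, hence $\bx_t\to\bx^\star$ almost surely.

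For the rate, taking full expectations gives $a_{t+1}\le(1-c'\varphi_t+C\varphi_t^2)a_t+C\varphi_t^2$ with $a_t=\mE[\Phi_t]$. A standard induction, valid since $\varphi<1$ implies $\varphi_{t}/\varphi_{t+1}=1+o(\varphi_t)$, gives $a_t\lesssim\varphi_t$. Strong convexity then gives $\mE\|\bx_t-\bx^\star\|^2\le (2/\gamma_H)\,a_t\lesssim\varphi_t$, which is \eqref{equ:xt_rate}. The Hessian averaging $B_t$ needs no convergence for this theorem; only its eigenvalue bounds are used.
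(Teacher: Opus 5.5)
Outside the sketching bound, your argument is the paper's proof. Your $\tilde C_t$ and $C_t$ are the paper's $\tilde K_t$ and $K_t$ (Lemma~\ref{lem:prop_sketch_solver}(a)--(b)). The descent inequality for $f_t-f^\star$, the Robbins--Siegmund step and the recursion for $\mathbb{E}[f_t-f^\star]$ are what the paper does.

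\textbf{The gap.} It is the step you flagged yourself. The hypothesis is written for a bound of the shape $\|K_t\|\lesssim\tau(1-\sqrt{\mu_t/\nu_t})^{\tau-2}$, and the method you propose does not produce it. Write $\varrho_t=1-\sqrt{\mu_t/\nu_t}$. Since $\mathbb{E}[\boldsymbol{z}_{t,\tau}\mid\mathcal{F}_{t-0.5}]-\Delta\boldsymbol{x}_t=-K_t\Delta\boldsymbol{x}_t$ with $\Delta\boldsymbol{x}_t$ arbitrary, the Lyapunov contraction of \citet{Gower2018Accelerated} (equivalently the rate of \citet{Derezinski2025Fine} quoted in Section~2) plus Jensen gives $\|E_t^{1/2}K_tE_t^{-1/2}\|\le\sqrt2\,\varrho_t^{\tau/2}$. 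This has no factor $\tau$, but only half the exponent. Passing from a second-moment contraction to $\|K_t\|$ by Jensen necessarily halves the exponent. The Jordan-type factor you correctly anticipate never appears in a Lyapunov-norm argument.

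This bound is not controlled by the hypothesis in the way your descent step uses it, namely $\|K_t\|\le\gamma_H/(4\Upsilon_H)$ together with $\nabla f_t^\top K_tB_t^{-1}\nabla f_t\le\|K_t\|\|\nabla f_t\|^2/\gamma_H$. Take $E_t=I$ and $\gamma_H/(4\Upsilon_H)=10^{-4}$. The pair $\tau=100$, $\varrho_t\approx0.8685$ satisfies $\tau\varrho_t^{\tau-2}\le10^{-4}$. Yet $\sqrt2\,\varrho_t^{50}\approx1.2\times10^{-3}$, which exceeds even $\gamma_H/\Upsilon_H$, so the descent inequality cannot be closed from this bound.

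A sharper descent estimate through $B_t^{1/2}K_tB_t^{-1/2}$ would rescue the Jensen route when $E_t=I$, since it gives $\|B_t^{1/2}K_tB_t^{-1/2}\|\le1/\sqrt{2\tau}$ under the hypothesis. For general $E_t$, however, it leaves a factor $(\Upsilon_E/\gamma_E)^{1/4}/\sqrt{2\tau}$ that the hypothesis does not force below $1$.

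\textbf{The missing idea.} You need to bound the mean iteration $M_t^\tau$ directly in operator norm. The paper imports exactly this from \citet{Wang2026Inference} (Lemma~3.7): $\|[C_t]_{1,1}+[C_t]_{1,2}\|\le2\tau\varrho_t^{\tau-2}$, hence $\|K_t\|\le\gamma_H/(2\Upsilon_H)$.

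Concretely, the blocks of $M_t$ are polynomials in the symmetric matrix $Z_t$, whose spectrum lies in $[\mu_t,1]$. Diagonalizing $Z_t$ turns $[C_t]_{1,1}+[C_t]_{1,2}$ into a symmetric matrix with eigenvalues $(1,0)\,m(\lambda)^\tau(1,1)^\top$. Here $m(\lambda)$ is the $2\times2$ matrix obtained by replacing $Z_t$ by $\lambda$; it satisfies $m_{11}+m_{12}=1-\lambda$ and $\det m(\lambda)=\beta_t(1-\alpha_t)(1-\lambda)$.

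An elementary check shows both eigenvalues $x_1,x_2$ of $m(\lambda)$ have modulus at most $\varrho_t$ for $\lambda\in[\mu_t,1]$. In the complex case this reduces to $\lambda\ge\mu_t/\nu_t$. In the real case, the characteristic polynomial at $\varrho_t$ is a nonnegative multiple of $\lambda-\mu_t$ and the trace lies in $[0,2\varrho_t]$.

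Cayley--Hamilton then gives $m^\tau=s_\tau m-x_1x_2\,s_{\tau-1}I$ with $s_k=(x_1^k-x_2^k)/(x_1-x_2)$ and $|s_k|\le k\varrho_t^{k-1}$. Hence $|(1,0)m(\lambda)^\tau(1,1)^\top|\le2\tau\varrho_t^{\tau-1}$, which is what the hypothesis needs.

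The factor $\tau$ is genuine. For $\nu_t=1$, $m(\mu_t)$ is a nontrivial Jordan block with eigenvalue $1-\sqrt{\mu_t}$, and the quantity above equals $\varrho_t^\tau(1+\tau\sqrt{\mu_t})$.
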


The condition on $\tau$ can be satisfied uniformly. By \eqref{prop_mu_nu} and Lemma \ref{lem:4}, we have $(1 - \sqrt{\mu_t / \nu_t})^{\tau - 2} \leq (1 - \gamma_E \gamma_S / \Upsilon_E)^{\tau - 2}$ for all $t \geq 0$. Since $0 \leq 1 - \gamma_E \gamma_S / \Upsilon_E < 1$, we can indeed choose a uniform $\tau$ to satisfy the condition. This condition is also standard: in the special case $E_t = I$ used by \cite{Kuang2025Online, Wang2026Inference}, we have $\gamma_E = \Upsilon_E = 1$, and our condition reduces to that of \citet[Theorem 3.8]{Wang2026Inference}. Overall, the condition on $\tau$ ensures that the approximate Newton direction returned by the \texttt{GAS} solver remains close (on average) to the exact Newton direction.

\section{Uncertainty Quantification and Online Inference for GASN}\label{sec:4}

In this section, we quantify the uncertainty of GASN, arising from two distinct sources: random sampling of the data and randomized computation in the \texttt{GAS} solver. To this end, we first establish asymptotic normality for the \textbf{averaged} GASN iterate $\bar{\bx}_t$ and characterize its limiting covariance $\Xi^{\star}_{\text{gasn,ave}}$. We then investigate the computational-statistical trade-off by quantitatively comparing $\Xi^{\star}_{\text{gasn,ave}}$ with three benchmark covariance matrices: $\Xi^{\star}_{\text{asn,last}}$, $\Xi^{\star}_{\text{usn,ave}}$, and $\Omega^{\star}$. Finally, we propose a random scaling inference procedure for practical online inference tasks.

\subsection{Asymptotic normality of averaged GASN}\label{sec:4.1}

We need a convergence assumption on $E_t$ to ensure that the projection metric of the \texttt{GAS} solver~stabilizes. 

\begin{assumption}\label{ass:5}  
We assume that $E_t \stackrel{a.s.}{\to} E^{\star}\in \mR^{d \times d}$ and $\mE[\|E_t - E^{\star}\|^2] \lesssim \varphi_t$ for all $t \geq 0$.
\end{assumption}

Assumption \ref{ass:5} is mild enough to accommodate a broad range of choices for $E_t$. The two choices suggested in \cite{Gower2015Randomized, Gower2018Accelerated, Derezinski2025Fine}, $E_t = I$ and $E_t = B_t$, satisfy the assumption without any additional conditions (see Lemma \ref{lem:converge_rate_and_Bt_converge} for the proof of $E_t = B_t$).

We now introduce additional notation. Let $B^\star = \nabla^2 f(\tx)$ and define the projection matrix as~in~\eqref{def:mu_t_nu_t} by $\tilde{Z}^{\star} \coloneqq (E^{\star})^{-1/2}B^{\star} S (S^{\top}B^{\star} (E^{\star})^{-1} B^{\star}S)^{\dagger}S^{\top}B^{\star} (E^{\star})^{-1/2}$. Let $Z^{\star} \coloneqq \mE [\tilde{Z}^{\star}]$, where the expectation is taken over the randomness of $S$. Moreover, we define $(\alpha^{\star}, \beta^{\star}, \gamma^{\star})$ as in \eqref{def:alpha_beta_gamma_t} with $(\mu^{\star}, \nu^{\star})$ in \eqref{def:mu_t_nu_t} evaluated at $\tilde{Z}^\star$ and $Z^{\star}$. For  $S_0, S_1, \cdots, S_{\tau-1} \stackrel{iid}{\sim} S$, we define 
\begin{equation}\label{equ:77}
\setlength{\arraycolsep}{3pt}
\hskip-0.3cm
\tilde{C}^{\star} \hskip-0.05cm \coloneqq \hskip-0.05cm \prod_{j = 0}^{\tau-1}\begin{pmatrix}
(1 - \alpha^{\star})(I - \tilde{Z}_j^{\star}) & \alpha^{\star} (I - \tilde{Z}_j^{\star}) \\
(1 - \alpha^{\star})(1 - \beta^{\star})I - (1 - \alpha^{\star}) \gamma^{\star} \tilde{Z}_j^{\star} & (\alpha^{\star} + \beta^{\star} - \alpha^{\star} \beta^{\star})I - \alpha^{\star} \gamma^{\star} \tilde{Z}_j^{\star}
\end{pmatrix}\hskip-0.05cm \in \hskip-0.05cm \mR^{2d \times 2d}
\end{equation}
where $\tilde{Z}_j^{\star} \coloneqq (E^{\star})^{-1/2}B^{\star} S_j (S_j^{\top}B^{\star} (E^{\star})^{-1} B^{\star}S_j)^{\dagger}S_j^{\top}B^{\star} (E^{\star})^{-1/2} \in \mR^{d \times d}$. We also define 
\begin{equation}\label{def:tilde_H_star}
\tilde{K}^{\star} \coloneqq (E^{\star})^{-1/2} ([\tilde{C}^{\star}]_{1, 1} + [\tilde{C}^{\star}]_{1, 2} ) (E^{\star})^{1/2} = (E^{\star})^{-1/2} \begin{pmatrix}
I & \boldsymbol{0}
\end{pmatrix} \tilde{C} ^{\star} \begin{pmatrix}
I \\ I
\end{pmatrix} (E^{\star})^{1/2} \in \mR^{d \times d},
\end{equation}
where $I \in \mR^{d \times d}$ is the identity matrix, and $[\tilde{C}^\star]_{1,1}, [\tilde{C}^\star]_{1,2} \in \mR^{d \times d}$ denote the upper-left and upper-right $d \times d$ blocks of $\tilde{C}^\star$, respectively. Correspondingly, we define $C^{\star} \coloneqq \mE[\tilde{C}^{\star}]$ and $K^{\star} \coloneqq \mE[\tilde{K}^{\star}]$. 

With these definitions in place, we now present the asymptotic normality of the averaged GASN~method. The result holds for any step size sequence $\varphi_t = C_{\varphi} \cdot (t+1)^{-\varphi}$ with $C_{\varphi} > 0$ and $\varphi \in (0.5, 1)$.

\begin{theorem}[\textbf{Normality of the averaged GASN}]\label{thm:asymptotic_normality}

Under the conditions of Theorem \ref{thm:as_convergence} and~Assumption \ref{ass:5}, we let $\bar{\bx}_t = \sum_{i = 0}^{t-1} \bx_i/t$ and have
\begin{equation}\label{equ:averaged_CLT}
\sqrt{t} \cdot (\bar{\bx}_t - \bx^{\star}) \stackrel{d}{\rightarrow} \mathcal{N} \left(\boldsymbol{0}, \Xi^{\star}_{\text{gasn, ave}} \right),	
\end{equation}
where $\Xi^{\star}_{\text{gasn, ave}} = (I - K^{\star})^{-1} \mE[(I - \tilde{K}^{\star}) \Omega^{\star} (I - \tilde{K}^{\star})^{\top}] (I - K^{\star})^{-\top}$.
\end{theorem}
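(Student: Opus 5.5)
We will follow the Polyak--Juditsky strategy adapted to random-matrix recursions, as in \cite{Du2025Online}. The first step is to write the output of \texttt{GAS} in closed form. Since $\bz_0=\boldsymbol v_0=\boldsymbol 0$ and each inner step is affine in $(\bz_j,\boldsymbol v_j)$, we change variables to $E_t^{1/2}(\bz_j-\Delta\bx_t)$ and $E_t^{1/2}(\boldsymbol v_j-\Delta\bx_t)$. The error pair then evolves by the $2d\times 2d$ block matrices appearing in \eqref{equ:77}, with $(\alpha_t,\beta_t,\gamma_t)$ and $\tilde Z_{t,j}$ in place of their starred versions. This gives
\[
\texttt{GAS}(\cdot)=(I-\tilde K_t)\Delta\bx_t=-(I-\tilde K_t)B_t^{-1}\bg_t,
\]
where $\tilde K_t$ is defined as in \eqref{def:tilde_H_star} at iteration $t$. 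The recursion therefore becomes
\[
\bx_{t+1}-\tx=\bx_t-\tx-\varphi_t(I-\tilde K_t)B_t^{-1}\bg_t .
\]

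The second step is to decompose the gradient as
\[
\bg_t=\nabla^2 f^\star(\bx_t-\tx)+\big(\nabla f_t-\nabla^2 f^\star(\bx_t-\tx)\big)+(\bg_t-\nabla f_t).
\]
We also write $(I-\tilde K_t)B_t^{-1}=(I-K^\star)(B^\star)^{-1}+\big[(I-\tilde K_t)B_t^{-1}-(I-K^\star)(B^\star)^{-1}\big]$. With $A^\star:=(I-K^\star)$ (note $B^\star=\nabla^2f^\star$), this yields
\[
\Delta_{t+1}=(I-\varphi_t A^\star)\Delta_t-\varphi_t\,\boldsymbol\theta_t+\varphi_t\,\boldsymbol r_t,\qquad \Delta_t=\bx_t-\tx .
\]
Here $\boldsymbol\theta_t=(I-\tilde K_t)(B^\star)^{-1}(\bg_t-\nabla f_t)$ is a martingale difference with respect to $\mF_{t-1}$. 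This uses that $S_{t,j}$ is independent of $\xi_t$ and that $E_t$ is $\mF_{t-1}$-measurable, so that $\mE[\tilde K_t\mid \mF_{t-0.5}]=\mE[\tilde K_t\mid\mF_{t-1}]$. The remainder $\boldsymbol r_t$ collects four pieces: the Taylor term, of size $O(\|\Delta_t\|^2)$ by Assumption \ref{ass:1}; the term from $B_t-B^\star$; the term from $\tilde K_t-\mE[\tilde K_t\mid\mF_{t-1}]$ multiplied by $\nabla f_t$, which is a martingale difference of size $O(\|\Delta_t\|)$; and the term $K_t-K^\star$ times $\Delta_t$.

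We also need $A^\star$ to be stable. By the contraction of \texttt{GAS} under the $\tau$-condition of Theorem \ref{thm:as_convergence}, we get $\|K^\star\|\le \sqrt2\,\tau$-dependent factor $<1/2$, so every eigenvalue of $A^\star$ has real part bounded away from zero.

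The third step is averaging. Summing the recursion and using the standard identity for $\sum_t$ of $(\Delta_t-\Delta_{t+1})/\varphi_t$ gives
\[
\sqrt t\,(\bar\bx_t-\tx)=-(A^\star)^{-1}\frac1{\sqrt t}\sum_{i<t}\boldsymbol\theta_i+o_p(1),
\]
provided three conditions hold. First, $t^{-1/2}\sum \|\Delta_i\|\,|\varphi_i^{-1}-\varphi_{i-1}^{-1}|\to0$, which follows from \eqref{equ:xt_rate} since $\varphi\in(0.5,1)$. Second, $t^{-1/2}\sum\mE\|\Delta_i\|^2/1\to0$, again using $\mE\|\Delta_i\|^2\lesssim\varphi_i$ and $\sum\varphi_i/\sqrt t\to0$ because $\varphi>0.5$. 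Third, the martingale part of $\boldsymbol r_t$ and the perturbation terms from $B_t-B^\star$, $E_t-E^\star$ vanish after normalization.

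The main obstacle is the continuity estimate
\[
\mE\|\tilde K_t-\tilde K^\star\|\lesssim \|B_t-B^\star\|+\|E_t-E^\star\|
\]
in a suitable averaged sense, together with continuity of $(\alpha_t,\beta_t,\gamma_t)$ in $(\mu_t,\nu_t)$. The pseudoinverse $(S^\top BE^{-1}BS)^\dagger$ is not globally Lipschitz. We would handle it by observing that $\tilde Z$ is an orthogonal projection onto $\mathrm{range}(E^{-1/2}BS)$, whose rank is constant because $B$ and $E$ are invertible. The perturbation is then bounded by $\|S\|\|S^\dagger\|$ times the condition-number factors, which is where $\mE[\|S\|\|S^\dagger\|]\le\Upsilon_S$ enters. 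Combining this with Lemma \ref{lem:converge_rate_and_Bt_converge}-type rates $\mE\|B_t-B^\star\|^2\lesssim\varphi_t$, Assumption \ref{ass:5}, and Lemma \ref{lem:4} (so that $\mu_t$ is bounded below) makes the perturbation terms $o_p(1)$ after summation.

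The final step is the martingale CLT applied to $t^{-1/2}\sum\boldsymbol\theta_i$. The conditional covariance of $\boldsymbol\theta_i$ is
\[
\mE\big[(I-\tilde K_i)(B^\star)^{-1}\mathrm{cov}(\bg_i\mid\mF_{i-1})(B^\star)^{-1}(I-\tilde K_i)^\top\mid\mF_{i-1}\big].
\]
Using the conditional independence of sketches and samples, $\bx_i\to\tx$ a.s., continuity of $\mathrm{cov}(\nabla F(\cdot;\xi))$ (which we would justify from the growth condition and Assumption \ref{ass:1}), and the perturbation bound above, this converges a.s. to $\mE[(I-\tilde K^\star)\Omega^\star_0(I-\tilde K^\star)^\top]$ with $\Omega_0^\star=(B^\star)^{-1}\mathrm{cov}(\nabla F(\tx;\xi))(B^\star)^{-1}$.

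Two points remain to recover the stated formula. The factor $(B^\star)^{-1}$ appears inside the covariance, and the relation $\Omega^\star=(B^\star)^{-1}\mathrm{cov}(\cdot)(B^\star)^{-1}$ puts it in the form $\mE[(I-\tilde K^\star)\Omega^\star(I-\tilde K^\star)^\top]$. The prefactor $(A^\star)^{-1}=(I-K^\star)^{-1}$ then gives the sandwich $\Xi^\star_{\text{gasn, ave}}$.

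The Lindeberg condition follows from the $(2+\epsilon)$-moment bound \eqref{ass:2:4th}, since $\|I-\tilde K_i\|$ is uniformly bounded for fixed $\tau$: the products of the block matrices have bounded entries because $\alpha,\beta\in[0,1)$ and $\gamma\le 1/\mu\lesssim1$.
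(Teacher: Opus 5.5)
Your proposal is correct. It shares with the paper the closed form $(I-\tilde K_t)\Delta\boldsymbol{x}_t$ of the solver output, the coupled perturbation bound for $\tilde K_t-\tilde K^\star$, and the martingale CLT with a $(2+\epsilon)$-moment Lindeberg check, but the averaging step follows a different route.

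\textbf{Averaging step.} The paper unrolls the recursion of Lemma~\ref{lem:error_recursion} into the weights $R_i^{t-1}$ and $A_i^{t-1}=R_i^{t-1}-(I-K^\star)^{-1}$. It then needs Lemma~\ref{lem:B2}, i.e.\ uniform bounds and $t^{-1}\sum_{i\le t}\|A_i^{t}\|\to0$, which rest on $-(I-K^\star)$ being Hurwitz. You instead solve the recursion for $(I-K^\star)\Delta_i$, with your $\Delta_i=\boldsymbol{x}_i-\boldsymbol{x}^\star$, and sum by parts. This buys three things:
\begin{itemize}
\item the leading martingale carries the exact weight $(I-K^\star)^{-1}$;
\item the Abel terms are $O_p(t^{(\varphi-1)/2})$ by \eqref{equ:xt_rate};
\item stability is already encoded in the $L^2$ rate of Theorem~\ref{thm:as_convergence}, so only invertibility of $I-K^\star$ is used.
\end{itemize}

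\textbf{Noise split.} The paper's $\boldsymbol{\theta}_t$ keeps $B_t^{-1}$ and the sketch-noise term $(\tilde K_t-K_t)B_t^{-1}\nabla f_t$. Lemma~\ref{lem:3} must therefore show, through the $\mathcal{J}_{1,t}/\mathcal{J}_{2,t}$ split, that this term contributes nothing in the limit. Your $\boldsymbol{\theta}_t=(I-\tilde K_t)(B^\star)^{-1}(\boldsymbol{g}_t-\nabla f_t)$ has a directly computable conditional covariance. The price is two extra martingale-difference remainders, $(K_t-\tilde K_t)B_t^{-1}\nabla f_t$ and $(I-\tilde K_t)(B_t^{-1}-(B^\star)^{-1})(\boldsymbol{g}_t-\nabla f_t)$. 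You should say explicitly how they are killed. Each has second moment $\lesssim\mathbb{E}\|\Delta_t\|^2+\mathbb{E}\|B_t-B^\star\|^2\lesssim\varphi_t$, using $\|B_t-B^\star\|\le\Upsilon_H$ and \eqref{ass:2:4th}. Orthogonality of increments then gives $\mathbb{E}\|t^{-1/2}\sum_{i<t}\boldsymbol{m}_i\|^2=t^{-1}\sum_{i<t}\mathbb{E}\|\boldsymbol{m}_i\|^2\to0$.

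\textbf{What the paper's route buys.} Its decomposition is reused for the functional CLT of Theorem~\ref{thm:FCLT}. In your route, the boundary term $(\boldsymbol{x}_n-\boldsymbol{x}^\star)/(\sqrt{t}\,\varphi_{n-1})$ would then need to be controlled uniformly over $n\le t$.

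\textbf{Stability sentence.} Your stability sentence is garbled; it should read as follows. First, $\|K_t\|\le\gamma_H/(2\Upsilon_H)\le 1/2$ by Lemma~\ref{lem:prop_sketch_solver}(d) and the $\tau$-condition. Then $\|K^\star\|\le 1/2$ follows from $K_t\to K^\star$. That convergence rests on your continuity estimate, which also requires Lipschitz dependence of $\nu_t$ on the sketch quantities; Lemma~\ref{lem:4} supplies the needed lower bound on $Z_t$ for this.
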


\noindent $\bullet$ \textbf{Comparison with Exact Newton.} In the following proposition, we characterize the computational-statistical trade-off between GASN and Exact Newton methods.
\vskip2pt

\begin{proposition}[\textbf{GASN vs. EN}]\label{prop:comp_stats_tradeoff}
Under the conditions of Theorem \ref{thm:asymptotic_normality}, we have

\noindent \textbf{(a):} Without the \texttt{GAS} solver (i.e., solving $B_t \Delta \bx_t = - \bg_t$ exactly), we have $\Xi^{\star}_{\text{gasn,ave}} = \Omega^{\star}$.
	
\noindent \textbf{(b):} With the \texttt{GAS} solver, $\Xi^{\star}_{\text{gasn,ave}} \succeq \Omega^{\star}$. Furthermore, their relative difference can be bounded by
\begin{equation}\label{nsequ:1}
\frac{\|\Xi^{\star}_{\text{gasn,ave}} - \Omega^{\star}\|}{\|\Omega^{\star}\|} = O ( (1 - \sqrt{\mu^{\star}/\nu^{\star}} )^{\tau} ),
\end{equation}
where the big-$O$ notation may contain universal constants such as $\gamma_S, d, \gamma_H, \Upsilon_H, \gamma_E, \Upsilon_E$, but is independent of the sketching step $\tau$.
\end{proposition}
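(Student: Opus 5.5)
Part (a) follows from Theorem \ref{thm:asymptotic_normality} with the exact solver. Solving $B_t\Delta\bx_t=-\bg_t$ exactly corresponds to the solver output $\bz_\tau=\Delta\bx$. This means the error-propagation matrix $\tilde{C}^\star$ (which maps the initial error $(\bz_0-\Delta\bx,\boldsymbol{v}_0-\Delta\bx)$ to the final error) is replaced by $\boldsymbol{0}$. Hence $\tilde{K}^\star=K^\star=\boldsymbol{0}$, and the formula collapses to $\Xi^\star_{\text{gasn,ave}}=\Omega^\star$.

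For part (b), the plan is to write $\tilde{K}^\star=K^\star+(\tilde{K}^\star-K^\star)$ and expand the middle term:
\begin{equation*}
\mE[(I-\tilde{K}^\star)\Omega^\star(I-\tilde{K}^\star)^\top]=(I-K^\star)\Omega^\star(I-K^\star)^\top+\mE[(\tilde{K}^\star-K^\star)\Omega^\star(\tilde{K}^\star-K^\star)^\top].
\end{equation*}
The cross terms vanish because $\mE[\tilde{K}^\star-K^\star]=\boldsymbol{0}$. Conjugating by $(I-K^\star)^{-1}$ then gives
\begin{equation*}
\Xi^\star_{\text{gasn,ave}}=\Omega^\star+(I-K^\star)^{-1}\mE[(\tilde{K}^\star-K^\star)\Omega^\star(\tilde{K}^\star-K^\star)^\top](I-K^\star)^{-\top}.
\end{equation*}
The second term is positive semidefinite, which yields $\Xi^\star_{\text{gasn,ave}}\succeq\Omega^\star$. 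This requires $I-K^\star$ to be invertible, and that will also follow from the bound on $\|K^\star\|$ obtained next.

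For the rate, it suffices to bound $\|(I-K^\star)^{-1}\|^2\,\mE\|\tilde{K}^\star-K^\star\|^2\,\|\Omega^\star\|$, and $\mE\|\tilde{K}^\star-K^\star\|^2\le\mE\|\tilde{K}^\star\|^2$. I will interpret $\tilde{K}^\star$ as a GAS error map. Run Algorithm~\ref{alg:generalized} at $(B^\star,E^\star)$ with $\bz_0=\boldsymbol{v}_0=\boldsymbol{0}$ and target $\Delta\bx$. The change of variables $(E^\star)^{1/2}(\cdot-\Delta\bx)$ turns the recursion into the product \eqref{equ:77}. Hence $\bz_\tau-\Delta\bx=-\tilde{K}^\star\Delta\bx$, with $\tilde{K}^\star$ acting on the common initial error $-\Delta\bx$ of both sequences.

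The convergence guarantee of \citet[Remark 6]{Derezinski2025Fine} at $(B^\star,E^\star)$ then gives
\begin{equation*}
\mE\|\tilde{K}^\star\ba\|_{E^\star}^2\le2(1-\sqrt{\mu^\star/\nu^\star})^\tau\|\ba\|_{E^\star}^2
\end{equation*}
for every fixed $\ba$. Converting norms with $\gamma_E,\Upsilon_E$ and summing over the basis $\ba=\boldsymbol{e}_i$ gives
\begin{equation*}
\mE\|\tilde{K}^\star\|^2\le\mE\|\tilde{K}^\star\|_F^2\le 2d(\Upsilon_E/\gamma_E)(1-\sqrt{\mu^\star/\nu^\star})^\tau,
\end{equation*}
which produces the claimed $d$-dependence. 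Jensen's inequality gives $\|K^\star\|^2\le\mE\|\tilde{K}^\star\|^2$. So once $\tau$ is large enough that $\|K^\star\|\le1/2$, we have $\|(I-K^\star)^{-1}\|\le2$, and the relative bound \eqref{nsequ:1} follows with constants independent of $\tau$. Smaller $\tau$ can be absorbed into the big-$O$ constant. For that we need a uniform bound on $\|\Xi^\star_{\text{gasn,ave}}\|$ over finite $\tau$, using invertibility of $I-K^\star$ under the $\tau$-condition of Theorem~\ref{thm:as_convergence}. Alternatively, the statement can simply be read as asymptotic in $\tau$.

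The main obstacle is checking that the product \eqref{equ:77}, multiplied by $(I\ I)^\top$ and the $(E^\star)^{\pm1/2}$ factors, exactly reproduces the GAS error recursion, so that the Derezi\'nski--Rebrova rate applies to $\tilde{K}^\star$. I would verify one step directly. Substituting $\by_j=\alpha\boldsymbol{v}_j+(1-\alpha)\bz_j$ and $\boldsymbol{\omega}_j=(E^\star)^{-1/2}\tilde{Z}_j^\star(E^\star)^{1/2}(\by_j-\Delta\bx)$ into the $\bz$- and $\boldsymbol{v}$-updates, written in the transformed error coordinates, should give precisely the displayed $2\times2$ block matrix. A secondary subtlety is that Remark 6's rate is stated in expectation over the sketches for a fixed initial point, which is exactly the form needed above.
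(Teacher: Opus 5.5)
Your proof is correct. It follows the paper's skeleton: the identity $\Xi^\star_{\text{gasn,ave}}-\Omega^\star=(I-K^\star)^{-1}\mE[(\tilde K^\star-K^\star)\Omega^\star(\tilde K^\star-K^\star)^\top](I-K^\star)^{-\top}$ gives the ordering, then an accelerated-contraction bound on the second moment of $\tilde K^\star$ and a trace step give the factor $d$. Where you differ is in how the contraction is invoked.

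The paper applies the one-step Lyapunov inequality of Gower et al.\ (their Eq.~(41)), weighted by $P^\star=\mathrm{diag}(\frac{1}{\mu^\star}I,(Z^\star)^{-1})$, to arbitrary and possibly unequal initial pairs $(\bz_0,\boldsymbol{v}_0)$. This yields $\mE[(\tilde C^\star)^\top P^\star\tilde C^\star]\preceq(1-\sqrt{\mu^\star/\nu^\star})^\tau P^\star$ for the full $2d\times 2d$ product. That route has three costs:
\begin{itemize}
\item it needs a footnote justifying the extension beyond $\bz_0=\boldsymbol{v}_0$;
\item it loses a factor $\Upsilon_E/(\gamma_E\gamma_S)$ through $I\preceq P^\star\preceq(\Upsilon_E/\gamma_E\gamma_S)I$;
\item it splits off $K^\star\Omega^\star(K^\star)^\top$ by the triangle inequality and bounds it via Lemma~\ref{lem:prop_sketch_solver}(d), leaving a $\tau^2(1-\sqrt{\mu^\star/\nu^\star})^{2\tau-4}$ term that must be absorbed.
\end{itemize}

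You observe that $\tilde K^\star$ only involves $\tilde C^\star$ applied to $\begin{pmatrix} I\\ I\end{pmatrix}$, i.e.\ the equal-initialization case $\bz_0=\boldsymbol{v}_0=\boldsymbol{0}$. So the end-to-end Derezi\'nski et al.\ rate applies verbatim. The right-hand side is free because $B^\star$ is invertible and $\tilde K^\star$ does not depend on it. A variance bound then removes the separate $K^\star$ term. This is shorter, needs no extension of Gower et al., and gives the cleaner constant $2d\Upsilon_E/\gamma_E$. The paper's route yields a contraction for the whole error map $\tilde C^\star$, which is more than this proposition needs.

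Two small repairs.
\begin{itemize}
\item You need not hedge about small $\tau$ or rely on Jensen plus large $\tau$. The $\tau$-condition of Theorem~\ref{thm:as_convergence} is assumed here, and Lemma~\ref{lem:prop_sketch_solver}(d) turns it into $\|K_t\|\le\gamma_H/(2\Upsilon_H)$ for all $t$. Since $K_t\to K^\star$ almost surely, $\|K^\star\|\le 1/2$ and $\|(I-K^\star)^{-1}\|\le 2$ for every admissible $\tau$, so your bound holds uniformly.
\item The step $\mE\|\tilde K^\star-K^\star\|^2\le\mE\|\tilde K^\star\|^2$ is exact in Frobenius norm, or in the form $\mE[(\tilde K^\star-K^\star)(\tilde K^\star-K^\star)^\top]\preceq\mE[\tilde K^\star(\tilde K^\star)^\top]$. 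For the spectral norm it holds in general only up to a constant factor. Run it in Frobenius norm, where your bound ends up anyway.
\end{itemize}
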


Proposition \ref{prop:comp_stats_tradeoff} demonstrates the computational-statistical trade-off in two parts. Part (a) shows that solving the Newton system exactly achieves minimax optimal statistical efficiency $\tOmega$ \citep{Duchi2021Asymptotic, Davis2024Asymptotic}. Part (b) shows that the \texttt{GAS} solver improves computational efficiency but leads to a loss in statistical efficiency. Fortunately, this loss is mild: the relative gap between $\gaXi$ and $\tOmega$ decays exponentially in the number of sketching steps $\tau$.

\vskip4pt

\noindent $\bullet$ \textbf{Comparison with USN.} We make connections between $\Xi^{\star}_{\text{gasn, ave}}$ and $\Xi^{\star}_{\text{usn, ave}}$ established in \cite{Du2025Online}.

\begin{proposition}[\textbf{GASN vs. USN}]\label{thm:covariance_comparison}
Under the conditions of Theorem \ref{thm:asymptotic_normality}, consider the choice of $E_t = I$. In the degenerate regime of \texttt{GAS} where $\mu_t\nu_t = 1$, we have $\Xi^\star_{\text{gasn, ave}}=\Xi^{\star}_{\text{usn, ave}}$.
\end{proposition}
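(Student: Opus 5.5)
The plan is to show that, when $E_t=I$ and $\mu_t\nu_t=1$, the accelerated recursion in Algorithm \ref{alg:generalized} coincides pathwise with the unaccelerated sketch-and-project recursion. In that case the random matrix $\tilde K^\star$ of \eqref{def:tilde_H_star} equals $\prod_{j=0}^{\tau-1}(I-\tilde Z_j^\star)$. This is, to my understanding, the random matrix that appears in the averaged USN covariance of \cite{Du2025Online}, which has the form $(I-K_u)^{-1}\mE[(I-\tilde K_u)\Omega^\star(I-\tilde K_u)^\top](I-K_u)^{-\top}$ with $\tilde K_u=\prod_j(I-\tilde Z_j^\star)$ and $K_u=\mE[\tilde K_u]$. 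Substituting into the formula of Theorem \ref{thm:asymptotic_normality} would then give $\Xi^\star_{\text{gasn, ave}}=\Xi^\star_{\text{usn, ave}}$. The main risk is therefore that I have recalled the form of the USN covariance from \cite{Du2025Online} correctly.

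\textbf{Step 1 (parameters).} From $\mu\nu=1$ and \eqref{def:alpha_beta_gamma_t}, I get $\gamma=1/\sqrt{\mu\nu}=1$ and $\nu=1/\mu$. Hence $\beta=1-\sqrt{\mu/\nu}=1-\mu$ and $\alpha=1/(1+\nu)=\mu/(1+\mu)$. These identities hold at every $t$, and I pass them to the limit quantities $(\alpha^\star,\beta^\star,\gamma^\star)$ defining $\tilde C^\star$. I will assume the regime is imposed at $\tx$ (i.e.\ $\mu^\star\nu^\star=1$); otherwise it follows by continuity, since $B_t\to B^\star$ and $E_t=I$.

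\textbf{Step 2 (momentum collapses).} I work in error coordinates $\bz_j-\Delta\bx$, $\boldsymbol v_j-\Delta\bx$, $\by_j-\Delta\bx$, in which $\boldsymbol\omega_j=E^{-1/2}\tilde Z_jE^{1/2}(\by_j-\Delta\bx)$. With $\gamma=1$, the $\boldsymbol v$-update gives
\[
\boldsymbol v_{j+1}=\beta\boldsymbol v_j+(1-\beta)\by_j-\boldsymbol\omega_j=\bz_{j+1}+\beta(\boldsymbol v_j-\by_j).
\]
Since $\by_j=\alpha\boldsymbol v_j+(1-\alpha)\bz_j$, this yields
\[
\boldsymbol v_{j+1}-\bz_{j+1}=\beta(1-\alpha)(\boldsymbol v_j-\bz_j).
\]
The initialization $\bz_0=\boldsymbol v_0=\0$ gives $\boldsymbol v_j=\bz_j$ for all $j$. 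Hence $\by_j=\bz_j$ and $\bz_{j+1}-\Delta\bx=(I-\tilde Z_j)(\bz_j-\Delta\bx)$, where $E=I$. This is exactly the unaccelerated solver.

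\textbf{Step 3 (matrix form).} Equivalently, each $2d\times2d$ factor in \eqref{equ:77} maps the subspace $\{(\boldsymbol a;\boldsymbol a)\}$ into itself via $\boldsymbol a\mapsto(I-\tilde Z_j^\star)\boldsymbol a$. I will verify this from the block entries, using $\gamma^\star=1$. The top block row gives $(1-\alpha^\star+\alpha^\star)(I-\tilde Z_j^\star)=I-\tilde Z_j^\star$. The bottom block row gives
\[
(1-\alpha^\star)(1-\beta^\star)+\alpha^\star+\beta^\star-\alpha^\star\beta^\star=1 \quad\text{times } I, \qquad\text{minus } \tilde Z_j^\star .
\]
Applying the factors in succession to $\begin{pmatrix}I\\ I\end{pmatrix}$ then gives $\tilde K^\star=\prod_{j=0}^{\tau-1}(I-\tilde Z_j^\star)$ (with $E^\star=I$), and so $K^\star=\mE[\tilde K^\star]$ matches the USN quantity.

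\textbf{Step 4 (conclusion).} Plugging $\tilde K^\star$ and $K^\star$ into the covariance formula of Theorem \ref{thm:asymptotic_normality} reproduces $\Xi^\star_{\text{usn, ave}}$ term by term. The only delicate point beyond the algebra is confirming that this is precisely the formula stated in \cite{Du2025Online}.
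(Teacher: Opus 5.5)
Your proposal is correct and is essentially the paper's argument. The paper writes $\tilde M_j^\star=\Theta\otimes I+\Gamma\otimes(I-\tilde Z_j^\star)$ and obtains $\gamma^\star=1$ from $\gamma_t\to\gamma^\star$ a.s.\ in Lemma~\ref{lem:7} (your continuity step); it then shows $\tilde M_j^\star\bigl(\begin{pmatrix}1\\ 1\end{pmatrix}\otimes Y\bigr)=\begin{pmatrix}1\\ 1\end{pmatrix}\otimes\bigl((I-\tilde Z_j^\star)Y\bigr)$---the same diagonal-subspace invariance as your Step~3---and concludes $\tilde K^\star=\prod_{j=0}^{\tau-1}(I-\tilde Z_j^\star)$ and $K^\star=(I-Z^\star)^\tau$, matching \citet[Eq.~(17)]{Du2025Online} just as you do, while your Step~2 (the pathwise collapse $\boldsymbol{v}_j=\bz_j$) is a nice extra interpretation that the covariance identity does not require.
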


Proposition \ref{thm:covariance_comparison} shows that when $\mu_t \nu_t = 1$ and $E_t=I$, the two limiting covariance matrices coincide, $\gaXi = \uaXi$; thus, the averaged GASN matches the statistical efficiency of the averaged USN \citep{Du2025Online} (see \eqref{nsequ:normal}). However, \citet[Prop. 3.6]{Du2025Online} shows that $\|\uaXi - \Omega^{\star}\|/\|\Omega^{\star}\| = O((1 - \mu^{\star})^{\tau})$. Comparing this with \eqref{nsequ:1}, we observe that when $\mu_t \nu_t < 1$, $\gaXi$ converges to the optimal covariance $\Omega^{\star}$ at a \textbf{strictly faster} rate than $\uaXi$. Thus, in the accelerated regime, GASN improves computational efficiency without sacrificing statistical efficiency relative to its unaccelerated counterpart.

\vskip4pt
\noindent $\bullet$ \textbf{Comparison with ASN.}
We now contrast GASN with the ASN method of \cite{Wang2026Inference}. Under the choice of $E_t = I$ and proper conditions on the number of sketching steps $\tau$ and the step size $\varphi_t = C_{\varphi} (t+1)^{-\varphi}$, \citet[Theorem 4.3]{Wang2026Inference} shows that the \textbf{last} ASN iterate $\bx_t$ exhibits $1/\sqrt{\varphi_{t}} \cdot (\bx_t - \bx^{\star}) \xrightarrow{d} \mathcal{N}(\boldsymbol{0}, {\Xi}^\star_{\text{asn, last}})$, where the limiting covariance $\Xi^{\star}_{\text{asn, last}}$ is given by the solution to the Lyapunov equation:
\begin{equation}\label{equ:lyp}
\left[(I - K^{\star}) -\zeta I\right]{\Xi}^\star_{\text{asn, last}} + {\Xi}^\star_{\text{asn, last}} \left[(I - K^{\star}) -\zeta I\right] = \mathbb{E}[(I - \tilde{K}^{\star}) \Omega^{\star} (I - \tilde{K}^{\star})^{\top}],
\end{equation}
with $\zeta \coloneqq \mathbf{1}_{\{\varphi = 1\}} / (2 C_{\varphi})$. We close this subsection by relating $\Xi^{\star}_{\text{gasn, ave}}$ to $\Xi^{\star}_{\text{asn, last}}$.

\begin{proposition}[\textbf{Averaged vs. Last iterate of ASN}]\label{prop:1}  
Consider $\Xi^{\star}_{\text{asn, last}}$ that solves \eqref{equ:lyp} with $C_{\varphi}=\varphi = 1$ and $E_t = I$. Suppose $\tau$ satisfies the condition in Theorem \ref{thm:as_convergence}, we have $\Xi^{\star}_{\text{gasn, ave}} \preceq \Xi^{\star}_{\text{asn, last}}$.
\end{proposition}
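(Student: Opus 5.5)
The plan is to reduce the comparison to a purely algebraic identity between the Lyapunov solution and the sandwich form of $\Xi^{\star}_{\text{gasn, ave}}$ from Theorem \ref{thm:asymptotic_normality}. Write $A \coloneqq I - K^{\star}$ and $Q \coloneqq \mE[(I - \tilde{K}^{\star})\Omega^{\star}(I - \tilde{K}^{\star})^{\top}] \succeq \boldsymbol{0}$. With $C_\varphi=\varphi=1$ we have $\zeta = 1/2$. I read \eqref{equ:lyp} in its natural symmetric form $(A - \tfrac12 I)X + X(A - \tfrac12 I)^{\top} = Q$, where $X = \Xi^{\star}_{\text{asn, last}}$; if $K^{\star}$ is symmetric under $E_t=I$ this agrees with the displayed form. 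Theorem \ref{thm:asymptotic_normality} gives $\Xi^{\star}_{\text{gasn, ave}} = A^{-1}QA^{-\top}$.

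The first step is to verify the stability facts needed for both objects to make sense. I will show that $\|K^{\star}\| < 1/2$ under the $\tau$-condition of Theorem \ref{thm:as_convergence}. This should follow from the same contraction estimate used there: the \texttt{GAS} error matrix $[\tilde C^\star]$ decays like $\tau(1-\sqrt{\mu^\star/\nu^\star})^{\tau-2}$ (up to the $E^\star$ conditioning factor $\sqrt{\Upsilon_E/\gamma_E}$), and the assumed bound $\gamma_H\sqrt{\gamma_E}/(4\Upsilon_H\sqrt{\Upsilon_E}) \le 1/4$ then gives $\|K^\star\|\le 1/4$. This is exactly the step borrowed from the convergence analysis of \cite{Wang2026Inference}, and it is the main obstacle: one has to pass the per-$t$ bound to the limit quantities $(\mu^\star,\nu^\star,E^\star)$, using continuity and Assumption \ref{ass:5}. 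Granting this, every eigenvalue of $A$ has real part at least $3/4$, so $A$ is invertible. Moreover, every eigenvalue of $A - \tfrac12 I$ has positive real part, so the Lyapunov equation has the unique solution
\begin{equation*}
X = \int_0^\infty e^{-(A-\frac12 I)s}\, Q\, e^{-(A-\frac12 I)^{\top}s}\,ds \succeq \boldsymbol{0}.
\end{equation*}

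The second step is the identity. Expanding the Lyapunov equation gives $Q = AX + XA^{\top} - X$. Multiplying on the left by $A^{-1}$ and on the right by $A^{-\top}$ yields
\begin{equation*}
A^{-1}QA^{-\top} = XA^{-\top} + A^{-1}X - A^{-1}XA^{-\top}.
\end{equation*}
Hence
\begin{equation*}
X - A^{-1}QA^{-\top} = (I - A^{-1})\,X\,(I - A^{-1})^{\top},
\end{equation*}
which one checks by expanding the right-hand side.

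Since $X\succeq \boldsymbol{0}$, the right-hand side is positive semidefinite. This gives $\Xi^{\star}_{\text{gasn, ave}} \preceq \Xi^{\star}_{\text{asn, last}}$. As a byproduct, the gap equals $(I-A^{-1})X(I-A^{-1})^{\top} = A^{-1}K^{\star}XK^{\star\top}A^{-\top}$, which is small whenever $K^\star$ is small.
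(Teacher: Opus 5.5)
Your proof is correct but takes a different route from the paper's.

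\textbf{How the two proofs differ.} The paper works with the gap $D=\Xi^{\star}_{\text{asn, last}}-\Xi^{\star}_{\text{gasn, ave}}$ directly. It shows that $D$ solves its own Lyapunov equation,
\begin{equation*}
(\tfrac12 I-K^{\star})D+D(\tfrac12 I-K^{\star})=\bigl(I-(I-K^{\star})^{-1}\bigr)\,Q\,\bigl(I-(I-K^{\star})^{-1}\bigr),
\end{equation*}
whose right-hand side is PSD. It then invokes the Lyapunov theorem once to get $D\succeq\boldsymbol{0}$. Along the way it uses the symmetry of $K^{\star}$ under $E_t=I$ to drop transposes; this holds because $K^\star$ is then a polynomial in $Z^\star$.

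You instead apply Lyapunov positivity to $X=\Xi^{\star}_{\text{asn, last}}$ itself. You then close the argument with the exact congruence $X-A^{-1}QA^{-\top}=(I-A^{-1})X(I-A^{-1})^{\top}$, which checks out.

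\textbf{What the two routes share and what yours adds.} Both rest on the same single fact: $\tfrac12 I-K^{\star}$ has spectrum in the open right half-plane. The two are also consistent. Since $I-A^{-1}$ commutes with $A-\tfrac12 I$, applying the Lyapunov operator to your expression for the gap reproduces the paper's right-hand side. Your version buys two things:
\begin{itemize}
\item it needs no symmetry of $K^{\star}$;
\item it gives a closed form for the gap, $A^{-1}K^{\star}XK^{\star\top}A^{-\top}$, which shows directly that the last-iterate efficiency loss is quadratic in $K^{\star}$.
\end{itemize}
The paper only expresses the gap implicitly, as the solution of a second Lyapunov equation.

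\textbf{A constant slip in your stability step.} Lemma \ref{lem:prop_sketch_solver}(d) reads $\|K_t\|\le\sqrt{\Upsilon_E/\gamma_E}\cdot 2\tau(1-\sqrt{\mu_t/\nu_t})^{\tau-2}$, and you dropped the factor $2$. Keeping it, the $\tau$-condition of Theorem \ref{thm:as_convergence} yields $\|K_t\|\le\gamma_H/(2\Upsilon_H)$, not $1/4$. Passing to the limit via $K_t\to K^{\star}$ a.s.\ (Lemma \ref{lem:bound_nu_t_mu_t}) then gives $\|K^{\star}\|\le\gamma_H/(2\Upsilon_H)$. This is strictly less than $1/2$ only because $\gamma_H<\Upsilon_H$ in Assumption \ref{ass:1}.

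So ``real part at least $3/4$'' should read ``real part strictly greater than $1/2$.'' Strictness is exactly what you need for $A-\tfrac12 I$ to be stable, and it still holds, so nothing downstream changes. This step is also not really an obstacle: the limit passage through $K_t\to K^\star$ is exactly how the paper handles it.
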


This proposition shows that, regardless of the sketching distribution used, the \textbf{averaged} ASN iterate is more statistically efficient than the \textbf{last} ASN iterate.

\subsection{Online inference via random scaling}\label{sec:4.2}

Building on the results of Section \ref{sec:4.1}, we introduce an online inference procedure inspired by prior work on random scaling and self-normalization \citep{Lee2022Fast, Li2023Online, Chen2024Online, Du2025Online}. We first strengthen the asymptotic normality in Theorem \ref{thm:asymptotic_normality} to a functional central limit theorem (FCLT).

\begin{theorem}[\textbf{Functional CLT}]\label{thm:FCLT}
Under the conditions of Theorem \ref{thm:as_convergence} and Assumption \ref{ass:5}, we~have
\begin{equation*}
\frac{1}{\sqrt{t}} \sum_{i = 0}^{\lfloor rt \rfloor -1} (\boldsymbol{x}_{i} - \boldsymbol{x}^{\star}) \Longrightarrow  (\Xi^{\star}_{\text{gasn, ave}})^{1/2} W_{d}(r),\quad\quad r\in[0, 1],
\end{equation*}
where $\lfloor \cdot \rfloor$ is the floor function and $W_{d}(\cdot)$ is the standard $d$-dimensional Brownian motion.	
\end{theorem}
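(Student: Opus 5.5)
We follow the Polyak--Juditsky decomposition used for Theorem \ref{thm:asymptotic_normality}, but at the level of partial-sum processes rather than a single time. Write the \texttt{GAS} output in closed form. Since Algorithm \ref{alg:generalized} is linear in $(\bz_j,\bv_j)$, after the change of variables $E_t^{1/2}(\cdot)$ we get
\[
\texttt{GAS}(B_t,-\bg_t;\cdot)=-(I-\tilde K_t)B_t^{-1}\bg_t ,
\]
where $\tilde K_t$ is defined as in \eqref{def:tilde_H_star} with $(B^\star,E^\star,\alpha^\star,\beta^\star,\gamma^\star)$ replaced by their iteration-$t$ versions. Let $K_t=\mE[\tilde K_t\mid\mF_{t-0.5}]$. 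The recursion then becomes
\[
\bx_{t+1}-\tx=(I-\varphi_t(I-K_t))(\bx_t-\tx)-\varphi_t\big[(I-\tilde K_t)B_t^{-1}\bg_t-(I-K_t)(\bx_t-\tx)\big].
\]
We split the bracket into a martingale difference $\btheta_t=(I-\tilde K_t)B_t^{-1}(\bg_t-\nabla f_t)+(K_t-\tilde K_t)B_t^{-1}\nabla f_t$ and a remainder $\bdelta_t$. The remainder collects the Taylor error $\nabla f_t-B^\star(\bx_t-\tx)$ (quadratic by Assumption \ref{ass:1}), the error $B_t^{-1}-(B^\star)^{-1}$, and the error $K_t-K^\star$. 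The last two are controlled through Lemma \ref{lem:converge_rate_and_Bt_converge}, Assumption \ref{ass:5}, and continuity of $(\mu,\nu)\mapsto(\alpha,\beta,\gamma)$.

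Next we linearize around $K^\star$. Sum the recursion $\bx_{i+1}-\tx=(I-\varphi_i A)(\bx_i-\tx)-\varphi_i(\btheta_i+\bdelta_i)$ with $A=I-K^\star$, and use the identity
\[
A(\bx_i-\tx)=\frac{(\bx_i-\tx)-(\bx_{i+1}-\tx)}{\varphi_i}-\btheta_i-\bdelta_i-\big(\text{terms in }K_i-K^\star\big).
\]
This gives, uniformly in $r\in[0,1]$,
\[
\frac{1}{\sqrt t}\sum_{i=0}^{\lfloor rt\rfloor-1}(\bx_i-\tx)=-A^{-1}\frac{1}{\sqrt t}\sum_{i=0}^{\lfloor rt\rfloor-1}\btheta_i+R_t(r).
\]
The invertibility of $A$ follows from the contraction bound implicit in Theorem \ref{thm:as_convergence}. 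The remainder $R_t(r)$ consists of three pieces: a telescoping/Abel-summation term $\sum_i(\varphi_i^{-1}-\varphi_{i-1}^{-1})(\bx_i-\tx)$ together with the boundary terms $\varphi_{\lfloor rt\rfloor}^{-1}(\bx_{\lfloor rt\rfloor}-\tx)$; the accumulated $\bdelta_i$; and $\sum_i (K_i-K^\star)(\bx_i-\tx)$.

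Step one is to show $\sup_{r\in[0,1]}\|R_t(r)\|=o_p(1)$, which we expect to be the main obstacle. The pointwise bounds from Theorem \ref{thm:asymptotic_normality} are not enough, so a maximal version is needed. For the boundary term we bound
\[
\frac{1}{\sqrt t}\,\mE\max_{k\le t}\varphi_k^{-1}\|\bx_k-\tx\|\le\frac{1}{\sqrt t}\Big(\sum_{k\le t}\varphi_k^{-2}\,\mE\|\bx_k-\tx\|^2\Big)^{1/2}\lesssim t^{\varphi-1/2}\cdot t^{(1-\varphi)/2}\cdot t^{-1/2}\cdot\text{(poly)}.
\]
If this crude route fails, we instead apply the standard trick of bounding $\max_k$ through a summed $L^2$ bound on a finer scale, using \eqref{equ:xt_rate}. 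Since $\varphi<1$, the monotone Abel term is handled by its sum, $\frac{1}{\sqrt t}\sum_i \varphi_i^{-1}i^{-1}\sqrt{\varphi_i}\to0$ as in Theorem \ref{thm:asymptotic_normality}, because every summand is nonnegative and so the sup is dominated by the full sum. The $\bdelta$ and $K_i-K^\star$ terms likewise reduce to $\frac{1}{\sqrt t}\sum_i\mE\|\cdot\|$, again dominated by full sums. These are bounded by $\sum_i\big(\mE\|\bx_i-\tx\|^2+\sqrt{\varphi_i}\,(\mE\|B_i-B^\star\|^2+\mE\|E_i-E^\star\|^2)^{1/2}\big)$ via Cauchy--Schwarz with \eqref{equ:xt_rate} and Assumption \ref{ass:5}, and are $o(\sqrt t)$ since $\varphi>1/2$. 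One point needs care here: the Lipschitz dependence of $\tilde K$ on $(B,E)$ involves a pseudoinverse, which may be discontinuous in $S$. We handle it through $\mE[\|S\|\|S^\dagger\|]\le\Upsilon_S$ in Assumption \ref{ass:4}, applied only at first moment, combined with a.s. convergence and dominated convergence. This is exactly why only the first moment of $\|S\|\|S^\dagger\|$ is needed.

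Step two is a martingale FCLT for $t^{-1/2}\sum_{i<\lfloor rt\rfloor}\btheta_i$, for example the multivariate Lindeberg-type invariance principle of Hall and Heyde. Two conditions must be checked. The first is the conditional Lindeberg condition, which follows from the $(2+\epsilon)$-moment growth in Assumption \ref{ass:2}: $\tilde K_t$ is uniformly bounded (its entries are products of contractions under the bounds of Lemma \ref{lem:4}), and $\sup_i\mE\|\bx_i-\tx\|^{2+\epsilon}$ is finite, or at least $o(i^{\epsilon/2})$ suitably averaged. If only second moments are available, we work on the event $\{\|\bx_i-\tx\|\le1\}$ after truncation, using almost-sure convergence. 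The second is that the conditional covariance satisfies
\[
\frac1t\sum_{i<\lfloor rt\rfloor}\mE[\btheta_i\btheta_i^\top\mid\mF_{i-1}]\to r\,\mE\big[(I-\tilde K^\star)\Omega^\star(I-\tilde K^\star)^\top\big]\cdot\big(B^\star\text{-conjugation as in Theorem \ref{thm:asymptotic_normality}}\big).
\]
This holds in probability for each $r$, by Ces\`aro averaging of terms that converge a.s., together with the fact that $\nabla f_i\to0$ kills the second part of $\btheta_i$. Combining the two steps with the continuous mapping theorem, the limit is $A^{-1}\Sigma^{1/2}W_d(r)$, whose covariance at $r=1$ is $\Xi^\star_{\text{gasn, ave}}$ by Theorem \ref{thm:asymptotic_normality}. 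This equals $(\Xi^\star_{\text{gasn, ave}})^{1/2}W_d(r)$ in law as processes.
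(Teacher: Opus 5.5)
\textbf{The gap: the maximal bound on the boundary term.} Your Abel-summation route, writing $(I-K^\star)(\bx_i-\bx^\star)$ as a difference quotient and summing by parts, is a legitimate alternative to the paper's $R_i^t,A_i^t$ bookkeeping. Your handling of the Abel term, the $\bdelta_i$ and $(K_i-K^\star)$ terms, and the martingale FCLT is fine. The gap is the step you flag as the main obstacle: showing $\sup_{n\le t}t^{-1/2}\varphi_n^{-1}\|\bx_n-\bx^\star\|=o_p(1)$.

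Your crude bound is miscomputed. Since $\varphi_k^{-2}\mE\|\bx_k-\bx^\star\|^2\lesssim\varphi_k^{-1}\asymp k^{\varphi}$, it actually gives $t^{-1/2}\bigl(\sum_{k\le t}k^{\varphi}\bigr)^{1/2}\asymp t^{\varphi/2}\to\infty$.

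The fallback cannot work if it uses only \eqref{equ:xt_rate}. Marginal second moments control a maximum over $t$ indices only through a union bound, and that fails here. Exploiting the recursion does not rescue a second-moment argument either: Doob's inequality on blocks of length $\asymp\varphi_{k_b}^{-1}$, followed by a union bound over blocks, gives only
$\mathbb{P}(\max_{n\le t}\varphi_n^{-1}\|\bx_n-\bx^\star\|>\epsilon\sqrt t)\lesssim\sum_b\varphi_{k_b}^{-1}/(\epsilon^2t)=O(\epsilon^{-2})$.
That is $O_p(\sqrt t)$, not $o_p(\sqrt t)$. The argument must use more than second moments of the martingale increments.

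\textbf{How the paper fills it.} This is part (c) of the paper's proof, and that term is essentially your boundary term. By \eqref{eq:17}, $\sum_{i\le n}(A_i^{t-1}-A_i^n)\btheta_i$ equals a uniformly bounded matrix times $\varphi_{n+1}^{-1}\sum_{i\le n}\prod_{j=i+1}^{n}(I-\varphi_j(I-K^\star))\varphi_i\btheta_i$. This is the martingale part of $\varphi_{n+1}^{-1}(\bx_{n+1}-\bx^\star)$.

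The paper controls it with a maximal inequality that requires $\sup_i\mE\|\boldsymbol{\zeta}_i\|^{2+\epsilon}<\infty$ (Lemma \ref{lem:phi_2_error}). Since $\mE\|\bx_i-\bx^\star\|^{2+\epsilon}$ is not known to be bounded, \eqref{ass:2:4th} does not give this for $\btheta_i$ directly. The paper therefore localizes with the stopping time $T_{t_0,\eta}=\inf\{t\ge t_0:\|\bx_t-\bx^\star\|>\eta\}$:
\begin{itemize}
\item $\btheta_i\mathbf{1}_{\{T_{t_0,\eta}>i\}}$ is still a martingale difference, because the indicator is $\mF_{i-1}$-measurable.
\item Its $(2+\epsilon)$-moments are bounded by $C_q^{(1)}\eta^{2+\epsilon}+C_q^{(2)}$, so the lemma applies.
\item The localization is removed using $\mathbb{P}(T_{t_0,\eta}<\infty)\to0$ as $t_0\to\infty$, which follows from almost-sure convergence.
\end{itemize}
You invoke truncation on $\{\|\bx_i-\bx^\star\|\le1\}$ only for the Lindeberg condition, where the pathwise almost-sure argument already suffices. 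It is this maximal bound that needs it.

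Also, your boundary term contains all of $\bx_n-\bx^\star$, not just its martingale part, so you must unroll the recursion. The initial-condition and $\bdelta_i$ contributions are harmless, because $\frac{\varphi_i}{\varphi_n}\bigl\|\prod_{j=i+1}^{n-1}(I-\varphi_j(I-K^\star))\bigr\|$ is uniformly bounded for $\varphi\in(0.5,1)$. Only the martingale contribution needs the localized $(2+\epsilon)$-moment argument.
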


The symbol $\Longrightarrow$ denotes weak convergence in a function space, which differs from pointwise weak convergence $\stackrel{d}{\to}$ in Theorem \ref{thm:asymptotic_normality}. Note that Theorem \ref{thm:asymptotic_normality} is a special case of Theorem \ref{thm:FCLT} with $r = 1$. By combining Theorem \ref{thm:FCLT} with continuous mapping theorem \citep[Theorem 1.11.1]{Vaart1996Weak}, we can construct a studentized, pivotal test statistic whose limiting distribution is free of any unknown parameters.

\begin{theorem}[\textbf{Inference via random scaling}]\label{thm:Random_Scaling}
Under the conditions of Theorem \ref{thm:as_convergence} and Assumption \ref{ass:5}, and suppose $\cov(\nabla F(\bx^{\star}; \xi))\succ \boldsymbol{0}$. Let us define a \emph{random scaling} matrix
\begin{equation}\label{def:V_t}
V_t \coloneqq \frac{1}{t^2} \sum_{i=1}^{t} i^2 \, (\bar{\bx}_i - \bar{\bx}_t)(\bar{\bx}_i - \bar{\bx}_t)^\top,
\end{equation}
and have for any vector $\0\neq \boldsymbol{w} \in \mR^d$,
\begin{equation}\label{eq:RS_converge_in_distribution}
\frac{\sqrt{t} \ \boldsymbol{w}^{\top} (\bar{\bx}_t - \bx^{\star})}{\sqrt{\boldsymbol{w}^{\top} V_t \boldsymbol{w}}} \stackrel{d}\longrightarrow \frac{W_1(1)}{\sqrt{\int_{0}^1  \left(W_1(r) - rW_1(1) \right)^2 dr}}.
\end{equation}
\end{theorem}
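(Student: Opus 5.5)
The plan is to derive the result from the functional CLT in Theorem \ref{thm:FCLT} via the continuous mapping theorem, following the route of \cite{Lee2022Fast}. Define the partial-sum process $\boldsymbol{S}_t(r) \coloneqq t^{-1/2}\sum_{i=0}^{\lfloor rt\rfloor-1}(\bx_i-\bx^\star)$ for $r\in[0,1]$. By Theorem \ref{thm:FCLT}, $\boldsymbol{S}_t(\cdot)\Longrightarrow \Lambda W_d(\cdot)$ in $D[0,1]^d$, where $\Lambda \coloneqq (\Xi^\star_{\text{gasn, ave}})^{1/2}$.

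Both the numerator and $V_t$ can be written as functionals of $\boldsymbol{S}_t$. Since $i(\bar{\bx}_i-\bx^\star)=\sqrt{t}\,\boldsymbol{S}_t(i/t)$, we have
\[
\frac{i}{\sqrt t}(\bar{\bx}_i-\bar{\bx}_t)=\boldsymbol{S}_t(i/t)-\frac{i}{t}\boldsymbol{S}_t(1).
\]
This gives $\sqrt t(\bar{\bx}_t-\bx^\star)=\boldsymbol{S}_t(1)$ and
\[
V_t=\frac1t\sum_{i=1}^t\Big(\boldsymbol{S}_t(i/t)-\tfrac it\boldsymbol{S}_t(1)\Big)\Big(\boldsymbol{S}_t(i/t)-\tfrac it\boldsymbol{S}_t(1)\Big)^\top .
\]
Because $\boldsymbol{S}_t$ is piecewise constant, this Riemann sum equals $\int_0^1(\boldsymbol{S}_t(r)-r\boldsymbol{S}_t(1))(\cdot)^\top dr$ up to an error of order $t^{-1}\sup_r\|\boldsymbol{S}_t(r)\|^2$. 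That error is $o_p(1)$, since $\sup_r\|\boldsymbol{S}_t(r)\|=O_p(1)$ by tightness, which itself follows from the weak convergence.

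Now consider the map $\Phi(\boldsymbol{h}) = \big(\boldsymbol{w}^\top\boldsymbol{h}(1),\ \int_0^1(\boldsymbol{w}^\top\boldsymbol{h}(r)-r\boldsymbol{w}^\top\boldsymbol{h}(1))^2dr\big)$. It is continuous on $D[0,1]^d$ at continuous limit points under the Skorokhod topology. The continuous mapping theorem together with Slutsky's lemma therefore gives
\[
\Big(\sqrt t\,\boldsymbol{w}^\top(\bar{\bx}_t-\bx^\star),\ \boldsymbol{w}^\top V_t\boldsymbol{w}\Big)\stackrel{d}{\to}\Big(\boldsymbol{w}^\top\Lambda W_d(1),\ \int_0^1(\boldsymbol{w}^\top\Lambda W_d(r)-r\boldsymbol{w}^\top\Lambda W_d(1))^2dr\Big).
\]
The process $\boldsymbol{w}^\top\Lambda W_d(\cdot)$ equals in law $\sigma_w W_1(\cdot)$ with $\sigma_w^2=\boldsymbol{w}^\top\Xi^\star_{\text{gasn, ave}}\boldsymbol{w}$. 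Once $\sigma_w>0$, the factor $\sigma_w$ cancels in the ratio, and the ratio map $(a,b)\mapsto a/\sqrt b$ is continuous on $b>0$. The bridge integral $\int_0^1(W_1(r)-rW_1(1))^2dr$ is almost surely positive, because a Brownian bridge is not identically zero. A second application of the continuous mapping theorem then yields \eqref{eq:RS_converge_in_distribution}.

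The remaining step is to show $\sigma_w>0$, i.e., $\Xi^\star_{\text{gasn, ave}}\succ\0$; this is where the hypothesis $\cov(\nabla F(\bx^\star;\xi))\succ\0$ is used. Since $\Omega^\star=(B^\star)^{-1}\cov(\nabla F(\bx^\star;\xi))(B^\star)^{-1}\succ\0$, the cleanest route is Proposition \ref{prop:comp_stats_tradeoff}(b), which gives $\Xi^\star_{\text{gasn, ave}}\succeq\Omega^\star\succ\0$. A direct argument through the formula in Theorem \ref{thm:asymptotic_normality} would also need $I-\tilde K^\star$ to be nonsingular with positive probability. That holds because $\mE\|\tilde K^\star\|<1$ under the condition on $\tau$, but invoking the proposition avoids this.

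I expect the main technical point to be making rigorous the passage from the discrete sum $V_t$ to the integral functional, including the continuity of $\Phi$ in the Skorokhod topology. This is handled by the tightness bound above and the continuity of the limiting Brownian paths, as in \cite{Lee2022Fast}.
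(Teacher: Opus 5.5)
Your proposal is correct and follows essentially the same route as the paper: write the statistic as a functional of the partial-sum process $t^{-1/2}\sum_{i=0}^{\lfloor rt\rfloor-1}(\bx_i-\bx^\star)$, apply Theorem~\ref{thm:FCLT} with the continuous mapping theorem and a Riemann-sum approximation, and obtain $\Xi^\star_{\text{gasn, ave}}\succeq\Omega^\star\succ\0$ from Proposition~\ref{prop:comp_stats_tradeoff}(b). Your aside on the ``direct'' route is unneeded, and its justification is off: the condition on $\tau$ bounds $\|\mE\tilde K^\star\|$, not $\mE\|\tilde K^\star\|$, and nonsingularity of $I-\tilde K^\star$ is not required, since the variance identity $\mE[(I-\tilde K^\star)\Omega^\star(I-\tilde K^\star)^\top]\succeq(I-K^\star)\Omega^\star(I-K^\star)^\top$ already gives positivity (this is exactly Proposition~\ref{prop:comp_stats_tradeoff}(b)).
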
 

The limiting distribution on the right-hand side of \eqref{eq:RS_converge_in_distribution} is a $t$-like distribution, which has been extensively studied in statistics literature \citep{Abadir1997Two,Kiefer2000Simple,Abadir2002Simple}. We have provided the quantiles of this distribution in Appendix \ref{app:quantile}. Finally, we can construct an asymptotically valid confidence interval for $\tx$ as: for any $p\in(0,1)$,$\quad$
\begin{equation*}
\mathbb{P} \big(\boldsymbol{w}^{\top}\bx^{\star} \in \big[ \boldsymbol{w}^{\top} \bar{\bx}_t \pm  U_{ 1 - 0.5p} \sqrt{\boldsymbol{w}^{\top}V_{t}\boldsymbol{w}/t } \big]\big) \longrightarrow 1 - p \qquad \text{as} \qquad t \to \infty,
\end{equation*}
where $U_{1 - 0.5p}$ denotes the $(1 - 0.5p)\times100\%$ quantile of the limiting distribution in \eqref{eq:RS_converge_in_distribution}. 
Note that all quantities in the test statistic in \eqref{eq:RS_converge_in_distribution}, including $\bar{\bx}_t$ and $V_t$, can be updated online (cf. Appendix \ref{app:online_update}). Thus, this confidence interval can be adjusted in real time as data are collected for the Newton~iteration.

\section{Numerical Experiments}\label{sec:numerical}

\subsection{Synthetic data experiments}\label{sec:5.1}

We evaluate our random scaling inference procedure (\texttt{AveRS}) comparing it with an online inference procedure based on the last iterate and a batch-free covariance estimator \citep{Wang2026Inference} (\texttt{LastBF}). For a fair comparison, both inference procedures are paired with three Newton methods: Exact Newton (EN), ASN (equivalently, GASN with $E_t = I$), and GASN with $E_t = B_t$ (referred to as GASN for simplicity). Following prior work \citep{Chen2020Statistical, Zhu2021Online, Na2025Statistical, Kuang2025Online, Wang2026Inference}, we consider linear and logistic regression models with varying dimension $d$, covariate covariance $\Sigma_a$, sketching distribution, and sketching steps. We report the mean absolute error (MAE), empirical coverage rate (Ave Cov), and average confidence interval length (Ave Len) over $200$ independent runs. Due to space limits, we present a subset of the results; full experimental details and complete results (e.g., convergence comparisons) are deferred to Appendix \ref{app:exp}.

\begin{figure}[H]
\centering     
\subfigure{\includegraphics[width=0.16\textwidth]{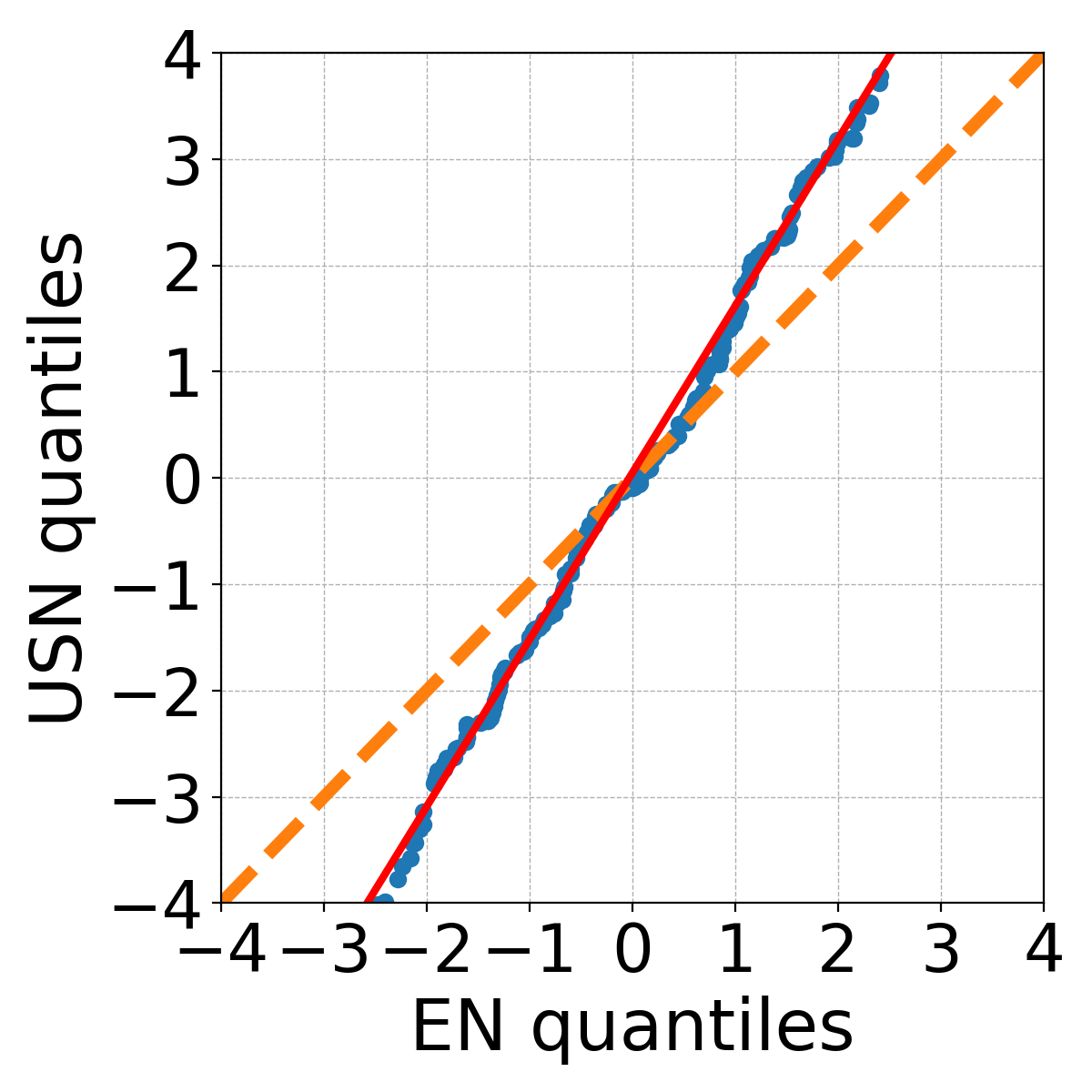}}
\subfigure{\includegraphics[width=0.16\textwidth]{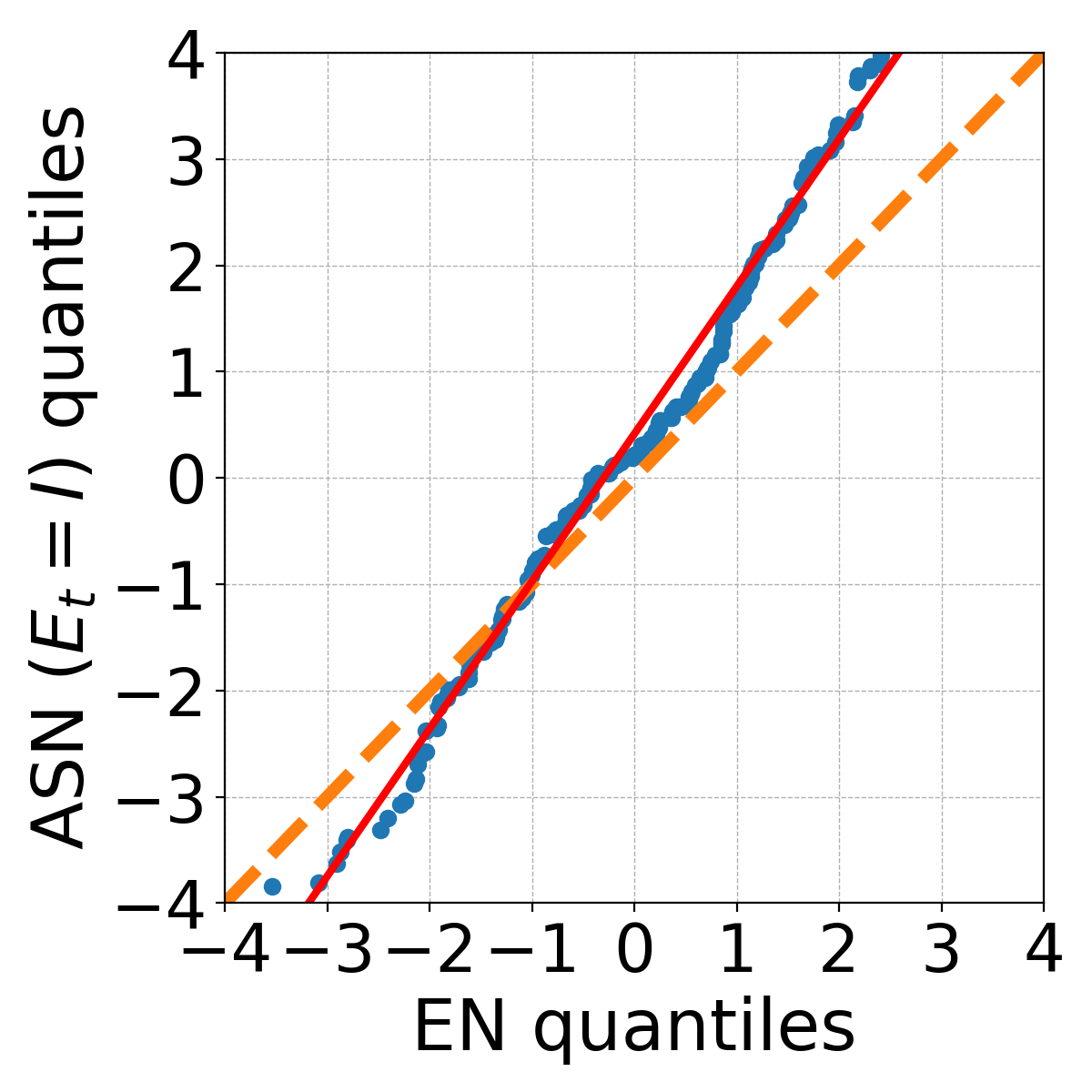}}
\subfigure{\includegraphics[width=0.16\textwidth]{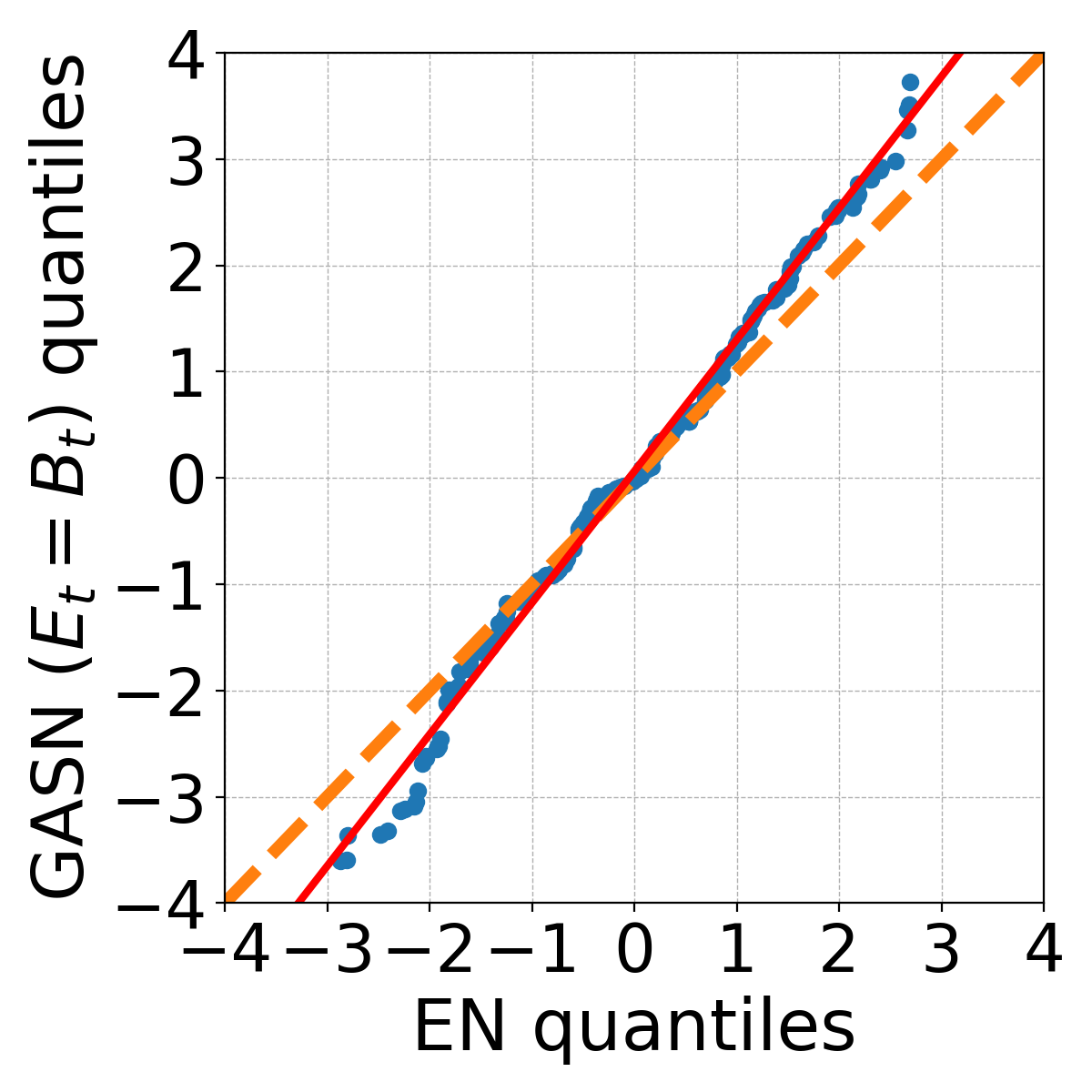}}
\subfigure{\includegraphics[width=0.16\textwidth]{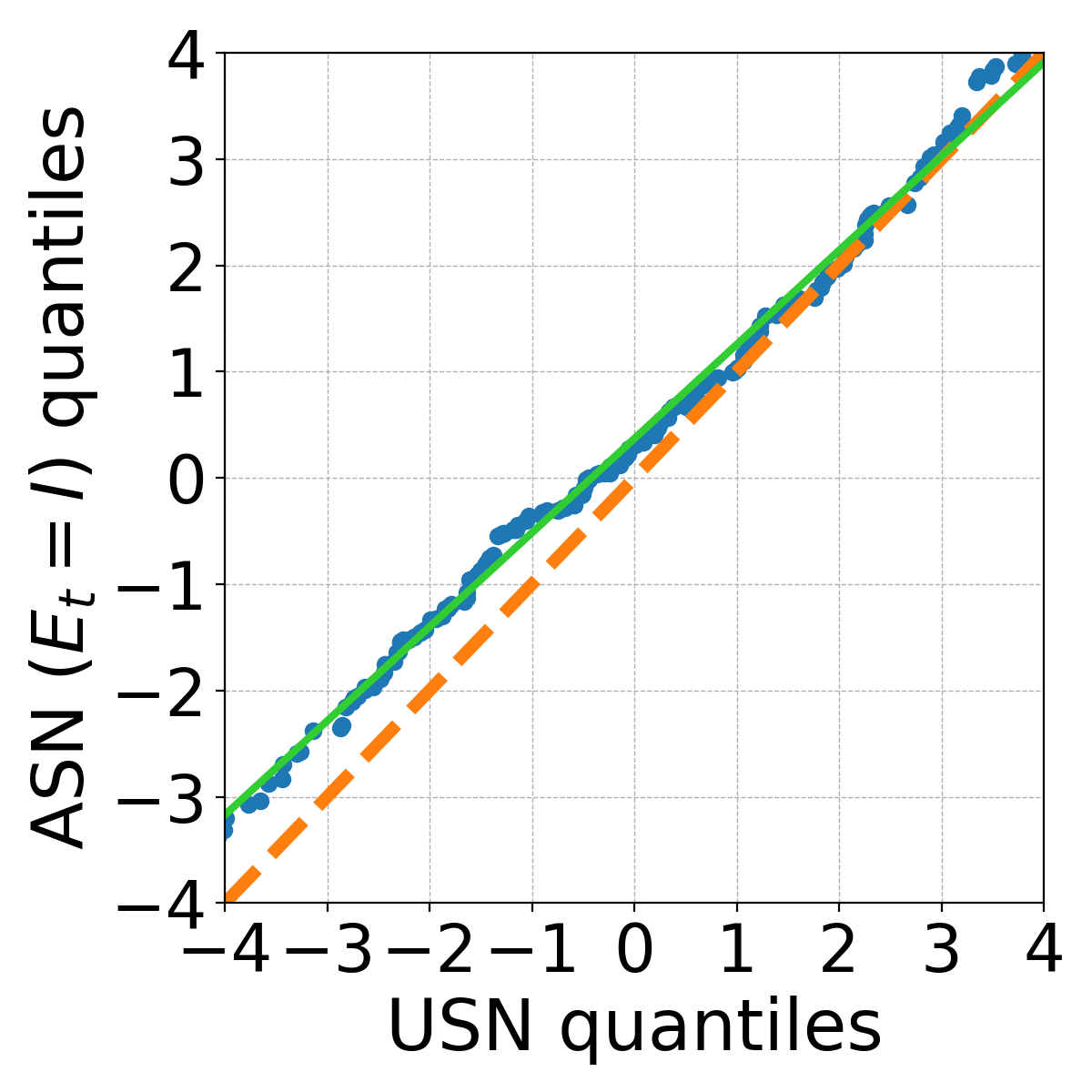}}
\subfigure{\includegraphics[width=0.16\textwidth]{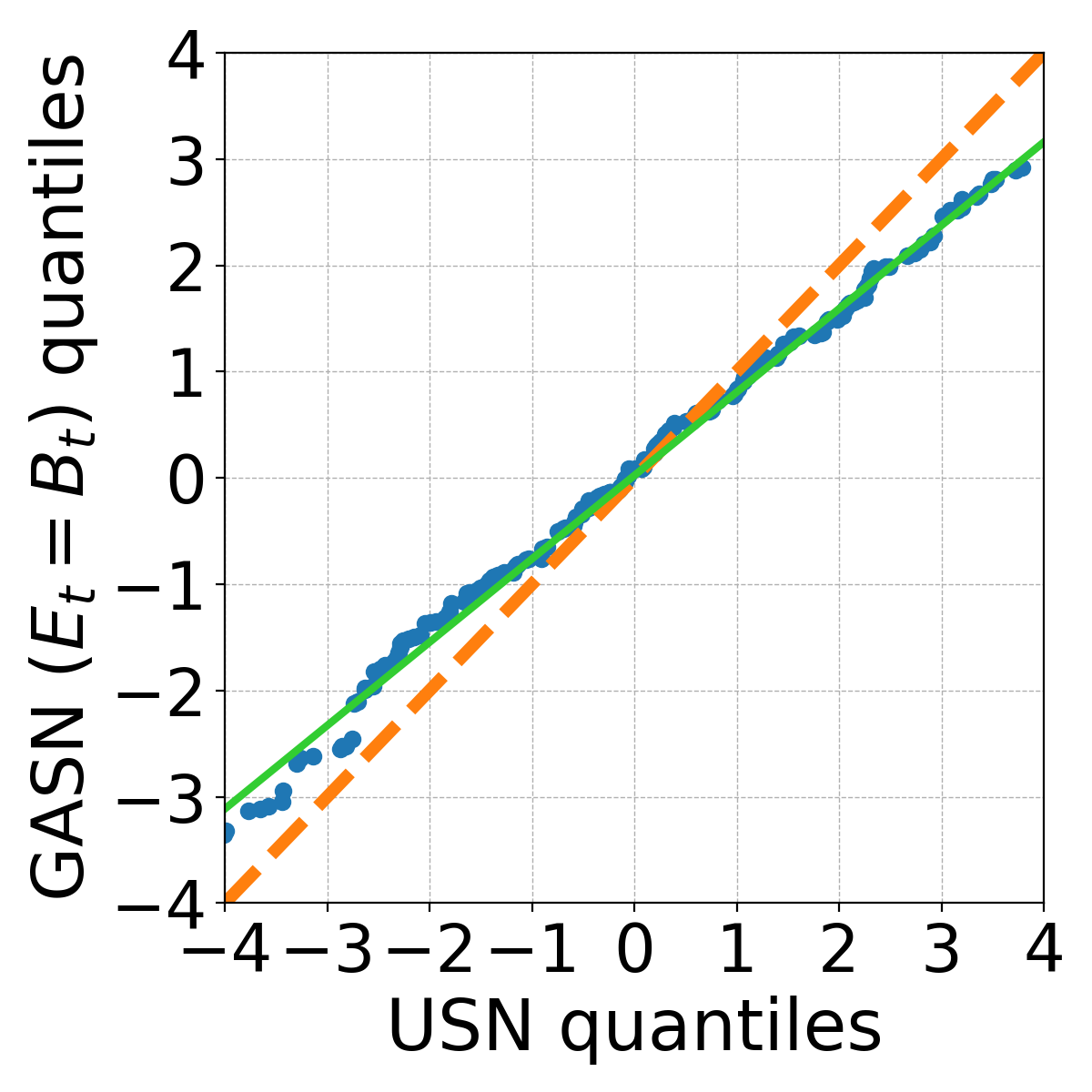}}
\subfigure{\includegraphics[width=0.16\textwidth]{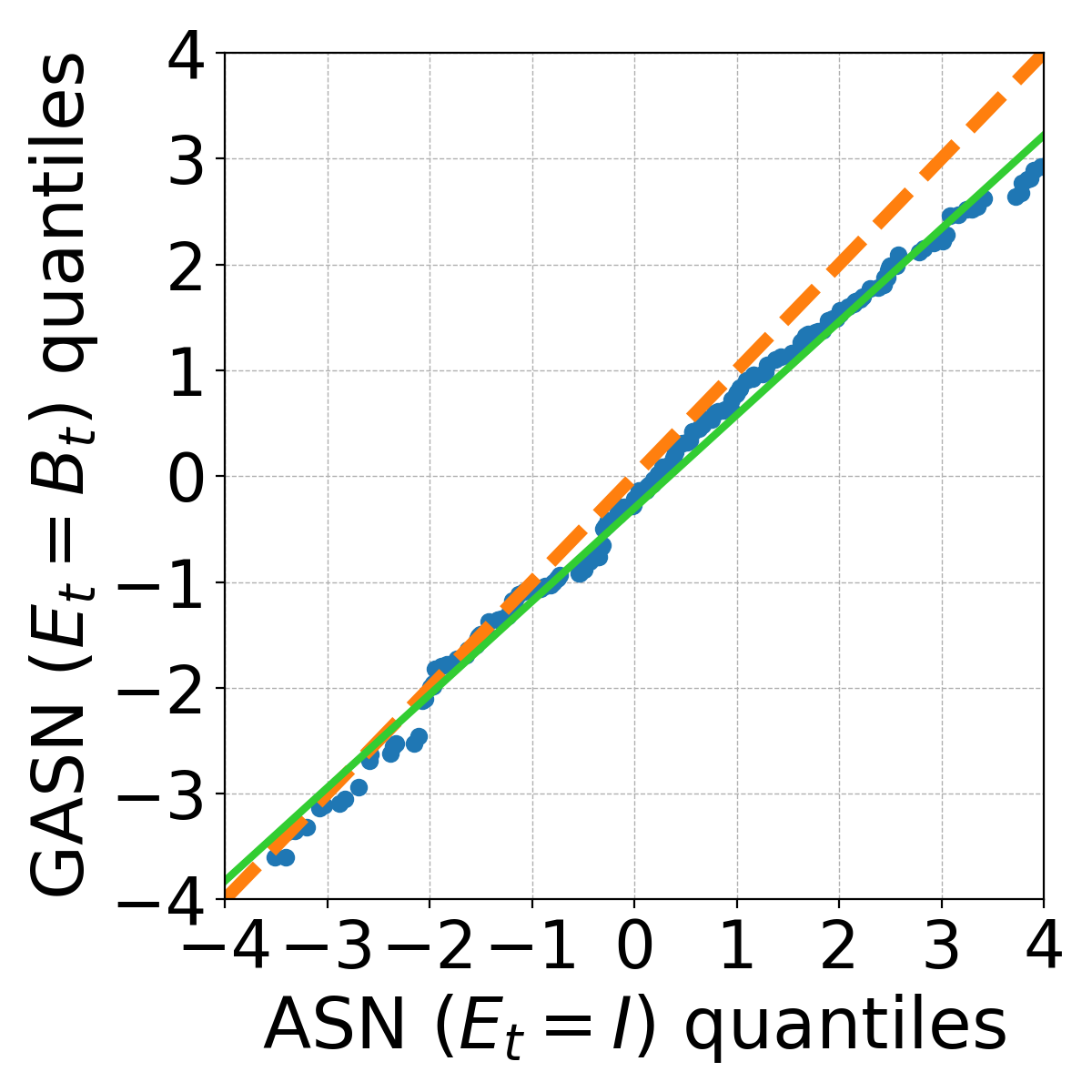}}
\vskip-0.15cm
\centering{\textbf{Linear Regression + Coordinate Sketches}}
\vspace{-0.1cm}

\subfigure{\includegraphics[width=0.16\textwidth]{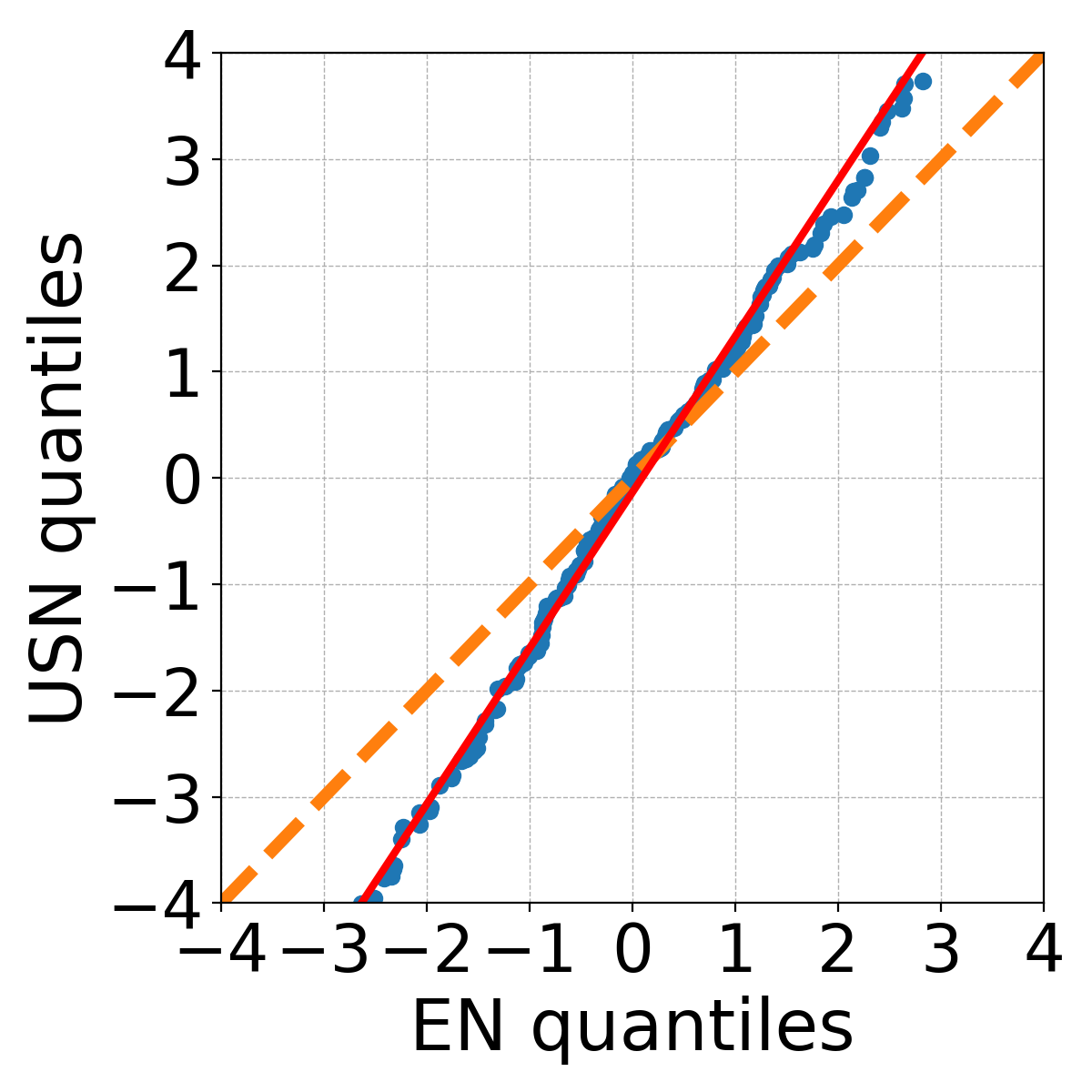}}
\subfigure{\includegraphics[width=0.16\textwidth]{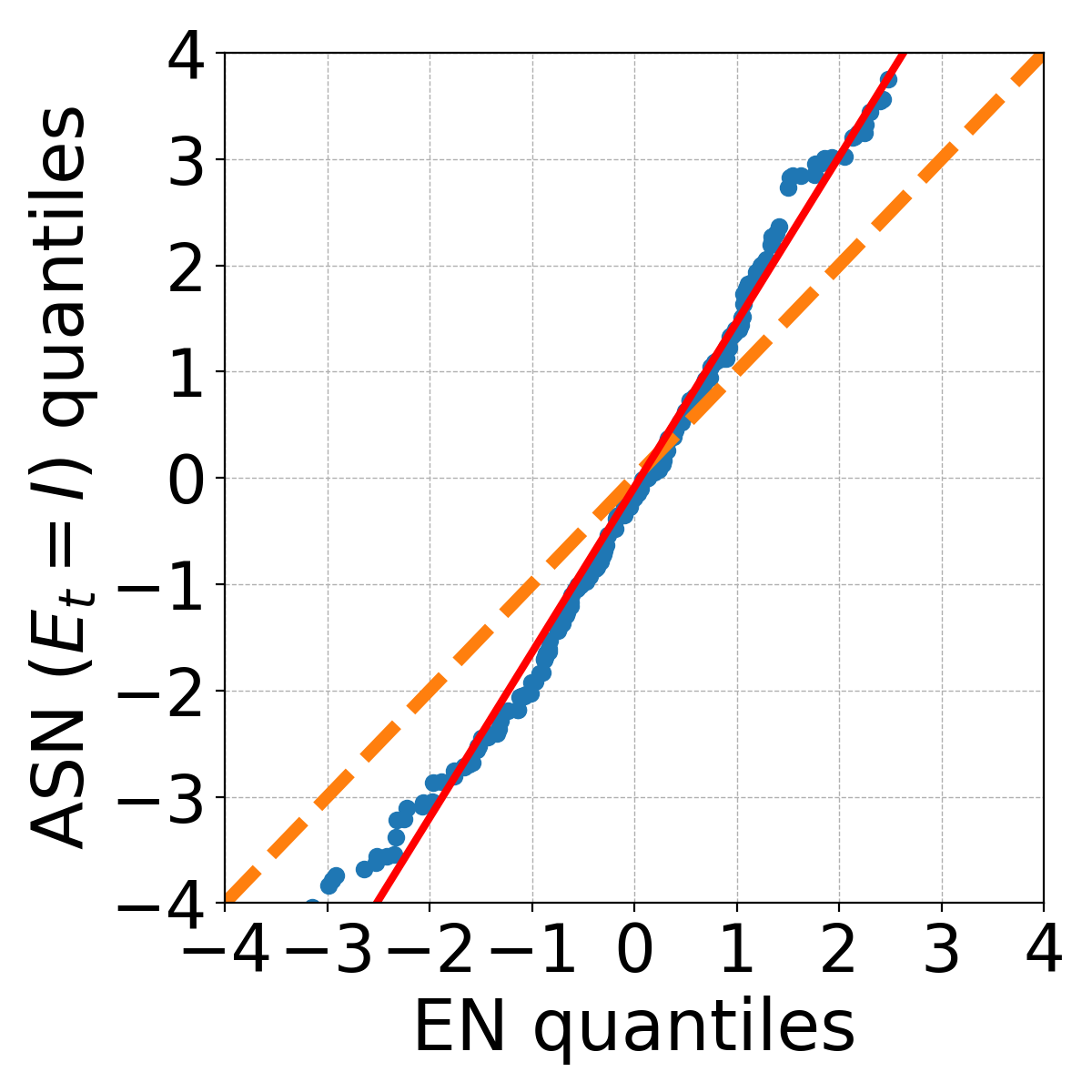}}
\subfigure{\includegraphics[width=0.16\textwidth]{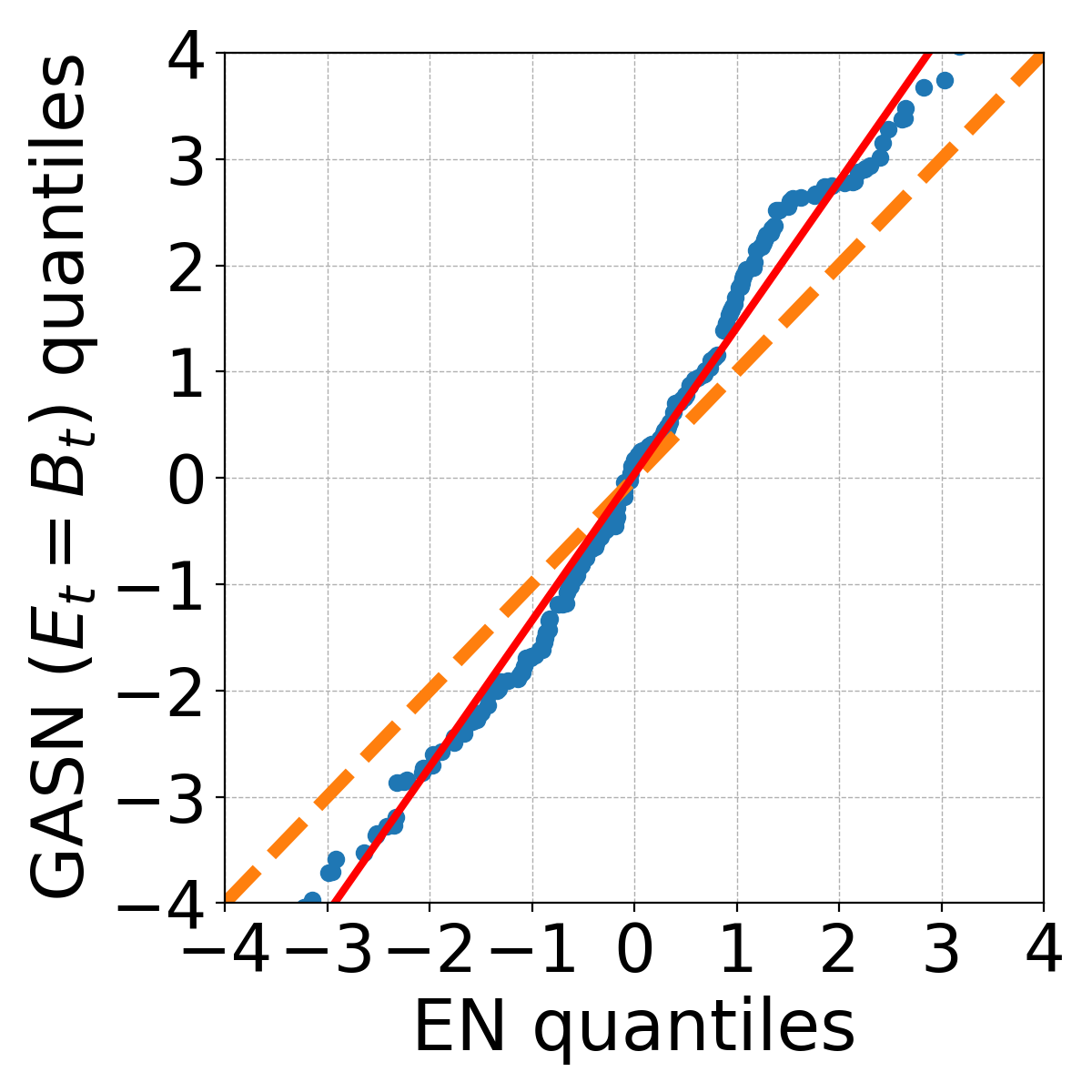}}
\subfigure{\includegraphics[width=0.16\textwidth]{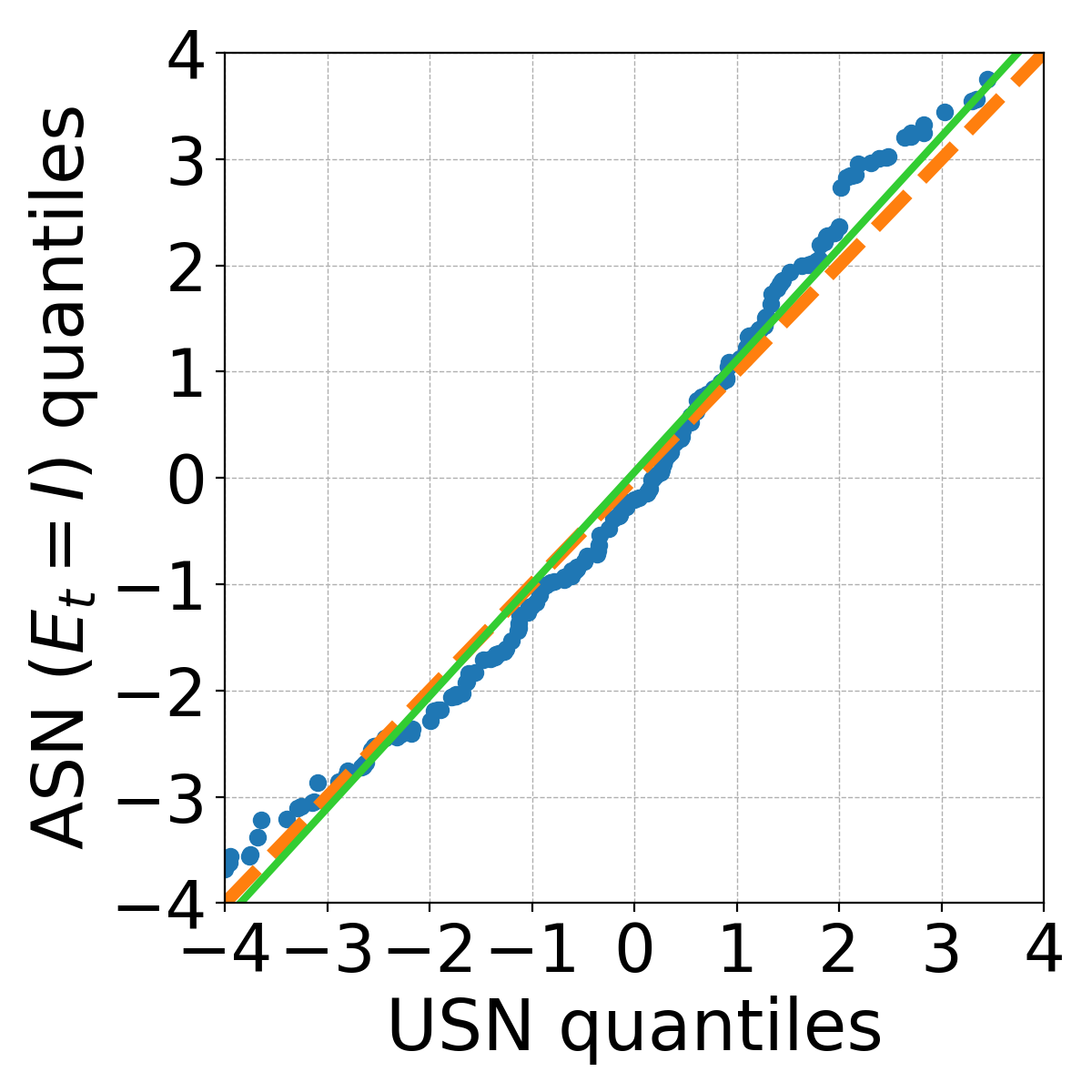}}
\subfigure{\includegraphics[width=0.16\textwidth]{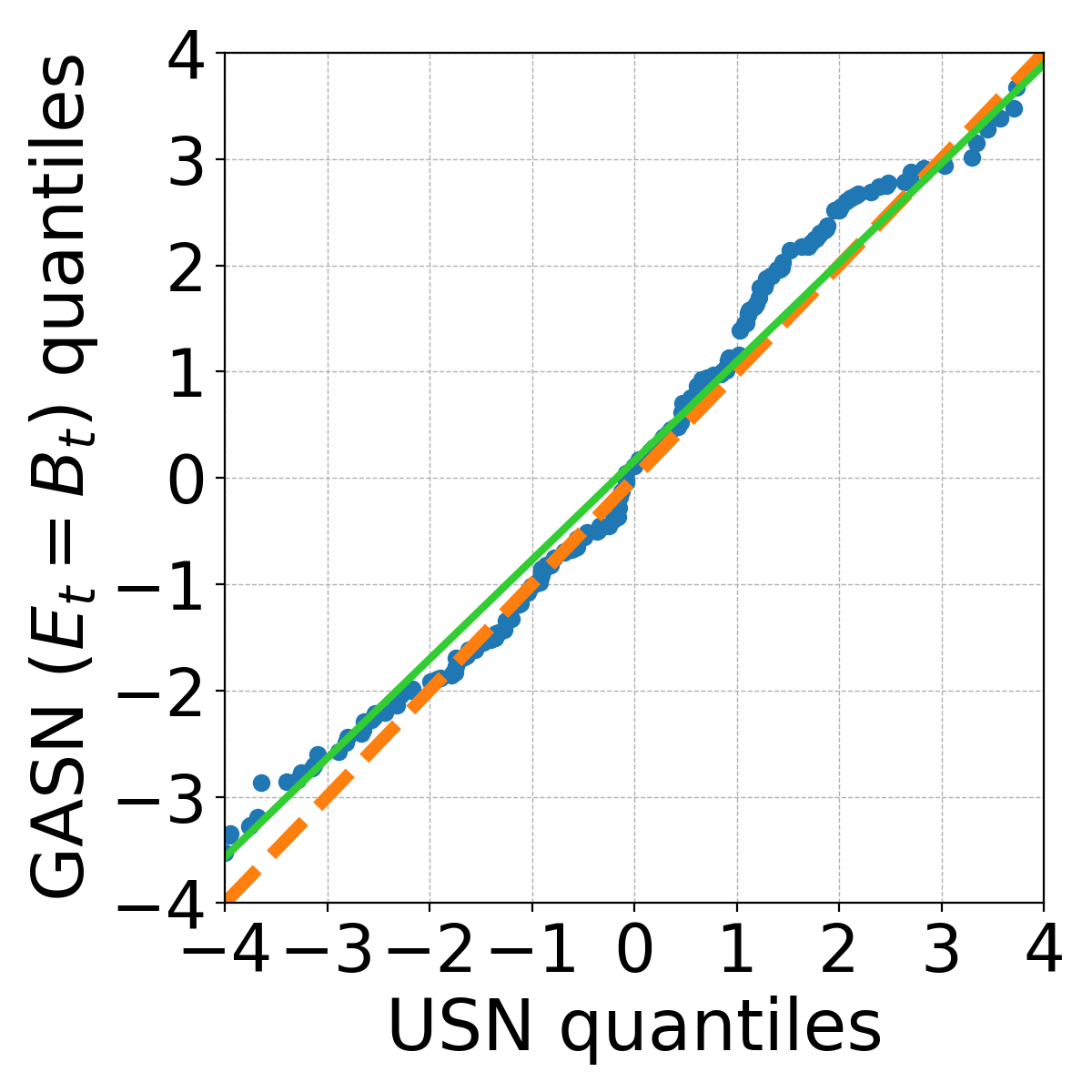}}
\subfigure{\includegraphics[width=0.16\textwidth]{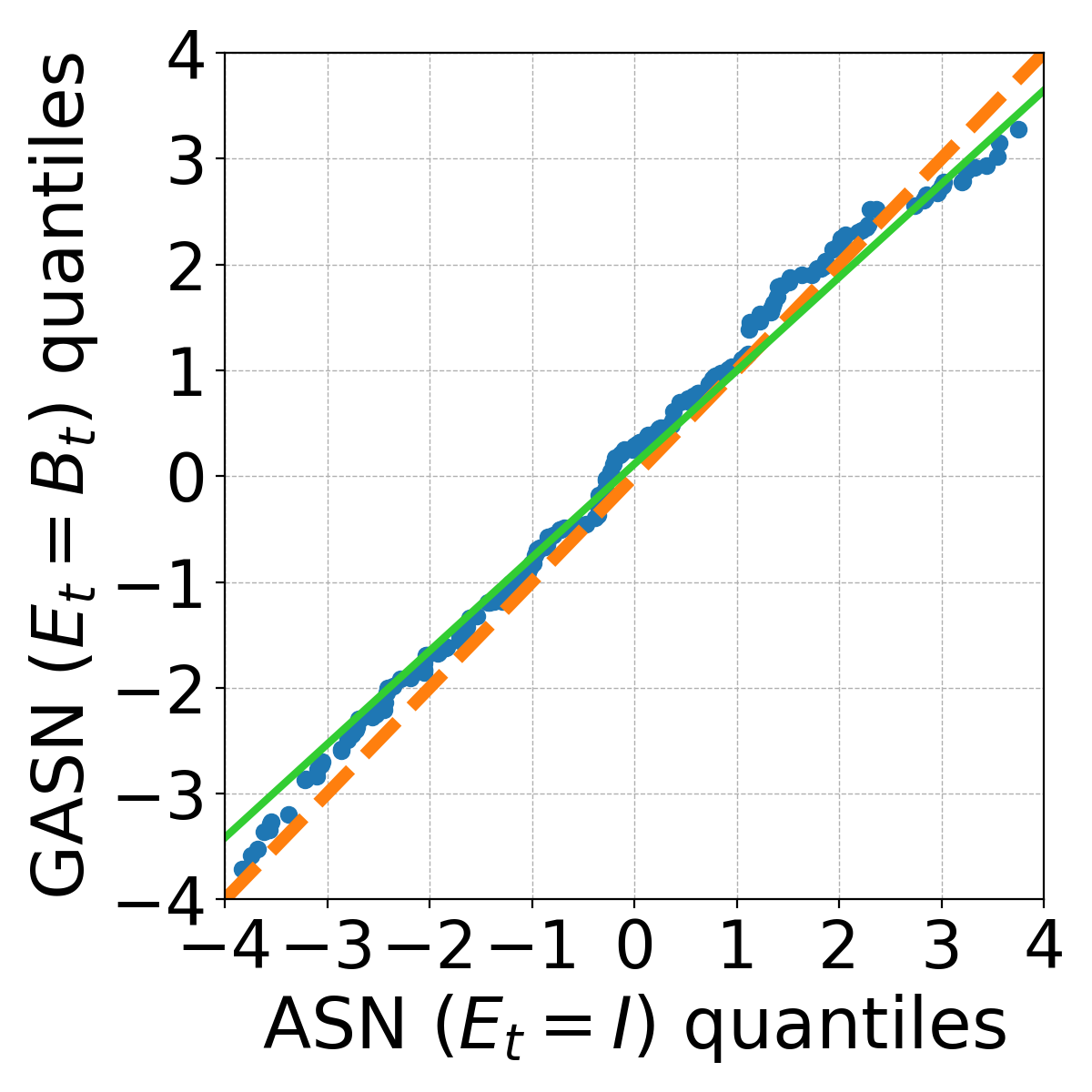}}
\vskip-0.15cm
\centering{\textbf{Linear Regression + Gaussian Sketches}}
\vskip-0.15cm
\caption{\textit{QQ plots for four stochastic Newton methods. The orange line denotes the reference line $y = x$. Red and green lines denote the fitted line through the blue data points.}}\label{fig:qq_linear}
\end{figure}

\noindent$\bullet$ \textbf{Covariance comparison.} Figure \ref{fig:qq_linear} reports QQ plots of the averaged-iterate covariances for EN, USN, ASN, and GASN (linear model with coordinate and Gaussian sketches). Two patterns emerge. First, the quantiles of all three sketched methods exceed those of EN, as the slopes of the red lines are larger than those of the orange lines in the first three columns, confirming that EN attains the smallest limiting variance (which is optimal). Second, the quantiles of ASN and GASN closely align with those of USN and are sometimes even smaller, as the slopes of the green lines fall below those of the orange lines in some (e.g., the fifth and sixth) plots. This indicates that the accelerated sketch-and-project with a better metric improves computational efficiency without inflating uncertainty or sacrificing statistical efficiency, consistent with the results in Section \ref{sec:4.1}. 

\vskip4pt

\noindent$\bullet$ \textbf{Asymptotic validity of inference.}
Table \ref{tab:d40_main_results} shows that the empirical coverage rates of the confidence intervals constructed by GASN remain close to the nominal $95\%$ level across all settings. This provides numerical support for our asymptotic normality and online inference theory (Theorems \ref{thm:asymptotic_normality} and \ref{thm:Random_Scaling}).

\input{table1}

\noindent $\bullet$ \textbf{Mean absolute error.} 
For both the last iterates, $\mathbb{E}[\|\bx_t - \bx^{\star}\|]$, and the averaged iterates, $\mathbb{E}[\|\bar{\bx}_t - \bx^{\star}\|]$, all three methods (EN, ASN, and GASN) achieve similar MAEs under the same combination of model, covariate covariance, and sketching distribution, regardless of the sketching steps. 
That being said, a consistent pattern across all blocks of Table~\ref{tab:d40_main_results} is that averaged iterates exhibit substantially smaller MAEs than last iterates, often by a factor of \textbf{$\boldsymbol{4}\boldsymbol{\times}$ to an order of magnitude}. Consequently,~\texttt{AveRS} provides more accurate point estimates than procedures based on last iterates.

\vskip4pt

\noindent $\bullet$ \textbf{Statistical efficiency of \texttt{AveRS} vs.\ \texttt{LastBF}.} Two patterns appear in Table~\ref{tab:d40_main_results}. First, both ASN and GASN produce confidence intervals that are slightly longer than those of EN, reflecting the additional randomness introduced by sketch-and-project. Second, \texttt{AveRS} produces confidence intervals that are roughly about an order of magnitude shorter than those of \texttt{LastBF} across all settings, without~any~loss in coverage. This suggests that averaged iterates achieve higher statistical efficiency.

In summary, the \texttt{AveRS} inference procedure built on GASN attains near-nominal coverage with smaller errors and shorter confidence intervals than \texttt{LastBF}. It preserves asymptotic validity while offering three key advantages over existing baselines: higher accuracy and statistical efficiency than \texttt{LastBF}, faster convergence than unaccelerated sketched Newton, and lower computational cost~than~Exact~Newton.

\subsection{Real-data experiment on UCI Covertype dataset}\label{sec:5.2}

To examine our inference theory in real-world data, we use the UCI Covertype dataset \citep{Blackard1998Covertype}, whose standard binary task ($\texttt{Cover\_Type}\in\{1,2\}$) is to predict the forest cover type of a land cell from cartographic features such as elevation, slope, and soil type. This yields $n = 495{,}141$ samples and $d = 50$ features. As the task is binary classification, we fit an $\ell_2$-regularized logistic regression model as following:
\begin{equation}\label{eq:UCI_model}
\bx^\star=\argmin_{\bx\in\mathbb{R}^{d}}\frac{1}{n}\sum_{i=1}^{n}F(\bx;\xi_i),
\qquad
F(\bx;\xi_i)=\log\!\big(1+\exp(-b_i\,\ba_i^{\top}\bx)\big)+\tfrac{\lambda}{2}\|\bx\|^{2}.
\end{equation}
Here, the vector $\bx$ is the logistic coefficient, and each observation is a pair $\xi_i=(\ba_i,b_i)$: the feature vector $\ba_i\in\mathbb{R}^{d}$ collects the $d=50$ standardized cartographic variables of cell $i$ and the label $b_i\in\{-1,+1\}$ encodes its cover type. Moreover, the $\ell_2$ term in \eqref{eq:UCI_model} is the standard ridge penalty. It shrinks the coefficients to reduce variance and guarantees a unique, well-conditioned solution even when the classes are nearly separable or the features are collinear, both of which are common for Covertype's binary indicators and correlated cartographic variables.

We evaluate three optimization algorithms, EN, ASN ($E_t=I$), and GASN ($E_t=B_t$), under both coordinate and Gaussian sketches with $\tau=10$ sketching steps, each paired with two inference procedures: \texttt{LastBF} and \texttt{AveRS}. The step size and all remaining settings match the synthetic experiments in Section \ref{sec:5.1}, and we report MAE, coverage (Ave Cov), and interval length (Ave Len) over $200$ independent runs.

The conclusions of our theory carry over to this real, large-scale problem (Table~\ref{tab:uci_results}). In every setting, \texttt{AveRS} attains empirical coverage at or above the nominal $95\%$ level while producing substantially smaller errors and shorter intervals than \texttt{LastBF}: its MAE is smaller by a factor of $4\times$ to an order of magnitude, and its confidence intervals are $2.5$--$6\times$ shorter, with no loss of coverage. Crucially, GASN matches ASN in both accuracy and coverage, e.g., MAE $1.28$ and length $0.16$ under the coordinate sketch for both, while relying on the cheaper projection metric $E_t=B_t$, corroborating our analysis in Section~\ref{sec:4.1} that the accelerated sketch-and-project with a better metric improves computational efficiency without inflating uncertainty.

\input{table2}

\section{Conclusion}

We studied online Newton methods with the generalized accelerated sketch-and-project solver (GASN). GASN enables faster and more computationally efficient Newton updates while preserving strong statistical guarantees: we proved global almost-sure and $L^2$ convergence, established asymptotic normality for the averaged GASN iterate, and characterized its limiting covariance. We show that, although GASN reduces computational cost at the expense of statistical efficiency when compared to exact Newton, it incurs \textit{no} such computational-statistical trade-off when compared to unaccelerated sketched Newton methods. For practical inference, we developed a random-scaling procedure based on a pivotal test statistic, which bypasses covariance estimation altogether.

\bibliographystyle{my-plainnat}
\bibliography{ref}

\appendix
\numberwithin{equation}{section}
\numberwithin{theorem}{section}
\input{Appendix}

\end{document}

%% file: table1.tex
\begin{table}[H]\vskip-0.3cm\centering\setlength{\tabcolsep}{4pt}\renewcommand{\arraystretch}{1.1}\resizebox{1.0\linewidth}{!}{\begin{tabular}{|c|c|c|c|c|ccc|ccc|ccc|ccc|}\hline\multirow{3}{*}{$d$} & \multirow{3}{*}{\shortstack{Design \\ Cov $\Sigma_a$}} & \multirow{3}{*}{\shortstack{Optimization \\ Algorithm}} & \multirow{3}{*}{$\tau$} & \multirow{3}{*}{\shortstack{Inference \\ Method}} & \multicolumn{3}{c|}{Linear Regression (Coordinate)} & \multicolumn{3}{c|}{Linear Regression (Gaussian)} & \multicolumn{3}{c|}{Logistic Regression (Coordinate)} & \multicolumn{3}{c|}{Logistic Regression (Gaussian)} \\\cline{6-17}& & & & & MAE & \multicolumn{1}{|c|}{Ave Cov} & Ave Len & MAE & \multicolumn{1}{|c|}{Ave Cov} & Ave Len & MAE & \multicolumn{1}{|c|}{Ave Cov} & Ave Len & MAE & \multicolumn{1}{|c|}{Ave Cov} & Ave Len \\[-4pt]& & & & & {\scriptsize $(10^{-2})$} & \multicolumn{1}{|c|}{{\scriptsize $(\%)$}} & {\scriptsize $(10^{-2})$} & {\scriptsize $(10^{-2})$} & \multicolumn{1}{|c|}{{\scriptsize $(\%)$}} & {\scriptsize $(10^{-2})$} & {\scriptsize $(10^{-2})$} & \multicolumn{1}{|c|}{{\scriptsize $(\%)$}} & {\scriptsize $(10^{-2})$} & {\scriptsize $(10^{-2})$} & \multicolumn{1}{|c|}{{\scriptsize $(\%)$}} & {\scriptsize $(10^{-2})$} \\\hline\multirow{18}{*}{40} & \multirow{6}{*}{\shortstack{Identity}} & \multirow{2}{*}{EN} & \multirow{2}{*}{--} & \multicolumn{1}{|c|}{\texttt{LastBF}} & 25.79 & \multicolumn{1}{|c|}{95.50} & 2.56 & 25.69 & \multicolumn{1}{|c|}{96.00} & 2.57 & 3.75 & \multicolumn{1}{|c|}{96.50} & 0.24 & 3.78 & \multicolumn{1}{|c|}{98.50} & 0.24 \\\cline{5-17}& & & & \gmc{\texttt{AveRS}} & \gc{2.48} & \gmc{98.00} & \gc{0.33} & \gc{2.45} & \gmc{95.50} & \gc{0.33} & \gc{0.37} & \gmc{95.00} & \gc{0.03} & \gc{0.37} & \gmc{94.50} & \gc{0.03} \\\cline{3-17}& & \multirow{2}{*}{\shortstack{ASN \\ ($E_t=I$)}} & \multirow{2}{*}{5} & \multicolumn{1}{|c|}{\texttt{LastBF}} & 26.04 & \multicolumn{1}{|c|}{92.50} & 2.46 & 26.61 & \multicolumn{1}{|c|}{94.50} & 2.58 & 3.75 & \multicolumn{1}{|c|}{93.00} & 0.35 & 3.91 & \multicolumn{1}{|c|}{93.50} & 0.35 \\\cline{5-17}& & & & \gmc{\texttt{AveRS}} & \gc{7.16} & \gmc{91.50} & \gc{0.82} & \gc{7.31} & \gmc{91.50} & \gc{0.88} & \gc{1.07} & \gmc{92.00} & \gc{0.13} & \gc{1.07} & \gmc{94.00} & \gc{0.12} \\\cline{3-17}& & \multirow{2}{*}{\shortstack{GASN \\ ($E_t=B_t$)}} & \multirow{2}{*}{5} & \multicolumn{1}{|c|}{\texttt{LastBF}} & 25.73 & \multicolumn{1}{|c|}{93.50} & 2.49 & 27.17 & \multicolumn{1}{|c|}{94.00} & 2.57 & 3.81 & \multicolumn{1}{|c|}{93.50} & 0.35 & 3.88 & \multicolumn{1}{|c|}{94.50} & 0.36 \\\cline{5-17}& & & & \gmc{\texttt{AveRS}} & \gc{7.19} & \gmc{95.00} & \gc{0.84} & \gc{7.13} & \gmc{91.50} & \gc{0.86} & \gc{1.05} & \gmc{92.00} & \gc{0.12} & \gc{1.05} & \gmc{95.50} & \gc{0.12} \\\cline{2-17}\cline{2-17}& \multirow{6}{*}{\shortstack{Toeplitz\\$r=0.4$}} & \multirow{2}{*}{EN} & \multirow{2}{*}{--} & \multicolumn{1}{|c|}{\texttt{LastBF}} & 29.99 & \multicolumn{1}{|c|}{95.00} & 1.70 & 30.17 & \multicolumn{1}{|c|}{95.50} & 1.70 & 3.12 & \multicolumn{1}{|c|}{94.00} & 0.27 & 3.09 & \multicolumn{1}{|c|}{95.00} & 0.27 \\\cline{5-17}& & & & \gmc{\texttt{AveRS}} & \gc{2.90} & \gmc{96.00} & \gc{0.22} & \gc{2.90} & \gmc{93.00} & \gc{0.23} & \gc{0.30} & \gmc{96.00} & \gc{0.04} & \gc{0.30} & \gmc{94.50} & \gc{0.03} \\\cline{3-17}& & \multirow{2}{*}{\shortstack{ASN \\ ($E_t=I$)}} & \multirow{2}{*}{5} & \multicolumn{1}{|c|}{\texttt{LastBF}} & 22.24 & \multicolumn{1}{|c|}{96.50} & 2.17 & 23.77 & \multicolumn{1}{|c|}{96.00} & 2.18 & 3.11 & \multicolumn{1}{|c|}{92.00} & 0.29 & 3.24 & \multicolumn{1}{|c|}{96.50} & 0.30 \\\cline{5-17}& & & & \gmc{\texttt{AveRS}} & \gc{11.33} & \gmc{93.00} & \gc{0.42} & \gc{11.31} & \gmc{92.00} & \gc{0.46} & \gc{0.87} & \gmc{95.00} & \gc{0.11} & \gc{0.87} & \gmc{94.50} & \gc{0.10} \\\cline{3-17}& & \multirow{2}{*}{\shortstack{GASN \\ ($E_t=B_t$)}} & \multirow{2}{*}{5} & \multicolumn{1}{|c|}{\texttt{LastBF}} & 29.89 & \multicolumn{1}{|c|}{97.00} & 1.73 & 31.39 & \multicolumn{1}{|c|}{93.50} & 1.75 & 3.15 & \multicolumn{1}{|c|}{95.00} & 0.30 & 3.27 & \multicolumn{1}{|c|}{94.50} & 0.30 \\\cline{5-17}& & & & \gmc{\texttt{AveRS}} & \gc{10.67} & \gmc{96.00} & \gc{0.41} & \gc{10.45} & \gmc{91.00} & \gc{0.42} & \gc{0.89} & \gmc{93.00} & \gc{0.11} & \gc{0.87} & \gmc{93.50} & \gc{0.11} \\\cline{2-17}\cline{2-17}& \multirow{6}{*}{\shortstack{Equi-Corr\\$r=0.4$}} & \multirow{2}{*}{EN} & \multirow{2}{*}{--} & \multicolumn{1}{|c|}{\texttt{LastBF}} & 20.13 & \multicolumn{1}{|c|}{95.00} & 0.61 & 20.08 & \multicolumn{1}{|c|}{94.00} & 0.61 & 2.66 & \multicolumn{1}{|c|}{95.00} & 0.19 & 2.63 & \multicolumn{1}{|c|}{94.50} & 0.19 \\\cline{5-17}& & & & \gmc{\texttt{AveRS}} & \gc{1.93} & \gmc{95.00} & \gc{0.08} & \gc{1.94} & \gmc{95.00} & \gc{0.08} & \gc{0.26} & \gmc{94.00} & \gc{0.02} & \gc{0.25} & \gmc{97.00} & \gc{0.03} \\\cline{3-17}& & \multirow{2}{*}{\shortstack{ASN \\ ($E_t=I$)}} & \multirow{2}{*}{5} & \multicolumn{1}{|c|}{\texttt{LastBF}} & 12.63 & \multicolumn{1}{|c|}{95.00} & 1.04 & 17.69 & \multicolumn{1}{|c|}{95.00} & 1.12 & 2.69 & \multicolumn{1}{|c|}{93.00} & 0.24 & 2.75 & \multicolumn{1}{|c|}{92.00} & 0.25 \\\cline{5-17}& & & & \gmc{\texttt{AveRS}} & \gc{6.61} & \gmc{97.00} & \gc{0.13} & \gc{6.85} & \gmc{95.00} & \gc{0.15} & \gc{0.73} & \gmc{94.50} & \gc{0.09} & \gc{0.74} & \gmc{92.50} & \gc{0.08} \\\cline{3-17}& & \multirow{2}{*}{\shortstack{GASN \\ ($E_t=B_t$)}} & \multirow{2}{*}{5} & \multicolumn{1}{|c|}{\texttt{LastBF}} & 19.77 & \multicolumn{1}{|c|}{94.00} & 0.68 & 20.31 & \multicolumn{1}{|c|}{97.50} & 0.67 & 2.64 & \multicolumn{1}{|c|}{92.00} & 0.25 & 2.73 & \multicolumn{1}{|c|}{94.00} & 0.25 \\\cline{5-17}& & & & \gmc{\texttt{AveRS}} & \gc{5.94} & \gmc{96.00} & \gc{0.10} & \gc{5.98} & \gmc{93.00} & \gc{0.10} & \gc{0.74} & \gmc{93.00} & \gc{0.09} & \gc{0.74} & \gmc{92.50} & \gc{0.09} \\\hline\end{tabular}}\vspace{0.1cm}\caption{\textit{Evaluation results for the inference procedures under different parameter settings with~$d = 40$. For Exact Newton (EN), sketching steps are not applicable and are hence denoted by ``--''.}}\label{tab:d40_main_results}\vspace{-0.2cm}\end{table}

%% file: table2.tex
\begin{table}[H]\vskip-0.2cm\centering\setlength{\tabcolsep}{4pt}\renewcommand{\arraystretch}{1.1}\resizebox{0.72\linewidth}{!}{\begin{tabular}{|c|c|c|c|ccc|ccc|}\hline
\multirow{3}{*}{$d$} & \multirow{3}{*}{\shortstack{Optimization \\ Algorithm}} & \multirow{3}{*}{$\tau$} & \multirow{3}{*}{\shortstack{Inference \\ Method}} & \multicolumn{3}{c|}{Coordinate Sketch} & \multicolumn{3}{c|}{Gaussian Sketch} \\\cline{5-10}
& & & & MAE & \multicolumn{1}{|c|}{Ave Cov} & Ave Len & MAE & \multicolumn{1}{|c|}{Ave Cov} & Ave Len \\[-4pt]
& & & & {\scriptsize $(10^{-2})$} & \multicolumn{1}{|c|}{{\scriptsize $(\%)$}} & {\scriptsize $(10^{-2})$} & {\scriptsize $(10^{-2})$} & \multicolumn{1}{|c|}{{\scriptsize $(\%)$}} & {\scriptsize $(10^{-2})$} \\\hline
\multirow{6}{*}{$50$} & \multirow{2}{*}{EN} & \multirow{2}{*}{--} & \multicolumn{1}{|c|}{\texttt{LastBF}} & 5.25 & \multicolumn{1}{|c|}{95.50} & 0.31 & 5.22 & \multicolumn{1}{|c|}{92.00} & 0.30 \\\cline{4-10}
& & & \gmc{\texttt{AveRS}} & \Gthree{0.52}{97.50}{0.05} & \Gthree{0.53}{99.00}{0.05} \\\cline{2-10}
& \multirow{2}{*}{\shortstack{ASN \\ ($E_t=I$)}} & \multirow{2}{*}{10} & \multicolumn{1}{|c|}{\texttt{LastBF}} & 5.43 & \multicolumn{1}{|c|}{94.50} & 0.41 & 5.56 & \multicolumn{1}{|c|}{98.00} & 0.37 \\\cline{4-10}
& & & \gmc{\texttt{AveRS}} & \Gthree{1.28}{96.00}{0.16} & \Gthree{0.65}{98.00}{0.08} \\\cline{2-10}
& \multirow{2}{*}{\shortstack{GASN \\ ($E_t=B_t$)}} & \multirow{2}{*}{10} & \multicolumn{1}{|c|}{\texttt{LastBF}} & 5.44 & \multicolumn{1}{|c|}{91.00} & 0.42 & 5.49 & \multicolumn{1}{|c|}{91.50} & 0.38 \\\cline{4-10}
& & & \gmc{\texttt{AveRS}} & \Gthree{1.28}{95.50}{0.16} & \Gthree{0.67}{95.50}{0.08} \\\hline
\end{tabular}}\vspace{0.05cm}\caption{\textit{Evaluation results for the inference procedures under the UCI Covertype dataset. For Exact Newton (EN), sketching steps are not applicable and are hence denoted by ``--''.}}\label{tab:uci_results}\vspace{-0.6cm}\end{table}

%% file: Appendix.tex
\clearpage

\begin{center}
\Large \bf Appendix: \papertitle
\end{center}

\section{Quantiles of the Limiting Distribution}\label{app:quantile}

Table \ref{table:fclt} presents the quantiles of the limiting distribution in \eqref{eq:RS_converge_in_distribution} adapted from \cite{Abadir1997Two}:

\begin{table}[H]
	\centering
	\begin{tabular}{c|cccc}
		\toprule
		$p$ & 90\% & 95\% & 97.5\% & 99\% \\
		\hline\\[-10pt]
		Quantile($p$) & 3.875 & 5.323 & 6.747 & 8.613 \\
		\bottomrule
	\end{tabular}
	\caption{Quantile table of the distribution $W_1(1)/\{\int_{0}^1  \left(W_1(r) - rW_1(1) \right)^2 dr\}^{1/2}$.}
	\vspace{-0.1cm}
	\label{table:fclt}
\end{table}

\section{Online Update of $V_t$}\label{app:online_update}

In this section, we show that the random scaling matrix~$V_t$~can be computed in an online fashion. To see this, we rewrite $V_t$ defined in \eqref{def:V_t} as follows:
\begin{align*}
V_{t} &= \frac{1}{t^2}\sum_{i = 1}^t i^2 (\bar {\boldsymbol{x}}_i - \bar {\boldsymbol{x}}_t)(\bar {\boldsymbol{x}}_i - \bar {\boldsymbol{x}}_t)^{\top}\\
    &= \frac{1}{t^2}\sum_{i = 1}^{t}i^2\bar{\boldsymbol{x}}_i\bar{\boldsymbol{x}}_i^{\top} - \frac{1}{t^2}\left(\sum_{i = 1}^{t}i^2\bar{\boldsymbol{x}}_i\right)\bar{\boldsymbol{x}}_t^{\top} - \frac{1}{t^2}\bar{\boldsymbol{x}}_t \left(\sum_{i = 1}^{t}i^2\bar{\boldsymbol{x}}_i^{\top}\right) + \frac{1}{t^2}\left( \sum_{i = 1}^{t}i^2\right) \bar{\boldsymbol{x}}_t\bar{\boldsymbol{x}}_t^{\top},
\end{align*}
Each of the four terms above can be easily computed recursively. We formalize the online calculation in the following algorithm.

\begin{algorithm*}
\caption{Online update of the random scaling matrix $V_{t}$}
\begin{algorithmic}[1]
\State \textbf{Initialize:} set initial values for \( \boldsymbol{s}_0 = \bx_0, \bar{\boldsymbol{s}}_1 = \boldsymbol{s}_0, P_1 = \bar{\boldsymbol{s}}_1\bar{\boldsymbol{s}}_1^{\top}, Q_1 = \bar{\boldsymbol{s}}_1, V_1 = 0\);
\For{\( t = 1, 2, \ldots \)}
\State Run GASN with the update \eqref{equ:update} to obtain $\boldsymbol{s}_{t} = \bx_t$;
\State Compute $\bar{\boldsymbol{s}}_{t+1} = \frac{1}{t+1}\boldsymbol{s}_t + \frac{t}{t+1}\bar{\boldsymbol{s}}_{t}$, $P_{t+1} = (t+1)^2\bar{\boldsymbol{s}}_{t+1}\bar{\boldsymbol{s}}_{t+1}^{\top} + P_{t}$, $Q_{t+1} = (t+1)^2\bar{\boldsymbol{s}}_{t+1} + Q_{t}$;
\State Output $V_{t+1} = \frac{1}{(t+1)^2}P_{t+1} - \frac{1}{(t+1)^2}Q_{t+1}\bar{\boldsymbol{s}}_{t+1}^{\top} - \frac{1}{(t+1)^2}\bar{\boldsymbol{s}}_{t+1} Q_{t+1}^{\top}+ \frac{(t+2)(2t+3)}{6(t+1)}\bar{\boldsymbol{s}}_{t+1}\bar{\boldsymbol{s}}_{t+1}^{\top}$;
\EndFor
\end{algorithmic}
\end{algorithm*}

\section{Preparation Lemmas}

\begin{lemma}[{\cite{Davis2024Asymptotic}, Lemma A.9}]\label{lem:phi_t_recursion_lemma}
Suppose $\{A_t\}_t$ is a non-negative sequence satisfying
\begin{equation*}
A_{t+1} \leq (1 - c_1 t^{-\beta}) A_t + c_2 t^{-2\beta}, \quad\quad \forall t\geq t_0,
\end{equation*}
for some constants $c_1, c_2>0$, $\beta\in(0.5,1)$, and a positive integer  $t_0$. Then, there exists a constant~$c_0>0$ such that $A_t \leq c_0 t^{-\beta}$ for all $t\geq 1$.
\end{lemma}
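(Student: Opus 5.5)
Lemma~\ref{lem:phi_t_recursion_lemma} is a discrete comparison argument. The plan is to find a constant $c_0$ for which the claimed bound $A_t\le c_0t^{-\beta}$ is self-propagating under the recursion. Once that holds, the result follows by induction from $t_0$, with the finitely many indices $t<t_0$ covered by enlarging $c_0$.

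First fix a threshold $t_1\ge t_0$ beyond which $c_1t^{-\beta}\le 1$, so that $1-c_1t^{-\beta}\ge 0$. Then, for $t\ge t_1$, suppose the inductive hypothesis $A_t\le c_0t^{-\beta}$ holds. Substituting it into the recursion gives
\begin{equation*}
A_{t+1}\le c_0t^{-\beta}-c_0c_1t^{-2\beta}+c_2t^{-2\beta}.
\end{equation*}
We want the right-hand side to be at most $c_0(t+1)^{-\beta}$.

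By the mean value theorem, $t^{-\beta}-(t+1)^{-\beta}\le \beta t^{-\beta-1}$. It therefore suffices to have
\begin{equation*}
c_0\beta t^{-\beta-1}\le (c_0c_1-c_2)t^{-2\beta}.
\end{equation*}
This step is the crux, and it is exactly where $\beta<1$ is used: since $\beta+1>2\beta$, the term $t^{-\beta-1}$ is of strictly smaller order than $t^{-2\beta}$. Concretely, choose $c_0\ge 2c_2/c_1$, so that $c_0c_1-c_2\ge c_0c_1/2$. Then enlarge $t_1$ so that $\beta t^{1-\beta}\ge 2/c_1$, i.e.\ $\beta t^{-\beta-1}\le (c_1/2)t^{-2\beta}$, for all $t\ge t_1$. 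Under these two choices the inequality holds for every $t\ge t_1$. Notably, the choice of $t_1$ does not depend on $c_0$, so the two requirements can be met independently.

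Finally, take
\begin{equation*}
c_0=\max\Big\{2c_2/c_1,\ \max_{1\le t\le t_1}A_tt^{\beta}\Big\}.
\end{equation*}
This maximum is finite: only finitely many terms are involved, and the recursion shows each $A_t$ is finite once the initial values are. This choice gives the base case at $t=t_1$ and covers all $t\le t_1$. Induction then yields $A_t\le c_0t^{-\beta}$ for all $t\ge1$. Nonnegativity of $A_t$ is used only to make the statement meaningful, and to guarantee that multiplying the hypothesis by the nonnegative factor $1-c_1t^{-\beta}$ preserves the inequality.
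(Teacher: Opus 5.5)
The paper gives no proof of this lemma; it quotes it from \cite{Davis2024Asymptotic}. Your induction is a correct, self-contained proof.

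\textbf{Why the argument works.}
\begin{itemize}
\item The threshold $t_1$ depends only on $(c_1,\beta,t_0)$, so defining $c_0$ afterward is not circular.
\item The choice $c_0\ge 2c_2/c_1$, combined with the mean value bound $t^{-\beta}-(t+1)^{-\beta}\le\beta t^{-\beta-1}$, closes the induction step exactly where $\beta<1$ makes $t^{-\beta-1}=o(t^{-2\beta})$.
\item The finitely many indices $t\le t_1$ are absorbed into $c_0$.
\end{itemize}

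\textbf{One harmless slip.} The condition $\beta t^{-\beta-1}\le (c_1/2)t^{-2\beta}$ is equivalent to $t^{1-\beta}\ge 2\beta/c_1$, not to $\beta t^{1-\beta}\ge 2/c_1$ as you wrote. Since $\beta<1$, your stated condition gives $t^{1-\beta}\ge 2/(\beta c_1)\ge 2\beta/c_1$. So it implies the needed inequality, and the step is unaffected.

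Your argument never uses $\beta>1/2$; it proves the lemma for every $\beta\in(0,1)$.
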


\begin{lemma}[Kronecker's lemma]\label{lem:kronecker}

Suppose $\{A_t\}_t$ is a sequence such that $\sum_{t=0}^{\infty}A_t$ exists and is finite. For any divergent positive non-decreasing sequence $\{a_t\}_t$, we have $\frac{1}{a_T}\sum_{t=1}^{T}a_tA_t \rightarrow 0$~as~$T\rightarrow\infty$.

\end{lemma}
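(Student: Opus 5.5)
The plan is the classical Abel summation argument, which turns the weighted sum into a weighted average of partial sums.

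\textbf{Step 1 (Abel summation).} Let $S_t \coloneqq \sum_{k=0}^{t} A_k$ and $S \coloneqq \lim_{t\to\infty} S_t$, which exists and is finite by hypothesis. Since $A_t = S_t - S_{t-1}$ for $t \ge 1$, summation by parts gives
\begin{equation*}
\sum_{t=1}^{T} a_t A_t = \sum_{t=1}^{T} a_t (S_t - S_{t-1}) = a_T S_T - a_1 S_0 - \sum_{t=1}^{T-1} (a_{t+1} - a_t) S_t .
\end{equation*}
Dividing by $a_T$ (positive for all $t$, so the division is legitimate), we get
\begin{equation*}
\frac{1}{a_T}\sum_{t=1}^{T} a_t A_t = S_T - \frac{a_1 S_0}{a_T} - \frac{1}{a_T}\sum_{t=1}^{T-1} (a_{t+1}-a_t) S_t .
\end{equation*}
Because $a_T \to \infty$, the first term tends to $S$ and the second tends to $0$. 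It therefore suffices to show that the last term tends to $S$.

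\textbf{Step 2 (Toeplitz averaging).} The weights $w_{T,t} \coloneqq (a_{t+1}-a_t)/a_T$ are nonnegative because $\{a_t\}$ is non-decreasing. They telescope to
\begin{equation*}
\sum_{t=1}^{T-1} w_{T,t} = \frac{a_T - a_1}{a_T} \to 1 .
\end{equation*}
Since this sum tends to $1$, it suffices to show $\sum_{t=1}^{T-1} w_{T,t}(S_t - S) \to 0$.

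Fix $\epsilon > 0$ and choose $N$ such that $|S_t - S| < \epsilon$ for all $t \ge N$. Split the sum at $N$:
\begin{equation*}
\Big|\sum_{t=1}^{T-1} w_{T,t}(S_t - S)\Big| \le \frac{1}{a_T}\sum_{t=1}^{N-1}(a_{t+1}-a_t)\,|S_t - S| + \epsilon \cdot \frac{a_T - a_N}{a_T}.
\end{equation*}
The first piece is a fixed finite quantity divided by $a_T \to \infty$, so it vanishes. The second piece is at most $\epsilon$. Since $\epsilon$ is arbitrary, the weighted average of the $S_t$ converges to $S$. Substituting into Step 1 gives $S - 0 - S = 0$, which is the claim.

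\textbf{Main obstacle.} The only delicate point is Step 2, the Toeplitz/Ces\`aro-type averaging. The two properties it needs are nonnegativity of the increments $a_{t+1} - a_t$ (from monotonicity) and divergence of $a_T$. These together ensure that the early partial sums, where $S_t$ may be far from $S$, receive vanishing total weight. The rest is bookkeeping.
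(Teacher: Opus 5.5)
Your argument is correct and complete. Summation by parts reduces the claim to showing that the weighted averages $\frac{1}{a_T}\sum_{t=1}^{T-1}(a_{t+1}-a_t)S_t$ converge to $S$, and your $\epsilon$--$N$ split handles this using exactly the two hypotheses that matter: nonnegative increments and $a_T\to\infty$.

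The paper gives no proof of this lemma; it is listed among the preparation lemmas as a classical fact. What you wrote is the standard textbook proof. It also goes through verbatim for vector- or matrix-valued $A_t$ (replace $|\cdot|$ by a norm). That is how the paper actually uses the lemma, e.g., to conclude $\frac{1}{t}\sum_{i=0}^{t-1}(H_i-\nabla^2 f_i)\to 0$.
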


\begin{lemma}[Stolz–Ces\`{a}ro theorem]\label{aux:lem:stolz-cesaro}
    Let $\{a_n\}_{n \geq 1}$ and $\{b_n\}_{n \geq 1}$ be two sequences of real numbers. Assume that $\{b_n\}_{n \geq 1}$ is a strictly monotone and divergent sequence and the following limit exists:
    \begin{equation*}
        \lim_{n \to \infty} \frac{a_{n+1} - a_n}{b_{n+1} - b_n} = l;
    \end{equation*}
    then, we have the limit:
    \begin{equation*}
        \lim_{n \to \infty} \frac{a_n}{b_n} = l.
    \end{equation*}
\end{lemma}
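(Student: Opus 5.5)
We prove the finite-limit version, $l\in\mR$, by a telescoping $\epsilon$-argument; this is the only case used later.

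\textbf{Reduction.} A strictly monotone divergent sequence tends either to $+\infty$ or to $-\infty$. If $b_n\to-\infty$, we replace $(a_n,b_n)$ by $(-a_n,-b_n)$. This leaves both ratios $\frac{a_{n+1}-a_n}{b_{n+1}-b_n}$ and $\frac{a_n}{b_n}$ unchanged. We may therefore assume that $\{b_n\}$ is strictly increasing with $b_n\to+\infty$, so every increment satisfies $b_{k+1}-b_k>0$.

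\textbf{Increment bound.} Fix $\epsilon>0$. By the hypothesis there is an $N$ such that for all $k\ge N$,
\begin{equation*}
\left|\frac{a_{k+1}-a_k}{b_{k+1}-b_k}-l\right|\le\epsilon .
\end{equation*}
Multiplying by the positive increment gives
\begin{equation*}
\big|(a_{k+1}-a_k)-l(b_{k+1}-b_k)\big|\le \epsilon\,(b_{k+1}-b_k), \qquad k\ge N .
\end{equation*}

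\textbf{Telescoping.} Sum this inequality over $k=N,\dots,n-1$ and apply the triangle inequality. The sums telescope, giving
\begin{equation*}
\big|a_n-a_N-l(b_n-b_N)\big|\le \epsilon\,(b_n-b_N).
\end{equation*}
Here positivity of the increments is essential: it lets the right-hand sides sum to $\epsilon(b_n-b_N)$ without absolute values.

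\textbf{Normalization.} For $n$ large enough that $b_n>0$, divide by $b_n$ to obtain
\begin{equation*}
\left|\frac{a_n}{b_n}-l\right|\le \epsilon\,\frac{b_n-b_N}{b_n}+\frac{|a_N-l\,b_N|}{b_n}\le \epsilon+\frac{|a_N-l\,b_N|}{b_n}.
\end{equation*}
The numerator $|a_N-l\,b_N|$ is a fixed constant once $N$ is chosen, and $b_n\to\infty$. Hence
\begin{equation*}
\limsup_{n\to\infty}\left|\frac{a_n}{b_n}-l\right|\le\epsilon .
\end{equation*}
Since $\epsilon>0$ is arbitrary, $a_n/b_n\to l$.

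\textbf{Main point of care.} There is no real obstacle; the only subtlety is the sign bookkeeping, which is exactly why we first reduce to $b_n\uparrow+\infty$. If needed, the same scheme extends to $l=\pm\infty$: one replaces the two-sided bound by a one-sided lower bound $a_{k+1}-a_k\ge M(b_{k+1}-b_k)$ and telescopes in the same way.
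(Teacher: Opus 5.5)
The paper states this lemma as a classical auxiliary fact and gives no proof, so there is no competing route to compare against. Your telescoping $\epsilon$-argument is the standard proof and is correct for finite $l$. That is the only case the paper needs: it applies the lemma pathwise and entrywise, with $b_n=n$ or a power of $n$, and every limit it uses is finite.

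There is one small slip in the normalization step. The bound $\epsilon\,(b_n-b_N)/b_n\le\epsilon$ needs $b_N\ge 0$. You can arrange this by also choosing $N$ with $b_N>0$, which is possible since $b_n\to+\infty$. Alternatively, just note that $(b_n-b_N)/b_n\to 1$, which still gives $\limsup_{n}|a_n/b_n-l|\le\epsilon$.
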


\section{Proofs in Section \ref{sec:3}}

We first introduce some useful notation. Let $K_t \coloneqq \mE[\tilde{K}_t \mid \mF_{t-1}]$, where $\tilde{K}_t$ is defined analogously to $\tilde{K}^\star$ in \eqref{def:tilde_H_star}, but evaluated at $(\alpha_t, \beta_t, \gamma_t)$, $B_t$, and $E_t$. We now express these quantities more explicitly. Define
\begin{equation}\label{def:tilde_C_t}
    \tilde{C}_t \coloneqq \prod_{j=0}^{\tau-1} \underbrace{\begin{pmatrix}
        (1 - \alpha_t)(I - Z_{t,j}) &  \alpha_t(I - Z_{t,j})\\
        (1 - \alpha_t)(1 - \beta_t) I - (1 - \alpha_t)\gamma_t Z_{t,j} & (\alpha_t + \beta_t - \alpha_t \beta_t) I - \alpha_t \gamma_t Z_{t,j}
    \end{pmatrix}}_{C_{t,j}} \in \mR^{2d \times 2d},
\end{equation}
where $Z_{t,j} = E_t^{-1/2} B_t S_{t,j} (S_{t,j}^\top B_t E_t^{-1} B_t S_{t,j})^\dagger S_{t,j}^\top B_t E_t^{-1/2} \in \mR^{d \times d}$. Let $[\tilde{C}_t]_{1,1}$ and $[\tilde{C}_t]_{1,2}$ denote the upper-left and upper-right $d \times d$ blocks of $\tilde{C}_t$, respectively. Then
\begin{equation}\label{def:tilde_H_t}
    \tilde{K}_t = E_t^{-1/2} \bigl([\tilde{C}_t]_{1,1} + [\tilde{C}_t]_{1,2}\bigr) E_t^{1/2} = E_t^{-1/2} \begin{pmatrix} I & \bm 0 \end{pmatrix} \tilde{C}_t \begin{pmatrix} I \\ I \end{pmatrix} E_t^{1/2} \in \mR^{d \times d},
\end{equation}
where $I \in \mR^{d \times d}$ is the identity matrix. We further define $C_t \coloneqq \mE[\tilde{C}_t \mid \mF_{t-1}]$. By the independence of $\{S_{t,j}\}_j$,
\begin{equation}\label{def:C_t}
    C_t = \underbrace{\begin{pmatrix}
        (1 - \alpha_t)(I - Z_t) &  \alpha_t(I - Z_t)\\
        (1 - \alpha_t)(1 - \beta_t) I - (1 - \alpha_t)\gamma_t Z_t & (\alpha_t + \beta_t - \alpha_t \beta_t) I - \alpha_t \gamma_t Z_t
    \end{pmatrix}^{\tau}}_{M_t^{\tau}} \in \mR^{2d \times 2d},
\end{equation}
where $\tilde{Z}_t = E_t^{-1/2} B_t S (S^\top B_t E_t^{-1} B_t S)^\dagger S^\top B_t E_t^{-1/2}$ and $Z_t = \mE\bigl[ \tilde{Z}_t \mid \mF_{t-1}\bigr]$. Letting $[C_t]_{1,1}$ and $[C_t]_{1,2}$ denote the corresponding $d \times d$ blocks of $C_t$, we obtain
\begin{equation}\label{def:H_t}
    K_t = E_t^{-1/2} \bigl([C_t]_{1,1} + [C_t]_{1,2}\bigr) E_t^{1/2} = E_t^{-1/2} \begin{pmatrix} I & \bm 0 \end{pmatrix} C_t \begin{pmatrix} I \\ I \end{pmatrix} E_t^{1/2} \in \mR^{d \times d}.
\end{equation}

\subsection{Proof of Lemma \ref{lem:4}}

    We define $T_t \coloneqq B_t S$, hence $Z_t = \mE[ E_t^{-1/2}T_t(T_t^{\top} E_t^{-1} T_t)^{\dagger}T_t^{\top} E_t^{-1/2} \mid \mF_{t-1}]$. Since $ (1/\gamma_E) \cdot T_t^{\top}T_t \succeq T_t^{\top}E_t^{-1}T_t$ by Assumption \ref{ass:4}, and $\text{Ker}(T_t^{\top}T_t) = \text{Ker}(T_t^{\top}E_t^{-1}T_t)$, we then have
    \begin{equation}
        (T_t^{\top}E_t^{-1} T_t)^{\dagger} \succeq \gamma_E \cdot (T_t^{\top}T_t)^{\dagger},
    \end{equation}
    which implies that 
    \begin{equation*}
        E_t^{-1/2}T_t(T_t^{\top} E_t^{-1} T_t)^{\dagger}T_t^{\top} E_t^{-1/2} \succeq \gamma_E \cdot E_t^{-1/2}T_t(T_t^{\top}  T_t)^{\dagger}T_t^{\top} E_t^{-1/2}.
    \end{equation*}
    We take conditional expectation $\mE[\cdot | \mF_{t-1}]$ on both sides, and using Assumption \ref{ass:4}, and we get
    \begin{equation}\label{equ:123}  
        Z_t \succeq \gamma_E \cdot \gamma_S \cdot E_t^{-1} \succeq \frac{\gamma_S \gamma_E}{\Upsilon_E} I,
    \end{equation}
    where the last inequality uses Assumption \ref{ass:4}.  We complete the proof of the lemma.

\subsection{Proof of Theorem \ref{thm:as_convergence}}

We first present the following lemma, which will be useful in our proof.

\begin{lemma}\label{lem:prop_sketch_solver}
    Under Assumptions \ref{ass:1}-\ref{ass:4}, the following results hold: \\
     (a) We have $\bz_{t, \tau} = (I - \tilde{K}_t) \Delta \bx_t = - (I - \tilde{K}_t) B_t^{-1} \bg_t$ for any $t \geq 0$.\\
     (b) For any  $t \geq 0$, we have $\mE[ \bz_{t, \tau} \mid \mF_{t-1}] = -(I - K_t)B_t^{-1}\nabla f_t$, and $ \bz_{t, \tau} - (I - K_t) \Delta \bx_t$ is a martingale difference. \\
     (c) For any $t \geq 0$, we have $\gamma_t \leq \sqrt{\frac{\Upsilon_E}{\gamma_E \gamma_S}}$, and $\|\tilde{K}_t\| \leq C_K$, where $C_K \coloneqq \sqrt{\frac{2\Upsilon_E}{\gamma_E}} \cdot \left( 2 + \sqrt{\frac{\Upsilon_E}{\gamma_E \gamma_S}} \right)^{\tau} > 0$ is a universal constant. \\
     (d) For any $ t\geq 0$, we have $\|K_t\| \leq \sqrt{\frac{\Upsilon_E}{\gamma_E}} \cdot 2\tau (1 - \sqrt{\mu_t/\nu_t})^{\tau - 2}$.
    
\end{lemma}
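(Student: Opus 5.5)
The plan is to prove the four parts in order. Part (a) is an exact algebraic identity for the inner loop. Parts (b)–(d) then follow from independence of the sketches, the projection structure of $Z_{t,j}$, and a spectral analysis of $M_t$.

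\textbf{Part (a).} Fix $t$ and drop it from the notation. Since $B\Delta\bx=-\bg$, we can write $B\by_j+\bg=B(\by_j-\Delta\bx)$. I change variables to errors in the $E$-metric:
\begin{equation*}
\boldsymbol{e}_j=E^{1/2}(\bz_j-\Delta\bx),\qquad \boldsymbol{f}_j=E^{1/2}(\boldsymbol{v}_j-\Delta\bx),\qquad E^{1/2}(\by_j-\Delta\bx)=\alpha\boldsymbol{f}_j+(1-\alpha)\boldsymbol{e}_j .
\end{equation*}
A direct substitution gives $E^{1/2}\boldsymbol{\omega}_j=Z_{t,j}E^{1/2}(\by_j-\Delta\bx)$. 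Plugging this into lines 7–8 of Algorithm~\ref{alg:generalized} and collecting coefficients of $\boldsymbol{e}_j,\boldsymbol{f}_j$ gives
\begin{equation*}
(\boldsymbol{e}_{j+1};\boldsymbol{f}_{j+1})=C_{t,j}(\boldsymbol{e}_j;\boldsymbol{f}_j),
\end{equation*}
with $C_{t,j}$ exactly as in \eqref{def:tilde_C_t}. For instance, the $\boldsymbol{f}$-coefficient in the second row is $\beta+(1-\beta)\alpha-\alpha\gamma Z_{t,j}$. The initialization $\bz_0=\boldsymbol{v}_0=\0$ gives $\boldsymbol{e}_0=\boldsymbol{f}_0=-E^{1/2}\Delta\bx$. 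Hence
\begin{equation*}
E^{1/2}(\bz_\tau-\Delta\bx)=-\begin{pmatrix}I&\0\end{pmatrix}\tilde C_t\begin{pmatrix}I\\I\end{pmatrix}E^{1/2}\Delta\bx,
\end{equation*}
which is exactly $\bz_\tau=(I-\tilde K_t)\Delta\bx_t$.

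\textbf{Part (b).} $B_t$, $E_t$ and $(\alpha_t,\beta_t,\gamma_t)$ are $\mF_{t-1}$-measurable, and the $S_{t,j}$ are i.i.d.\ and independent of $\xi_t$. Therefore $\mE[\tilde C_t\mid\mF_{t-0.5}]=\prod_j\mE[C_{t,j}\mid\mF_{t-1}]=M_t^\tau=C_t$, so $\mE[\tilde K_t\mid\mF_{t-0.5}]=K_t$. Conditioning first on $\mF_{t-0.5}$ and then using $\mE[\bg_t\mid\mF_{t-1}]=\nabla f_t$ gives the mean formula. The martingale-difference claim follows from
\begin{equation*}
\bz_{t,\tau}-(I-K_t)\Delta\bx_t=-(\tilde K_t-K_t)\Delta\bx_t,
\end{equation*}
which has zero conditional mean given $\mF_{t-0.5}$, since $\Delta\bx_t$ is $\mF_{t-0.5}$-measurable.

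\textbf{Part (c).} By \eqref{prop_mu_nu}, $\nu_t\ge1$, so $\gamma_t=1/\sqrt{\mu_t\nu_t}\le 1/\sqrt{\mu_t}$. Lemma~\ref{lem:4} gives $\mu_t\ge\gamma_E\gamma_S/\Upsilon_E$, which yields the bound on $\gamma_t$. Each $Z_{t,j}$ is an orthogonal projection, so $\|Z_{t,j}\|\le1$ and $\|I-Z_{t,j}\|\le1$. With $\alpha_t,\beta_t\in[0,1]$, every block of $C_{t,j}$ then has norm at most $1+\gamma_t$. Bounding the $2\times2$ block operator norm by row sums gives $\|C_{t,j}\|\le 2+\gamma_t$. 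Finally, $\|E_t^{-1/2}\|\|E_t^{1/2}\|\le\sqrt{\Upsilon_E/\gamma_E}$ and $\|(I;I)\|=\sqrt2$, which give $C_K$.

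\textbf{Part (d), the main obstacle.} This is the only part with real content. I diagonalize $Z_t=U\Lambda U^\top$, with eigenvalues $\lambda\in[\mu_t,1]$. Conjugating by $\mathrm{diag}(U,U)$ splits $M_t$ into $2\times2$ matrices
\begin{equation*}
m(\lambda)=\begin{pmatrix}(1-\alpha)(1-\lambda)&\alpha(1-\lambda)\\(1-\alpha)(1-\beta)-(1-\alpha)\gamma\lambda&\alpha+\beta-\alpha\beta-\alpha\gamma\lambda\end{pmatrix}.
\end{equation*}
It therefore suffices to bound $|(1,0)\,m(\lambda)^\tau(1,1)^\top|$ uniformly in $\lambda$.

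I will compute the trace and determinant of $m(\lambda)$ from \eqref{def:alpha_beta_gamma_t} and show its spectral radius is $\rho\le 1-\sqrt{\mu_t/\nu_t}$ for $\lambda\in[\mu_t,1]$. The convergence analysis of \cite{Gower2018Accelerated, Derezinski2025Fine} is the template here, and it may need the Jensen-type fact $\mE[\tilde Z Z^{-1}\tilde Z]\succeq Z$. I expect this eigenvalue estimate to be the delicate step. If the direct computation fails for some $\lambda$, I would instead import the Lyapunov-function contraction of \citet[Remark~6]{Derezinski2025Fine}, which holds in the $E$-norm and is applied to the expected iteration.

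Given the bound on $\rho$, Cayley–Hamilton for $2\times2$ matrices writes
\begin{equation*}
m^\tau=p_\tau\, m-\det(m)\,p_{\tau-1}\, I,\qquad p_k=\frac{r_1^k-r_2^k}{r_1-r_2}.
\end{equation*}
Using $|p_k|\le k\rho^{k-1}$ and $|\det m|\le\rho^2$ gives an entrywise bound of order $2\tau\rho^{\tau-2}$. The factor $\sqrt{\Upsilon_E/\gamma_E}$ then comes from the conjugation by $E_t^{\pm1/2}$ in \eqref{def:H_t}.
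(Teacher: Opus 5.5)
Your proof is correct. Parts (a)–(c) follow the paper's argument: the error recursion in the $E_t^{1/2}$-coordinates, the tower property through $\mF_{t-0.5}$, and $\mu_t\ge\gamma_E\gamma_S/\Upsilon_E$ with $\nu_t\ge1$. One wording fix in (c): row sums of the block norms do not bound the spectral norm. Also, "every block has norm at most $1+\gamma_t$" only gives $2+2\gamma_t$. Instead, sum all four block norms, as the paper does; they total exactly $(1-\alpha_t)+\alpha_t+(1-\alpha_t)(1-\beta_t+\gamma_t)+(\alpha_t+\beta_t-\alpha_t\beta_t+\alpha_t\gamma_t)=2+\gamma_t$.

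The genuine difference is in (d). The paper does not prove $\|[C_t]_{1,1}+[C_t]_{1,2}\|\le2\tau(1-\sqrt{\mu_t/\nu_t})^{\tau-2}$. It imports this bound from \citet[Lemma~3.7]{Wang2026Inference} and only argues that the structural properties survive a general metric $E_t$. You prove it directly, and the step you flagged does go through. With $s=\sqrt{\mu_t/\nu_t}$ one has $\alpha_t=s/(1+s)$ and $\alpha_t\gamma_t=1/(\nu_t(1+s))$, so
\[
m(\lambda)=\frac{1}{1+s}\begin{pmatrix}1-\lambda & s(1-\lambda)\\ s-\gamma_t\lambda & 1-\lambda/\nu_t\end{pmatrix},\qquad \mathrm{tr}\,m=\frac{2-\lambda(1+1/\nu_t)}{1+s},\qquad \det m=\frac{(1-s)(1-\lambda)}{1+s}.
\]
The spectral radius bound $\rho\le 1-s$ then follows case by case:
\begin{itemize}
\item Complex roots satisfy $|r|^2=\det m\le(1-s)^2$ iff $\lambda\ge s^2=\mu_t/\nu_t$.
\item Real roots are nonnegative, since the trace and determinant are both nonnegative.
\item The characteristic polynomial satisfies $p(1-s)=(1-s)(\lambda-\mu_t)/(\nu_t(1+s))\ge0$.
\item Finally, $\mathrm{tr}\,m/2\le1-s$ iff $\lambda(1+1/\nu_t)\ge2\mu_t/\nu_t$.
\end{itemize}
Only $\lambda\in[\mu_t,1]$ and $\nu_t\ge1$ are used, so the Jensen-type fact is unnecessary.

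The bound is tight. For $\nu_t=1$ and $\lambda=\mu_t$ there is a double root $1-s$, which is where the factor $\tau$ comes from; there, read $p_k$ as $\sum_{i<k}r_1^ir_2^{k-1-i}$. In the last step, use the combined functional $(1,0)\,m^\tau(1,1)^\top=p_\tau(1-\lambda)-\det(m)\,p_{\tau-1}$. This is at most $(2\tau-1)\rho^{\tau-1}\le2\tau\rho^{\tau-2}$; bounding the two entries separately loses the constant.

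Drop the Lyapunov fallback. It controls $\mE[\tilde C_t^\top P\tilde C_t]$, which via Jensen yields only $\|C_t\|\lesssim\mu_t^{-1/2}(1-s)^{\tau/2}$, not the stated exponent.

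Compared with the paper's citation, your route makes explicit what is actually used: a symmetric $Z_t$ with spectrum in $[\mu_t,1]$ and $\nu_t\ge1$, with no second-moment information. It also gives a slightly sharper exponent, $\tau-1$.
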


By Assumption \ref{ass:1}, $f(\bx)$ is strongly convex, then according to \cite{Nesterov2018Lectures}, we have 
\begin{equation}\label{equ:46}
    \frac{\gamma_H}{2} \|\bx_t - \bx^{\star}\|^2 \leq f_t - f^{\star} \leq \frac{1}{2\gamma_H} \|\nabla f_t\|^2.
\end{equation}
By \eqref{equ:46}; the $\Upsilon_H$-Lipschitz continuity of $\nabla f(\bx)$ implied in Assumption \ref{ass:1}; and \cite{Nesterov2018Lectures}, we have
\begin{equation}\label{equ:59}
    \frac{1}{2\Upsilon_H} \|\nabla f_t\|^2 \leq f_t - f^{\star} \leq \frac{\Upsilon_H}{2} \|\bx_t - \bx^{\star}\|^2 .
\end{equation}
By the $\Upsilon_H$-Lipschitz continuity of $\nabla f(\bx)$, we also have
\begin{equation}\label{equ:48}
    f_{t+1} - f^{\star} \leq f_t - f^{\star} + \varphi_t \nabla f_t^{\top} \bz_{t, \tau} + \frac{\Upsilon_H}{2} \varphi_t^2 \|\bz_{t, \tau}\|^2.
\end{equation}
We then take conditional expectation $\mE[\cdot \mid \mF_{t-1}]$ on both sides of \eqref{equ:48}, and obtain:
\begin{equation}\label{equ:44}
    \mE[f_{t+1} - f^{\star} \mid \mF_{t-1}] \leq f_t - f^{\star} + \varphi_t \mE[ \nabla f_t^{\top}\bz_{t, \tau} \mid \mF_{t-1}] + \frac{\Upsilon_H}{2} \varphi_t^2 \mE[\|\bz_{t, \tau}\|^2 \mid \mF_{t-1}].
\end{equation}
Also, by Assumption~\ref{ass:1} as well as the construction of the Hessian approximate $B_t$, we have
\begin{equation} \label{ass:2:c2}
\gamma_H \leq \lambda_{\min}(B_t) \leq \lambda_{\max}(B_t) \leq \Upsilon_H.
\end{equation}
For the second term in the right-hand side of \eqref{equ:44}, we obtain that
\begin{align}\label{equ:45}
    \varphi_t \mE[ \nabla f_t^{\top} \bz_{t, \tau} \mid \mF_{t-1}] &= \varphi_t \nabla f_t^{\top} \mE[ \bz_{t, \tau} \mid \mF_{t-1}] \nonumber \\
    &= -\varphi_t \nabla f_t^{\top} \mE[ (I - \tilde{K}_t) B_t^{-1} \bg_t \mid \mF_{t-1}]  \nonumber \\
    &= -\varphi_t \nabla f_t^{\top} (I - K_t)B_t^{-1} \nabla f_t \nonumber \\
    &= -\varphi_t \nabla f_t^{\top} B_t^{-1} \nabla f_t + \varphi_t \nabla f_t^{\top} K_t B_t^{-1} \nabla f_t \nonumber \\
    &\stackrel{\mathclap{\eqref{ass:2:c2}}}{\leq} -\frac{1}{\Upsilon_H} \varphi_t \|\nabla f_t\|^2 + \varphi_t \frac{1}{\gamma_H} \|\nabla f_t\|^2 \|K_t\| \nonumber \\
    &\leq -\frac{1}{2\Upsilon_H} \varphi_t \|\nabla f_t\|^2,
\end{align}
where the second equality uses Lemma \ref{lem:prop_sketch_solver}(a), the third equality uses Lemma \ref{lem:prop_sketch_solver}(b), the last inequality holds due to $\|K_t\| \leq \sqrt{\frac{\Upsilon_E}{\gamma_E}} \cdot 2\tau (1 - \sqrt{\mu_t/\nu_t})^{\tau - 2}$ for all $t$ in Lemma \ref{lem:prop_sketch_solver}(d), and the choice of $\tau$ such that $\|K_t\| \leq \sqrt{\frac{\Upsilon_E}{\gamma_E}} \cdot 2\tau (1 - \sqrt{\mu_t/\nu_t})^{\tau - 2} \leq \gamma_H / 2\Upsilon_H$. For the third term in the right-hand side of \eqref{equ:44}, we have
\begin{align}\label{equ:47} 
    \frac{\Upsilon_H}{2} \varphi_t^2 \mE[ & \|\bz_{t, \tau} \|^2  \mid \mF_{t-1}]  \leq \frac{\Upsilon_H}{2} \varphi_t^2 \mE[ \|I - \tilde{K}_t\|^2 \|B_t^{-1}\|^2 \|\bg_t\|^2 \mid \mF_{t-1}] \\
    &\leq \frac{\Upsilon_H}{2} \varphi_t^2  \frac{(1+C_K)^2 }{\gamma_H^2} \mE[\|\bg_t\|^2 \mid \mF_{t-1}] \\
    &\leq \frac{\Upsilon_{H}(1+C_K)^2}{\gamma_H^2} \varphi_t^2 \left( \|\nabla f_t\|^2 + \mE[\|\bg_t - \nabla f_t\|^2 \mid \mF_{t-1}] \right) \qquad \qquad (\text{Young's inequality}) \\
    &\stackrel{\mathclap{\eqref{ass:2:4th}}}{\leq} \frac{\Upsilon_{H}(1+C_K)^2}{\gamma_H^2} \varphi_t^2  \|\nabla f_t\|^2 + \frac{\Upsilon_{H}(1+C_K)^2}{\gamma_H^2} \varphi_t^2 \left(C_{g, 1}^{\frac{2}{2+\epsilon}}\|\bx_t - \bx^{\star}\|^2 + C_{g, 2}^{\frac{2}{2+\epsilon}} \right) \\
    & \stackrel{\mathclap{\eqref{equ:46}}}{\leq} \ \frac{\Upsilon_{H}(1+C_K)^2}{\gamma_H^2} \varphi_t^2  \|\nabla f_t\|^2 + \frac{\Upsilon_{H}(1+C_K)^2C_{g, 1}^{\frac{2}{2+\epsilon}}}{\gamma_H^4} \varphi_t^2  \|\nabla f_t\|^2 + \frac{\Upsilon_{H}(1+C_K)^2C_{g, 2}^{\frac{2}{2+\epsilon}}}{\gamma_H^2} \varphi_t^2 \\
    &= \frac{(\gamma_H^2+ C_{g, 1}^{\frac{2}{2+\epsilon}}) \Upsilon_{H}(1+C_K)^2}{\gamma_H^4}\varphi_t^2  \|\nabla f_t\|^2 + \frac{\Upsilon_{H}(1+C_K)^2C_{g, 2}^{\frac{2}{2+\epsilon}}}{\gamma_H^2} \varphi_t^2,
\end{align}
where the first inequality holds due to Lemma \ref{lem:prop_sketch_solver}(a), and the second inequality uses Assumption \ref{ass:1} and Lemma \ref{lem:prop_sketch_solver}(c), and the fourth inequality also uses Jensen's inequality.
Then we plug in \eqref{equ:45} and \eqref{equ:47} into \eqref{equ:44}, and get
\begin{multline}\label{equ:50}
    \mE[f_{t+1} - f^{\star} \mid \mF_{t-1}] \leq f_t - f^{\star}  -\frac{1}{2\Upsilon_H} \varphi_t \|\nabla f_t\|^2\\
    + \frac{\left(\gamma_H^2+ C_{g, 1}^{\frac{2}{2+\epsilon}}\right) \Upsilon_{H}(1+C_K)^2}{\gamma_H^4}\varphi_t^2  \|\nabla f_t\|^2 + \frac{\Upsilon_{H}(1+C_K)^2C_{g, 2}^{\frac{2}{2+\epsilon}}}{\gamma_H^2} \varphi_t^2.
\end{multline}
Since $\varphi_t = C_{\varphi} \cdot (t+1)^{-\varphi}$ where $\varphi \in (0.5, 1)$, then there exists $t_0 > 0$, such that for all $t \geq t_0$, we have $\frac{\left(\gamma_H^2+ C_{g, 1}^{\frac{2}{2+\epsilon}}\right) \Upsilon_{H}(1+C_K)^2}{\gamma_H^4} \varphi_t \leq \frac{1}{4\Upsilon_H}$. Then for all $t \geq t_0$, we have
\begin{equation}\label{equ:51}
    \mE[f_{t+1} - f^{\star} \mid \mF_{t-1}] \leq f_t - f^{\star} - \frac{1}{4\Upsilon_H}\varphi_t \|\nabla f_t\|^2 + \frac{\Upsilon_{H}(1+C_K)^2C_{g, 2}^{\frac{2}{2+\epsilon}}}{\gamma_H^2} \varphi_t^2.
\end{equation}
Noticing that $\sum_{t = t_0}^{\infty} \varphi_t^2 < \infty$, then we apply the Robbins-Siegmund theorem \citep[Theorem 1.3.12]{Duflo2013Random}, and conclude that $f_t - f^{\star}$ converges to a finite random variable, and $\sum_{t = t_0}^{\infty} \varphi_t \|\nabla f_t\|^2 < \infty$ almost surely. Furthermore, we have $\liminf_{t \to \infty} \|\nabla f_t\| = 0$ almost surely since $\sum_{t = t_0}^{\infty} \varphi_t = \infty$, which leads to $\liminf_{t \to \infty}(f_t - f^{\star}) = 0$ almost surely according to \eqref{equ:46}. Since we have already known that $f_t - f^{\star}$ converges almost surely, the conclusion can be strengthened to $\lim_{t \to \infty} f_t - f^{\star} = 0$. Again, we apply \eqref{equ:46} and obtain $\lim_{t \to \infty} \bx_t = \bx^{\star}$ almost surely.

We now establish the $L^2$-convergence of $\bx_t$. We take full expectation on both sides of \eqref{equ:51}, and obtain for all $t \geq t_0$,
\begin{align}\label{equ:52}
    \mE[f_{t+1} - f^{\star}] &\leq \mE[f_t - f^{\star}] - \frac{1}{4\Upsilon_H}\varphi_t \mE[\|\nabla f_t\|^2] + \frac{\Upsilon_{H}(1+C_K)^2C_{g, 2}^{\frac{2}{2+\epsilon}}}{\gamma_H^2} \varphi_t^2 \\
    &\stackrel{\mathclap{\eqref{equ:46}}}{\leq} \  \mE[f_t - f^{\star}] - \frac{\gamma_H}{2\Upsilon_H}\varphi_t \mE[f_t - f^{\star}] + \frac{\Upsilon_{H}(1+C_K)^2C_{g, 2}^{\frac{2}{2+\epsilon}}}{\gamma_H^2} \varphi_t^2 \\
    &= \left(1 - \frac{\gamma_H}{2\Upsilon_H}\varphi_t \right) \mE[f_t - f^{\star}] + \frac{\Upsilon_{H}(1+C_K)^2C_{g, 2}^{\frac{2}{2+\epsilon}}}{\gamma_H^2} \varphi_t^2.
\end{align}

Using Lemma \ref{lem:phi_t_recursion_lemma}, we know that $\mE[f_{t} - f^{\star}] \lesssim \varphi_t$ for all $t \geq 0$. We apply \eqref{equ:46} again, and then we have for all $t \geq 0$,
\begin{equation}\label{equ:53}
    \mE[\|\bx_t - \bx^{\star}\|^2] \leq \frac{2}{\gamma_H}\mE[f_{t} - f^{\star}] \lesssim \varphi_t.
\end{equation}
We now complete the proof of the theorem.

\section{Proofs in Section \ref{sec:4}}
We first introduce a lemma on the convergence of $B_t$, noting that neither the convergence nor the convergence rate of $B_t$ requires Assumption~\ref{ass:5}.
\begin{lemma}\label{lem:converge_rate_and_Bt_converge}
    Let $B^\star = \nabla^2 f^\star$. Under the conditions of Theorem~\ref{thm:as_convergence}, $B_t \to B^\star$ almost surely as $t \to \infty$, and
    \begin{equation}\label{equ:Bt_rate}
        \mE[\|B_t - B^\star\|^2] \lesssim \varphi_t \quad \text{for all } t \geq 0.
    \end{equation}
\end{lemma}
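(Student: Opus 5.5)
The natural decomposition is
\[
B_t - B^\star = \frac{1}{t}\sum_{i=0}^{t-1}\bigl(H_i - \nabla^2 f_i\bigr) + \frac{1}{t}\sum_{i=0}^{t-1}\bigl(\nabla^2 f_i - B^\star\bigr) \eqqcolon \boldsymbol{M}_t + \boldsymbol{D}_t .
\]
The first term $\boldsymbol{M}_t$ is an average of martingale differences: by Assumption \ref{ass:2}, $\mE[H_i \mid \mF_{i-1}] = \nabla^2 f_i$, and $H_i$ is $\mF_{i}$-measurable (in fact $\mF_{i-0.5}$-measurable). The second term $\boldsymbol{D}_t$ is a deterministic-type bias controlled by the convergence of $\bx_i$ (for $t=0$ we have $B_0=I$, which is bounded, so only $t\ge1$ matters).

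For the rate, I would square and take expectations, using $\|a+b\|^2\le 2\|a\|^2+2\|b\|^2$. For the martingale part I bound the spectral norm by the Frobenius norm; cross terms vanish by orthogonality of martingale increments. This gives
\[
\mE\|\boldsymbol{M}_t\|_F^2 = \frac{1}{t^2}\sum_{i<t}\mE\|H_i-\nabla^2 f_i\|_F^2 \le \frac{d}{t^2}\sum_{i<t}\bigl(C_{H,1}\mE\|\bx_i-\bx^\star\|^2 + C_{H,2}\bigr) \lesssim \frac1t \lesssim \varphi_t ,
\]
where I use \eqref{ass:3:4th} together with \eqref{equ:xt_rate}, and the last step holds since $\varphi<1$. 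Alternatively one can simply note that $\|H_i-\nabla^2 f_i\|\le 2\Upsilon_H$ by Assumption \ref{ass:1}, which gives the same bound. For the bias part, Lipschitz continuity of the Hessian (Assumption \ref{ass:1}) and Jensen/Cauchy–Schwarz give
\[
\mE\|\boldsymbol{D}_t\|^2 \le \frac{\Upsilon_L^2}{t}\sum_{i<t}\mE\|\bx_i-\bx^\star\|^2 \lesssim \frac1t\sum_{i<t}(i+1)^{-\varphi} \lesssim t^{-\varphi} \asymp \varphi_t .
\]
Combining the two bounds yields \eqref{equ:Bt_rate}.

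For almost-sure convergence, I treat $\boldsymbol{D}_t$ via Theorem \ref{thm:as_convergence}: $\bx_i\to\bx^\star$ a.s., so continuity of $\nabla^2 f$ gives $\nabla^2 f_i\to B^\star$ a.s., and Cesàro averaging (Lemma \ref{aux:lem:stolz-cesaro}) gives $\boldsymbol{D}_t\to\boldsymbol 0$ a.s. For $\boldsymbol{M}_t$, I would consider the martingale $\sum_i (H_i-\nabla^2 f_i)/(i+1)$, whose increments are bounded by $2\Upsilon_H/(i+1)$. Since these are square-summable, the martingale convergence theorem implies the series converges a.s. Kronecker's lemma (Lemma \ref{lem:kronecker}) with $a_t=t$ then gives $\boldsymbol{M}_t\to\boldsymbol 0$ a.s.

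The only mildly delicate step is the measurability/indexing bookkeeping: $B_t$ uses $\xi_0,\dots,\xi_{t-1}$, which lie in $\mF_{t-1}$, and the martingale differences must be taken with respect to the shifted filtration. Beyond that, the proof is routine.
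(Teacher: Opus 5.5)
Your proof is correct and follows the paper's route. Both use the same split into a martingale average and a Lipschitz-controlled bias, and both bound the martingale part by $O(d/t)$ via Frobenius-norm orthogonality. For almost-sure convergence, both use an $L^2$-bounded weighted martingale with Kronecker's lemma, plus Ces\`aro averaging for the bias.

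There are two cosmetic differences, and each variant gives the same rates:
\begin{itemize}
\item You bound the bias via Cauchy--Schwarz, $\bigl(\sum_{i<t}\|\bx_i-\bx^\star\|\bigr)^2\le t\sum_{i<t}\|\bx_i-\bx^\star\|^2$, where the paper uses Minkowski's inequality.
\item For the almost-sure part you use the uniform bound $\|H_i-\nabla^2 f_i\|\le 2\Upsilon_H$ from Assumption~\ref{ass:1}, where the paper uses the moment condition \eqref{ass:3:4th} together with \eqref{equ:xt_rate}.
\end{itemize}
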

We next express $C^\star$ and $K^\star$ more explicitly. Recall that $C^\star = \mE[\tilde{C}^\star]$. Since $S_0, S_1, \ldots, S_{\tau-1} \stackrel{iid}{\sim} S$, we have
\begin{equation}\label{def:C_star}
    C^\star = \underbrace{\begin{pmatrix}
        (1 - \alpha^\star)(I - Z^\star) & \alpha^\star(I - Z^\star) \\
        (1 - \alpha^\star)(1 - \beta^\star) I - (1 - \alpha^\star)\gamma^\star Z^\star & (\alpha^\star + \beta^\star - \alpha^\star \beta^\star) I - \alpha^\star \gamma^\star Z^\star
    \end{pmatrix}^{\tau}}_{(M^\star)^\tau} \in \mR^{2d \times 2d}.
\end{equation}
Let $[C^\star]_{1,1}, [C^\star]_{1,2} \in \mR^{d \times d}$ denote the upper-left and upper-right $d \times d$ blocks of $C^\star$, respectively. Since $K^\star = \mE[\tilde{K}^\star]$,
\begin{equation}\label{def:H_star}
    K^\star = (E^\star)^{-1/2} \bigl([C^\star]_{1,1} + [C^\star]_{1,2}\bigr) (E^\star)^{1/2} = (E^\star)^{-1/2} \begin{pmatrix} I & \bm 0 \end{pmatrix} C^\star \begin{pmatrix} I \\ I \end{pmatrix} (E^\star)^{1/2} \in \mR^{d \times d},
\end{equation}
where $I \in \mR^{d \times d}$ is the identity matrix. The following lemma establishes the convergence of $K_t$.
\begin{lemma}\label{lem:bound_nu_t_mu_t}
    Under the conditions of Theorem~\ref{thm:as_convergence} and Assumption~\ref{ass:5}, $\|K_t - K^\star\| \lesssim \|B_t - B^\star\| + \|E_t - E^\star\|$ for all $t \geq 0$. Furthermore, $K_t \to K^\star$ almost surely as $t \to \infty$.
\end{lemma}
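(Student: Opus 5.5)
We view $K_t$ as a smooth function of the pair $(B_t,E_t)$ on a compact set where all relevant quantities are uniformly controlled, and bound its variation in terms of the variations of its ingredients. By \eqref{def:H_t} and \eqref{def:C_t}, $K_t = E_t^{-1/2}(I\;\;\boldsymbol{0})M_t^{\tau}(I\;\;I)^\top E_t^{1/2}$. Here $M_t$ is an affine function of $Z_t$ with coefficients $(\alpha_t,\beta_t,\gamma_t)$, and these coefficients are functions of $(\mu_t,\nu_t)$ via \eqref{def:alpha_beta_gamma_t}. The same formula with starred quantities gives $K^\star$ by \eqref{def:H_star}. A telescoping expansion gives $M_t^\tau-(M^\star)^\tau=\sum_{j}M_t^{j}(M_t-M^\star)(M^\star)^{\tau-1-j}$. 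Since $\|Z_t\|\le 1$, $\alpha_t,\beta_t\in[0,1]$, and $\gamma_t$ is uniformly bounded (Lemma~\ref{lem:prop_sketch_solver}(c)), both $\|M_t\|$ and $\|M^\star\|$ are uniformly bounded. The constant will depend on $\tau$, which is fixed, so this is harmless. The factors $E_t^{\pm1/2}$ are Lipschitz in $E_t$ on $[\gamma_E I,\Upsilon_E I]$, for instance by the integral representation of the matrix square root. It therefore suffices to show that $\|Z_t-Z^\star\|$, $|\mu_t-\mu^\star|$ and $|\nu_t-\nu^\star|$ are each $\lesssim \|B_t-B^\star\|+\|E_t-E^\star\|$. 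The map $(\mu,\nu)\mapsto(\alpha,\beta,\gamma)$ is smooth on the region $\mu\ge \gamma_E\gamma_S/\Upsilon_E$ (Lemma~\ref{lem:4}) and $1\le\nu\le 1/\mu$, so Lipschitz bounds for $(\mu,\nu)$ transfer to $(\alpha,\beta,\gamma)$.

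\textbf{Main obstacle: Lipschitz continuity of $Z$.} The pseudoinverse is discontinuous in general, so we will not perturb it directly. We write $\tilde Z_t=P_{\mathrm{range}(A_tS)}$, the orthogonal projector onto the range of $A_tS$, where $A_t:=E_t^{-1/2}B_t$ is invertible. Because $A_t$ is invertible, $\mathrm{rank}(A_tS)=\mathrm{rank}(S)$ for every $t$, so the rank does not jump. The key step is to use the standard perturbation bound for projectors onto ranges of equal-rank matrices, which for $A_tS=(A_tA^{\star-1})A^\star S$ should give
\[
\|P_{\mathrm{range}(A_tS)}-P_{\mathrm{range}(A^\star S)}\| \lesssim \|A_tA^{\star-1}-I\|\,\kappa\quad\text{or}\quad\lesssim \|A_t-A^\star\|\,\|S\|\|S^\dagger\|.
\]
The cleanest route is the identity $P_{\mathrm{range}(GX)}$ for invertible $G$ close to $I$. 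This yields a bound that either involves no $S$-dependent factor or involves the integrable factor $\|S\|\|S^\dagger\|$ controlled by Assumption~\ref{ass:4}. Taking expectations then gives $\|Z_t-Z^\star\|\lesssim\|A_t-A^\star\|\lesssim\|B_t-B^\star\|+\|E_t-E^\star\|$. If the first form of the bound fails, we fall back on the second, which uses exactly $\mE[\|S\|\|S^\dagger\|]\le\Upsilon_S$.

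\textbf{Lipschitz continuity of $\mu$ and $\nu$.} For $\mu=\lambda_{\min}(Z)$, Weyl's inequality gives the bound immediately. For $\nu$, we write $\nu=\lambda_{\max}\bigl(Z^{-1/2}\mE[\tilde ZZ^{-1}\tilde Z]Z^{-1/2}\bigr)$. Since $Z\succeq (\gamma_E\gamma_S/\Upsilon_E)I$ uniformly, the maps $Z\mapsto Z^{-1}$ and $Z\mapsto Z^{-1/2}$ are Lipschitz on this set. The perturbation $\tilde Z_tZ_t^{-1}\tilde Z_t-\tilde Z^\star Z^{\star-1}\tilde Z^\star$ is bounded pointwise by terms of the form $\|\tilde Z_t-\tilde Z^\star\|+\|Z_t-Z^\star\|$, using $\|\tilde Z\|\le1$, and its expectation is handled by the projector bound from the previous step. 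Weyl's inequality then yields the Lipschitz bound for $\nu$. Combining this with the telescoping argument proves the first claim.

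\textbf{Almost-sure convergence.} Lemma~\ref{lem:converge_rate_and_Bt_converge} gives $B_t\to B^\star$ almost surely and Assumption~\ref{ass:5} gives $E_t\to E^\star$ almost surely. Together with the Lipschitz bound just established, this yields $K_t\to K^\star$ almost surely.
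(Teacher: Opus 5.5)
Your proposal is correct and follows essentially the paper's route: telescoping $M_t^\tau-(M^\star)^\tau$, Lipschitz dependence of $(\alpha,\beta,\gamma)$ on $(\mu,\nu)$, Weyl for $\mu$, a projector-perturbation bound for $\tilde Z_t-\tilde Z^\star$ averaged over $S$, and a perturbation bound for $\nu$ that you make explicit where the paper cites \cite{Wang2026Inference}. The one step you leave implicit is that $Z^\star\succeq(\gamma_E\gamma_S/\Upsilon_E)I$ and $\gamma_E I\preceq E^\star\preceq\Upsilon_E I$; both follow by passing to the limit, and they are needed for your Lipschitz estimates at the starred point. Your hedged first form of the projector bound does hold: with $G=A_t(A^\star)^{-1}$ one has $\mathrm{range}(A_tS)=G\,\mathrm{range}(A^\star S)$ with equal dimensions, so $\|\tilde Z_t-\tilde Z^\star\|=\|(I-\tilde Z_t)\tilde Z^\star\|\le\|G-I\|\le\|A_t-A^\star\|\,\|(A^\star)^{-1}\|$, an $S$-free bound that is slightly sharper than Lemma~\ref{lem:continuity_projection_matrix} and makes $\mathbb{E}[\|S\|\|S^\dagger\|]\le\Upsilon_S$ unnecessary for this lemma.
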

We now turn to the proof of the main theorem.

\subsection{Proof of Theorem \ref{thm:asymptotic_normality}}

    We first have the following lemma regarding the error decomposition of $\bx_t - \bx^{\star}$.

\begin{lemma}\label{lem:error_recursion}
        Under the conditions of Theorem \ref{thm:asymptotic_normality}, the error recursion of the algorithm can be decomposed as 
        \begin{equation}\label{equ:xt_recursion}
            \bx_{t+1} - \bx^{\star} = \{I - \varphi_t(I - K^{\star})\}(\bx_t - \bx^{\star}) + \varphi_t \boldsymbol{\delta}_t + \varphi_t \boldsymbol{\theta}_t,
        \end{equation}
        where $\btheta_t$ is a martingale difference and $\boldsymbol{\delta}_t$ is the error term, which are defined by
        \begin{subequations}\label{rec:def}
\begin{flalign}
\boldsymbol{\theta}_t &= -  (I - K_t) B_t^{-1} (\bg_t - \nabla f_t ) + (\bz_{t, \tau} - (I - K_t)\Delta \bx_t), \label{rec:def:b}\\ 
\bdelta_{t} & = (K_t - K^{\star}) (\bx_t - \bx^{\star}) \\
& \quad \ \ - (I - K_t)[(B^{\star})^{-1}(\nabla f_t - B^{\star}(\bx_t - \bx^{\star}))] - (I - K_t) [B_t^{-1} - (B^{\star})^{-1}] \nabla f_t. \label{rec:def:c}
\end{flalign}
\end{subequations}	
\end{lemma}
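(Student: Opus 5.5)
We will derive \eqref{equ:xt_recursion} by direct algebraic rearrangement of the update $\bx_{t+1} = \bx_t + \varphi_t \bz_{t,\tau}$. Here $\bz_{t,\tau}$ denotes the output of \texttt{GAS} at outer iteration $t$. The first step is to invoke Lemma \ref{lem:prop_sketch_solver}(a), which gives $\bz_{t,\tau} = -(I-\tilde{K}_t)B_t^{-1}\bg_t$. We then split off the martingale parts. Adding and subtracting $(I-K_t)\Delta\bx_t = -(I-K_t)B_t^{-1}\bg_t$, and then adding and subtracting $-(I-K_t)B_t^{-1}\nabla f_t$, gives
\begin{equation*}
\bz_{t,\tau} = \bigl(\bz_{t,\tau} - (I-K_t)\Delta\bx_t\bigr) - (I-K_t)B_t^{-1}(\bg_t - \nabla f_t) - (I-K_t)B_t^{-1}\nabla f_t .
\end{equation*}
The first two terms together are exactly $\btheta_t$ in \eqref{rec:def:b}. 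To see that $\btheta_t$ is a martingale difference, we argue separately for the two terms. The first term is a martingale difference by Lemma \ref{lem:prop_sketch_solver}(b). For the second term, note that $K_t$ and $B_t$ are $\mF_{t-1}$-measurable: $B_t$ depends only on $\xi_{0:t-1}$, and $E_t$ is $\mF_{t-1}$-measurable by assumption. Hence $\mE[(I-K_t)B_t^{-1}(\bg_t-\nabla f_t)\mid\mF_{t-1}] = 0$ by the unbiasedness in Assumption \ref{ass:2}.

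Next we handle the deterministic-in-$\mF_{t-1}$ drift $-(I-K_t)B_t^{-1}\nabla f_t$. We write $B_t^{-1} = (B^\star)^{-1} + [B_t^{-1}-(B^\star)^{-1}]$, and then
\begin{equation*}
(B^\star)^{-1}\nabla f_t = (\bx_t-\bx^\star) + (B^\star)^{-1}\bigl(\nabla f_t - B^\star(\bx_t-\bx^\star)\bigr),
\end{equation*}
using $\nabla f^\star = \0$. This yields
\begin{equation*}
-(I-K_t)B_t^{-1}\nabla f_t = -(I-K_t)(\bx_t-\bx^\star) - (I-K_t)(B^\star)^{-1}\bigl(\nabla f_t - B^\star(\bx_t-\bx^\star)\bigr) - (I-K_t)[B_t^{-1}-(B^\star)^{-1}]\nabla f_t.
\end{equation*}
Finally, we replace $K_t$ by $K^\star$ in the leading term, writing $-(I-K_t)(\bx_t-\bx^\star) = -(I-K^\star)(\bx_t-\bx^\star) + (K_t-K^\star)(\bx_t-\bx^\star)$. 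Collecting the last three residual pieces reproduces $\bdelta_t$ in \eqref{rec:def:c}. Substituting back into $\bx_{t+1}-\bx^\star = (\bx_t-\bx^\star)+\varphi_t\bz_{t,\tau}$ then gives \eqref{equ:xt_recursion}.

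There is no real analytic obstacle here, since the statement is an identity. The only point needing care is the measurability bookkeeping for the martingale-difference claim: one must check that $K_t$, which is defined as a conditional expectation over the sketches given $\mF_{t-1}$, is indeed $\mF_{t-1}$-measurable. One must also check that the sketches $S_{t,j}$ are independent of $\xi_t$, so that Lemma \ref{lem:prop_sketch_solver}(b) applies conditionally. Both facts follow from Assumption \ref{ass:4} and the filtration construction in Section \ref{sec:3}.
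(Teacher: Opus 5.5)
Your proposal is correct and matches the paper's proof. Both add and subtract $(I-K_t)\Delta\bx_t$, split $\bg_t = \nabla f_t + (\bg_t-\nabla f_t)$, replace $B_t^{-1}$ by $(B^\star)^{-1}$ and then $K_t$ by $K^\star$, and use Lemma \ref{lem:prop_sketch_solver}(b) for the martingale-difference property. Your explicit check that $(I-K_t)B_t^{-1}(\bg_t-\nabla f_t)$ is a martingale difference, via $\mF_{t-1}$-measurability of $K_t$ and $B_t$, fills in a step the paper leaves implicit. The identity $(B^\star)^{-1}\nabla f_t = (\bx_t-\bx^\star) + (B^\star)^{-1}(\nabla f_t - B^\star(\bx_t-\bx^\star))$ is purely algebraic and does not need $\nabla f^\star = \0$.
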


Since $K_t \stackrel{a.s.}{\to} K^{\star}$ (cf. Lemma \ref{lem:bound_nu_t_mu_t}) and $\|K_t\|\le 1/2$ for all $t \geq 0$ \footnote{By Lemma \ref{lem:prop_sketch_solver}(d), we know that $\|K_t\|  \leq \sqrt{\frac{\Upsilon_E}{\gamma_E}} \cdot 2\tau (1 - \sqrt{\mu_t/\nu_t})^{\tau - 2}$. Then under the condition of Theorem \ref{thm:as_convergence}, we have $\|K_t\| \leq \frac{\gamma_H}{2\Upsilon_H} \leq 1/2$.}, the continuity of the spectral norm implies $\|K^\star\|\le 1/2$. Hence, for every eigenvalue $\lambda_i(I-K^\star)$ of $I-K^\star$, we have
$\text{Re}\bigl(\lambda_i(I-K^\star)\bigr) \ge 1/2 >0$, where $\text{Re}(\cdot)$ denotes the real part. Let us define two matrices
\begin{equation}\label{eq:A_B_recursion_matrices}
R_i^t \coloneqq \varphi_i \sum_{k=i}^{t}\prod_{j= i+1}^{k}(I - \varphi_j (I - K^{\star}))\qquad \text{  and  } \qquad A_i^t \coloneqq R_i^t - (I - K^{\star})^{-1}.
\end{equation}
We apply \eqref{equ:xt_recursion} in Lemma \ref{lem:error_recursion} recursively and obtain
\begin{equation*}
\bx_{k+1} - \bx^{\star} = \prod_{j=0}^{k}\rbr{I - \varphi_j(I-K^\star)}(\bx_0 - \bx^{\star}) + \sum_{i=0}^{k}\prod_{j=i+1}^{k}\rbr{I - \varphi_j(I-K^\star)}\varphi_i\rbr{\btheta_i + \bdelta_i }.
\end{equation*}
Therefore, we obtain
\begin{align}\label{eq:polyakrecursion}
& \frac{1}{\sqrt{t}} \sum_{k = 1}^{t} (\bx_{k} - \bx^{\star}) \nonumber \\
& = \frac{1}{\sqrt{t}}\sum_{k=0}^{t-1}\prod_{j=0}^{k}\rbr{I - \varphi_j(I-K^\star)}(\bx_0 - \bx^{\star}) + \frac{1}{\sqrt{t}}\sum_{k=0}^{t-1}\sum_{i=0}^{k}\prod_{j=i+1}^{k}\rbr{I - \varphi_j(I-K^\star)}\varphi_i\rbr{\btheta_i + \bdelta_i} \nonumber\\
& \stackrel{\mathclap{\eqref{eq:A_B_recursion_matrices}}}{=} \frac{1}{\sqrt{t} \varphi_0} R_{0}^{t-1} (I - \varphi_0 (I - K^{\star})) (\bx_0 - \bx^{\star}) + \frac{1}{\sqrt{t}}\sum_{i=0}^{t-1}\sum_{k=i}^{t-1}\prod_{j=i+1}^{k}\rbr{I - \varphi_j(I - K^\star)}\varphi_i\rbr{\btheta_i + \bdelta_i} \nonumber\\
& \stackrel{\mathclap{\eqref{eq:A_B_recursion_matrices}}}{=} \frac{1}{\sqrt{t} \varphi_0} R_{0}^{t-1} (I - \varphi_0 (I - K^{\star})) (\bx_0 - \bx^{\star}) + \frac{1}{\sqrt{t}}\sum_{i=0}^{t-1}R_{i}^{t-1}\rbr{\btheta_i + \bdelta_i} \nonumber\\
& \stackrel{\mathclap{\eqref{eq:A_B_recursion_matrices}}}{=} \frac{1}{\sqrt{t} \varphi_0} R_{0}^{t-1} (I - \varphi_0 (I - K^{\star})) (\bx_0 - \bx^{\star}) + \frac{1}{\sqrt{t}}\sum_{i=0}^{t-1}R_{i}^{t-1}\bdelta_i + \frac{1}{\sqrt{t}}\sum_{i=0}^{t-1}A_i^{t-1}\btheta_i  \nonumber \\
& \quad + \frac{1}{\sqrt{t}} \sum_{i = 0}^{t-1}(I - K^{\star})^{-1}\btheta_i.
\end{align}
In what follows, we analyze each right-hand-side term above and establish the following claims:
\begin{enumerate}[topsep=2pt,parsep=3pt,label=(\alph*):]
\setlength\itemsep{0.0em}
\item $\frac{1}{\sqrt{t}} \sum_{i = 0}^{t-1}(I - K^{\star})^{-1}\btheta_i \stackrel{d}{\longrightarrow} \mathcal{N}(0, \Xi^{\star}_{\text{gasn, ave}})$, where $\Xi^\star_{\text{gasn, ave}} = (I - K^{\star})^{-1} \mE[(I - \tilde{K}^{\star}) \Omega^{\star} (I - \tilde{K}^{\star})^{\top}] (I - K^{\star})^{-\top}$.
\item $\frac{1}{\sqrt{t}}\sum_{i=0}^{t-1}A_i^{t-1} \btheta_i = o_p(1)$.
\item $\frac{1}{\sqrt{t}}\sum_{i=0}^{t-1} R_i^{t-1} \bdelta_i = o(1)$.
\item $\frac{1}{\sqrt{t} \varphi_0} R_{0}^{t-1} [I - \varphi_0 (I - K^{\star})] (\bx_0 - \bx^{\star}) = o(1)$.	
\end{enumerate}
Combining these claims together, we obtain
\begin{equation*}
\frac{1}{\sqrt{t}} \sum_{k = 1}^{t} (\bx_k - \bx^{\star}) \stackrel{d}{\longrightarrow} \mathcal{N}(0, \Xi^{\star}_{\text{gasn, ave}}).
\end{equation*}
This completes the proof by noting that $1/\sqrt{t}\sum_{k=0}^{t-1} (\bx_k - \bx^{\star}) = 1/\sqrt{t} \sum_{k = 1}^{t} (\bx_k - \bx^{\star}) - (\bx_t - \bx^{\star})/\sqrt{t} + (\bx_0 - \bx^{\star})/\sqrt{t}$ and~$(\bx_t - \bx^{\star})\rightarrow \0$ as $t\rightarrow \infty$ almost surely.

\vskip4pt
\noindent $\bullet$ {\textbf{Proof of (a).}} We need the following lemma to proceed with the proof.

\begin{lemma}[Adapted Lemma B.6 in \cite{Wang2026Inference}]\label{lem:3}
    Under the conditions of Theorem \ref{thm:asymptotic_normality}. We have the following convergence as $t \to \infty$:
    \begin{equation*}
        \mE[\boldsymbol{\theta}_t \boldsymbol{\theta}_t^{\top} \mid \mF_{t-1}] \stackrel{a.s.}{\to} \mE[(I - \tilde{K}^{\star}) \Omega^{\star} (I - \tilde{K}^{\star})^{\top}].
    \end{equation*}
\end{lemma}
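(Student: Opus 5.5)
We will prove Lemma~\ref{lem:3} by writing $\btheta_t$ from \eqref{rec:def:b} in closed form via Lemma~\ref{lem:prop_sketch_solver}(a). Since $\bz_{t,\tau} = -(I-\tilde K_t)B_t^{-1}\bg_t$ and $\Delta\bx_t = -B_t^{-1}\bg_t$, we have
\begin{equation*}
\btheta_t = -(I-K_t)B_t^{-1}(\bg_t-\nabla f_t) - (K_t-\tilde K_t)B_t^{-1}\bg_t = -(I-\tilde K_t)B_t^{-1}\bg_t + (I-K_t)B_t^{-1}\nabla f_t .
\end{equation*}
Because the sketches $\{S_{t,j}\}_j$ are independent of $\xi_t$ given $\mF_{t-1}$, and $\tilde K_t$ depends only on the sketches and on $\mF_{t-1}$-measurable quantities ($B_t,E_t,\alpha_t,\beta_t,\gamma_t$), we may condition first on $\mF_{t-0.5}$ and then on the sketches. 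This gives
\begin{equation*}
\mE[\btheta_t\btheta_t^\top\mid\mF_{t-1}] = \mE\big[(I-\tilde K_t)B_t^{-1}\,\mE[\bg_t\bg_t^\top\mid\mF_{t-1}]\,B_t^{-1}(I-\tilde K_t)^\top\mid\mF_{t-1}\big] - (I-K_t)B_t^{-1}\nabla f_t\nabla f_t^\top B_t^{-1}(I-K_t)^\top .
\end{equation*}
The second term is bounded by a constant times $\|\nabla f_t\|^2 \le \Upsilon_H^2\|\bx_t-\bx^\star\|^2$, using Lemma~\ref{lem:prop_sketch_solver}(c),(d) and \eqref{ass:2:c2}. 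It therefore tends to $0$ a.s. by Theorem~\ref{thm:as_convergence}.

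For the first term, we split $\mE[\bg_t\bg_t^\top\mid\mF_{t-1}] = \cov(\nabla F(\bx_t;\xi)) + \nabla f_t\nabla f_t^\top$. The second piece again vanishes a.s. For the covariance piece, we need $\Sigma(\bx_t)\coloneqq\cov(\nabla F(\bx_t;\xi))\to\Sigma(\bx^\star)$ a.s. This follows from $\bx_t\to\bx^\star$ together with continuity of $\bx\mapsto\Sigma(\bx)$. We would justify that continuity by the Hessian bound in Assumption~\ref{ass:1}: $\|\nabla F(\bx;\xi)-\nabla F(\bx^\star;\xi)\|\le\Upsilon_H\|\bx-\bx^\star\|$ uniformly in $\xi$, so $\|\Sigma(\bx)-\Sigma(\bx^\star)\|\lesssim \|\bx-\bx^\star\|(\|\bx-\bx^\star\|+\sqrt{\|\Sigma(\bx^\star)\|})$. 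Together with $B_t^{-1}\to(B^\star)^{-1}$ (Lemma~\ref{lem:converge_rate_and_Bt_converge}) and $\Omega^\star=(B^\star)^{-1}\Sigma(\bx^\star)(B^\star)^{-1}$, it remains to show
\begin{equation*}
\mE\big[(I-\tilde K_t)\,G\,(I-\tilde K_t)^\top\mid\mF_{t-1}\big]\ \longrightarrow\ \mE\big[(I-\tilde K^\star)\,G\,(I-\tilde K^\star)^\top\big]
\end{equation*}
for $G_t\to G$, with the $G_t$ replaced by $G$ at cost $\lesssim(1+C_K)^2\|G_t-G\|\to0$.

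The main obstacle is this last convergence of the sketched second moment. The map $(B,E)\mapsto\tilde K$ involves the pseudoinverse $(S^\top BE^{-1}BS)^\dagger$, which is discontinuous in general. We would use the fact that $S^\top BE^{-1}BS = T^\top T$ with $T=E^{-1/2}BS$. Since $B$ and $E$ are invertible, $\mathrm{rank}(T)=\mathrm{rank}(S)$ is constant in $(B,E)$. Hence each $Z_{t,j}$ is the orthogonal projector onto $\mathrm{range}(E_t^{-1/2}B_tS_{t,j})$, and this projector is continuous in $(B_t,E_t)$ for fixed $S$. This gives pointwise (in $S$) convergence $Z_{t,j}\to\tilde Z_j^\star$ a.s.

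The parameters $(\alpha_t,\beta_t,\gamma_t)$ depend on $(\mu_t,\nu_t)$, which are continuous functions of $Z_t$ and $\mE[\tilde Z_tZ_t^{-1}\tilde Z_t]$. Both of these converge by dominated convergence, since the projectors are bounded by $1$, and $Z_t$ stays uniformly invertible by Lemma~\ref{lem:4}. So $(\alpha_t,\beta_t,\gamma_t)\to(\alpha^\star,\beta^\star,\gamma^\star)$ a.s., and Assumption~\ref{ass:5} gives $E_t^{\pm1/2}\to(E^\star)^{\pm1/2}$. Consequently $\tilde K_t\to\tilde K^\star$ for every fixed realization of the sketches. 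By Lemma~\ref{lem:prop_sketch_solver}(c), $\|\tilde K_t\|\le C_K$ uniformly, so dominated convergence over the sketch distribution yields the displayed limit. Combining this with the vanishing terms above proves the lemma.
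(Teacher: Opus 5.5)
Your proof is correct, but it takes a different route from the paper in two respects.

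\textbf{Decomposition.} The paper splits $\btheta_t$ into a gradient-noise part $-(I-K_t)B_t^{-1}(\bg_t-\nabla f_t)$ and a sketching-noise part $(\tilde K_t-K_t)\Delta\bx_t$, and shows the two are conditionally uncorrelated. It then proves $\mJ_{1,t}\to(I-K^\star)\Omega^\star(I-K^\star)^\top$ and $\mJ_{2,t}\to\mE[\tilde K^\star\Omega^\star(\tilde K^\star)^\top]-K^\star\Omega^\star(K^\star)^\top$ separately, the latter via a four-term expansion and Lemma~\ref{lem:2}, and adds the two limits. You instead note that $\btheta_t=\bz_{t,\tau}-\mE[\bz_{t,\tau}\mid\mF_{t-1}]$. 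The conditional covariance is then a second moment minus a squared mean, and the squared mean vanishes because $\nabla f_t\to\0$. The target $\mE[(I-\tilde K^\star)\Omega^\star(I-\tilde K^\star)^\top]$ appears directly, with no recombination step.

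\textbf{Convergence of the sketch-dependent expectations.} This is the more substantive difference. The paper proves the quantitative bound $\mE[\|\tilde C_t-\tilde C^\star\|\mid\mF_{t-1}]\lesssim\|B_t-B^\star\|+\|E_t-E^\star\|$. It does so with two ingredients:
\begin{itemize}
\item the Wedin-type Lipschitz estimate of Lemma~\ref{lem:continuity_projection_matrix}, which carries the factor $\|S\|\|S^\dagger\|$ and therefore needs $\mE[\|S\|\|S^\dagger\|]\le\Upsilon_S$;
\item the Lipschitz control of $(\alpha_t,\beta_t,\gamma_t)$ in Lemma~\ref{lem:7}, whose $\nu_t$ part is borrowed from \cite{Wang2026Inference}.
\end{itemize}
Your argument is purely qualitative. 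Constant rank of $E_t^{-1/2}B_tS$ gives continuity of each projector for fixed $S$. Bounded convergence then carries the limit through $Z_t$, $\mE[\tilde Z_tZ_t^{-1}\tilde Z_t\mid\mF_{t-1}]$, $(\mu_t,\nu_t)$ and finally $\tilde K_t$, using that projectors have norm at most $1$, $\|Z_t^{-1}\|$ is uniformly bounded, and $\|\tilde K_t\|\le C_K$.

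Your route is shorter and needs neither the moment condition on $\|S\|\|S^\dagger\|$ nor any perturbation bound for $\nu_t$, but it yields no rate. The paper needs the rate $\|K_t-K^\star\|\lesssim\|B_t-B^\star\|+\|E_t-E^\star\|$ anyway, to control $\bdelta_t$ in the proof of Theorem~\ref{thm:asymptotic_normality}. So its proof of Lemma~\ref{lem:3} simply reuses machinery it already has.

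As in the paper, two of your steps tacitly assume that $\xi_t\sim\P$ and the sketches are independent of $\mF_{t-1}$: the identity $\mE[\bg_t\bg_t^\top\mid\mF_{t-1}]=\cov(\nabla F(\bx_t;\xi))+\nabla f_t\nabla f_t^\top$, and the fixed-realization (coupling) argument. This is the intended reading of the setup.
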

By Lemma \ref{lem:3}, we know $\btheta_i$ is a martingale difference with the limiting covariance $\mE[(I-\tilde{K}^\star)\Omega^{\star}(I-\tilde{K}^\star)^{\top}]$. Hence, by Stolz–Ces\`{a}ro theorem (cf. Lemma \ref{aux:lem:stolz-cesaro}), we have
\begin{equation*}
    \frac1t\sum_{i=0}^{t-1}
    \mathbb E[\boldsymbol\theta_i\boldsymbol\theta_i^\top
    \mid \mathcal F_{i-1}] \stackrel{a.s.}{\to} \mE[(I - \tilde{K}^{\star}) \Omega^{\star} (I - \tilde{K}^{\star})^{\top}].
\end{equation*}
Consequently,
\begin{equation*}
    \frac1t\sum_{i=0}^{t-1}
    \mathbb E[
    (I-K^\star)^{-1}\boldsymbol\theta_i
    \boldsymbol\theta_i^\top
    (I-K^\star)^{-\top}
    \mid \mathcal F_{i-1}]
    \stackrel{a.s.}{\longrightarrow}
    \Xi^\star_{\text{gasn, ave}}.
\end{equation*}
Thus, it suffices to verify the following Lindeberg condition.

\begin{lemma}\label{lem:B1}
Under the conditions of Theorem \ref{thm:asymptotic_normality}, we have for any $\delta>0$, 
\begin{equation*}
\frac{1}{t} \sum_{i = 0}^{t-1} \mathbb{E}\left[ \|\boldsymbol{\theta}_i\|^2 \cdot \mathbf{1}_{\|\boldsymbol{\theta}_i\| > \delta \sqrt{t}}  \mid \mathcal{F}_{i-1} \right] \stackrel{a.s.}{\to} 0 \quad \text{ as }\quad t\rightarrow\infty.
\end{equation*}
\end{lemma}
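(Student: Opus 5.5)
The plan is to exploit the extra $\epsilon$ moments in Assumption~\ref{ass:2}. On the event $\{\|\btheta_i\| > \delta\sqrt t\}$ we have $\|\btheta_i\|^2 \le \|\btheta_i\|^{2+\epsilon}/(\delta\sqrt t)^{\epsilon}$. This gives
\begin{equation*}
\frac1t\sum_{i=0}^{t-1}\mE\big[\|\btheta_i\|^2\mathbf 1_{\|\btheta_i\|>\delta\sqrt t}\mid\mF_{i-1}\big]\le \frac{1}{\delta^{\epsilon}\,t^{1+\epsilon/2}}\sum_{i=0}^{t-1}\mE\big[\|\btheta_i\|^{2+\epsilon}\mid\mF_{i-1}\big].
\end{equation*}
It therefore suffices to bound the conditional $(2+\epsilon)$-th moment of $\btheta_i$ by an almost surely bounded (or slowly growing) quantity.

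First, I would rewrite $\btheta_i$ using Lemma~\ref{lem:prop_sketch_solver}(a). Since $\bz_{i,\tau} = -(I-\tilde K_i)B_i^{-1}\bg_i$ and $\Delta\bx_i=-B_i^{-1}\bg_i$, definition \eqref{rec:def:b} collapses to
\begin{equation*}
\btheta_i = -(I-\tilde K_i)B_i^{-1}\bg_i + (I-K_i)B_i^{-1}\nabla f_i .
\end{equation*}
By Lemma~\ref{lem:prop_sketch_solver}(c), $\|\tilde K_i\|\le C_K$ deterministically. The footnote gives $\|K_i\|\le 1/2$, and \eqref{ass:2:c2} gives $\|B_i^{-1}\|\le 1/\gamma_H$. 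Hence
\begin{equation*}
\|\btheta_i\|\le \frac{1+C_K}{\gamma_H}\big(\|\bg_i-\nabla f_i\| + 2\|\nabla f_i\|\big) \le c\big(\|\bg_i-\nabla f_i\| + \Upsilon_H\|\bx_i-\tx\|\big)
\end{equation*}
for a constant $c$. Note that $\bx_i$ is $\mF_{i-1}$-measurable, and the sketch randomness enters only through the bounded factor $\tilde K_i$. So no moment condition on $S$ beyond the deterministic bound is needed.

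Next, I would raise this bound to the power $2+\epsilon$ using $(a+b)^{2+\epsilon}\le 2^{1+\epsilon}(a^{2+\epsilon}+b^{2+\epsilon})$ and apply \eqref{ass:2:4th}. This yields
\begin{equation*}
\mE[\|\btheta_i\|^{2+\epsilon}\mid\mF_{i-1}]\le c'\big(\|\bx_i-\tx\|^{2+\epsilon}+1\big).
\end{equation*}
By Theorem~\ref{thm:as_convergence}, $\bx_i\to\tx$ almost surely, so $\sup_i\|\bx_i-\tx\|<\infty$ almost surely. Thus the average $\frac1t\sum_{i<t}\mE[\|\btheta_i\|^{2+\epsilon}\mid\mF_{i-1}]$ is almost surely bounded by a finite random constant; in fact it converges to $c'$ or less by Ces\`aro. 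The right-hand side of the first display is then $O(t^{-\epsilon/2})\to0$ almost surely, which proves the claim.

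The main obstacle is making sure nothing in $\btheta_i$ requires a higher moment of the sketch, such as $\mE[(\|S\|\|S^\dagger\|)^{2+\epsilon}]$, which Assumption~\ref{ass:4} does not provide. This is why I route everything through the deterministic bound $\|\tilde K_i\|\le C_K$ from Lemma~\ref{lem:prop_sketch_solver}(c), which holds pathwise because $\tilde Z_{i,j}$ are projections and the acceleration parameters are uniformly bounded. The only other subtlety is that the bound on the conditional moment is random. This is handled by almost-sure boundedness of the iterates rather than by taking expectations, since the claim is an almost-sure statement.
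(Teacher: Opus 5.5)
Your proposal is correct and follows essentially the same route as the paper. Both arguments use the Markov-type bound $\|\btheta_i\|^2\mathbf{1}_{\{\|\btheta_i\|>\delta\sqrt t\}}\le \|\btheta_i\|^{2+\epsilon}/(\delta\sqrt t)^{\epsilon}$, then a conditional $(2+\epsilon)$-moment bound of the form $C^{(1)}_q\|\bx_i-\bx^\star\|^q+C^{(2)}_q$ (from the deterministic bounds $\|\tilde K_i\|\le C_K$, $\|K_i\|\le 1/2$, $\|B_i^{-1}\|\le 1/\gamma_H$ and \eqref{ass:2:4th}), and finally almost-sure convergence of the iterates; the only cosmetic differences are that you regroup $\btheta_i$ as $-(I-\tilde K_i)B_i^{-1}\bg_i+(I-K_i)B_i^{-1}\nabla f_i$ and use pathwise boundedness of $\bx_i$ where the paper invokes Stolz--Ces\`aro.
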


With the above Lindeberg condition, we apply the martingale central limit theorem in \citet[Proposition 2.1.9]{Duflo2013Random} and obtain the result in \textbf{(a)}.

\vskip4pt
\noindent $\bullet$ {\textbf{Proof of (b).}} We need the following lemma to provide the uniform bounds for matrices $R_i^t$ and $A_i^t$ in \eqref{eq:A_B_recursion_matrices}.

\begin{lemma}[Adapted Lemma 1 in \cite{Polyak1992Acceleration}]\label{lem:B2}
Given $-(I - K^\star)$ is Hurwitz, i.e. $\text{Re} \; \lambda_i(I - K^{\star}) > 0$ for all $i = 1, 2, ..., d$, then there is a constant $\Upsilon_{AR}>0$ such that $\max\{\|R_i^t\|, \|A_i^t\|\}\leq \Upsilon_{AR}$, $\forall i, t\geq 0$. Furthermore, we have $\lim\limits_{t \rightarrow \infty} \frac{1}{t}\sum_{i=0}^{t} \norm{A_i^{t}} =0$.
\end{lemma}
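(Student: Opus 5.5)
We prove Lemma \ref{lem:B2} following \cite{Polyak1992Acceleration}, writing $G \coloneqq I - K^\star$; by the preceding discussion all eigenvalues of $G$ have real part at least $1/2$.

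\textbf{Step 1: a contractive norm.} Since $-G$ is Hurwitz, there is a positive definite $P$ solving the Lyapunov equation $G^\top P + P G = I$. In the induced norm $\|\bx\|_P$ we get
\begin{equation*}
\|(I - \varphi_j G)\bx\|_P^2 = \|\bx\|_P^2 - \varphi_j \|\bx\|^2 + \varphi_j^2 \|G\bx\|_P^2 \le (1 - c\varphi_j)\|\bx\|_P^2
\end{equation*}
for all $j$ beyond some $j_0$, with a constant $c>0$. The finitely many factors with $j<j_0$ are bounded. Hence, by equivalence of norms,
\begin{equation*}
\Big\|\prod_{j=i+1}^{k}(I-\varphi_j G)\Big\| \le C \exp\Big(-c\sum_{j=i+1}^{k}\varphi_j\Big).
\end{equation*}

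\textbf{Step 2: uniform bound on $R_i^t$, hence on $A_i^t$.} Using Step 1,
\begin{equation*}
\|R_i^t\| \le C\varphi_i \sum_{k\ge i} \exp\Big(-c\sum_{j=i+1}^{k}\varphi_j\Big).
\end{equation*}
Because $\varphi_j = C_\varphi(j+1)^{-\varphi}$ with $\varphi<1$, we have $\varphi_j/\varphi_{j+1}\to 1$. Comparing the sum with the integral $\int_0^\infty e^{-cs}\,ds/\varphi_i$ therefore gives $\|R_i^t\| \le C'$ uniformly in $i$ and $t$. The bound on $A_i^t = R_i^t - G^{-1}$ follows immediately.

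\textbf{Step 3: the Cesàro limit.} The key identity comes from telescoping:
\begin{equation*}
\sum_{k=i}^{t}\prod_{j=i+1}^{k}(I-\varphi_jG)\,\varphi_j G = I - \prod_{j=i+1}^{t+1}(I-\varphi_jG).
\end{equation*}
Rewrite $R_i^t = \sum_{k=i}^{t}\varphi_i \prod_{j=i+1}^{k}(\cdot)$. Replacing $\varphi_i$ by $\varphi_{k+1}$ in each summand and using the identity, we obtain the decomposition
\begin{equation*}
A_i^t = -G^{-1}\prod_{j=i+1}^{t+1}(I-\varphi_jG) + \sum_{k=i}^{t}(\varphi_i-\varphi_{k+1})\prod_{j=i+1}^{k}(I-\varphi_jG),
\end{equation*}
with indices shifted appropriately.

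\emph{First term.} It is bounded by $C\exp(-c\sum_{j=i+1}^{t}\varphi_j)$. Its average over $i\le t$ tends to $0$, because the exponent exceeds any fixed level except for $i$ within $O(t^{\varphi})=o(t)$ of $t$.

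\emph{Second term.} Since $|\varphi_i-\varphi_{k+1}| \le \varphi_i\cdot \varphi(k-i+1)/(i+1)$, this term is bounded by
\begin{equation*}
C\frac{\varphi_i}{i+1}\sum_{m\ge 0} m\,e^{-c' m\varphi_i} \lesssim \frac{1}{(i+1)\varphi_i} = O\big(i^{\varphi-1}\big) \to 0.
\end{equation*}
Its Cesàro average therefore vanishes as well.

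\textbf{Main obstacle.} The delicate part is Step 3, specifically the regime where $i$ is close to $t$ and the exponential factor gives no decay. This is handled by the count argument: only $o(t)$ indices lie in this window, and each contributes at most the bounded quantity from Step 2.
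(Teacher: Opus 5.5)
Your skeleton matches the Polyak--Juditsky argument that the paper cites for this lemma without reproducing it. It has two parts: the Lyapunov norm giving $\|\prod_{j=i+1}^{k}(I-\varphi_jG)\|\le Ce^{-cS_{i,k}}$ with $S_{i,k}\coloneqq\sum_{j=i+1}^{k}\varphi_j$, and then the telescoping decomposition of $A_i^t$. The decomposition of $A_i^t$ is correct, and the $o(t)$-window argument for the first term is fine.

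The gap is the regime $k\gg i$, where $\varphi_k\ll\varphi_i$. It affects both of your tail estimates.

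\textbf{Step 3.} You use $e^{-cS_{i,k}}\le e^{-c'(k-i)\varphi_i}$. Since the steps decrease, $S_{i,k}\le (k-i)\varphi_i$, so this inequality runs the wrong way. Indeed $S_{i,i+m}\le\frac{C_\varphi}{1-\varphi}(i+m+1)^{1-\varphi}=o(m\varphi_i)$ as $m\to\infty$, so no constant $c'$ rescues it.

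\textbf{Step 2.} Comparing with an integral controls $\sum_k\varphi_{k+1}e^{-cS_{i,k}}$, not $\varphi_i\sum_k e^{-cS_{i,k}}$. The ratio $\varphi_i/\varphi_{k+1}=((k+2)/(i+1))^{\varphi}$ is unbounded in $k$. The fact you invoke, $\varphi_j/\varphi_{j+1}\to1$, cannot close this, because it also holds for $\varphi_j=C_\varphi/(j+1)$. There, for scalar $G=g>0$ with $C_\varphi g\le 1$, one has $R_i^t\approx\frac{C_\varphi}{i+1}\sum_{k=i}^{t}\bigl(\frac{i+1}{k+1}\bigr)^{C_\varphi g}\to\infty$ as $t\to\infty$. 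As written, your argument never uses $\varphi<1$ in an essential way.

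\textbf{The fix.} What is needed is the sharper regularity $\varphi_j/\varphi_{j+1}=1+o(\varphi_j)$. This is Polyak--Juditsky's step-size condition $(\gamma_t-\gamma_{t+1})/\gamma_t=o(\gamma_t)$, and it is exactly where $\varphi<1$ enters. Set $\epsilon_i\coloneqq\varphi(i+1)^{\varphi-1}/C_\varphi\to0$. Then
$\varphi_i/\varphi_{k+1}\le\exp\{\epsilon_i(\varphi_i+S_{i,k})\}$ and $\varphi_i-\varphi_{k+1}\le\epsilon_i\varphi_i(\varphi_i+S_{i,k})$.
Once $\epsilon_i\le c/2$, both of your sums are dominated, up to constants, by $\sum_k\varphi_{k+1}(1+S_{i,k})e^{-cS_{i,k}/2}\lesssim\int_0^\infty(1+s)e^{-cs/2}\,ds$. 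This gives $\sup_{i,t}\|R_i^t\|<\infty$ and a second term of order $\epsilon_i=O(i^{\varphi-1})$, matching your claimed rate. The finitely many remaining $i$ have finite sums, uniformly in $t$, because $S_{i,k}\gtrsim k^{1-\varphi}$.

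Equivalently, you can split at $k=2i+1$. For $k\le 2i+1$ one has $\varphi_j\ge 2^{-\varphi}\varphi_i$, which legitimizes your linear lower bound on $S_{i,k}$. For $k\ge 2i+2$ one has $S_{i,k}\ge c_1(k+2)^{1-\varphi}$ with $c_1>0$, and this stretched-exponential decay absorbs the remaining polynomial factors.
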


With the above lemma, we have
\begin{align}\label{equ:104} 
\mE \left[ \left\| \frac{1}{\sqrt{t}}\sum_{i=0}^{t-1}A_i^{t-1} \btheta_i \right\|^2 \right] &= \frac{1}{t} \sum_{i = 0}^{t-1} \mE \left[ \|A_i^{t-1} \btheta_i \|^2 \right] \leq \Upsilon_{AR}\cdot \left(\sup_{i \geq 0} \mE[\|\btheta_i\|^2] \right)\cdot\frac{1}{t} \sum_{i = 0}^{t-1} \|A_i^{t-1}\| \nonumber \\
&\leq \Upsilon_{AR} \cdot \sup_{i \geq 0}\rbr{(C_q^{(1)})^{2/q} \cdot \mE [\|\bx_i - \bx^{\star}\|^2] + (C_q^{(2)})^{2/q}} \cdot \frac{1}{t} \sum_{i = 0}^{t-1} \|A_i^{t-1}\| \nonumber \\
&\lesssim \  \frac{1}{t} \sum_{i = 0}^{t-1} \|A_i^{t-1}\| \to 0
\end{align}
as $t\rightarrow\infty$, where the second inequality uses \eqref{eq: bounded_q_moment} in proving Lemma \ref{lem:B1}, and the last inequality uses \eqref{equ:xt_rate} in Theorem \ref{thm:as_convergence}. Thus, $\frac{1}{\sqrt{t}}\sum_{i=0}^{t-1}A_i^{t-1} \btheta_i = o_p(1)$ and this shows \textbf{(b)}.

\vskip4pt
\noindent $\bullet$ {\textbf{Proof of (c).}} By Lemma \ref{lem:B2}, we know it suffices to show that $1/\sqrt{t}\sum_{i=0}^{t-1} \left\| \bdelta_i \right\| \stackrel{a.s.}{\longrightarrow} 0$ as~$t\rightarrow\infty$. By the definition of $\boldsymbol{\delta}_i$ in \eqref{rec:def:c}, we have
\begin{align}\label{nequ:4}
\|\bdelta_i\| \; & \leq \;   \frac{3}{2} \cbr{\|(B^\star)^{-1}\|\cdot\|\nabla f_i - B^{\star}(\bx_i - \bx^{\star})\| + \|B_i^{-1} - (B^\star)^{-1}\|\cdot\|\nabla f_i\|} + \|K_i - K^{\star}\|\cdot \|\bx_i - \bx^{\star}\| \nonumber\\
& \lesssim \frac{3 \Upsilon_{L}}{2 \gamma_H}\|\bx_i - \bx^{\star}\|^2 + \frac{3}{2 \gamma_H^2} \|B_i - B^{\star}\|\cdot\|\nabla f_i\| +  (\|B_i- B^{\star}\| + \|E_i - E^{\star}\|)\cdot\|\bx_i - \bx^{\star}\| \nonumber\\
& \lesssim \|\bx_i - \bx^{\star}\|^2 +  (\|B_i- B^{\star}\| + \|E_i - E^{\star}\|)\cdot\|\bx_i - \bx^{\star}\| \nonumber \\
& \leq \frac{1}{2} \|B_i - B^\star\|^2 + \frac{1}{2} \|E_i - E^\star\|^2 + 2\|\bx_i - \bx^{\star}\|^2,
\end{align}
where the first inequality also uses $\|K_t\| \leq 1/2$; the second inequality uses Assumption \ref{ass:1}, and Lemma \ref{lem:bound_nu_t_mu_t}; the third inequality uses the $\Upsilon_H$-Lipschitz continuity of $\nabla f(\bx)$; and the last inequality uses $2ab \leq a^2 + b^2$ for $a, b\geq 0$.

Using Theorem \ref{thm:as_convergence} and Lemma \ref{lem:converge_rate_and_Bt_converge}, we have
\begin{align}\label{equ:84}
    \mE\left[ \sum_{i = 0}^{\infty} \frac{1}{\sqrt{i+1}} \|\boldsymbol{\delta}_i\| \right] & = \sum_{i = 0}^{\infty} \frac{1}{\sqrt{i+1}} \mE[\|\boldsymbol{\delta}_i\|] \\
    &\stackrel{\mathclap{\eqref{nequ:4}}}{\lesssim} \  \sum_{i = 0}^{\infty} \frac{1}{\sqrt{i+1}} \left( \mE[\|B_i - B^{\star}\|^2] + \mE[\|E_i - E^{\star}\|^2] + \mE[\|\bx_i - \bx^{\star}\|^2] \right) \\
    &\lesssim \ \sum_{i = 0}^{\infty} \frac{1}{\sqrt{i+1}} \cdot \varphi_i < \infty,
\end{align}
where the second last inequality also uses Assumption \ref{ass:5}, and the last inequality uses $\varphi_i = C_{\varphi} / (i+1)^{\varphi}$ with $\varphi \in (0.5, 1)$. This suggests that $\sum_{i = 0}^{\infty}  \|\boldsymbol{\delta}_i\|/\sqrt{i+1} < \infty$ almost surely. Then by Kronecker's lemma (cf. Lemma \ref{lem:kronecker}), we have 
\begin{equation}\label{equ:114}  
    \frac{1}{\sqrt{t}}  \sum_{i = 0}^{t-1}  \|\boldsymbol{\delta}_i\| \stackrel{a.s.}{\to} 0.
\end{equation}
We complete the proof of $\textbf{(c)}$.

\vskip4pt
\noindent $\bullet$ {\textbf{Proof of (d).}} The result in \textbf{(d)} is trivial due to Lemma \ref{lem:B2}.

\subsection{Proof of Proposition \ref{prop:comp_stats_tradeoff}}

    \noindent$\bullet$ \textbf{Proof of (a).} When we solve $B_t \Delta \bx_t = - \bg_t$ exactly,  we have $\tilde{K}^{\star} = K^{\star} = \boldsymbol{0}$. Therefore, we finish the proof of (a) by using the fact that $\Xi^{\star}_{\text{gasn, ave}} = (I - K^{\star})^{-1} \mE[(I - \tilde{K}^{\star}) \Omega^{\star} (I - \tilde{K}^{\star})^{\top}] (I - K^{\star})^{-\top}$. \\

    \noindent$\bullet$ \textbf{Proof of (b).} By the definitions of ${\Xi}^{\star}_{\text{gasn, ave}}$ in Theorem \ref{thm:asymptotic_normality} and $\Omega^{\star}$ in \eqref{nsequ:sgd_normal}, we have 
\begin{align} \label{equ:149}  
{\Xi}^{\star}_{\text{gasn, ave}} - \Omega^{\star}  &= (I - K^{\star})^{-1}\mathbb{E}\left[ (I - \widetilde{K}^{\star})\Omega^{\star}(I - \widetilde{K}^{\star})^{\top}\right](I - K^{\star})^{-\top} - \Omega^{\star} \nonumber\\
&= (I - K^{\star})^{-1}\mE[(\widetilde{K}^{\star} - K^{\star}) \Omega^{\star} (\widetilde{K}^{\star} - K^{\star})^{\top}](I - K^{\star})^{-\top}.
\end{align}
Since $\Omega^{\star} \succeq 0$, we know $(I - K^{\star})^{-1}\mE[(\widetilde{K}^{\star} - K^{\star}) \Omega^{\star} (\widetilde{K}^{\star} - K^{\star})^{\top}](I - K^{\star})^{-\top} \succeq 0$, then we have ${\Xi}^{\star}_{\text{gasn, ave}} \succeq \Omega^{\star}$ by \eqref{equ:149}, which proves the first part of the result.

    For the second part of the result, we have
    \begin{align}\label{equ:133}
        \|\Xi^{\star}_{\text{gasn, ave}} - \Omega^{\star}\| \ &  \stackrel{\mathclap{\eqref{equ:149}}}{=}  \ \left\|(I - K^{\star})^{-1}\mE[(\widetilde{K}^{\star} - K^{\star}) \Omega^{\star} (\widetilde{K}^{\star} - K^{\star})^{\top}](I - K^{\star})^{-\top} \right\| \\
        &= \left\|(I - K^{\star})^{-1}  \left(\mE[\tilde{K}^{\star} \Omega^{\star} (\tilde{K}^{\star})^{\top}] - K^{\star} \Omega^{\star} (K^{\star})^{\top} \right)  (I - K^{\star})^{-\top}  \right\| \nonumber \\
        &\leq \|(I - K^{\star})^{-1}\|^2 \cdot \|\Omega^{\star}\| \cdot \left(\|\mE[\tilde{K}^{\star} (\tilde{K}^{\star})^{\top}]\| + \|K^{\star}\|^2 \right) \nonumber \\
        &\leq 4 \cdot \|\Omega^{\star}\| \cdot \left(\|\mE[\tilde{K}^{\star} (\tilde{K}^{\star})^{\top}]\| + \frac{4\Upsilon_E}{\gamma_E} \cdot \tau^2 (1 - \sqrt{\mu^{\star}/\nu^{\star}})^{2\tau - 4} \right),
    \end{align}
    where the first inequality holds due to the fact that $\tilde{K}^{\star} \Omega^{\star} (\tilde{K}^{\star})^{\top} \preceq \|\Omega^{\star}\| \cdot \tilde{K}^{\star} (\tilde{K}^{\star})^{\top}$, and the last inequality holds since the fact that $\|K_t\| \leq \sqrt{\frac{\Upsilon_E}{\gamma_E}} \cdot 2\tau (1 - \sqrt{\mu_t/\nu_t})^{\tau - 2}$  (Lemma \ref{lem:prop_sketch_solver}(d)), and $\|K^{\star}\| \leq 0.5$. Now we focus on the upper bound of $\mE[\tilde{K}^{\star} (\tilde{K}^{\star})^{\top}]$.

        We now consider solving a special linear system $B^{\star} \Delta \bx^{\star} = -\nabla f(\bx^{\star})$, and we solve it by running $\texttt{GAS}(B^{\star}, -\nabla f(\bx^{\star}); E^{\star}, \alpha^{\star}, \beta^{\star}, \gamma^{\star}, \tau)$ in Algorithm \ref{alg:generalized}, with the initial values $\bz_0, \boldsymbol{v}_0 \in \mR^d$ be set deterministically and arbitrarily \footnote{It's worth mentioning that we are solving a different linear system compared those in GASN method, so the initialization we choose in the proof of Proposition \ref{prop:comp_stats_tradeoff} is a little bit different.}. In this case, for $j = 0, 1, ..., \tau-1$, the co-state pair $(\bz_j, \bv_j)$ has the following equivalent update:
\begin{equation}\label{equ:150}  
    \begin{cases}
        (E^{\star})^{1/2}  \by_j & = \alpha^{\star} (E^{\star})^{1/2}  \boldsymbol{v}_j + (1 - \alpha^{\star}) (E^{\star})^{1/2}  \bz_j; \\
        (E^{\star})^{1/2} \bz_{j+1} &= (E^{\star})^{1/2} \by_j - \tilde{Z}_j^{\star} \left((E^{\star})^{1/2}  \by_j - (E^{\star})^{1/2} \Delta \bx^{\star} \right); \\
        (E^{\star})^{1/2}  \boldsymbol{v}_{j+1} &= \beta^{\star} (E^{\star})^{1/2}  \boldsymbol{v}_j + (1 - \beta^{\star}) (E^{\star})^{1/2}  \by_j - \gamma^{\star} \tilde{Z}_j^{\star} \left((E^{\star})^{1/2}  \by_j - (E^{\star})^{1/2}  \Delta \bx^{\star} \right),
    \end{cases}
\end{equation}
where we recall that $\tilde{Z}_j^{\star} = (E^{\star})^{-1/2}B^{\star} S_j (S_j^{\top}B^{\star} (E^{\star})^{-1} B^{\star}S_j)^{\dagger}S_j^{\top}B^{\star} (E^{\star})^{-1/2}$. Direct calculation of \eqref{equ:150} gives, for all $j = 0, 1, ..., \tau-1$,
    \begin{multline}\label{equ:124}  
        \underbrace{\begin{pmatrix}
            (E^{\star})^{1/2} ( \bz_{j+1} - \Delta \bx^{\star}) \\
            (E^{\star})^{1/2} (\boldsymbol{v}_{j+1} - \Delta \bx^{\star})
        \end{pmatrix}}_{\boldsymbol{e}^{\star}_{j+1}} \\
        = \underbrace{\begin{pmatrix}
        (1 - \alpha^{\star})(I - \tilde{Z}_j^{\star}) & \alpha^{\star} (I - \tilde{Z}_j^{\star}) \\
        (1 - \alpha^{\star})(1 - \beta^{\star})I - (1 - \alpha^{\star}) \gamma^{\star} \tilde{Z}_j^{\star} & (\alpha^{\star} + \beta^{\star} - \alpha^{\star} \beta^{\star})I - \alpha^{\star} \gamma^{\star} \tilde{Z}_j^{\star}
    \end{pmatrix}}_{\tilde{M}^{\star}_j}  \underbrace{\begin{pmatrix}
        (E^{\star})^{1/2} (\bz_{j} - \Delta \bx^{\star}) \\
            (E^{\star})^{1/2} ( \boldsymbol{v}_{j} - \Delta \bx^{\star})
        \end{pmatrix}}_{\boldsymbol{e}^{\star}_{j}}.
    \end{multline}
    Here, we treat the transformed variable $(E^{\star})^{1/2} \Delta \bx^{\star}$ as the solution to the same linear system $\left( B^{\star} (E^{\star})^{-1/2} \right) \boldsymbol{x}  = -\nabla f(\bx^{\star})$, and the transformed variables $(E^{\star})^{1/2}  \bz_{j}$, $(E^{\star})^{1/2} \boldsymbol{v}_{j}$ are the new co-state pairs to solve the linear system. Since $B^\star\succ0$ and $E^\star\succ0$, the matrix $B^\star(E^\star)^{-1/2}$ is nonsingular. According to the formulation of \eqref{equ:150}, we apply the contraction result \citep[Eq. (41)]{Gower2018Accelerated} \footnote{In applying \cite{Gower2018Accelerated}, we identify the linear operator and right-hand side in their formulation with $A = B^{\star}(E^{\star})^{-1/2}$ and $b = -\nabla f(\bx^{\star})$. Although \cite{Gower2018Accelerated} initialize with
$\bz_0 = \bv_0$, this equality is not needed in our setting. Indeed, since
$B^{\star}\succ 0$ and $E^{\star}\succ 0$, the matrix $A=B^{\star}(E^{\star})^{-1/2}$ is nonsingular, and hence $
\operatorname{Range}(A^{\top})=\mathbb R^d$.
Moreover, $Z^{\star}\succ 0$, so $(Z^{\star})^\dagger=(Z^{\star})^{-1}$. Therefore, the range-invariance condition (cf. Lemma 6 in \cite{Gower2018Accelerated}) used in their proof is automatically satisfied for arbitrary deterministic initial values
$\bz_0, \bv_0 \in\mathbb R^d$. Consequently, their contraction argument applies verbatim and yields the same contraction inequality.} to the linear system $\left( B^{\star} (E^{\star})^{-1/2} \right) \boldsymbol{x}  = -\nabla f(\bx^{\star})$, and get that, for all $j = 0, 1, ..., \tau-1$,
    \begin{multline}\label{equ:125}
        \mE \left[ \left\|(E^{\star})^{1/2} \left( \boldsymbol{v}_{j+1} - \Delta \bx^{\star} \right) \right\|^2_{(Z^{\star})^{-1}} + \frac{1}{\mu^{\star}} \left\| (E^{\star})^{1/2} \left(\bz_{j+1} - \Delta \bx^{\star} \right) \right\|^2 \mid  \by_{j},  \boldsymbol{v}_{j}, \bz_{j} \right] \\
        \leq (1 - \sqrt{\mu^{\star}/ \nu^{\star}}) \cdot \left( \left\|(E^{\star})^{1/2} \left( \boldsymbol{v}_{j} - \Delta \bx^{\star} \right) \right\|^2_{(Z^{\star})^{-1}} + \frac{1}{\mu^{\star}} \left\| (E^{\star})^{1/2} \left(  \bz_{j} - \Delta \bx^{\star} \right) \right\|^2 \right),
    \end{multline}  
    which implies, for all $j = 0, 1, ..., \tau-1$,
    \begin{equation}\label{equ:126}
        \mE \left[ (\boldsymbol{e}^{\star}_{j+1})^{\top} \underbrace{\begin{pmatrix}
            \frac{1}{\mu^{\star}} I  & \boldsymbol{0} \\
            \boldsymbol{0} & (Z^{\star})^{-1}
        \end{pmatrix}}_{P^{\star}} \boldsymbol{e}^{\star}_{j+1} \mid  \by_{j},   \boldsymbol{v}_{j},  \bz_{j} \right] \leq (1 - \sqrt{\mu^{\star}/ \nu^{\star}}) (\boldsymbol{e}^{\star}_{j})^{\top} \underbrace{\begin{pmatrix}
             \frac{1}{\mu^{\star}} I  & \boldsymbol{0} \\
            \boldsymbol{0} & (Z^{\star})^{-1}
        \end{pmatrix}}_{P^{\star}} \boldsymbol{e}^{\star}_{j}.
    \end{equation}
    According to \eqref{equ:124}, we have for all $j = 0, 1, ..., \tau-1$,
    \begin{equation}\label{equ:127}
        \mE \left[ (\boldsymbol{e}^{\star}_{j})^{\top} {(\tilde{M}^{\star}_j)}^{\top} P^{\star} \tilde{M}^{\star}_j \boldsymbol{e}^{\star}_{j} \mid  \by_{j},  \boldsymbol{v}_{j},  \bz_{j} \right] \leq (1 - \sqrt{\mu^{\star}/ \nu^{\star}}) (\boldsymbol{e}^{\star}_{j})^{\top} P^{\star} \boldsymbol{e}^{\star}_{j}.
    \end{equation}
    Using \eqref{equ:127} recursively, and using the tower property, we obtain
    \begin{align}\label{equ:128}
        \mE[(\boldsymbol{e}^{\star}_{0})^{\top} (\tilde{C}^{\star})^{\top} P^{\star} \tilde{C}^{\star} \boldsymbol{e}^{\star}_{0}] & \stackrel{\mathclap{\eqref{equ:77}}}{=} \mE \left[(\boldsymbol{e}^{\star}_{0})^{\top} \left( \prod_{j = 0}^{\tau - 1} \tilde{M}^{\star}_j \right)^{\top} P^{\star} \left( \prod_{j = 0}^{\tau - 1} \tilde{M}^{\star}_j \right) \boldsymbol{e}^{\star}_{0} \right] \ \stackrel{\mathclap{\eqref{equ:124}}}{=} \ \mE \left[ (\boldsymbol{e}^{\star}_{\tau})^{\top}  P^{\star} \boldsymbol{e}^{\star}_{\tau} \right] \nonumber \\
        & \stackrel{\mathclap{\eqref{equ:124}}}{=} \ \  \mE\left[\mE\left[ (\boldsymbol{e}^{\star}_{\tau-1})^{\top}  {(\tilde{M}^{\star}_{\tau - 1})}^{\top} P^{\star} \tilde{M}^{\star}_{\tau - 1}  \boldsymbol{e}^{\star}_{\tau-1} \mid  \by_{\tau-1},  \boldsymbol{v}_{\tau-1},  \bz_{\tau-1} \right]\right] \nonumber \\
        & \stackrel{\mathclap{\eqref{equ:127}}}{\leq} (1 - \sqrt{\mu^{\star}/ \nu^{\star}}) \mE \left[(\boldsymbol{e}^{\star}_{\tau-1})^{\top} P^{\star} \boldsymbol{e}^{\star}_{\tau-1}\right] \nonumber \\
        & \stackrel{\mathclap{\eqref{equ:124}}}{=} \ \ (1 - \sqrt{\mu^{\star}/ \nu^{\star}}) \mE \left[ \mE \left[ (\boldsymbol{e}^{\star}_{\tau-2})^{\top}  {(\tilde{M}^{\star}_{\tau - 2})}^{\top} P^{\star} \tilde{M}^{\star}_{\tau - 2}  \boldsymbol{e}^{\star}_{\tau-2} \mid  \by_{\tau-2},  \boldsymbol{v}_{\tau-2}, \bz_{\tau - 2} \right] \right] \nonumber \\
        & \stackrel{\mathclap{\eqref{equ:127}}}{\leq} (1 - \sqrt{\mu^{\star}/ \nu^{\star}})^2 \mE \left[(\boldsymbol{e}^{\star}_{\tau-2})^{\top} P^{\star} \boldsymbol{e}^{\star}_{\tau-2} \right] \nonumber \\
        &\leq \cdots \nonumber \leq (1 - \sqrt{\mu^{\star}/ \nu^{\star}})^{\tau} \cdot (\boldsymbol{e}^{\star}_{0})^{\top} P^{\star} \boldsymbol{e}^{\star}_{0}. \\
    \end{align}
    Since $\boldsymbol{e}_{0}^{\star} \in \mR^{2d}$ is deterministic and arbitrary, then \eqref{equ:128} gives
    \begin{equation}\label{equ:129}
        \mE[ (\tilde{C}^{\star})^{\top} P^{\star} \tilde{C}^{\star} ] \preceq (1 - \sqrt{\mu^{\star}/ \nu^{\star}})^{\tau} \cdot P^{\star} .
    \end{equation}
    According to the definition of $P^{\star}$ and Lemma \ref{lem:4}, we know that $I \preceq P^{\star} \preceq  (\Upsilon_E/\gamma_E\gamma_S) \cdot I$, hence we have
    \begin{equation}\label{equ:130}
        \mE[(\tilde{C}^{\star})^{\top} \tilde{C}^{\star}] \preceq \mE[(\tilde{C}^{\star})^{\top} P^{\star} \tilde{C}^{\star}] \preceq (\Upsilon_E/\gamma_E\gamma_S) \cdot (1 - \sqrt{\mu^{\star}/ \nu^{\star}})^{\tau} \cdot I.
    \end{equation}
    Since $\tilde{K}^{\star} = (E^{\star})^{-1/2} \begin{pmatrix}
        I & \boldsymbol{0}
    \end{pmatrix} \tilde{C} ^{\star} \begin{pmatrix}
        I \\
        I
    \end{pmatrix} (E^{\star})^{1/2}$, we have
    \begin{multline}\label{equ:131}
        \|\mE [(\tilde{K}^{\star})^{\top} \tilde{K}^{\star}]\| \leq 2 \cdot \|(E^{\star})^{1/2}\|^2 \cdot \left\|\mE \left[(\tilde{C}^{\star})^{\top} \begin{pmatrix}
            I  \\
            \boldsymbol{0} 
        \end{pmatrix} (E^{\star})^{-1} \begin{pmatrix}
        I & \boldsymbol{0}
    \end{pmatrix} \tilde{C}^{\star} \right] \right\| \\
    \leq (2 \Upsilon_E /\gamma_E)  \cdot \|\mE[(\tilde{C}^{\star})^{\top} \tilde{C}^{\star}]\| \stackrel{\eqref{equ:130}}{\leq} (2 \Upsilon_E^2 /\gamma_E^2 \gamma_S) \cdot (1 - \sqrt{\mu^{\star}/ \nu^{\star}})^{\tau},
    \end{multline}
    where the second last inequality uses Assumption \ref{ass:4} and  \ref{ass:5}. Then we obtain
    \begin{align}\label{equ:132}
        \|\mE[\tilde{K}^{\star} (\tilde{K}^{\star})^{\top}]\| &\leq \text{Trace}\left( \mE[\tilde{K}^{\star} (\tilde{K}^{\star})^{\top}]\right) = \text{Trace}\left( \mE[ (\tilde{K}^{\star})^{\top}\tilde{K}^{\star}]\right) \nonumber \\
        &\leq d \cdot \|\mE[(\tilde{K}^{\star})^{\top} \tilde{K}^{\star}]\| \nonumber \\
        &\stackrel{\mathclap{\eqref{equ:131}}}{\leq} (2d \Upsilon_E^2 /\gamma_E^2 \gamma_S) \cdot (1 - \sqrt{\mu^{\star}/ \nu^{\star}})^{\tau}.
    \end{align}
    Combining \eqref{equ:133} and \eqref{equ:132}, we finally obtain
    \begin{equation*}
    \frac{\|\Xi^{\star}_{\text{gasn, ave}} - \Omega^{\star}\|}{\|\Omega^{\star}\|} = O \left( (1 - \sqrt{\mu^{\star}/ \nu^{\star}})^{\tau} \right).
    \end{equation*}
    We then complete the proof of the proposition.

\subsection{Proof of Proposition \ref{thm:covariance_comparison}}

Recall the definition of $\tilde{C}^{\star}$ in \eqref{equ:77} and  $\tilde{C}^{\star} = \prod_{j = 0}^{\tau - 1} \tilde{M}^{\star}_j$. Under the case where $E_t = I$, we have $\tilde{Z}_j^{\star} = B^{\star} S_j (S_j^{\top} (B^{\star})^2 S_j)^{\dagger} S_j^{\top} B^{\star}$, and $Z^{\star} = \mE[B^{\star} S (S^{\top} (B^{\star})^2 S)^{\dagger} S^{\top} B^{\star}]$. We then use Kronecker product to rewrite the matrix $\tilde{M}^{\star}_j$:
\begin{align*}
    \tilde{M}^{\star}_j &\stackrel{\mathclap{\eqref{equ:77}}}{=} \begin{pmatrix}
        (1 - \alpha^{\star})(I - \tilde{Z}_j^{\star}) & \alpha^{\star} (I - \tilde{Z}_j^{\star}) \\
        (1 - \alpha^{\star})(1 - \beta^{\star})I - (1 - \alpha^{\star}) \gamma^{\star} \tilde{Z}_j^{\star} & (\alpha^{\star} + \beta^{\star} - \alpha^{\star} \beta^{\star})I - \alpha^{\star} \gamma^{\star} \tilde{Z}_j^{\star}
    \end{pmatrix} \\
    &= \begin{pmatrix}
        (1 - \alpha^{\star})(I - \tilde{Z}_j^{\star}) & \alpha^{\star} (I - \tilde{Z}_j^{\star}) \\
        (1 - \alpha^{\star})(1 - \beta^{\star} - \gamma^{\star})I + (1 - \alpha^{\star}) \gamma^{\star} (I - \tilde{Z}_j^{\star}) & (\alpha^{\star} + \beta^{\star} - \alpha^{\star} \beta^{\star} - \alpha^{\star}\gamma^{\star})I + \alpha^{\star} \gamma^{\star} (I - \tilde{Z}_j^{\star}) 
    \end{pmatrix} \\
    &= \underbrace{\begin{pmatrix}
        0 & 0 \\
        (1 - \alpha^{\star})(1 - \beta^{\star} - \gamma^{\star}) & \alpha^{\star}+\beta^{\star}-\alpha^{\star}\beta^{\star}-\alpha^{\star}\gamma^{\star}
        \end{pmatrix}}_{\coloneqq  \Theta} \otimes I + \underbrace{\begin{pmatrix}
            1 - \alpha^{\star} & \alpha^{\star} \\
            (1-\alpha^{\star}) \gamma^{\star} & \alpha^{\star}\gamma^{\star}
        \end{pmatrix}}_{ \coloneqq  \Gamma} \otimes (I - \tilde{Z}_j^{\star}) ,
\end{align*}
where $``\otimes"$ denotes the Kronecker product of matrices. When $\mu_t \nu_t = \gamma_t = 1$, considering that $\gamma_t \stackrel{a.s.}{\to} \gamma^{\star}$ by Lemma \ref{lem:7} (in the proof of Lemma \ref{lem:bound_nu_t_mu_t}), then we have $\gamma^{\star} = 1$.  Hence we have
\begin{equation}\label{equ:100} 
    \Theta \begin{pmatrix}
    1 \\
    1
\end{pmatrix} = \begin{pmatrix}
    0 \\
    1-\gamma^{\star}
\end{pmatrix} =  \begin{pmatrix}
    0 \\
    0
\end{pmatrix}, \qquad \Gamma \begin{pmatrix}
    1 \\
    1
\end{pmatrix} = \begin{pmatrix}
    1 \\
    \gamma^{\star}
\end{pmatrix} = \begin{pmatrix}
    1 \\
    1
\end{pmatrix}.
\end{equation}
We then prove the following claim: For any matrix $Y \in \mR^{d \times d}$, and any $j \geq 0$, we have
\begin{equation}\label{equ:101}
    \tilde{M}_j^{\star} \left( \begin{pmatrix}
        1 \\
        1
    \end{pmatrix} \otimes Y \right) =  \begin{pmatrix}
        1 \\
        1
    \end{pmatrix} \otimes \left((I - \tilde{Z}_j^{\star})Y \right).
\end{equation}
The proof of the claim proceeds as follows:
\begin{align*}
    \tilde{M}_j^{\star} \left( \begin{pmatrix}
        1 \\
        1
    \end{pmatrix} \otimes Y \right) &= \left(\Theta \otimes I + \Gamma \otimes (I - \tilde{Z}_j^{\star}) \right) \left( \begin{pmatrix}
        1 \\
        1
    \end{pmatrix} \otimes Y \right)\\
    &= (\Theta \otimes I) \left( \begin{pmatrix}
        1 \\
        1
    \end{pmatrix} \otimes Y \right) + \left( \Gamma \otimes (I - \tilde{Z}_j^{\star}) \right) \left( \begin{pmatrix}
        1 \\
        1
    \end{pmatrix} \otimes Y \right) \\
    &= \boldsymbol{0} + \begin{pmatrix}
        1 \\
        1
    \end{pmatrix} \otimes \rbr{(I - \tilde{Z}_j^{\star}) Y},
\end{align*}
where the third equality uses the property of Kronecker product that $(A \otimes B)(C \otimes D) = (AC) \otimes (BD)$ and \eqref{equ:100}. Hence we finish the proof of \eqref{equ:101}. Thus, we then obtain 
\begin{align}\label{equ:102}
    \tilde{C}^{\star} \begin{pmatrix}
        I \\
        I
    \end{pmatrix} &= \tilde{M}^{\star}_{\tau - 1} \tilde{M}^{\star}_{\tau-2} \cdots \tilde{M}^{\star}_0 \rbr{\begin{pmatrix}
        1 \\
        1
    \end{pmatrix} \otimes I}  \stackrel{\eqref{equ:101}}{=} \begin{pmatrix}
        1 \\
        1
    \end{pmatrix} \otimes \rbr{\prod_{j = 0}^{\tau - 1} (I - \tilde{Z}_j^{\star})}.
\end{align}
We then further obtain
\begin{align*}
    \begin{pmatrix}
        I & \boldsymbol{0}
    \end{pmatrix} \tilde{C}^{\star} \begin{pmatrix}
        I \\
        I
    \end{pmatrix} \ &\stackrel{\mathclap{\eqref{equ:102}}}{=}  \ \begin{pmatrix}
        I & \boldsymbol{0}
    \end{pmatrix} \rbr{\begin{pmatrix}
        1 \\
        1
    \end{pmatrix} \otimes \rbr{\prod_{j = 0}^{\tau - 1} (I - \tilde{Z}_j^{\star})}}  =  \begin{pmatrix}
        I & \boldsymbol{0}
    \end{pmatrix} \begin{pmatrix}
        \prod_{j = 0}^{\tau - 1} (I - \tilde{Z}_j^{\star}) \\
        \prod_{j = 0}^{\tau - 1} (I - \tilde{Z}_j^{\star})
    \end{pmatrix} = \prod_{j = 0}^{\tau - 1} (I - \tilde{Z}_j^{\star}).
\end{align*}
Hence $\tilde{K}^{\star}$ reduces to $\prod_{j = 0}^{\tau - 1} (I - \tilde{Z}_j^{\star})$, and $K^{\star}$ reduces to $(I - Z^{\star})^{\tau}$. Comparing with the definition of $\Xi^{\star}_{\text{usn, ave}}$ in \citet[Eq. (17)]{Du2025Online}, we complete the proof of the proposition.

\subsection{Proof of Proposition \ref{prop:1}}
Note that when $E_t = I$, the matrix $K^{\star}$ is symmetric.
Under $C_{\varphi} = 1$ and  $\varphi = 1$, the Lyapunov equation \eqref{equ:lyp}  simplifies to
\begin{equation*}
(0.5I - K^{\star}){\Xi}^\star_{\text{asn, last}}  + {\Xi}^\star_{\text{asn, last}} (0.5 I - K^{\star}) = \mathbb{E}[ (I - \widetilde{K}^{\star})\Omega^{\star}(I - \widetilde{K}^{\star})^{\top}].
\end{equation*}
Recall that ${\Xi}^\star_{\text{gasn, ave}} = (I - K^{\star})^{-1}\mathbb{E}[(I - \widetilde{K}^{\star})\Omega^{\star}(I-\widetilde{K}^{\star})^{\top}](I- K^{\star})^{-1}$, then we obtain
\begin{align*}
(0.5I - K^{\star}) & ({\Xi}^\star_{\text{asn, last}} - {\Xi}^\star_{\text{gasn, ave}})  + ({\Xi}^\star_{\text{asn, last}} - {\Xi}^\star_{\text{gasn, ave}})(0.5I - K^{\star}) \\
& = \mathbb{E}[ (I - \widetilde{K}^{\star})\Omega^{\star}(I - \widetilde{K}^{\star})^{\top}] - {\Xi}^\star_{\text{gasn, ave}} + K^{\star} {\Xi}^\star_{\text{gasn, ave}} + {\Xi}^\star_{\text{gasn, ave}} K^{\star} \\
& = \mathbb{E}[ (I - \widetilde{K}^{\star})\Omega^{\star}(I - \widetilde{K}^{\star})^{\top}] - (I - K^{\star}){\Xi}^\star_{\text{gasn, ave}} - {\Xi}^\star_{\text{gasn, ave}}(I - K^{\star}) + {\Xi}^\star_{\text{gasn, ave}} \\
& = (I - (I - K^{\star})^{-1}) \mathbb{E}[ (I - \widetilde{K}^{\star})\Omega^{\star}(I - \widetilde{K}^{\star})^{\top}] (I - (I - K^{\star})^{-1}) \\
& \succeq \0.
\end{align*}
By Lemma~\ref{lem:prop_sketch_solver}(d) with $E_t = I$, the condition $\tau (1 - \sqrt{\mu_t / \nu_t})^{\tau - 2} \leq \gamma_H / (4 \Upsilon_H)$ in Theorem \ref{thm:as_convergence}, and $K_t \stackrel{a.s.}{\to} K^\star$ (Lemma~\ref{lem:bound_nu_t_mu_t}), we have $\|K^\star\| \leq \gamma_H / (2 \Upsilon_H) < 1/2$. The Lyapunov theorem \citep[Theorem~4.6]{Khalil2002Nonlinear} then yields $\Xi^\star_{\text{asn, last}} \succeq \Xi^\star_{\text{gasn, ave}}$, which completes the proof.

\subsection{Proof of Theorem \ref{thm:FCLT}}

Following the same decomposition in \eqref{eq:polyakrecursion}, the partial sum process can be decomposed as 
\begin{align*}
\frac{1}{\sqrt{t}} \sum_{k = 1}^{\lfloor rt \rfloor} (\bx_k - \bx^{\star}) \ & \stackrel{\mathclap{\eqref{eq:polyakrecursion}}}{=} \ \frac{1}{\sqrt{t}} \sum_{i = 0}^{\lfloor rt \rfloor -1}(I - K^{\star})^{-1} \btheta_i + \frac{1}{\sqrt{t}}\sum_{i=0}^{\lfloor rt \rfloor -1}A_i^{\lfloor rt \rfloor-1} \btheta_i + \frac{1}{\sqrt{t}}\sum_{i=0}^{\lfloor rt \rfloor -1}R_i^{\lfloor rt \rfloor -1}   \bdelta_i \\
& \quad + \frac{1}{\sqrt{t} \varphi_0} R_{0}^{\lfloor rt \rfloor -1} [I - \varphi_0 (I - K^{\star})] (\bx_0 - \bx^{\star}) \\
& = \frac{1}{\sqrt{t}} \sum_{i = 0}^{\lfloor rt \rfloor -1}(I - K^{\star})^{-1} \btheta_i + \frac{1}{\sqrt{t}}\sum_{i=0}^{\lfloor rt \rfloor -1} A_i^{t-1} \btheta_i + \frac{1}{\sqrt{t}}\sum_{i=0}^{\lfloor rt \rfloor -1}\left(A_i^{\lfloor rt \rfloor-1} - A_i^{t-1}\right) \btheta_i  \\
& \quad + \frac{1}{\sqrt{t}}\sum_{i=0}^{\lfloor rt \rfloor -1}R_i^{\lfloor rt \rfloor -1}  \bdelta_i  + \frac{1}{\sqrt{t} \varphi_0} R_{0}^{\lfloor rt \rfloor -1} [I - \varphi_0 (I - K^{\star})] (\bx_0 - \bx^{\star}),
\end{align*}
where $r \in [0, 1]$, and $A_i^{\lfloor rt \rfloor}$ and $R_i^{\lfloor rt \rfloor}$ are defined in \eqref{eq:A_B_recursion_matrices}. In what follows, we analyze each~right-hand-side term above and establish the following claims: 
\begin{enumerate}[topsep=2pt,parsep=3pt,label=(\alph*):]
\setlength\itemsep{0.0em}
\item $\frac{1}{\sqrt{t}} \sum_{i = 0}^{\lfloor rt \rfloor - 1} (I - K^{\star})^{-1} \btheta_i \Longrightarrow  (\Xi^{\star}_{\text{gasn, ave}})^{1/2} W_{d}(r)$.
\item $\sup_{r \in [0, 1]} \| \frac{1}{\sqrt{t}}\sum_{i=0}^{\lfloor rt \rfloor - 1}A_i^{t - 1} \btheta_i \| = o_p(1)$.
\item $\sup_{r \in [0, 1]} \|\frac{1}{\sqrt{t}}\sum_{i=0}^{\lfloor rt \rfloor -1}(A_i^{\lfloor rt \rfloor-1} - A_i^{t-1}) \btheta_i\| = o_p(1)$.
\item $\sup_{r \in [0, 1]} \|\frac{1}{\sqrt{t}}\sum_{i=0}^{\lfloor rt \rfloor-1} R_i^{\lfloor rt \rfloor-1} \bdelta_i \| = o(1)$.
\item $\sup_{r \in [0, 1]}\|\frac{1}{\sqrt{t} \varphi_0} R_{0}^{\lfloor rt \rfloor-1} (I - \varphi_0 (I - K^{\star})) (\bx_0 - \bx^{\star})\| = o(1)$.	
\end{enumerate}

Combining these claims together, we obtain
\begin{equation*}
\frac{1}{\sqrt{t}} \sum_{k = 1}^{\lfloor rt \rfloor } (\bx_k - \bx^{\star}) \Longrightarrow (\Xi^{\star}_{\text{gasn, ave}})^{1/2} W_{d}(r).
\end{equation*}
This completes the proof of the theorem by noting that $\frac{1}{\sqrt{t}}\sum_{k=0}^{\lfloor rt \rfloor -1} (\bx_k - \bx^{\star}) = \frac{1}{\sqrt{t}} \sum_{k = 1}^{\lfloor rt \rfloor } (\bx_k - \bx^{\star}) - \frac{1}{\sqrt{t}} (\bx_{\lfloor rt \rfloor } - \bx^{\star}) + \frac{1}{\sqrt{t}} (\bx_0 - \bx^{\star})$ and the fact that $\bx_t \stackrel{a.s.}{\to} \bx^{\star}$. 

\vskip4pt

\noindent $\bullet$ {\textbf{Proof of (a).}} By Lemma \ref{lem:error_recursion} and Lemma \ref{lem:3}, we know that $\btheta_i$ is a martingale difference and the conditional~covariance of $\btheta_i$ converges almost surely to $\mE[(I-\tilde{K}^\star)\Omega^{\star}(I-\tilde{K}^\star)^{\top}]$. Let $\Gamma_i :=
    \mathbb E[\boldsymbol\theta_i\boldsymbol\theta_i^\top
    \mid \mathcal F_{i-1}]$, and $\Gamma^\star :=
    \mathbb E[(I-\widetilde K^\star)\Omega^\star
    (I-\widetilde K^\star)^\top]$. By Stolz–Ces\`{a}ro theorem (cf. Lemma \ref{aux:lem:stolz-cesaro}), we have
\begin{align*}
     \sup_{r\in[0,1]}  \left\| \frac1t\sum_{i=0}^{\lfloor rt\rfloor-1}
    \Gamma_i - r \Gamma^\star \right\| & \leq \sup_{r\in[0,1]}
    \left\| \frac1t\sum_{i=0}^{\lfloor rt\rfloor-1} \left( \Gamma_i - \Gamma^\star \right)\right\| + \sup_{r\in[0,1]} \left\| \frac{\lfloor rt\rfloor}{t}\Gamma^\star-r\Gamma^\star \right\| \\
    &= \sup_{r\in[0,1]}
    \left\| \frac1t\sum_{i=0}^{\lfloor rt\rfloor-1} \left( \Gamma_i - \Gamma^\star \right)\right\| + \sup_{r\in[0,1]} \left| \frac{\lfloor rt\rfloor}{t}-r\right| \|\Gamma^\star\|\\
    & \leq \frac1t\sum_{i=0}^{t-1}\|\Gamma_i-\Gamma^\star\| + \frac{\|\Gamma^\star\|}{t} \stackrel{a.s.}{\to} 0.
\end{align*}
Hence, we have the following uniform convergence:
\begin{equation*}
     \sup_{r\in[0,1]}
    \left\|
    \frac1t\sum_{i=0}^{\lfloor rt\rfloor-1} (I-K^\star)^{-1}
    \mathbb E[\boldsymbol\theta_i\boldsymbol\theta_i^\top
    \mid \mathcal F_{i-1}] (I-K^\star)^{-\top} - r \Xi^\star_{\text{gasn, ave}} \right\| \stackrel{a.s.}{\to} 0.
\end{equation*}
By Lemma \ref{lem:B1}, we have verified the Lindeberg condition. Then, we apply the martingale Functional Central Limit Theorem (FCLT)  \citep[Theorem 4.2]{Hall2014Martingale}. We have as $t \rightarrow \infty$
\begin{equation*}
\frac{1}{\sqrt{t}} \sum_{i = 0}^{\lfloor rt \rfloor - 1} (I - K^{\star})^{-1} \btheta_i \Longrightarrow  (\Xi^{\star}_{\text{gasn, ave}})^{1/2} W_{d}(r).
\end{equation*}
This completes the proof of \textbf{(a)}.

\vskip4pt
\noindent $\bullet$ \textbf{Proof of \textbf{(b)}.} Note that $\sum_{i=0}^{\lfloor rt \rfloor -1} A_i^{t-1} \btheta_i$ is a martingale indexed by $r \in [0, 1]$. By Doob's~inequality \citep[Theorem 2.2]{Hall2014Martingale}, we have
\begin{align*}
\mE \left[\sup_{r \in [0, 1]} \left\| \frac{1}{\sqrt{t}}\sum_{i=0}^{\lfloor rt \rfloor - 1}A_i^{t - 1} \btheta_i \right\|^2 \right] 
&\leq \frac{4}{t} \mE \left[ \left\|  \sum_{i=0}^{t-1}A_i^{t-1} \btheta_i \right\|^2 \right] = \frac{4}{t} \sum_{i = 0}^{t-1} \mE \left[ \left\|  A_i^{t-1} \btheta_i \right\|^2 \right] \\
&\leq \Upsilon_{AR} \cdot \left(\sup_{i \geq 0} \mE[\|\btheta_i\|^2] \right) \cdot \frac{4}{t} \sum_{i = 0}^{t-1} \|A_i^{t-1}\| \to 0,
\end{align*}
where the last inequality is due to Lemma \ref{lem:B2}, and the convergence holds due to \eqref{equ:104} in proving Theorem \ref{thm:asymptotic_normality}. Thus, we have \mbox{$\sup_{r \in [0, 1]} \| \frac{1}{\sqrt{t}}\sum_{i=0}^{\lfloor rt \rfloor - 1}A_i^{t - 1} \btheta_i \| = o_p(1)$}~and~\mbox{complete}~the proof~of~\textbf{(b)}.

\vskip4pt
\noindent $\bullet$ \textbf{Proof of (c).} For any fixed integer $t_0>0$ and a constant $\eta > 0$, we define a stopping time as 
\begin{equation}\label{def:stopping_time}
    T_{t_0, \eta} \coloneqq \inf\{t\geq t_0: \|\bx_t - \bx^\star\|> \eta \}.
\end{equation}
We let $\boldsymbol{1}_{\{\cdot\}}$ denote the indicator function, which takes the value 1 if the condition inside the brackets $\{\}$ is true, and 0 otherwise.
Hence, we know that, for all $i \geq t_0$,
\begin{align}\label{equ:105} 
    \mE\left[\|\btheta_i \mathbf{1}_{\{T_{t_0, \eta} > i\}}\|^{2+\epsilon}\right] &= \mE\left[\|\btheta_i\|^{2+\epsilon} \mathbf{1}_{\{T_{t_0, \eta} > i\}}\right] = \mE \left[\mE \left[\|\btheta_i\|^{2+\epsilon} \mid \mF_{i-1} \right] \mathbf{1}_{\{T_{t_0, \eta} > i\}} \right] \\
    \ & \stackrel{\mathclap{\eqref{eq: bounded_q_moment}}}{\leq} \ C_q^{(1)} \mE\left[\|\bx_i - \bx^{\star}\|^{2+\epsilon} \cdot \mathbf{1}_{\{T_{t_0, \eta} > i\}}\right] + C_q^{(2)} \\
    &\leq C_q^{(1)} \eta^{2+\epsilon} + C_q^{(2)},
\end{align}
where the second equality holds since $\mathbf{1}_{\{T_{t_0, \eta} > i\}}$ is $\mF_{i-1}$-measurable, and the last inequality holds due to the definition of the stopping time $T_{t_0, \eta}$ in \eqref{def:stopping_time}. This indicates that $\btheta_i \mathbf{1}_{\{T_{t_0, \eta} > i\}}$ is a martingale difference with bounded $(2+\epsilon)$-th moment for all $i \geq t_0$ (the constant $\epsilon$ is defined in Assumption \ref{ass:2}). We notice that for any integer $n \in \{t_0, t_0 + 1, ..., t-1\}$, 
\begin{align}\label{eq:17}
\sum_{i = t_0}^{n} &\left(A_i^{t-1} - A_i^n \right) \btheta_i \mathbf{1}_{\{T_{t_0, \eta} > i\}}  \ \stackrel{\mathclap{\eqref{eq:A_B_recursion_matrices}}}{=} \ \sum_{i = t_0}^{n} \left(R_i^{t-1} - R_i^n \right) \btheta_i \mathbf{1}_{\{T_{t_0, \eta} > i\}}  \\
& \stackrel{\eqref{eq:A_B_recursion_matrices}}{=} \sum_{i = t_0}^{n} \sum_{k = n+1}^{t-1} \prod_{j= i+1}^{k}(I - \varphi_j (I - K^{\star})) \varphi_i \btheta_i \mathbf{1}_{\{T_{t_0, \eta} > i\}}  \nonumber\\
&\; =\;  \sum_{k = n+1}^{t-1}  \sum_{i = t_0}^{n} \prod_{j= i+1}^{k}(I - \varphi_j (I - K^{\star})) \varphi_i \btheta_i \mathbf{1}_{\{T_{t_0, \eta} > i\}} \nonumber\\
& \; =\;  \frac{1}{\varphi_{n+1}} \cdot \varphi_{n+1} \sum_{k = n+1}^{t-1} \prod_{j= n+1}^{k}(I - \varphi_j (I - K^{\star})) \left[  \sum_{i = t_0}^n \prod_{j= i+1}^{n}(I - \varphi_j (I - K^{\star})) \varphi_i \btheta_i \mathbf{1}_{\{T_{t_0, \eta} > i\}}  \right] \nonumber\\
&\; \stackrel{\mathclap{\eqref{eq:A_B_recursion_matrices}}}{=}\; \frac{1}{\varphi_{n+1}} \cdot R_{n+1}^{t-1} (I - \varphi_{n+1} (I - K^{\star})) \left[  \sum_{i = t_0}^n \prod_{j= i+1}^{n}(I - \varphi_j (I - K^{\star})) \varphi_i \btheta_i \mathbf{1}_{\{T_{t_0, \eta} > i\}}  \right].
\end{align}
By Lemma \ref{lem:B2}, we know that $\|R_{n+1}^{t-1} (I - \varphi_{n+1} (I - K^{\star}))\| \leq \Upsilon_{AR} (1+C_{\varphi}(1+\|K^{\star}\|))$ for any $n, t \geq 0$. Thus, we obtain
\begin{multline}\label{eq:16}
\sup_{r \in [0, 1]} \left\|\frac{1}{\sqrt{t}}\sum_{i=t_0}^{\lfloor rt \rfloor -1} \left(A_i^{\lfloor rt \rfloor-1} - A_i^{t-1}\right) \btheta_i \mathbf{1}_{\{T_{t_0, \eta} > i\}}\right\| = \sup_{n \in [t_0, t-1]} \left\|\frac{1}{\sqrt{t}}\sum_{i=t_0}^{n}\left(A_i^{n} - A_i^{t-1}\right) \btheta_i \mathbf{1}_{\{T_{t_0, \eta} > i\}}\right\| \\
\stackrel{\mathclap{\eqref{eq:17}}}{\leq} \Upsilon_{AR} (1+C_{\varphi}(1+\|K^{\star}\|)) \cdot \sup_{n \in [t_0, t-1]} \left\|\frac{1}{\sqrt{t}} \frac{1}{\varphi_{n+1}} \sum_{i = t_0}^n \prod_{j= i+1}^{n}(I - \varphi_j (I - K^{\star})) \varphi_i \btheta_i \mathbf{1}_{\{T_{t_0, \eta} > i\}}\right\|.
\end{multline}
We need a technical lemma to proceed with the proof.

\begin{lemma}[Adapted Lemma 3.3 in \cite{Li2023Online}]  \label{lem:phi_2_error}
Let $\{\boldsymbol{\zeta}_n\}_{n \geq 0}$ be a martingale difference sequence. Given the matrix $-(I - K^\star)$ is Hurwitz (i.e. $\text{Re} \; \lambda_i(I - K^{\star}) > 0$ for all $i = 1, 2, ..., d$), and $t_0 > 0$ is a fixed integer, we define an auxiliary sequence $\{\boldsymbol{y}_n\}_{n \geq t_0}$ as follows: $\boldsymbol{y}_{t_0} = \boldsymbol{0}$, and for $n \geq t_0$,
\begin{equation*}
\boldsymbol{y}_{n+1} = \sum_{i = t_0}^{n} \left( \prod_{j = i+1}^{n}(I - \varphi_j (I - K^{\star}))\right) \varphi_i \boldsymbol{\zeta}_i.
\end{equation*}	
Let the step size $\varphi_i = C_{\varphi}/(i+1)^{\varphi}$ for some $C_{\varphi} > 0$ and $\varphi \in (0.5, 1)$. If $\sup_{i \geq t_0} \mE \|\boldsymbol{\zeta}_i\|^{2+\epsilon} < \infty$ for some $\epsilon > 0$, then we have that as $t \rightarrow \infty$,
\begin{equation*}
\sup_{n \in [t_0, t-1]} \left\|\frac{1}{\sqrt{t}} \frac{1}{\varphi_{n+1}} \sum_{i = t_0}^n \prod_{j= i+1}^{n}(I - \varphi_j (I - K^{\star})) \varphi_i \boldsymbol{\zeta}_i \right\| = o_p(1).
\end{equation*}	
\end{lemma}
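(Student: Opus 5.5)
Write $L \coloneqq I-K^\star$, so that every eigenvalue of $L$ has real part at least $1/2$, and write $\Pi_{i}^{n} \coloneqq \prod_{j=i+1}^{n}(I-\varphi_j L)$. The object to control is
\[
\frac{\boldsymbol{y}_{n+1}}{\sqrt{t}\,\varphi_{n+1}} .
\]
The plan is to split the range of $n$ at $n \approx \delta t$ and bound the two regimes separately. A direct bound that uses only second moments costs a $\log$ factor, so on the large-$n$ block the $(2+\epsilon)$-th moment will be used through a Burkholder/Rosenthal-type maximal inequality.

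\textbf{Step 1 (a Lyapunov norm).} Since $-L$ is Hurwitz, there is $P\succ 0$ with $L^\top P + PL \succeq 2c P$ for some $c>0$. In the norm $\|\cdot\|_P$ this gives, for $j$ large, $\|I-\varphi_j L\|_P \le 1-c\varphi_j$. Consequently $\|\Pi_i^n\|_P \le \exp\bigl(-c\sum_{j=i+1}^{n}\varphi_j\bigr)$ for all $i\ge t_0$ (finitely many early indices only affect constants). It follows that $\boldsymbol{y}_{n+1} = (I-\varphi_n L)\boldsymbol{y}_n + \varphi_n\boldsymbol{\zeta}_n$ is a linear recursion whose second moment obeys $\mE\|\boldsymbol{y}_{n+1}\|^2 \lesssim \sum_{i\le n}\varphi_i^2 e^{-2c\sum_{j=i+1}^n \varphi_j} \lesssim \varphi_n$. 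The last bound is the standard estimate, in the same spirit as Lemma \ref{lem:phi_t_recursion_lemma}, valid because $\varphi\in(0.5,1)$. The same computation with $(2+\epsilon)$-th moments, using Burkholder's inequality on the martingale part, should give $\mE\|\boldsymbol{y}_{n+1}\|^{2+\epsilon} \lesssim \varphi_n^{(2+\epsilon)/2}$.

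\textbf{Step 2 (small $n$).} For $t_0\le n\le \delta t$ I bound crudely,
\[
\frac{1}{\sqrt t\,\varphi_{n+1}}\|\boldsymbol{y}_{n+1}\| \le \frac{(n+1)^{\varphi}}{C_\varphi\sqrt t}\,\|\boldsymbol{y}_{n+1}\| .
\]
I then control $\max_{n\le\delta t}(n+1)^{\varphi/2}\|\boldsymbol{y}_{n+1}\|$ through a dyadic-block union bound with the $(2+\epsilon)$-moment from Step 1. This yields a term of order $(\delta t)^{\varphi/2}/\sqrt t \cdot O_p(\text{polylog})$, which is $o_p(1)$ because $\varphi<1$. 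Hence this regime is negligible for every fixed $\delta$.

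\textbf{Step 3 (large $n$, the main obstacle).} For $n\in[\delta t, t]$ the step sizes satisfy $\varphi_{n+1}\asymp t^{-\varphi}$, so I need $\max_{\delta t\le n\le t}\|\boldsymbol{y}_{n+1}\| = o_p(t^{1/2-\varphi})$, whereas individually $\|\boldsymbol{y}_{n+1}\| = O_p(t^{-\varphi/2})$. The ratio is $t^{1/2-\varphi}/t^{-\varphi/2} = t^{(1-\varphi)/2}\to\infty$, so there is polynomial room. The difficulty is the supremum over roughly $t$ indices. I will cover $[\delta t,t]$ by $m\asymp t^{\kappa}$ equal blocks with $\kappa$ small. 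Within a block, $\boldsymbol{y}_n$ differs from its value at the block's left end by a contracted term plus a martingale increment. Doob's $L^{2+\epsilon}$ maximal inequality bounds this increment by $(t^{1-\kappa})^{1/2}t^{-\varphi}$ in $L^{2+\epsilon}$. The block endpoints are handled by Markov's inequality with the $(2+\epsilon)$-moment bound and a union over the $m$ blocks, which costs a factor $m^{1/(2+\epsilon)}$. Choosing $\kappa$ so that both $t^{\kappa/(2+\epsilon)}t^{-\varphi/2}=o(t^{1/2-\varphi})$ and $t^{(1-\kappa)/2-\varphi} = o(t^{1/2-\varphi})$ hold is possible, since the second holds for any $\kappa>0$ and the first holds for small $\kappa$ because $\varphi<1$. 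This closes the argument. If the within-block Doob step proves awkward, the fallback is to invoke the strong approximation argument of \cite{Li2023Online} directly, since the statement is adapted from their Lemma 3.3.
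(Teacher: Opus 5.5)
Step 1 and the endpoint estimate in Step 3 are fine, but Step 2 fails as written. The claim $\max_{n\le N}(n+1)^{\varphi/2}\|\boldsymbol{y}_{n+1}\|=O_p(\mathrm{polylog}\,N)$ is false under only $(2+\epsilon)$-th moments. Since $\varphi_n\boldsymbol{\zeta}_n=\boldsymbol{y}_{n+1}-(I-\varphi_nL)\boldsymbol{y}_n$, that maximum dominates a constant times $N^{-\varphi/2}\max_{N/2\le n\le N}\|\boldsymbol{\zeta}_n\|$. For i.i.d.\ heavy-tailed $\boldsymbol{\zeta}_n$ this grows like $N^{1/(2+\epsilon)-\varphi/2}$ up to logarithms, a positive power whenever $\epsilon<2(1-\varphi)/\varphi$.

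The tool you name (the pointwise bound of Step 1 plus a union bound over indices) also falls short: it only gives $O_p(N^{1/(2+\epsilon)})$. Then $(\delta t)^{\varphi/2+1/(2+\epsilon)}/\sqrt{t}\to0$ needs $\epsilon>2\varphi/(1-\varphi)>2$, whereas the lemma is claimed for every $\epsilon>0$. To make the small-$n$ range negligible for a \emph{fixed} $\delta$, you would have to rerun the sub-block scheme of Step 3 at every dyadic scale and sum the tail bounds.

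The cheaper repair is to settle for an $O_p(\sqrt{\delta})$ bound and let $\delta\downarrow0$ after $t\to\infty$. This is legitimate because Step 3 holds for each fixed $\delta$. The bound follows from Abel summation: with $\boldsymbol{M}_k=\sum_{i=t_0}^{k}\boldsymbol{\zeta}_i$,
\[
\boldsymbol{y}_{n+1}=\varphi_n\boldsymbol{M}_n-\sum_{i=t_0}^{n-1}\Pi_{i+1}^n\big[(\varphi_{i+1}-\varphi_i)I+\varphi_i\varphi_{i+1}L\big]\boldsymbol{M}_i .
\]
Since $|\varphi_{i+1}-\varphi_i|\lesssim\varphi_i^2$ and $\sum_{i<n}\|\Pi_{i+1}^n\|\varphi_i^2\lesssim\varphi_n$, you get $\|\boldsymbol{y}_{n+1}\|/\varphi_{n+1}\lesssim\max_{k\le n}\|\boldsymbol{M}_k\|$. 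By Doob's inequality this is $O_p(\sqrt{\delta t})$ over $n\le\delta t$.

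Step 3 also needs an extra idea. Write $\boldsymbol{y}_{n+1}=\Pi_{a-1}^n\boldsymbol{y}_a+\boldsymbol{z}_{n+1}$ with $\boldsymbol{z}_{n+1}=\sum_{i=a}^{n}\Pi_i^n\varphi_i\boldsymbol{\zeta}_i$. The remainder is not a martingale in $n$, because its weights depend on $n$, so Doob's inequality does not apply to it. The alternative drift-plus-martingale split $\boldsymbol{y}_{n+1}-\boldsymbol{y}_a=-\sum_{k=a}^{n}\varphi_kL\boldsymbol{y}_k+\sum_{k=a}^{n}\varphi_k\boldsymbol{\zeta}_k$ leaves a drift of order $t^{1-\kappa-\varphi}\max_k\|\boldsymbol{y}_k\|$, which is not small when $\kappa<1-\varphi$.

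Abel summation fixes this as well. You get $\boldsymbol{z}_{n+1}=\boldsymbol{N}_n-\sum_{i=a}^{n-1}\varphi_{i+1}\Pi_{i+1}^nL\boldsymbol{N}_i$ with the genuine martingale $\boldsymbol{N}_k=\sum_{i=a}^{k}\varphi_i\boldsymbol{\zeta}_i$. Your Lyapunov contraction gives $\sum_i\varphi_{i+1}\|\Pi_{i+1}^n\|\lesssim1$, so $\max_n\|\boldsymbol{z}_{n+1}\|\lesssim\max_k\|\boldsymbol{N}_k\|$. You must also pay the union over the $m$ blocks on this term. The condition becomes $t^{\kappa/(2+\epsilon)+(1-\kappa)/2-\varphi}=o(t^{1/2-\varphi})$, which holds for every $\kappa>0$ exactly because $\epsilon>0$.

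With these repairs your argument is a valid self-contained proof. The paper gives none; it simply imports Lemma 3.3 of \cite{Li2023Online}, which is your fallback. The Abel identity even gives a shorter route than the block scheme. Subtracting $\boldsymbol{M}_n$ yields $\boldsymbol{y}_{n+1}=\Pi_{t_0}^n\varphi_{t_0}\boldsymbol{M}_n+\sum_{i<n}\Pi_{i+1}^n[(\varphi_{i+1}-\varphi_i)I+\varphi_i\varphi_{i+1}L](\boldsymbol{M}_n-\boldsymbol{M}_i)$. The lemma then reduces to the modulus of continuity of $\boldsymbol{M}_k/\sqrt{t}$ over windows of length $O(t^{\varphi}\log t)=o(t)$. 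That modulus is $o_p(1)$ by Burkholder's and Doob's inequalities on blocks of length $\eta t$, with tail probability $\lesssim\eta^{\epsilon/2}$. This route needs neither the moment bounds of Step 1 nor any tuning of $\kappa$.
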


We recall that $\btheta_i \mathbf{1}_{\{T_{t_0, \eta} > i\}}$ is a martingale difference with $(2+\epsilon)$-th bounded moment for all $i \geq t_0$ \eqref{equ:105}. Thus, we directly apply Lemma \ref{lem:phi_2_error} to \eqref{eq:16}, and get that 
\begin{equation}\label{equ:106}
    \sup_{r \in [0, 1]} \left\|\frac{1}{\sqrt{t}}\sum_{i=t_0}^{\lfloor rt \rfloor -1} \left(A_i^{\lfloor rt \rfloor-1} - A_i^{t-1}\right) \btheta_i \mathbf{1}_{\{T_{t_0, \eta} > i\}}\right\| = o_p(1).
\end{equation}
Therefore, we obtain that
\begin{align}\label{equ:107} 
    \sup_{r \in [0, 1]} & \left\|\frac{1}{\sqrt{t}}\sum_{i=0}^{\lfloor rt \rfloor -1}(A_i^{\lfloor rt \rfloor-1} - A_i^{t-1}) \btheta_i \mathbf{1}_{\{T_{t_0, \eta} > i\}}\right\| \nonumber \\
    & \leq \frac{1}{\sqrt{t}}\sum_{i=0}^{t_0}2\Upsilon_{AR} \left\|\btheta_i \right\| + \sup_{r \in [0, 1]} \left\|\frac{1}{\sqrt{t}}\sum_{i=t_0}^{\lfloor rt \rfloor -1} \left(A_i^{\lfloor rt \rfloor-1} - A_i^{t-1}\right) \btheta_i \mathbf{1}_{\{T_{t_0, \eta} > i\}}\right\|,
\end{align}
where the inequality uses Lemma \ref{lem:B2}. For the first term in right-hand side of \eqref{equ:107}, we get
\begin{equation}\label{equ:108}
    \mE\left( \frac{1}{\sqrt{t}}\sum_{i=0}^{t_0} 2\Upsilon_{AR} \left\|\btheta_i \right\| \right) = \frac{2\Upsilon_{AR}}{\sqrt{t}} \sum_{i=0}^{t_0} \mE[\|\btheta_i \|] \leq \frac{2\Upsilon_{AR}}{\sqrt{t}} \sum_{i=0}^{t_0} \sqrt{\mE[\|\btheta_i \|^2]} \lesssim \frac{1}{\sqrt{t}} \rightarrow 0,
\end{equation}
where the last inequality holds due to $t_0$ is a fixed integer and $\sup_{i \geq 0} \mE[\|\btheta_i\|^2]$ is uniformly bounded according to \eqref{equ:104}. Therefore, we combine \eqref{equ:106} and \eqref{equ:108} with \eqref{equ:107}, and obtain, for any fixed $t_0 > 0$,
\begin{equation}\label{equ:111}
    \sup_{r \in [0, 1]} \left\|\frac{1}{\sqrt{t}}\sum_{i=0}^{\lfloor rt \rfloor -1}(A_i^{\lfloor rt \rfloor-1} - A_i^{t-1}) \btheta_i \mathbf{1}_{\{T_{t_0, \eta} > i\}}\right\| = o_p(1).
\end{equation}

We now show that the indicator $\mathbf{1}_{\{T_{t_0, \eta} > i\}}$ is negligible. First of all, since $\bx_t \stackrel{a.s.}{\rightarrow} \bx^{\star}$, we know that $\mathbf{1}_{\{T_{t_0, \eta} = \infty\}} \stackrel{a.s.}{\rightarrow} 1$ as $t_0 \rightarrow \infty$. This indicates that for any constant $K > 0$, 
\begin{equation}\label{equ:113}
    \lim_{t_0 \to \infty} \mathbb{P}( |\mathbf{1}_{\{T_{t_0, \eta} = \infty\}} - 1| < K ) = 1.
\end{equation}
Since $\mathbb{P}( |\mathbf{1}_{\{T_{t_0, \eta} = \infty\}} - 1| < K ) = \mathbb{P}( T_{t_0, \eta} = \infty)$ for any $0 < K < 1$, combining with \eqref{equ:113} we get $\lim_{t_0 \to \infty} \mathbb{P}( T_{t_0, \eta} = \infty ) = 1$. This suggests that $\lim_{t_0 \to \infty} \mathbb{P}( T_{t_0, \eta} < \infty ) = 0$, which means that for any $\delta' > 0$, there exists $t_0 = t_0(\delta')$, such that 
\begin{equation}\label{equ:111_1}
    \mathbb{P}(T_{t_0, \eta} < \infty) \leq \delta'/2.
\end{equation}

Secondly, we define 
\begin{equation}\label{equ:109}
    \mL_t \coloneqq \sup_{r \in [0, 1]} \left\|\frac{1}{\sqrt{t}}\sum_{i=0}^{\lfloor rt \rfloor -1}(A_i^{\lfloor rt \rfloor-1} - A_i^{t-1}) \btheta_i \right\|; \  \mL_{t, t_0} \coloneqq \sup_{r \in [0, 1]} \left\|\frac{1}{\sqrt{t}}\sum_{i=0}^{\lfloor rt \rfloor -1}(A_i^{\lfloor rt \rfloor-1} - A_i^{t-1}) \btheta_i \mathbf{1}_{\{T_{t_0, \eta} > i\}}\right\|.
\end{equation}
Then we know that for any $t \geq 0$ and $\delta > 0$,
\begin{align*}
    \mathbb{P} (T_{t_0, \eta} = \infty) \stackrel{\eqref{equ:109}}{\leq} \mathbb{P}\left(\left|\mL_{t, t_0} - \mL_t\right| = 0 \right) 
    &\leq \mathbb{P}\left(\left|\mL_{t, t_0} - \mL_t\right| < \frac{\delta}{2} \right).
\end{align*}
Hence we have, for any $t \geq 0$ and $\delta > 0$,
\begin{equation}\label{equ:110}
    \mathbb{P}\left(\left|\mL_{t, t_0} - \mL_t\right| \geq \frac{\delta}{2} \right) < \mathbb{P} (T_{t_0, \eta} < \infty).
\end{equation}

Thirdly, according to the fact that $\mL_{t, t_0} = o_p(1)$ in \eqref{equ:111}, we know that for any $\delta' > 0$, there exists $N = N(t_0, \delta, \delta')$, such that for all $t \geq N$, we have
\begin{equation}\label{equ:111_2}
    \mathbb{P}\left(|\mL_{t, t_0}| \geq \frac{\delta}{2}\right) \leq \frac{\delta'}{2}.
\end{equation}

In the end, for any $\delta, \delta' > 0$, we choose $N = N(t_0(\delta'), \delta, \delta')$, such that for all $t \geq N$, we have
\begin{align*}
    \mathbb{P}(|\mL_t| \geq \delta) &\leq \mathbb{P}\left(\left|\mL_{t, t_0} - \mL_t\right| \geq \frac{\delta}{2} \right) + \mathbb{P}\left(|\mL_{t, t_0}| \geq \frac{\delta}{2}\right) \\
    &\stackrel{\mathclap{\eqref{equ:110}}}{\leq} \ \mathbb{P} (T_{t_0, \eta} < \infty) + \mathbb{P}\left(|\mL_{t, t_0}| \geq \frac{\delta}{2}\right) \\
    &\stackrel{\mathclap{\eqref{equ:111_1}, \eqref{equ:111_2}}}{\leq} \ \; \quad \frac{\delta'}{2} + \frac{\delta'}{2} = \delta',
\end{align*}
which means that $\mL_t = \sup_{r \in [0, 1]} \left\|\frac{1}{\sqrt{t}}\sum_{i=0}^{\lfloor rt \rfloor -1}(A_i^{\lfloor rt \rfloor-1} - A_i^{t-1}) \btheta_i \right\| = o_p(1)$. We complete the proof of \textbf{(c)}.

\vskip4pt
\noindent $\bullet$ \textbf{Proof of (d).} By Lemma \ref{lem:B2}, we know that $\|R_i^{t}\| \leq \Upsilon_{AR}$ for any $i, t \geq 0$, then as $t \to \infty$,
\begin{equation*}
\sup_{r \in [0, 1]} \left\|\frac{1}{\sqrt{t}} \sum_{i=0}^{\lfloor rt \rfloor - 1} R_i^{\lfloor rt \rfloor - 1}  \bdelta_i \right\| \leq \frac{\Upsilon_{AR}}{\sqrt{t}} \sum_{i=0}^{t-1} \|\bdelta_i\| \stackrel{a.s.}{\longrightarrow} 0,
\end{equation*}
where the convergence holds due to \eqref{equ:114}. This completes the proof of \textbf{(d)}.

\vskip4pt
\noindent $\bullet$ \textbf{Proof of (e).} By Lemma \ref{lem:B2}, we know that $\|R_i^{t}\| \leq \Upsilon_{AR}$ for any $i, t \geq 0$, then as $t \to \infty$,
\begin{equation*}
\sup_{r \in [0, 1]}\left\|\frac{1}{\sqrt{t} \varphi_0} R_{0}^{\lfloor rt \rfloor-1} (I - \varphi_0 (I - K^{\star})) (\bx_0 - \bx^{\star})\right\| \leq  \frac{\Upsilon_{AR}}{\sqrt{t} \varphi_0} \left\|(I - \varphi_0 (I - K^{\star})) (\bx_0 - \bx^{\star}) \right\| \rightarrow 0.
\end{equation*}
This completes the proof of \textbf{(e)}.

\subsection{Proof of Theorem \ref{thm:Random_Scaling}}

By Proposition \ref{prop:comp_stats_tradeoff} (b), we have $\Xi^{\star}_{\text{gasn,ave}} \succeq \Omega^{\star}$. Furthermore, by the definition of $\Omega^{\star}$ in \eqref{nsequ:sgd_normal} and the assumption in Theorem \ref{thm:Random_Scaling}, we have $\Omega^{\star} \succ \boldsymbol{0}$. Hence $\Xi^{\star}_{\text{gasn,ave}} \succeq \Omega^{\star} \succ \boldsymbol{0}$.

    We now consider the random function $C_t(r) \coloneqq \frac{1}{\sqrt{t}} \sum_{i = 0}^{\lfloor rt \rfloor - 1} \boldsymbol{w}^{\top} (\bx_i - \bx^{\star})$. Based on Theorem \ref{thm:FCLT}, the one-dimensional random function $C_t(r)$ satisfies:
\begin{equation*} 
    C_t(r) \Longrightarrow \left(\boldsymbol{w}^{\top} \Xi^{\star}_{\text{gasn,ave}} \boldsymbol{w}\right)^{1/2}W_1(r), \qquad r \in [0, 1],
\end{equation*}
where $W_{1}(\cdot)$ stands for the standard one-dimensional Brownian motion. Recall that $\bar {\bx}_t = \frac{1}{t} \sum_{i = 0}^{t-1} \bx_i$. Then, we have
\begin{align*}
    \frac{\sqrt{t} \boldsymbol{w}^{\top} (\bar{\bx}_t - \bx^{\star})}{\sqrt{\boldsymbol{w}^{\top} V_t \boldsymbol{w}}} &= \frac{C_t(1)}{\sqrt{\frac{1}{t} \sum_{i = 1}^{t} \frac{i^2}{t} \left( \boldsymbol{w}^{\top} \left(\bar{\bx}_i - \bar{\bx}_t \right)\right)^2 }} \\
    &= \frac{C_t(1)}{\sqrt{\frac{1}{t} \sum_{i = 1}^{t} \left( \frac{i}{\sqrt{t}} \boldsymbol{w}^{\top} \left[ \frac{1}{i} \sum_{k = 0}^{i-1} (\bx_k - \bx^{\star}) - \frac{1}{t} \sum_{k = 0}^{t-1} (\bx_k - \bx^{\star}) \right] \right)^2}} \\
    &= \frac{C_t(1)}{\sqrt{\frac{1}{t} \sum_{i = 1}^{t} \left( \boldsymbol{w}^{\top} \left[ \frac{1}{\sqrt{t}} \sum_{k = 0}^{i-1} (\bx_k - \bx^{\star}) - \frac{i}{t} \cdot \frac{1}{\sqrt{t}} \sum_{k = 0}^{t-1} (\bx_k - \bx^{\star}) \right] \right)^2}} \\
    &= \frac{C_t(1)}{\sqrt{\frac{1}{t} \sum_{i = 1}^{t} \left( C_t(\frac{i}{t}) - \frac{i}{t} C_t(1) \right)^2}}.
\end{align*}
Since $\frac{C_t(1)}{\sqrt{\int_0^1 (C_t(r) - rC_t(1))^2 \text{d}r}}$ is a continuous functional of $C_t(\cdot)$, together with the extended continuous mapping theorem \cite[Theorem 1.11.1]{Vaart1996Weak} and the fact that $\int_0^1 (W_1(r)-rW_1(1))^2\,dr>0$ almost surely, we have
\begin{align*}
\frac{C_t(1)}{\sqrt{\frac{1}{t}\sum_{i = 1}^{t}\left( C_t\left(\frac{i}{t}\right) - \frac{i}{t}C_t(1)\right)^2}} &  \stackrel{d}{\longrightarrow}   \frac{(\boldsymbol{w}^{\top} \Xi^{\star}_{\text{gasn,ave}} \boldsymbol{w})^{1/2} \cdot W_1(1)}{(\boldsymbol{w}^{\top} \Xi^{\star}_{\text{gasn,ave}} \boldsymbol{w})^{1/2} \cdot \sqrt{\int_{0}^1  \left(W_1(r) - rW_1(1) \right)^2 dr}} \\
& = \frac{W_1(1)}{\sqrt{\int_{0}^1  \left(W_1(r) - rW_1(1) \right)^2 dr}},
\end{align*}
where the convergence also uses the Riemann integral approximation, and the last equality follows since $\Xi^{\star}_{\text{gasn,ave}}  \succ \boldsymbol{0}$ and $\boldsymbol{w} \neq \boldsymbol{0}$. Therefore, we obtain
\begin{equation*}
\frac{\sqrt{t} \boldsymbol{w}^{\top} (\bar{\bx}_t - \bx^{\star})}{\sqrt{\boldsymbol{w}^{\top} V_t \boldsymbol{w}}}  \stackrel{d}{\longrightarrow}  \frac{W_1(1)}{\sqrt{\int_{0}^1  \left(W_1(r) - rW_1(1) \right)^2 dr}},
\end{equation*}
which completes the proof.

\section{Proofs of Technical Lemmas}

\subsection{Proof of Lemma \ref{lem:prop_sketch_solver}}

\noindent $\bullet$ {\textbf{Proof of (a).}} First, we derive the recursion between $(\bz_{t, j+1} - \Delta \bx_t, \bv_{t, j+1} - \Delta \bx_t)$ and $(\bz_{t, j} - \Delta \bx_t, \bv_{t, j} - \Delta \bx_t)$ for any outer loop iteration $t$. By the \texttt{GAS} solver in Algorithm \ref{alg:generalized}, and noticing that $B_t \Delta \bx_t = -\bg_t$, then we have
\begin{align*}
    E_t^{1/2} (\bz_{t, j+1} - \Delta \bx_t) & = E_t^{1/2} (\by_{t, j} - \Delta \bx_t - \boldsymbol{\omega}_{t, j})\\
    & = E_t^{1/2} (\alpha_t (\bv_{t, j} - \Delta\bx_{t}) + (1 - \alpha_t)(\bz_{t, j} - \Delta\bx_{t}) \\
    &  \qquad \qquad - E_t^{-1} B_t S_{t, j}(S_{t, j}^{\top}B_t E_t^{-1} B_t S_{t, j})^{\dagger}S_{t, j}^{\top} B_t ( \by_{t, j} - \Delta \bx_t) ) \\
    &= E_t^{1/2} (I - E_t^{-1} B_t S_{t, j}(S_{t, j}^{\top}B_t E_t^{-1} B_t S_{t, j})^{\dagger}S_{t, j}^{\top} B_t) ( \alpha_t ( \boldsymbol{v}_{t, j} - \Delta\bx_{t}) \\
    & \qquad \qquad + (1 - \alpha_t)(\bz_{t, j} - \Delta\bx_{t}) ) \\
    &= (I - E_t^{-1/2}B_t S_{t, j}(S_{t, j}^{\top}B_t E_t^{-1} B_t S_{t, j})^{\dagger}S_{t, j}^{\top} B_t E_t^{-1/2}) [\alpha_t E_t^{1/2}(\bv_{t, j} - \Delta\bx_{t}) \\
    & \qquad \qquad + (1 - \alpha_t) E_t^{1/2} (\bz_{t, j} - \Delta\bx_{t})]
\end{align*}
Recall that $Z_{t, j} = E_t^{-1/2} B_t S_{t,j} (S_{t,j}^\top B_t E_t^{-1} B_t S_{t,j})^\dagger S_{t,j}^\top B_t E_t^{-1/2}$, hence we have
\begin{align}\label{equ:117} 
    \bz_{t, j+1} - \Delta \bx_t &= E_t^{-1/2} \left(I - Z_{t, j} \right) E_t^{1/2} [\alpha_t (\boldsymbol{v}_{t, j} - \Delta\bx_{t}) + (1 - \alpha_t) (\bz_{t, j} - \Delta\bx_{t})] \nonumber \\
    &= \left(I - E_t^{-1/2} Z_{t, j} E_t^{1/2} \right) [\alpha_t (\boldsymbol{v}_{t, j} - \Delta\bx_{t}) + (1 - \alpha_t) (\bz_{t, j} - \Delta\bx_{t})].
\end{align}
On the other hand, we have
\begin{align*}
    E_t^{1/2} &(\boldsymbol{v}_{t, j+1}  - \Delta \bx_t) = \beta_t E_t^{1/2} ( \boldsymbol{v}_{t, j} - \Delta \bx_t) + (1-\beta_t) E_t^{1/2}( \boldsymbol{y}_{t, j} - \Delta \bx_{t}) \\
    & \qquad \qquad \qquad \qquad - \gamma_t E_t^{-1/2} B_t S_{t, j}(S_{t, j}^{\top}B_tE_t^{-1}B_tS_{t, j})^{\dagger}S_{t, j}^{\top} B_t ( \by_{t, j} - \Delta \bx_t) \\
    &= \beta_t E_t^{1/2} (\boldsymbol{v}_{t, j} - \Delta \bx_t)  + \left((1 - \beta_t) I - \gamma_t Z_{t, j} \right) E_t^{1/2}( \by_{t, j} - \Delta \bx_t) \\
    &= \beta_t E_t^{1/2} ( \boldsymbol{v}_{t, j} - \Delta \bx_t) + \left((1 - \beta_t) I - \gamma_t Z_{t, j} \right) \cdot \left(\alpha_t E_t^{1/2}(\boldsymbol{v}_{t, j} - \Delta \bx_t) + E_t^{1/2}(1 - \alpha_t) (\bz_{t, j} - \Delta \bx_t) \right) \\
    &= \left[(\alpha_t + \beta_t - \alpha_t \beta_t)I - \alpha_t \gamma_t Z_{t, j} \right] E_t^{1/2}( \boldsymbol{v}_{t, j} - \Delta \bx_t) \\
    & \qquad \qquad \qquad \qquad + \left[(1-\alpha_t)(1-\beta_t)I - (1-\alpha_t)\gamma_t Z_{t, j} \right] E_t^{1/2}(\bz_{t, j} - \Delta \bx_t).
\end{align*}
Hence we have
\begin{multline}\label{equ:119}
    (\boldsymbol{v}_{t, j+1} - \Delta \bx_t) = \left[(\alpha_t + \beta_t - \alpha_t \beta_t)I - \alpha_t \gamma_t E_t^{-1/2} Z_{t, j} E_t^{1/2} \right] 
    ( \boldsymbol{v}_{t, j} - \Delta \bx_t) \\
    + \left[(1-\alpha_t)(1-\beta_t)I - (1-\alpha_t)\gamma_t E_t^{-1/2} Z_{t, j} E_t^{1/2} \right] ( \bz_{t, j} - \Delta \bx_t).
\end{multline}
Combining \eqref{equ:117} and \eqref{equ:119}, and recalling the definition of $C_{t, j}$ in  \eqref{def:tilde_C_t}, we have the following recursion formula:
\begin{align}\label{equ:118} 
    &\begin{pmatrix}
        \bz_{t, j+1} - \Delta \bx_t \\
        \boldsymbol{v}_{t, j+1} - \Delta \bx_t
    \end{pmatrix} \nonumber \\
    &= \begin{pmatrix}
        (1 - \alpha_t)(I - E_t^{-1/2}Z_{t, j}E_t^{1/2}) &  \alpha_t(I - E_t^{-1/2}Z_{t, j}E_t^{1/2})\\
        (1 - \alpha_t)(1 - \beta_t)I - (1 - \alpha_t)\gamma_t E_t^{-1/2}Z_{t, j}E_t^{1/2} & (\alpha_t+\beta_t - \alpha_t\beta_t)I - \alpha_t \gamma_t E_t^{-1/2}Z_{t, j}E_t^{1/2}
    \end{pmatrix} \begin{pmatrix}
        \bz_{t, j} - \Delta \bx_t \\
        \boldsymbol{v}_{t, j} - \Delta \bx_t
    \end{pmatrix} \nonumber \\
    &= \begin{pmatrix}
        E_t^{-1/2} & 0\\
        0 & E_t^{-1/2}
    \end{pmatrix}  C_{t, j} \begin{pmatrix}
        E_t^{1/2} & 0\\
        0 & E_t^{1/2}
    \end{pmatrix} \begin{pmatrix}
     \bz_{t, j} - \Delta \bx_t \\
        \boldsymbol{v}_{t, j} - \Delta \bx_t
    \end{pmatrix}.
\end{align}
Applying the above formula \eqref{equ:118} recursively, we have the closed form of $(\bz_{t, \tau} - \Delta \bx_{t}, \boldsymbol{v}_{t, \tau} - \Delta \bx_{t})$:
\begin{align*}
    & \begin{pmatrix}
        \bz_{t, \tau} - \Delta \bx_{t} \\
        \boldsymbol{v}_{t, \tau} - \Delta \bx_{t}
    \end{pmatrix}  = - \begin{pmatrix}
        E_t^{-1/2} & 0\\
        0 & E_t^{-1/2}
    \end{pmatrix} \tilde{C}_t \begin{pmatrix}
        E_t^{1/2} & 0\\
        0 & E_t^{1/2}
    \end{pmatrix}\begin{pmatrix}
        \Delta \bx_{t} \\
        \Delta \bx_{t} 
    \end{pmatrix},
\end{align*}
where the equality holds also uses $\bz_{t, 0} = \boldsymbol{v}_{t, 0} = \boldsymbol{0}$.
Hence we have the closed form for the sketched solver for a fixed $\tau > 0$:
\begin{equation}\label{equ:122} 
    \begin{pmatrix}
        \bz_{t, \tau}  \\
        \boldsymbol{v}_{t, \tau}
    \end{pmatrix} = \left(I -  \begin{pmatrix}
        E_t^{-1/2} & 0\\
        0 & E_t^{-1/2}
    \end{pmatrix} \tilde{C}_t  \begin{pmatrix}
        E_t^{1/2} & 0\\
        0 & E_t^{1/2}
    \end{pmatrix} \right) \begin{pmatrix}
        \Delta \bx_{t} \\
        \Delta \bx_{t} 
    \end{pmatrix}.
\end{equation}
Recall the definition \eqref{def:tilde_H_t} that $\tilde{K}_t = E_t^{-1/2}([\tilde{C}_t]_{1, 1} + [\tilde{C}_t]_{1, 2})E_t^{1/2}$. Then from \eqref{equ:122}, we get
\begin{equation}\label{equ:120}  
     \bz_{t, \tau} = (I - \tilde{K}_t) \Delta \bx_t = - (I - \tilde{K}_t) B_t^{-1} \bg_t,
\end{equation}
where the last equality uses the fact that $B_t\Delta\bx_t = -\bg_t$. Hence we complete the proof of (a).

\vskip4pt

\noindent $\bullet$ {\textbf{Proof of (b).}} From \eqref{equ:120}, we obtain that
\begin{multline}\label{equ:134}  
    \mE[ \bz_{t, \tau} \mid \mF_{t-1}] = \mE[\mE[- (I - \tilde{K}_t) B_t^{-1} \bg_t| \mF_{t - 0.5}]| \mF_{t-1}]\\
    = -(I - K_t)B_t^{-1} \mE[\bg_t| \mF_{t-1}] = -(I - K_t)B_t^{-1} \nabla f_t,
\end{multline}
where the first equality uses the tower property of conditional expectation; the second equality uses the fact that $\mE[\tilde{K}_t | \mF_{t - 0.5}] = K_t$, and $\{S_{t, j}\}_j$ and $\xi_t$ are independent; and the third equality uses $\mE[\bg_t | \mF_{t-1} ] = \nabla f_t$ in Assumption \ref{ass:2}. Therefore, we have
\begin{equation*}
    \mE[\bz_{t, \tau} - (I - K_t)\Delta \bx_t | \mF_{t-1}] \stackrel{\eqref{equ:134}}{=} -(I - K_t)B_t^{-1} \nabla f_t + (I - K_t) B_t^{-1}  \mE[\bg_t | \mF_{t-1} ] = 0,
\end{equation*}
where the first equality also uses $B_t\Delta\bx_t = -\bg_t$ in Algorithm \ref{alg:generalized}, and the second equality uses $\mE[\bg_t | \mF_{t-1} ] = \nabla f_t$ in Assumption \ref{ass:2}. We complete the proof of (b).

\vskip4pt

\noindent $\bullet$ {\textbf{Proof of (c).}} By Lemma \ref{lem:4} and the definition of $\mu_t$ in \eqref{def:mu_t_nu_t}, we obtain $\mu_t \geq \frac{\gamma_E \gamma_S}{\Upsilon_E}$ for all $t \geq 0$, and we also know from \eqref{prop_mu_nu} that $\nu_t \geq 1$. Therefore, we obtain that for all $t \geq 0$,
\begin{equation}\label{equ:135}
    \gamma_t \stackrel{\eqref{def:alpha_beta_gamma_t}}{=} \sqrt{\frac{1}{\mu_t \nu_t}} \leq \sqrt{\frac{\Upsilon_E}{\gamma_E \gamma_S}}.
\end{equation}
Moreover, by the definition of $\tilde{K}_t$, we obtain that
\begin{align*}
    \|\tilde{K}_t\| &\stackrel{\mathclap{\eqref{def:tilde_H_t}}}{\leq} \sqrt{2} \cdot \sqrt{\frac{\Upsilon_E}{\gamma_E}} \cdot \|\tilde{C}_t\|\\
    &\stackrel{\mathclap{\eqref{def:tilde_C_t}}}{\leq} \sqrt{\frac{2\Upsilon_E}{\gamma_E}} \cdot \left((1 - \alpha_t) + \alpha_t + (1-\alpha_t)(1-\beta_t) + (1-\alpha_t)\gamma_t + \alpha_t+\beta_t - \alpha_t\beta_t + \alpha_t \gamma_t \right)^{\tau} \\
    &= \sqrt{\frac{2\Upsilon_E}{\gamma_E}} \cdot (1 + 1 + \gamma_t)^{\tau} \\
    &\stackrel{\mathclap{\eqref{equ:135}}}{\leq} \sqrt{\frac{2\Upsilon_E}{\gamma_E}} \cdot \left( 2 + \sqrt{\frac{\Upsilon_E}{\gamma_E \gamma_S}} \right)^{\tau},
\end{align*}
where the first inequality also uses $\gamma_E I \preceq E_t \preceq \Upsilon_E I$ in Assumption \ref{ass:4}. Hence, we complete the proof of (c).

\vskip4pt

\noindent $\bullet$ \textbf{Proof of (d).} The matrices $\tilde{C}_t$ and $C_t$ defined in \eqref{def:tilde_C_t} and \eqref{def:C_t} share the same structural form, in terms of $(\alpha_t, \beta_t, \gamma_t)$, $Z_{t,j}$, and $Z_t$, as their counterparts defined in \citet[Section 3.2]{Wang2026Inference}. The only differences lie in the definitions of these underlying quantities (i.e., $\alpha_t, \beta_t, \gamma_t, Z_{t,j}, Z_t$), which depend on $E_t$ in our setting. Nevertheless, the relevant properties used in the proof are preserved: $Z_{t,j}$ remains a projection matrix, $Z_t$ remains lower bounded by a constant (cf.\ Lemma~\ref{lem:4}), and $(\alpha_t, \beta_t, \gamma_t)$ satisfies \eqref{prop_mu_nu}. Therefore, we directly follow the same spirit as \citet[Lemma 3.7]{Wang2026Inference}, and obtain that
\begin{equation}\label{equ:136}  
    \|[C_t]_{1,1} + [C_t]_{1,2}\| \leq 2\tau \, (1 - \sqrt{\mu_t / \nu_t})^{\tau - 2}.
\end{equation}
By the definition of $K_t$ in \eqref{def:H_t} and the submultiplicativity of the spectral norm,
\begin{equation*}
    \|K_t\| \stackrel{\eqref{def:H_t}}{\leq} \|E_t^{-1/2}\| \cdot \|[C_t]_{1,1} + [C_t]_{1,2}\| \cdot \|E_t^{1/2}\| \stackrel{\eqref{equ:136}}{\leq} \sqrt{\frac{\Upsilon_E}{\gamma_E}} \cdot 2\tau \, (1 - \sqrt{\mu_t / \nu_t})^{\tau - 2},
\end{equation*}
where the final inequality uses $\gamma_E I \preceq E_t \preceq \Upsilon_E I$ from Assumption~\ref{ass:4}. This completes the proof of (d).

\subsection{Proof of Lemma \ref{lem:converge_rate_and_Bt_converge}}

We first establish the almost sure convergence of $B_t$. By the definition of $B_t$ given immediately before \eqref{equ:update}, and recall that $B^{\star} = \nabla^2 f^{\star}$ , we obtain
\begin{align}\label{equ:42}
    \|B_t - B^{\star}\| &= \left\|\frac{1}{t}\sum_{i = 0}^{t-1}(H_i - \nabla^2 f_i) + \frac{1}{t}\sum_{i = 0}^{t-1} (\nabla^2 f_i - \nabla^2f^{\star})\right\| \nonumber \\
    &\leq \left\|\frac{1}{t}\sum_{i = 0}^{t-1}(H_i - \nabla^2 f_i)\right\| + \left\| \frac{1}{t}\sum_{i = 0}^{t-1} (\nabla^2 f_i - \nabla^2f^{\star})\right\|  \nonumber\\
    &\leq \left\|\frac{1}{t}\sum_{i = 0}^{t-1}(H_i - \nabla^2 f_i)\right\| + \frac{\Upsilon_L}{t} \sum_{i = 0}^{t-1} \|\bx_i - \bx^{\star}\|,
\end{align}
where the last inequality holds due to Assumption \ref{ass:1}. The second term in the right-hand side of \eqref{equ:42} converges to $0$ almost surely due to the fact that $\bx_t \stackrel{a.s.}{\to} \bx^{\star}$ and Stolz--Ces\`aro theorem (cf. Lemma \ref{aux:lem:stolz-cesaro}). For the first term in the right-hand side of \eqref{equ:42}, we know that $H_i - \nabla^2 f_i$ is a martingale difference. Moreover, by Assumption \ref{ass:2}, we have
\begin{equation}\label{equ:54}
    \mE[\|H_i - \nabla^2f_i\|^2] \leq C_{H, 1}\mE[\|\bx_i - \bx^{\star}\|^2] + C_{H, 2} \stackrel{\eqref{equ:53}}{\leq} C_{H, 3},
\end{equation}
for some constant $C_{H, 3} > 0$. Therefore, we obtain that, for all $t$,
\begin{equation*}
    \mE\left[ \left\|\sum_{i = 0}^{t-1} \frac{1}{i+1}(H_i - \nabla^2 f_i)\right\|_F^2 \right] = \sum_{i = 0}^{t-1} \mE\left[ \frac{1}{(i+1)^2} \|H_i - \nabla^2 f_i\|_F^2 \right] \stackrel{\eqref{equ:54}}{\leq} \sum_{i = 0}^{\infty} \frac{d \cdot C_{H, 3}}{(i+1)^2} < \infty,
\end{equation*}
hence the martingale $\sum_{i = 0}^{t-1} \frac{1}{i+1}(H_i - \nabla^2 f_i)$ is $L^2$ bounded. Then by Doob's Martingale convergence theorem \citep[Corollary 2.2]{Hall2014Martingale}, $\sum_{i = 0}^{t-1} \frac{1}{i+1}(H_i - \nabla^2 f_i)$ converges almost surely. Then by Kronecker's lemma, we have $\frac{1}{t}\sum_{i = 0}^{t-1}(H_i - \nabla^2 f_i) \stackrel{a.s.}{\to} 0$. Therefore, we get $B_t \stackrel{a.s.}{\to} B^{\star}$.

For the $L^2$-convergence,  by \eqref{equ:42}, we further obtain that
\begin{align}\label{equ:81}  
    \mE[\|B_t - B^\star\|^2] &\leq 2\mE\left\|\frac{1}{t}\sum_{i = 0}^{t-1}(H_i - \nabla^2 f_i)\right\|^2 + 2 \mE \left( \frac{\Upsilon_L}{t} \sum_{i = 0}^{t-1} \|\bx_i - \bx^{\star}\| \right)^2 \nonumber \\
    &\leq 2\mE\left\|\frac{1}{t}\sum_{i = 0}^{t-1}(H_i - \nabla^2 f_i)\right\|^2 + \frac{2\Upsilon_L^2}{t^2} \mE \left(  \sum_{i = 0}^{t-1} \|\bx_i - \bx^{\star}\| \right)^2 \nonumber \\
    &\leq 2\mE\left\|\frac{1}{t}\sum_{i = 0}^{t-1}(H_i - \nabla^2 f_i)\right\|^2 + \frac{2\Upsilon_L^2}{t^2} \left(  \sum_{i = 0}^{t-1} \left(\mE\|\bx_i - \bx^{\star}\|^2 \right)^{1/2} \right)^2,
\end{align}
where the last inequality uses Minkowski's inequality. For the first term in right-hand side of \eqref{equ:81}, we have
\begin{align}\label{equ:82}
     \mathbb{E}\left\|\frac{1}{t}\sum_{i = 0}^{t-1} (H_i - \nabla^2 f_i) \right\|^2 &\leq \mathbb{E}\left\|\frac{1}{t}\sum_{i = 0}^{t-1} (H_i - \nabla^2 f_i) \right\|^2_{F} = \frac{1}{t^2} \text{Tr}\left( \mathbb{E}\left( \sum_{i = 0}^{t-1}(H_i - \nabla^2 f_i) \right)^2 \right) \nonumber\\
     &= \frac{1}{t^2} \text{Tr} \left( \sum_{i = 0}^{t-1} \mathbb{E} (H_i - \nabla^2 f_i)^2 \right)  \qquad ( H_i - \nabla^2 f_i \ \text{is a martingale difference.}) \nonumber \nonumber\\
     &= \frac{1}{t^2}   \sum_{i = 0}^{t-1} \text{Tr} \left(\mathbb{E} (H_i - \nabla^2 f_i)^2 \right) = \frac{1}{t^2} \sum_{i = 0}^{t-1} \mE[\|H_i - \nabla^2 f_i\|^2_F] \nonumber\\
     & \leq \frac{d}{t^2} \sum_{i = 0}^{t-1} \mE[\|H_i - \nabla^2 f_i\|^2_2] = \frac{d}{t^2} \sum_{i = 0}^{t-1} \mE[\mE[\|H_i - \nabla^2 f_i\|^2_2 \mid \mF_{i-1}]] \nonumber\\
     &\stackrel{\mathclap{\eqref{ass:3:4th}}}{\leq} \frac{d}{t^2} \sum_{i = 0}^{t-1} \mE[C_{H, 1}\|\bx_i - \bx^{\star}\|^2 + C_{H, 2}] \nonumber\\
     &\lesssim \ \frac{d}{t},
\end{align}
where the last inequality holds due to Theorem \ref{thm:as_convergence}. For the second term right-hand side of \eqref{equ:81}, we have
\begin{multline}\label{equ:83}
    \frac{1}{t^2} \left(  \sum_{i = 0}^{t-1} \left(\mE\|\bx_i - \bx^{\star}\|^2 \right)^{1/2} \right)^2 \lesssim \frac{1}{t^2} \cdot \left( \sum_{i = 0}^{t-1} \sqrt{\varphi_i} \right)^2 = \frac{C_{\varphi}}{t^2} \rbr{\sum_{i=0}^{t-1}\frac{1}{(i+1)^{0.5\varphi}}}^2 \\
     \leq \frac{C_{\varphi}}{t^2} \rbr{1 + \int_{0}^{t}\frac{1}{(i+1)^{0.5\varphi}}\;di }^2 \lesssim \frac{C_{\varphi}}{(t+1)^\varphi} = \varphi_t,
\end{multline}
where the first inequality also uses Theorem \ref{thm:as_convergence}. Combining \eqref{equ:81}, \eqref{equ:82}, and \eqref{equ:83}, we obtain that $\mE[\|B_t - B^{\star}\|^2] \lesssim \varphi_t$. Hence, we complete the proof of the lemma.

\subsection{Proof of Lemma \ref{lem:bound_nu_t_mu_t}}

We now present the following lemma regarding the Lipschitz property of the two projection matrices 
\begin{equation}\label{nsequ:11}
\begin{aligned}
\tilde{Z}_t & = E_t^{-1/2}B_t S (S^{\top}B_t E_t^{-1} B_tS)^{\dagger}S^{\top}B_t E_t^{-1/2},\\
\tilde{Z}^{\star} & = (E^{\star})^{-1/2}B^{\star} S (S^{\top}B^{\star} (E^{\star})^{-1} B^{\star} S)^{\dagger}S^{\top}B^{\star} (E^{\star})^{-1/2}.
\end{aligned}    
\end{equation}

\begin{lemma}[Adapted Lemma 5.2 in \cite{Na2025Statistical}]\label{lem:continuity_projection_matrix}
Suppose $B_t, E_t, B^{\star}, E^{\star} \in \mR^{d \times d}$ are positive definite, and satisfy $\gamma_E I \preceq E_t, E^{\star} \preceq \Upsilon_E I$, $\gamma_H I \preceq B_t, B^{\star} \preceq \Upsilon_H I$. Consider $\tilde{Z}_t$ and $\tilde{Z}^{\star}$ defined as in \eqref{nsequ:11}, for any $S \in \mR^{d \times s}$, we have
\begin{equation*}
\left\|\tilde{Z}_t-\tilde{Z}^{\star}\right\| \leq \frac{2 \sqrt{\Upsilon_E}}{\gamma_H} \left( \frac{1}{\sqrt{\gamma_E}} \|B_t - B^{\star}\| + \frac{\Upsilon_H}{2\gamma_E^{3/2}}\|E_t - E^{\star}\|\right) \cdot \|S\|\|S^{\dagger}\|.
\end{equation*}            
\end{lemma}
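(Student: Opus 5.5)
We need to prove Lemma (continuity of projection matrix): $\|\tilde Z_t-\tilde Z^\star\| \le \frac{2\sqrt{\Upsilon_E}}{\gamma_H}\big(\frac{1}{\sqrt{\gamma_E}}\|B_t-B^\star\|+\frac{\Upsilon_H}{2\gamma_E^{3/2}}\|E_t-E^\star\|\big)\|S\|\|S^\dagger\|$.

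The plan is to write each $\tilde Z$ as an orthogonal projection and apply a perturbation bound for projections. Set $A_t \coloneqq E_t^{-1/2}B_t S$ and $A^\star \coloneqq (E^\star)^{-1/2}B^\star S$. Then
\[
\tilde Z_t = A_t(A_t^\top A_t)^\dagger A_t^\top
\]
is the orthogonal projector onto $\mathrm{Range}(A_t)$, and likewise $\tilde Z^\star$ projects onto $\mathrm{Range}(A^\star)$. Because $E_t^{-1/2}B_t$ and $(E^\star)^{-1/2}B^\star$ are invertible, both ranges have dimension $\mathrm{rank}(S)$. I would invoke the standard bound for orthogonal projectors onto equal-rank ranges, as in the argument behind \citet[Lemma 5.2]{Na2025Statistical}:
\[
\|P_{\mathrm{Range}(A_t)}-P_{\mathrm{Range}(A^\star)}\| \le \|A_t-A^\star\|\cdot\min\{\|A_t^\dagger\|,\|(A^\star)^\dagger\|\}.
\]
To prove it, write $P_t-P^\star = P_t(I-P^\star)-(I-P_t)P^\star$. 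Then $\|(I-P_t)P^\star\| = \|(I-P_t)(A^\star-A_t)(A^\star)^\dagger\| \le \|A^\star-A_t\|\,\|(A^\star)^\dagger\|$. With equal ranks the two off-diagonal pieces have equal norm, so the bound holds with either pseudoinverse. A cruder version that uses the sum of the two pieces costs only a factor of 2, and the target constant has that factor of 2 to spare.

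Next I would bound the two factors.

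\textbf{Pseudoinverse factor.} Write $A^\star = M S$ with $M$ invertible, and restrict to the range of $S$. For $x \in \mathrm{Range}(S^\top)$ we have $\|MSx\| \ge \sigma_{\min}(M)\|Sx\| \ge \sigma_{\min}(M)\|x\|/\|S^\dagger\|$. Hence $\|(A^\star)^\dagger\| \le \|S^\dagger\|/\sigma_{\min}(M)$. Since $\sigma_{\min}((E^\star)^{-1/2}B^\star) \ge \gamma_H/\sqrt{\Upsilon_E}$, this gives $\|(A^\star)^\dagger\| \le \sqrt{\Upsilon_E}\,\|S^\dagger\|/\gamma_H$.

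\textbf{Difference factor.} Decompose
\[
A_t-A^\star = E_t^{-1/2}(B_t-B^\star)S + \big(E_t^{-1/2}-(E^\star)^{-1/2}\big)B^\star S .
\]
The first term is at most $\gamma_E^{-1/2}\|B_t-B^\star\|\|S\|$. For the second term I need a Lipschitz bound on the inverse square root. Use the identity $X^{-1/2}-Y^{-1/2} = X^{-1/2}(Y^{1/2}-X^{1/2})Y^{-1/2}$ together with the operator-monotone square-root estimate $\|X^{1/2}-Y^{1/2}\| \le \|X-Y\|/(\lambda_{\min}(X)^{1/2}+\lambda_{\min}(Y)^{1/2})$. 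This yields
\[
\|E_t^{-1/2}-(E^\star)^{-1/2}\| \le \gamma_E^{-1}\cdot\frac{\|E_t-E^\star\|}{2\sqrt{\gamma_E}} = \frac{\|E_t-E^\star\|}{2\gamma_E^{3/2}},
\]
so the second term is at most $\frac{\Upsilon_H}{2\gamma_E^{3/2}}\|E_t-E^\star\|\|S\|$.

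Multiplying the two factors, and keeping the factor 2 from the crude projector bound, gives exactly the stated constant.

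The main obstacle is the projector perturbation step. A naive expansion of $(A_t^\top A_t)^\dagger$ can fail because the pseudoinverse is discontinuous. The argument is saved by the equal-rank property, since both ranges have dimension $\mathrm{rank}(S)$. So the care goes into bounding $\|A^\dagger\|$ only on $\mathrm{Range}(S^\top)$ and into using the $(I-P)(A-A')A'^\dagger$ identity, rather than into differentiating the pseudoinverse.
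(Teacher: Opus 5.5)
Your proof is correct and has the same skeleton as the paper's. Both arguments run:
\begin{itemize}
\item a projector perturbation bound $\|\tilde Z_t-\tilde Z^\star\|\le 2\|E_t^{-1/2}B_t-(E^\star)^{-1/2}B^\star\|\,\|S\|\|S^\dagger\|/\sigma_{\min}((E^\star)^{-1/2}B^\star)$;
\item the same triangle-inequality split of $E_t^{-1/2}B_t-(E^\star)^{-1/2}B^\star$;
\item the same estimate $\|E_t^{-1/2}-(E^\star)^{-1/2}\|\le\|E_t-E^\star\|/(2\gamma_E^{3/2})$;
\item the same bound $\sigma_{\min}((E^\star)^{-1/2}B^\star)\ge\gamma_H/\sqrt{\Upsilon_E}$.
\end{itemize}

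You differ in how the two nontrivial ingredients are obtained. The paper imports the projector bound from \citet[Lemma 5.2]{Na2025Statistical}, whose proof rests on Wedin's $\sin\Theta$ theorem. It needs a footnote asserting that the cited argument survives when $E_t^{-1/2}B_t$ is not symmetric. It also cites \citet[Lemma B.2]{Wang2026Inference} for the inverse-square-root estimate.

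You prove both directly. For the projector bound you use the identity $(I-P_t)P^\star=(I-P_t)(A^\star-A_t)(A^\star)^\dagger$, together with the bound on $\|(MS)^\dagger\|$ obtained by restricting to $\mathrm{Range}(S^\top)$. For the inverse square root you use $X^{-1/2}-Y^{-1/2}=X^{-1/2}(Y^{1/2}-X^{1/2})Y^{-1/2}$ plus the square-root estimate.

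The paper's route is shorter. Yours is self-contained and makes it evident that no symmetry of $E^{-1/2}B$ is ever needed. It also shows the factor $2$ is slack. The pieces $P_t(I-P^\star)$ and $(I-P_t)P^\star$ have mutually orthogonal ranges and mutually orthogonal row spaces, so $\|P_t-P^\star\|$ equals the larger of their two norms. This gives constant $1$ without the equal-rank fact you state but do not prove.

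One thing to state explicitly: this constant-$1$ version, like your cruder factor-$2$ version, uses the pseudoinverse bound for $A_t$ as well as for $A^\star$. That bound holds by the identical argument, since $E_t$ and $B_t$ obey the same spectral bounds.
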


By Lemma~\ref{lem:continuity_projection_matrix}, the discrepancy between the projection matrices can be bounded as
\begin{equation} \label{equ:B_bound_projection}
\mathbb{E} [\|\tilde Z_t - \tilde Z^{\star}\|\mid \mathcal F_{t-1}] \lesssim \left( \|B_t - B^{\star}\| + \|E_t - E^{\star}\| \right) \mathbb{E}\!\left[\|S\|\,\|S^{\dagger}\|\right]\;\lesssim\; \|B_t - B^{\star}\| + \|E_t - E^{\star}\|.
\end{equation}
Here, the first inequality follows from Lemma \ref{lem:continuity_projection_matrix} together with the independence between $S$ and $\mathcal F_{t-1}$ implied by Assumption \ref{ass:4}, while the second one is a direct consequence of Assumption \ref{ass:4}.

For $M_t$ defined in \eqref{def:C_t}, we find the following uniform upper bound:
\begin{align}\label{equ:142}  
\|M_t\| & \; \stackrel{\mathclap{\eqref{def:C_t}}}{\leq} \; \|I - Z_{t}\| + (1-\alpha_t)(1-\beta_t) + (1-\alpha_t)\gamma_t \|Z_{t}\| + (\alpha_t + \beta_t - \alpha_t\beta_t) + \alpha_t \gamma_t \|Z_{t}\| \nonumber\\
& \; \leq\; \|I - Z_{t}\| + 1 + \gamma_t \|Z_{t}\| \leq  1 + 1 + \gamma_t \leq 2+ \sqrt{\frac{\Upsilon_E}{\gamma_E \gamma_S}},
\end{align}
where the last inequality uses Lemma \ref{lem:prop_sketch_solver}(c). Therefore, we find the uniform upper bound for $C_t$:
\begin{equation}\label{equ:143} 
    \|C_t\| \stackrel{\eqref{def:C_t}}{\leq} \|M_t\|^{\tau} \stackrel{\eqref{equ:142}}{\leq} \left( 2+ \sqrt{\frac{\Upsilon_E}{\gamma_E \gamma_S}} \right)^{\tau}.
\end{equation}

We now need the following lemma to proceed with the proof.

\begin{lemma}\label{lem:7}  
    Under the conditions of Lemma \ref{lem:bound_nu_t_mu_t}, we have $\max \{|\alpha_t - \alpha^{\star}|, |\beta_t-\beta^\star|, |\gamma_t-\gamma^\star| \} \lesssim \mathbb{E} [\|\tilde Z_t - \tilde Z^{\star}\|\mid \mathcal F_{t-1}]$. Furthermore, we have $(\alpha_t,\beta_t,\gamma_t) \stackrel{a.s.}{\rightarrow} (\alpha^{\star}, \beta^{\star}, \gamma^{\star})$.
\end{lemma}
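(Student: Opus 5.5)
We need to prove Lemma \ref{lem:7}: $|\alpha_t-\alpha^\star|$, $|\beta_t-\beta^\star|$ and $|\gamma_t-\gamma^\star|$ are all $\lesssim \mE[\|\tilde Z_t-\tilde Z^\star\|\mid\mF_{t-1}]$, and $(\alpha_t,\beta_t,\gamma_t)\to(\alpha^\star,\beta^\star,\gamma^\star)$ almost surely.

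The plan is to reduce everything to Lipschitz continuity of $(\mu,\nu)$ as functionals of the pair $(\tilde Z,Z)$. By \eqref{def:alpha_beta_gamma_t}, $(\alpha,\beta,\gamma)$ are smooth functions of $(\mu,\nu)$, and these functions have bounded derivatives on a compact set. That set is determined by the uniform bounds
\[
\mu_t,\mu^\star \ge \gamma_E\gamma_S/\Upsilon_E \quad\text{(Lemma \ref{lem:4})},\qquad 1\le \nu \le 1/\mu \quad\text{(from \eqref{prop_mu_nu})}.
\]
On this region we have:
\begin{itemize}
\item $\gamma=(\mu\nu)^{-1/2}$ is Lipschitz in $(\mu,\nu)$;
\item $\beta=1-\sqrt{\mu/\nu}$ is Lipschitz;
\item $\alpha=1/(1+\gamma\nu)$ is Lipschitz in $(\gamma,\nu)$, since its denominator is at least $1$.
\end{itemize}
It therefore suffices to show $|\mu_t-\mu^\star|+|\nu_t-\nu^\star|\lesssim \mE[\|\tilde Z_t-\tilde Z^\star\|\mid\mF_{t-1}]$.

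\textbf{Bound for $\mu$.} Here $\mu_t=\lambda_{\min}(Z_t)$ and $\mu^\star=\lambda_{\min}(Z^\star)$. By Weyl's inequality and Jensen's inequality,
\[
|\mu_t-\mu^\star|\le\|Z_t-Z^\star\|\le \mE[\|\tilde Z_t-\tilde Z^\star\|\mid\mF_{t-1}].
\]
This uses that $S$ is independent of $\mF_{t-1}$ (Assumption \ref{ass:4}), so that $Z^\star=\mE[\tilde Z^\star\mid \mF_{t-1}]$.

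\textbf{Bound for $\nu$.} Write $\nu=\lambda_{\max}\bigl(Z^{-1/2}\,\mE[\tilde Z Z^{-1}\tilde Z]\,Z^{-1/2}\bigr)$, again by Weyl's inequality. We then control the perturbation of $G:=Z^{-1/2}\mE[\tilde ZZ^{-1}\tilde Z]Z^{-1/2}$ in three pieces.
\begin{enumerate}
\item Since $Z_t,Z^\star\succeq c I$ with $c=\gamma_E\gamma_S/\Upsilon_E$, we get $\|Z_t^{-1}-(Z^\star)^{-1}\|\le c^{-2}\|Z_t-Z^\star\|$. For the square roots, $\|Z_t^{-1/2}-(Z^\star)^{-1/2}\|\lesssim\|Z_t-Z^\star\|$; this follows from the standard bound $\|A^{1/2}-B^{1/2}\|\le \|A-B\|/(\lambda_{\min}(A)^{1/2}+\lambda_{\min}(B)^{1/2})$ applied to the inverses.
\item For the middle term we use that $\tilde Z_t$ and $\tilde Z^\star$ are orthogonal projections, so their norms are at most $1$. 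A telescoping argument then gives
\[
\|\tilde Z_tZ_t^{-1}\tilde Z_t-\tilde Z^\star(Z^\star)^{-1}\tilde Z^\star\|\le 2c^{-1}\|\tilde Z_t-\tilde Z^\star\|+c^{-2}\|Z_t-Z^\star\|.
\]
Taking the conditional expectation over $S$ yields a bound by $\mE[\|\tilde Z_t-\tilde Z^\star\|\mid\mF_{t-1}]$.
\item All factors in $G$ are uniformly bounded, since $\|Z^{-1/2}\|\le c^{-1/2}$ and the middle term is at most $c^{-1}$. Hence $\|G_t-G^\star\|$ is bounded by a constant times the sum of the above differences, which is $\lesssim \mE[\|\tilde Z_t-\tilde Z^\star\|\mid\mF_{t-1}]$.
\end{enumerate}

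\textbf{Almost-sure convergence.} Combine the first claim with \eqref{equ:B_bound_projection}, which gives $\mE[\|\tilde Z_t-\tilde Z^\star\|\mid\mF_{t-1}]\lesssim\|B_t-B^\star\|+\|E_t-E^\star\|$. The right-hand side tends to zero almost surely by Lemma \ref{lem:converge_rate_and_Bt_converge} and Assumption \ref{ass:5}.

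The main obstacle is a subtle point in \eqref{equ:B_bound_projection}. Lemma \ref{lem:continuity_projection_matrix} requires $E^\star$ and $B^\star$ to satisfy the same spectral bounds as $E_t$ and $B_t$. These follow from closedness under limits: $E^\star$ is an almost-sure limit of the $E_t$, and $B^\star=\nabla^2 f^\star$ inherits the bounds from Assumption \ref{ass:1}. We also need the conditional expectation of $\|S\|\|S^\dagger\|$ to be finite, which is given by Assumption \ref{ass:4}.
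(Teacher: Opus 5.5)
Your argument is correct and has the same skeleton as the paper's proof. You treat $(\alpha,\beta,\gamma)$ as $C^1$ functions of $(\mu,\nu)$ with bounded gradients on a compact set. You bound $|\mu_t-\mu^\star|\le\|Z_t-Z^\star\|\le\mE[\|\tilde Z_t-\tilde Z^\star\|\mid\mF_{t-1}]$ by Weyl and Jensen. You then get almost-sure convergence from \eqref{equ:B_bound_projection}, Lemma~\ref{lem:converge_rate_and_Bt_converge} and Assumption~\ref{ass:5}.

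The one real difference is the $\nu$-step. The paper does not carry it out; it appeals to the perturbation argument of \citet[Lemma~4.1]{Wang2026Inference} and notes that the argument only uses that $\tilde Z_t,\tilde Z^\star$ are projections and that $Z_t,Z^\star\succeq cI$, where $c=\gamma_E\gamma_S/\Upsilon_E$. You instead give a self-contained proof: you write $\nu=\lambda_{\max}(Z^{-1/2}\mE[\tilde ZZ^{-1}\tilde Z]Z^{-1/2})$ and perturb each factor using only $\|\tilde Z\|\le 1$, $Z\succeq cI$, and the square-root perturbation bound. This removes the dependence on the external reference and makes explicit which properties are actually needed.

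Two small repairs are needed.
\begin{itemize}
\item Lemma~\ref{lem:4} gives $Z_t\succeq cI$ but not $Z^\star\succeq cI$. Obtain the latter as the paper does, by passing to the limit in $Z_t\succeq cI$ using $\|Z_t-Z^\star\|\lesssim\|B_t-B^\star\|+\|E_t-E^\star\|\to 0$ a.s. You need this both for the bound on $(Z^\star)^{-1}$ in your $\nu$ estimate and to place $(\mu^\star,\nu^\star)$ in the compact region.
\item The region $\{c\le\mu\le 1,\ 1\le\nu\le 1/\mu\}$ is not convex. State the Lipschitz (mean-value) step on the convex rectangle $[c,1]\times[1,1/c]$ that contains it.
\end{itemize}
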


By the almost-sure convergence $(\alpha_t,\beta_t,\gamma_t)$ (Lemma \ref{lem:7}) and the fact that $\|Z^{\star}\| \leq 1$, the same bound holds for $\|M^{\star}\|$ (see the definition of $M^{\star}$ in \eqref{def:C_star}). Therefore, using the fact that $C^{\star} = (M^{\star})^{\tau}$, we have
\begin{equation}\label{equ:145}    
    \max\{\|C_t\|, \|C^{\star}\|\} \leq \left( 2+ \sqrt{\frac{\Upsilon_E}{\gamma_E \gamma_S}} \right)^{\tau}.
\end{equation}

By the definition of $K_t$ and $K^{\star}$ in \eqref{def:H_t} and \eqref{def:H_star}, respectively, we have
\begin{align}\label{equ:146}
    \|K_t - K^{\star}\| \lesssim \|E_t - E^{\star}\| + \|C_t - C^{\star}\|,
\end{align}
where the inequality uses the fact that $\max \{\|E_t^{1/2} - (E^{\star})^{1/2}\|,  \|E_t^{-1/2} - (E^{\star})^{-1/2}\|\} \lesssim \|E_t - E^{\star}\|$, and the uniformly boundedness of $\|E_t\|$ and $\|E^{\star}\|$ (Assumption \ref{ass:5}), as well as $\|C_t\|$ and $\|C^{\star}\|$ (cf. \eqref{equ:145}). For the second term in the right-hand side of \eqref{equ:146}, we have
\begin{align}\label{equ:147} 
    \|C_t - C^{\star}\| &= \|(M_t)^{\tau} - (M^{\star})^{\tau}\| = \left\| \sum_{k = 0}^{\tau - 1} M_t^{\tau - 1 - k}(M_t - M^{\star}) (M^{\star})^k  \right\| \\
    & \leq \left(\sum_{k = 0}^{\tau - 1} \|M_t\|^{\tau -1 -k} \|M^{\star}\|^{k}\right) \|M_t - M^{\star}\| \stackrel{\eqref{equ:142}}{\lesssim} \|M_t - M^{\star}\|,
\end{align}
where the last inequality also uses the fact that $\|M^{\star}\|$ has the same bound as $\|M_t\|$. Now we bound $\|M_t - M^{\star}\|$ as follows:
\begin{align} \label{equ:M_bound}
\|M_t - M^{\star}\| &\leq \|(1 - \alpha_t)(I - Z_t) - (1 - \alpha^{\star})(I - Z^{\star})\| + \|\alpha_t(I - Z_t) - \alpha^{\star} (I - Z^{\star})\| \nonumber\\
& \quad + \|(1 - \alpha_t)(1 - \beta_t)I - (1 - \alpha_t)\gamma_t Z_t - (1 - \alpha^{\star})(1 - \beta^{\star})I + (1 - \alpha^{\star}) \gamma^{\star} Z^{\star}\| \nonumber\\
& \quad + \|(\alpha_t+\beta_t - \alpha_t\beta_t)I - \alpha_t \gamma_t Z_t - (\alpha^{\star}  + \beta^{\star} - \alpha^{\star} \beta^{\star})I + \alpha^{\star} \gamma^{\star} Z^{\star}\| \nonumber\\
& \lesssim |\alpha_t - \alpha^{\star}| + |\beta_t - \beta^{\star}| + |\gamma_t - \gamma^{\star}| + \|Z_t - Z^{\star}\| \nonumber\\
& \lesssim \; \mathbb{E} [\|\tilde Z_t - \tilde Z^{\star}\|\mid \mathcal F_{t-1}] \stackrel{\eqref{equ:B_bound_projection}}{\lesssim} \|B_t - B^{\star}\| + \|E_t - E^{\star}\|.
\end{align}
Here, the second inequality uses the bounds $\alpha_t < 1$, $\beta_t < 1$, $\gamma_t \leq \sqrt{\frac{\Upsilon_E}{\gamma_E \gamma_S}}$, and $\|Z_t\| \leq 1$, and the third inequality uses Lemma \ref{lem:7} and the fact that $\|Z_t - Z^{\star}\| = \|\mathbb{E}[\tilde{Z}_t - \tilde{Z}^{\star} \mid \mathcal{F}_{t-1}] \| \leq \mathbb{E}[\|\tilde{Z}_t - \tilde{Z}^{\star}\| \mid \mathcal{F}_{t-1}]$. Therefore, we apply \eqref{equ:147} and \eqref{equ:M_bound} into \eqref{equ:146}, and obtain that
\begin{equation*}
    \|K_t - K^{\star}\| \lesssim \|B_t - B^{\star}\| + \|E_t - E^{\star}\|.
\end{equation*}
Since $B_t \stackrel{a.s.}{\to} B^{\star}$ in Lemma \ref{lem:converge_rate_and_Bt_converge} and $E_t \stackrel{a.s.}{\to} E^{\star}$ in Assumption \ref{ass:5}, we further have $K_t \stackrel{a.s.}{\to} K^{\star}$. Hence, we complete the proof of the lemma.

\subsection{Proof of Lemma \ref{lem:error_recursion}}
       
 From Lemma \ref{lem:prop_sketch_solver}(b), we know that $ \bz_{t, \tau} - (I - K_t) \Delta \bx_t$ is a martingale difference. Recall that we define $B^{\star} = \nabla^2f^{\star}$, we now decompose the error $\bx_{t+1} - \bx^{\star}$ as follows:
\begin{align*}
    \bx_{t+1} - \bx^{\star} &= \bx_t - \bx^{\star} + \varphi_t  \bz_{t, \tau} \\
    &= \bx_t - \bx^{\star} + \varphi_t (I - K_t) \Delta \bx_t + \varphi_t (\bz_{t, \tau} - (I - K_t)\Delta \bx_t) \\
    &= \bx_t - \bx^{\star} - \varphi_t (I - K_t) B_t^{-1} \bg_t + \varphi_t ( \bz_{t, \tau} - (I - K_t)\Delta \bx_t) \\
    &= \bx_t - \bx^{\star} - \varphi_t (I - K_t) B_t^{-1} \nabla f_t - \varphi_t (I - K_t) B_t^{-1} (\bg_t - \nabla f_t ) + \varphi_t (\bz_{t, \tau} - (I - K_t)\Delta \bx_t) \\
    &=  \bx_t - \bx^{\star} - \varphi_t (I - K_t)(B^{\star})^{-1} \nabla f_t - \varphi_t (I - K_t) [B_t^{-1} - (B^{\star})^{-1}] \nabla f_t +\varphi_t\boldsymbol{\theta}_t \\
    &= \{I - \varphi_t(I - K_t)\}(\bx_t - \bx^{\star}) - \varphi_t (I - K_t)[(B^{\star})^{-1}(\nabla f_t - B^{\star}(\bx_t - \bx^{\star}))] \\
    & \quad \  - \varphi_t (I - K_t) [B_t^{-1} - (B^{\star})^{-1}] \nabla f_t +\varphi_t\boldsymbol{\theta}_t \\
    &= \{I - \varphi_t(I - K^{\star})\}(\bx_t - \bx^{\star}) + \varphi_t (K_t - K^{\star}) (\bx_t - \bx^{\star}) \\
    & \quad \  - \varphi_t (I - K_t)[(B^{\star})^{-1}(\nabla f_t - B^{\star}(\bx_t - \bx^{\star}))] - \varphi_t (I - K_t) [B_t^{-1} - (B^{\star})^{-1}] \nabla f_t +\varphi_t\boldsymbol{\theta}_t \\
    &= \{I - \varphi_t(I - K^{\star})\}(\bx_t - \bx^{\star}) + \varphi_t \boldsymbol{\delta}_t + \varphi_t \boldsymbol{\theta}_t,
\end{align*}
where $\boldsymbol{\theta}_t = -  (I - K_t) B_t^{-1} (\bg_t - \nabla f_t ) + (\bz_{t, \tau} - (I - K_t)\Delta \bx_t)$ is a martingale difference, and $\boldsymbol{\delta}_t = (K_t - K^{\star}) (\bx_t - \bx^{\star}) - (I - K_t)[(B^{\star})^{-1}(\nabla f_t - B^{\star}(\bx_t - \bx^{\star}))] - (I - K_t) [B_t^{-1} - (B^{\star})^{-1}] \nabla f_t$ denotes the approximation error. We complete the proof of the lemma.

\subsection{Proof of Lemma \ref{lem:3}}

Recall from \eqref{rec:def:b} that $\boldsymbol{\theta}_t = -  (I - K_t) B_t^{-1} (\boldsymbol{g}_t - \nabla f_t ) + (\bz_{t, \tau} - (I - K_t)\Delta \bx_t)$ is a martingale difference. We now investigate the limiting covariance of $\mE[\boldsymbol{\theta}_t \boldsymbol{\theta}_t^{\top} \mid \mF_{t-1}]$. We note that
\begin{align}\label{equ:66}  
    \mE[\boldsymbol{\theta}_t \boldsymbol{\theta}_t^{\top} \mid \mF_{t-1}] &=  \mE[   \{(I - K_t) B_t^{-1} (\bg_t - \nabla f_t ) - (\bz_{t, \tau} - (I - K_t)\Delta \bx_t)\} \nonumber\\
    & \qquad \{(I - K_t) B_t^{-1} (\bg_t - \nabla f_t ) - (\bz_{t, \tau} - (I - K_t)\Delta \bx_t)\}^{\top} \mid \mF_{t-1}] \nonumber\\
    &= (I - K_t)B_t^{-1}\mE[ (\bg_t - \nabla f_t) (\bg_t - \nabla f_t)^{\top}  \mid \mF_{t-1}]B_t^{-1}(I - K_t)^{\top} \nonumber\\
    & \qquad + \mE[ (\bz_{t, \tau} - (I - K_t)\Delta \bx_t)(\bz_{t, \tau} - (I - K_t)\Delta \bx_t)^{\top} \mid \mF_{t-1}] \nonumber\\
    &\coloneqq \mJ_{1, t} + \mJ_{2, t},
\end{align}
where the second equality holds due to the tower property and the fact that $\bz_{t, \tau} - (I - K_t)\Delta \bx_t$ is a martingale difference by Lemma \ref{lem:prop_sketch_solver}(b), i.e.
\begin{align*}
    \mE[(\bg_t - \nabla f_t )(\bz_{t, \tau} - (I - K_t)\Delta \bx_t)^{\top} & \mid \mF_{t-1}] =  \mE[\mE[(\bg_t - \nabla f_t )(\bz_{t, \tau} - (I - K_t)\Delta \bx_t)^{\top} \mid \mF_{t-0.5} ]\mid \mF_{t-1}] \\
    &= \mE[(\bg_t - \nabla f_t )\mE[(\bz_{t, \tau} - (I - K_t)\Delta \bx_t)^{\top} \mid \mF_{t-0.5} ]\mid \mF_{t-1}] = \boldsymbol{0}.
\end{align*}
\noindent $\bullet$ {\textbf{Convergence of the $\mJ_{1,t}$. }}For the term $\mJ_{1, t}$, we have $\mE[ (\bg_t - \nabla f_t) (\bg_t - \nabla f_t)^{\top}  \mid \mF_{t-1}] = \mE [\bg_t \bg^{\top}_t \mid \mF_{t-1}] - \nabla f_t \nabla f_t^{\top}$. We also note that
\begin{align}\label{equ:67}
    & \left\|\mathbb{E}[\bg_t \bg^{\top}_t \mid \mathcal{F}_{t-1}] - \mathbb{E}[\nabla F(\bx^{\star}; \xi) (\nabla F(\bx^{\star}; \xi))^{\top}]\right\| = \left\|\mathbb{E}[\bg_t \bg^{\top}_t - \nabla F(\bx^{\star}; \xi_t) (\nabla F(\bx^{\star}; \xi_t))^{\top} \mid \mathcal{F}_{t-1}] \right\| \nonumber\\
& \leq 2\mathbb{E}[\|\bg_t - \nabla F(\bx^{\star}; \xi_t)\|\cdot\|\bg_t\| \mid \mathcal{F}_{t-1}] + \mathbb{E}[\|\bg_t - \nabla F(\bx^{\star}; \xi_t)\|^2 \mid \mathcal{F}_{t-1}] \nonumber\\
& \leq 2\sqrt{\mathbb{E}[\|\bg_t - \nabla F(\bx^{\star}; \xi_t)\|^2 \mid \mathcal{F}_{t-1}]} \sqrt{\mathbb{E}[\|\bg_t\|^2 \mid \mathcal{F}_{t-1}]} + \mathbb{E}[\|\bg_t - \nabla F(\bx^{\star}; \xi_t)\|^2 \mid \mathcal{F}_{t-1}],
\end{align}
where the first inequality uses $\|aa^{\top} - bb^{\top}\| \leq 2\|a\|\|a-b\| + \|a-b\|^2$, and the last inequality follows from the Cauchy--Schwarz inequality. For the first term in \eqref{equ:67}, by the $\Upsilon_H$-Lipschitz continuity of $\nabla f(\bx)$ in Assumption \ref{ass:1} and Assumption \ref{ass:2} with $2+\epsilon > 2$, we have
\begin{equation}\label{equ:68}
    \mathbb{E}[\|\bg_t - \nabla F(\bx^{\star}; \xi_t)\|^2 \mid \mathcal{F}_{t-1}] \leq \mathbb{E}[\sup_{\bx}\|\nabla^2 F(\bx; \xi)\|^2] \cdot \|\bx_t - \bx^{\star}\|^2 \leq \Upsilon_H^2 \|\bx_t - \bx^{\star}\|^2,
\end{equation}
and
\begin{align}\label{equ:69}
    \mE[\|\bg_t\|^2 \mid \mF_{t-1}] &\leq 2 \|\nabla f_t\|^2 + 2 \mE [ \|\bg_t - \nabla f_t\|^2\mid \mF_{t-1}] \\
    &\stackrel{\mathclap{\eqref{ass:2:4th}}}{\leq} 2\Upsilon_H^2 \|\bx_t - \bx^{\star}\|^2 + 2 C_{g, 1}^{\frac{2}{2+\epsilon}} \|\bx_t - \bx^{\star}\|^2 + 2C_{g, 2}^{\frac{2}{2+\epsilon}}.
\end{align}
Plugging \eqref{equ:68} and \eqref{equ:69} into \eqref{equ:67} gives us
\begin{equation*}
\left\|\mathbb{E}[\bg_t \bg^{\top}_t \mid \mathcal{F}_{t-1}] - \mathbb{E}[\nabla F(\bx^{\star}; \xi_t) (\nabla F(\bx^{\star}; \xi_t))^{\top}]\right\| \stackrel{a.s.}{\rightarrow} 0,
\end{equation*}
which further implies
\begin{equation} \label{equ:70}
\mathbb{E}[(\bg_t - \nabla f_t) (\bg_t - \nabla f_t)^{\top}  \mid \mathcal{F}_{t-1}] \stackrel{a.s.}{\rightarrow} \mathbb{E}[\nabla F(\bx^{\star}; \xi) (\nabla F(\bx^{\star}; \xi))^{\top}].
\end{equation}
 Since $K_t \stackrel{a.s.}{\to} K^{\star}$ from Lemma \ref{lem:bound_nu_t_mu_t} and $B_t \stackrel{a.s.}{\to} B^{\star}$ from Lemma \ref{lem:converge_rate_and_Bt_converge} , we finally have
\begin{equation}\label{equ:71}
    \mJ_{1, t} \stackrel{a.s.}{\to} (I - K^{\star}) ( B^{\star})^{-1} \mE[\nabla F(\bx^{\star}; \xi) \nabla^{\top} F(\bx^{\star}; \xi)] ( B^{\star} )^{-1} (I - K^{\star})^{\top} \stackrel{\eqref{nsequ:sgd_normal}}{=} (I - K^{\star}) \Omega^{\star} (I - K^{\star})^{\top}.
\end{equation}

\noindent $\bullet$ {\textbf{Convergence of the $\mJ_{2,t}$.}}
For the term $\mJ_{2, t}$, we have
\begin{align} \label{equ:72}
\mJ_{2, t} &= \mathbb{E}[(\bz_{t, \tau} - (I - K_t)\Delta \bx_t)(\bz_{t, \tau} - (I - K_t)\Delta \bx_t)^{\top} \mid \mathcal{F}_{t-1}] \nonumber\\
&= \mathbb{E}[(\tilde{K}_t - K_t)\Delta \bx_t \Delta \bx_t^{\top} (\tilde{K}_t - K_t)^{\top} \mid \mathcal{F}_{t-1}] \nonumber\\
&= \mathbb{E}[(\tilde{K}_t - K_t) B_t^{-1} \bg_t \bg^{\top}_t B_t^{-1} (\tilde{K}_t - K_t)^{\top} \mid \mathcal{F}_{t-1}] \nonumber\\
&= \mathbb{E}[(\tilde{K}_t - K_t) B_t^{-1} (\bg_t - \nabla f_t) (\bg_t - \nabla f_t)^{\top} B_t^{-1} (\tilde{K}_t - K_t)^{\top} \mid \mathcal{F}_{t-1}] \nonumber\\
& \quad + \mathbb{E}[(\tilde{K}_t - K_t) B_t^{-1} \nabla f_t \nabla f_t^{\top} B_t^{-1} (\tilde{K}_t - K_t)^{\top} \mid \mathcal{F}_{t-1}] \nonumber\\
& \quad + \mathbb{E}[(\tilde{K}_t - K_t) B_t^{-1} (\bg_t - \nabla f_t) \nabla f_t^{\top} B_t^{-1} (\tilde{K}_t - K_t)^{\top} \mid \mathcal{F}_{t-1}] \nonumber\\
& \quad + \mathbb{E}[(\tilde{K}_t - K_t) B_t^{-1} \nabla f_t (\bg_t - \nabla f_t)^{\top} B_t^{-1} (\tilde{K}_t - K_t)^{\top} \mid \mathcal{F}_{t-1}].	
\end{align}
For the last two terms in \eqref{equ:72}, we apply the tower property. In particular, for the last term, we have
\begin{align} \label{equ:73}
& \mathbb{E}[(\tilde{K}_t - K_t) B_t^{-1} \nabla f_t (\bg_t - \nabla f_t)^{\top} B_t^{-1} (\tilde{K}_t - K_t)^{\top} \mid \mathcal{F}_{t-1}] \nonumber\\
&= \mathbb{E}[(\tilde{K}_t - K_t) B_t^{-1} \nabla f_t \mathbb{E}[(\bg_t - \nabla f_t)^{\top} \mid \mathcal{F}_{t-1} \cup \sigma(\{S_{t, j}\}_j)] B_t^{-1} (\tilde{K}_t - K_t)^{\top} \mid \mathcal{F}_{t-1}] \nonumber\\
&= \mathbb{E}[(\tilde{K}_t - K_t) B_t^{-1} \nabla f_t \mathbb{E}[(\bg_t - \nabla f_t)^{\top} \mid \mathcal{F}_{t-1}] B_t^{-1} (\tilde{K}_t - K_t)^{\top} \mid \mathcal{F}_{t-1}] = \boldsymbol{0},	
\end{align}
where the last equality holds since $\mE[\bg_t - \nabla f_t \mid \mathcal{F}_{t-1}] = 0$. Similarly, for the second last term in \eqref{equ:72}, we have
\begin{align} \label{equ:74}
& \mathbb{E}[(\tilde{K}_t - K_t) B_t^{-1} (\bg_t - \nabla f_t) \nabla f_t^{\top} B_t^{-1} (\tilde{K}_t - K_t)^{\top} \mid \mathcal{F}_{t-1}] \nonumber\\
&= \mathbb{E}[(\tilde{K}_t - K_t) B_t^{-1} \mathbb{E}[(\bg_t - \nabla f_t) \mid \mathcal{F}_{t-1} \cup \sigma(\{S_{t, j}\}_j)] \nabla f_t^{\top}  B_t^{-1} (\tilde{K}_t - K_t)^{\top} \mid \mathcal{F}_{t-1}] \nonumber\\
& = \mathbb{E}[(\tilde{K}_t - K_t) B_t^{-1} \mathbb{E}[(\bg_t - \nabla f_t) \mid \mathcal{F}_{t-1}] \nabla f_t^{\top} B_t^{-1} (\tilde{K}_t - K_t)^{\top} \mid \mathcal{F}_{t-1}] = \boldsymbol{0}.	
\end{align}
For the second term in \eqref{equ:72}, we have
\begin{align} \label{equ:75}
& \left\|\mathbb{E}[(\tilde{K}_t - K_t)B_t^{-1} \nabla f_t \nabla f_t^{\top} B_t^{-1}(\tilde{K}_t - K_t)^{\top} \mid \mathcal{F}_{t-1}]\right\|  \leq \mathbb{E}[\|(\tilde{K}_t - K_t) B_t^{-1} \nabla f_t \nabla f_t^{\top} B_t^{-1} (\tilde{K}_t - K_t)^{\top} \|\mid \mathcal{F}_{t-1}]  \nonumber\\
& \leq \mathbb{E}[\|(\tilde{K}_t - K_t)\|^2 \|B_t^{-1}\|^2 \|\nabla f_t\|^2 \mid \mathcal{F}_{t-1}] \leq (C_K + 0.5)^2 \cdot 1/\gamma_H^2 \cdot \mathbb{E}[\|\nabla f_t\|^2 \mid \mathcal{F}_{t-1}] \nonumber\\
& = (C_K + 0.5)^2 \cdot 1/\gamma_H^2 \cdot \|\nabla f_t\|^2 \stackrel{a.s.}{\rightarrow} 0.
\end{align}
Here, the third inequality follows from the uniform bounds
$\|\tilde K_t\|\le C_K$ in~Lemma \ref{lem:prop_sketch_solver},
$\|K_t\|<0.5$ by Lemma~\ref{lem:prop_sketch_solver} together with the condition on $\tau$ that $\tau \cdot (1 - \sqrt{\mu_t/\nu_t})^{\tau - 2} \leq \gamma_H\sqrt{\gamma_E}/(4 \Upsilon_{H}\sqrt{\Upsilon_E})$, and $\|B_t^{-1}\|\le 1/\gamma_H$ in~\eqref{ass:2:c2}. For the first term in \eqref{equ:72}, we need the following technical lemma to proceed.

\begin{lemma}\label{lem:2}  
    Define $\Omega_t \coloneqq B_t^{-1} \mE[(\bg_t - \nabla f_t) (\bg_t - \nabla f_t)^{\top} \mid \mF_{t-1} ]B_t^{-1}$. Under the conditions of Theorem~\ref{thm:asymptotic_normality}, we have the following convergence as $t \to \infty$:
    \begin{equation*}
\mathbb{E}[(\tilde{K}_t - K_t) \Omega_t (\tilde{K}_t - K_t)^{\top} \mid \mathcal{F}_{t-1}] \stackrel{a.s.}{\rightarrow} \mathbb{E}[(\tilde{K}^{\star} - K^{\star}) \Omega^{\star} (\tilde{K}^{\star} - K^{\star})^{\top}].	
\end{equation*}
\end{lemma}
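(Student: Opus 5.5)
We will prove Lemma~\ref{lem:2} by decomposing the difference between the time-$t$ quantity and its limit into three pieces and driving each one to zero almost surely. Write $\Delta\tilde K_t := \tilde K_t - K_t$ and $\Delta\tilde K^\star := \tilde K^\star - K^\star$. The key observation is that $\Omega_t$ is $\mF_{t-1}$-measurable. The sketches $\{S_{t,j}\}_j$ are i.i.d.\ and independent of $\mF_{t-1}$ (Assumption~\ref{ass:4}), so we may realize $\tilde K_t$ and $\tilde K^\star$ on the \emph{same} sketches $\{S_{t,j}\}_{j=0}^{\tau-1}$; call the latter $\tilde K^\star_t$. It has the same law as $\tilde K^\star$ and is independent of $\mF_{t-1}$. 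Then
\begin{align*}
\mE[\Delta\tilde K_t \Omega_t \Delta\tilde K_t^\top\mid\mF_{t-1}] - \mE[\Delta\tilde K^\star \Omega^\star (\Delta\tilde K^\star)^\top]
&= \mE[\Delta\tilde K_t (\Omega_t-\Omega^\star) \Delta\tilde K_t^\top\mid\mF_{t-1}] \\
&\quad + \mE[(\Delta\tilde K_t-\Delta\tilde K^\star_t)\Omega^\star \Delta\tilde K_t^\top\mid\mF_{t-1}] \\
&\quad + \mE[\Delta\tilde K^\star_t\Omega^\star (\Delta\tilde K_t-\Delta\tilde K^\star_t)^\top\mid\mF_{t-1}].
\end{align*}

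\textbf{First term.} Lemma~\ref{lem:prop_sketch_solver}(c),(d) give $\|\Delta\tilde K_t\|\le C_K+1/2$ deterministically, so this term is bounded by $(C_K+1/2)^2\|\Omega_t-\Omega^\star\|$. From \eqref{equ:70} together with $B_t\to B^\star$ a.s.\ (Lemma~\ref{lem:converge_rate_and_Bt_converge}), we get $\Omega_t\to\Omega^\star$ a.s. Strictly, \eqref{equ:70} should be read with the centered covariance: use $\nabla f_t\to 0$ and $\nabla f^\star=0$, so that the limit is $\cov(\nabla F(\bx^\star;\xi))$.

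\textbf{Second and third terms.} Both are bounded by a constant times $\mE[\|\tilde K_t-\tilde K_t^\star\|\mid\mF_{t-1}]+\|K_t-K^\star\|$, where we use $\|\Omega^\star\|<\infty$ and the uniform bounds on $\tilde K$. Lemma~\ref{lem:bound_nu_t_mu_t} already gives $\|K_t-K^\star\|\to0$ a.s., so the main obstacle is the random product. We will telescope
\[
\tilde C_t-\tilde C_t^\star=\sum_{k=0}^{\tau-1}\Big(\prod_{j>k}C_{t,j}\Big)\,(C_{t,k}-C^\star_{k})\,\Big(\prod_{j<k}C^\star_{j}\Big).
\]
Each factor is bounded in norm by $(2+\sqrt{\Upsilon_E/(\gamma_E\gamma_S)})^{\tau}$, deterministically (see \eqref{equ:142}). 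Moreover,
\[
\|C_{t,k}-C^\star_k\|\lesssim |\alpha_t-\alpha^\star|+|\beta_t-\beta^\star|+|\gamma_t-\gamma^\star|+\|Z_{t,k}-\tilde Z^\star_k\|.
\]
Conjugating by $E_t^{\pm1/2}$ versus $(E^\star)^{\pm1/2}$ costs an additional $O(\|E_t-E^\star\|)$. Lemma~\ref{lem:continuity_projection_matrix} gives $\|Z_{t,k}-\tilde Z^\star_k\|\lesssim(\|B_t-B^\star\|+\|E_t-E^\star\|)\|S_{t,k}\|\|S_{t,k}^\dagger\|$. Hence, by independence and $\mE\|S\|\|S^\dagger\|\le\Upsilon_S$ (only a first moment is needed, since the other factors are deterministically bounded),
\[
\mE[\|\tilde K_t-\tilde K_t^\star\|\mid\mF_{t-1}]\lesssim \|B_t-B^\star\|+\|E_t-E^\star\|+|\alpha_t-\alpha^\star|+|\beta_t-\beta^\star|+|\gamma_t-\gamma^\star|.
\]
The right-hand side tends to zero a.s.\ by Lemma~\ref{lem:converge_rate_and_Bt_converge}, Assumption~\ref{ass:5}, and Lemma~\ref{lem:7}.

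Combining the three bounds shows that the difference tends to $0$ a.s., which proves the lemma. The delicate points are the coupling of $\tilde K_t$ with $\tilde K^\star$ on shared sketches, and making sure that only a first moment of $\|S\|\|S^\dagger\|$ is ever required.
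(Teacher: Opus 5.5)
Your proposal is correct and follows essentially the same route as the paper's proof. It uses the same split into an $\Omega_t-\Omega^\star$ piece bounded by $(C_K+1/2)^2\|\Omega_t-\Omega^\star\|$ plus two cross terms, the same coupling of $\tilde K_t$ and $\tilde K^\star$ on shared sketches, and the same telescoping of $\tilde C_t-\tilde C^\star$ controlled through Lemma~\ref{lem:7} and Lemma~\ref{lem:continuity_projection_matrix}, needing only a first moment of $\|S\|\|S^\dagger\|$. Your caveat about \eqref{equ:70} is harmless: since $\mE[\nabla F(\bx^\star;\xi)]=\nabla f^\star=\boldsymbol{0}$, the uncentered second moment there already equals $\cov(\nabla F(\bx^\star;\xi))$.
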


Therefore, by Lemma \ref{lem:2}, we have    
\begin{align} \label{equ:76}
& \mathbb{E}[(\tilde{K}_t - K_t) B_t^{-1} (\bg_t - \nabla f_t) (\bg_t - \nabla f_t)^{\top} B_t^{-1} (\tilde{K}_t - K_t)^{\top} \mid \mathcal{F}_{t-1}] \nonumber\\
&= \mathbb{E}[(\tilde{K}_t - K_t) B_t^{-1} \mathbb{E}[(\bg_t - \nabla f_t) (\bg_t - \nabla f_t)^{\top} \mid \mathcal{F}_{t-1} \cup \sigma(\{S_{t, j}\}_{j})] B_t^{-1} (\tilde{K}_t - K_t)^{\top} \mid \mathcal{F}_{t-1}] \nonumber\\
&= \mathbb{E}[(\tilde{K}_t - K_t) B_t^{-1} \mathbb{E}[(\bg_t - \nabla f_t) (\bg_t - \nabla f_t)^{\top} \mid \mathcal{F}_{t-1}] B_t^{-1} (\tilde{K}_t - K_t)^{\top}) \mid \mathcal{F}_{t-1}] \nonumber\\
&\stackrel{a.s.}{\rightarrow} \mathbb{E}[(\tilde{K}^{\star} - K^{\star}) \Omega^{\star} (\tilde{K}^{\star} - K^{\star})^{\top}] = \mathbb{E}[\tilde{K}^{\star} \Omega^{\star} (\tilde{K}^{\star})^{\top}] - K^{\star} \Omega^{\star} (K^{\star})^\top.	
\end{align}
Here $\tilde{K}^{\star}$ is defined in \eqref{def:tilde_H_star}. As a result, substituting \eqref{equ:73}, \eqref{equ:74}, \eqref{equ:75} and \eqref{equ:76} into \eqref{equ:72} yields
\begin{equation} \label{equ:79}  
\mJ_{2, t} \stackrel{a.s.}{\rightarrow} \mathbb{E}[\tilde{K}^{\star} \Omega^{\star} (\tilde{K}^{\star})^{\top}] - K^{\star} \Omega^{\star} (K^{\star})^\top.
\end{equation}
Combining~\eqref{equ:71} and~\eqref{equ:79} with~\eqref{equ:66}, we obtain
\begin{equation} \label{equ:80}
\mathbb{E}[\boldsymbol{\theta}_t \boldsymbol{\theta}_t^{\top} \mid \mathcal{F}_{t-1}] \stackrel{a.s.}{\rightarrow} \mathbb{E}[(I - \tilde{K}^{\star}) \Omega^{\star} (I - \tilde{K}^{\star})^{\top}],
\end{equation}
where we use the definition $K^{\star} \coloneqq \mE[\tilde{K}^{\star}]$. This completes the proof.

\subsection{Proof of Lemma \ref{lem:B1}}

    For $\epsilon > 0$ chosen in Assumption \ref{ass:2}, we let $q = 2 + \epsilon$ and have
    \begin{align} \label{eq: bounded_q_moment}
        & \mE[\|\btheta_i\|^q \mid \mF_{i-1}]\;\; \stackrel{\mathclap{\eqref{rec:def:b}}}{\leq}\;\; 2^{q-1}\cbr{\mE[\|(I-K_i)B_i^{-1}(\bg_i - \nabla f_i)\|^q \mid \mF_{i-1}] + \mE[\| \bz_{i, \tau} - (I - K_i)\Delta \bx_i\|^q\mid \mF_{i-1}]} \nonumber\\
        & \leq 2^{q-1}\rbr{\mE[\|(I - K_i)B_i^{-1}\|^q\|\bg_i - \nabla f_i\|^q\mid \mF_{i-1}] + \mE[\|(\tilde{K}_i - K_i)\Delta \bx_i\|^q\mid \mF_{i-1}] } \qquad \  (\text{Lemma \ref{lem:prop_sketch_solver}(a)})  \nonumber\\
        & \leq 2^{q-1}\cbr{\left(\frac{3}{2\gamma_H} \right)^{q} \mE[\|\bg_i - \nabla f_i\|^q\mid \mF_{i-1}] + \left(C_K + \frac{1}{2}\right)^q \mE[ \|\Delta \bx_i\|^q \mid \mF_{i-1}] } \quad \left(\|K_i\|\leq \frac{1}{2},  \|\tilde{K}_i\|\leq C_K \right) \nonumber\\
        & \stackrel{\mathclap{\eqref{ass:2:4th}}}{\leq} 2^{q-1}\cbr{\left(\frac{3}{2\gamma_H} \right)^{q} \left(C_{g, 1}\|\bx_i - \bx^{\star}\|^q + C_{g, 2} \right) + \left(C_K + \frac{1}{2}\right)^q \mE[ \|\Delta \bx_i\|^q \mid \mF_{i-1}] }  \nonumber \\
        & \leq 2^{q-1}\cbr{\left(\frac{3}{2\gamma_H} \right)^{q} \left(C_{g, 1}\|\bx_i - \bx^{\star}\|^q + C_{g, 2} \right) + \left(C_K + \frac{1}{2}\right)^q \mE[ \|B_i^{-1}\bg_i\|^q \mid \mF_{i-1}] }  \nonumber  \\
        & \leq \frac{1}{2 \gamma_H^q} \cbr{3^{q} \left(C_{g, 1}\|\bx_i - \bx^{\star}\|^q + C_{g, 2} \right) + \left(2C_K + 1\right)^q \mE[ \|\bg_i\|^q \mid \mF_{i-1}] } \qquad (\|B_i^{-1}\| \leq 1/\gamma_H)  \nonumber  \\
        & \leq \frac{1}{2\gamma_H^q} \cbr{3^{q} \left(C_{g, 1}\|\bx_i - \bx^{\star}\|^q + C_{g, 2} \right) + \left(2C_K + 1\right)^q \cdot 2^{q-1} (\|\nabla f_i\|^q + \mE[ \|\bg_i - \nabla f_i\|^q \mid \mF_{i-1}] )}  \nonumber  \\
        & \leq \frac{1}{2 \gamma_H^q} \cbr{3^{q} \left(C_{g, 1}\|\bx_i - \bx^{\star}\|^q + C_{g, 2} \right) + \frac{1}{2}\left(4C_K + 2\right)^q  ( \Upsilon_H^q \|\bx_i - \bx^{\star}\|^q + C_{g, 1}\|\bx_i - \bx^{\star}\|^q + C_{g, 2} )}  \nonumber  \\
        & \leq C_q^{(1)}\|\bx_i - \bx^{\star}\|^q + C_q^{(2)},
    \end{align}
    where the first and the third last inequality are due to Jensen's inequality; the second last inequality is due to the $\Upsilon_H$-Lipschitz continuity of $\nabla f(\bx)$ in Assumption \ref{ass:1} and the moment bound \eqref{ass:2:4th} in Assumption \ref{ass:2}. With the above display, we have for any $\delta>0$, as $t \to \infty$,
    \begin{align}\label{eq:Lindeberg_CLT}
        \frac{1}{t} \sum_{i = 0}^{t-1} \mathbb{E}\left[ \|\boldsymbol{\theta}_i\|^2 \cdot \mathbf{1}_{\|\boldsymbol{\theta}_i\| > \delta \sqrt{t}}  \mid \mathcal{F}_{i-1} \right] &\leq \frac{1}{\delta^{\epsilon} t^{1+\epsilon/2}} \sum_{i = 0}^{t-1} \mathbb{E}\left[ \|\boldsymbol{\theta}_i\|^{2+\epsilon} | \mathcal{F}_{i-1} \right] \\
        &\stackrel{\mathclap{\eqref{eq: bounded_q_moment}}}{\leq} \ \frac{C^{(2)}_{2+\epsilon}}{\delta^{\epsilon}t^{\epsilon/2}} + \frac{C_{2+\epsilon}^{(1)} \sum_{i = 0}^{t-1} \|\bx_i - \bx^{\star}\|^{2+\epsilon}}{\delta^{\epsilon} t^{1+\epsilon/2}} \stackrel{a.s.}{\rightarrow} 0,
    \end{align}
    where the convergence holds due to $\bx_i \stackrel{a.s.}{\rightarrow} \bx^{\star}$ as $i \rightarrow \infty$ and Stolz--Ces\`aro theorem (cf. Lemma \ref{aux:lem:stolz-cesaro}). This completes the proof of the lemma.

\subsection{Proof of Lemma \ref{lem:continuity_projection_matrix}}

Since $B_t, E_t, B^{\star}, E^{\star} \in \mR^{d \times d}$ are positive definite, $E_t^{-1/2}B_t$ and $(E^{\star})^{-1/2}B^{\star}$ are non-singular. Then we directly use \citet[Lemma 5.2]{Na2025Statistical}\footnote{Indeed, the argument and the proof only use singular-value bounds, Wedin's $\sin(\Theta)$ theorem, and properties of the Moore--Penrose pseudoinverse. They do not require matrix symmetry.}, and obtain 
     \begin{multline}\label{equ:137}  
        \left\|E_t^{-1/2}B_t S (S^{\top}B_t E_t^{-1} B_tS)^{\dagger}S^{\top}B_t E_t^{-1/2} - (E^{\star})^{-1/2}B^{\star} S (S^{\top}B^{\star} (E^{\star})^{-1} B^{\star} S)^{\dagger}S^{\top}B^{\star} (E^{\star})^{-1/2} \right\| \\
        \leq \frac{2 \left\| E_t^{-1/2}B_t  - (E^{\star})^{-1/2} B^{\star}  \right\|}{\sigma_{\min}( (E^{\star})^{-1/2} B^{\star})} \cdot \|S\|\|S^{\dagger}\|,
    \end{multline}
where $\sigma_{\min}(\cdot)$ denotes the least singular value. We first have
\begin{align}\label{equ:138} 
\|E_t^{-1/2}B_t -  (E^\star)^{-1/2} B^\star\|
&\le
\|E_t^{-1/2} (B_t-B^\star)\|
+
\|(E_t^{-1/2}-(E^\star)^{-1/2}) B^\star\| \nonumber \\
&\le
\|B_t-B^\star\|\,\|E_t^{-1/2}\|
+
\|B^\star\|\,\|E_t^{-1/2}-(E^\star)^{-1/2}\| \nonumber \\
&\le  \frac{1}{\sqrt{\gamma_E}}\|B_t-B^\star\|
+
\Upsilon_H\,\|E_t^{-1/2}-(E^\star)^{-1/2}\|,
\end{align}
where the first inequality uses the triangle inequality, the second inequality uses the Cauchy-Schwarz inequality, and the last inequality uses the fact that $E_t \succeq \gamma_E I$, and $B^{\star} \preceq \Upsilon_H I$. We follow the same spirit as \citet[Lemma B.2]{Wang2026Inference}, and get
\begin{equation}\label{equ:139} 
    \|E_t^{-1/2}-(E^\star)^{-1/2}\| \leq \frac{1}{2\gamma_E^{3/2}}\|E_t - E^{\star}\|.
\end{equation}
Hence,  we obtain that
\begin{equation}\label{equ:140} 
    \|E_t^{-1/2}B_t -  (E^\star)^{-1/2} B^\star\| \stackrel{\eqref{equ:138}, \; \eqref{equ:139}}{\leq} \frac{1}{\sqrt{\gamma_E}}\|B_t-B^\star\|
+ \frac{\Upsilon_H}{2\gamma_E^{3/2}}\|E_t - E^{\star}\|.
\end{equation}
Second, since $E_t \stackrel{a.s.}{\to} E^{\star}$ in Assumption \ref{ass:5}, and $E_t \preceq \Upsilon_E I$ for all $t \geq 0$ in Assumption \ref{ass:4}, then we have $E^{\star} \preceq \Upsilon_E I$, and furthermore, by Assumption \ref{ass:1}, we have
\begin{equation}\label{equ:141} 
    \sigma_{\min} ( (E^{\star})^{-1/2} B^{\star} ) \geq  \sigma_{\min} ((E^{\star})^{-1/2}) \sigma_{\min} (B^{\star}) \geq \frac{\gamma_H}{\sqrt{\Upsilon_E}}.
\end{equation}
We apply \eqref{equ:140} and \eqref{equ:141} into \eqref{equ:137}, and finally obtain
\begin{multline*}
        \left\|E_t^{-1/2}B_t S (S^{\top}B_t E_t^{-1} B_tS)^{\dagger}S^{\top}B_t E_t^{-1/2} - (E^{\star})^{-1/2}B^{\star} S (S^{\top}B^{\star} (E^{\star})^{-1} B^{\star} S)^{\dagger}S^{\top}B^{\star} (E^{\star})^{-1/2} \right\| \\
        \leq \frac{2 \sqrt{\Upsilon_E}}{\gamma_H} \left( \frac{1}{\sqrt{\gamma_E}} \|B_t - B^{\star}\| + \frac{\Upsilon_H}{2\gamma_E^{3/2}}\|E_t - E^{\star}\|\right) \cdot \|S\|\|S^{\dagger}\|.
    \end{multline*}
We complete the proof of the lemma.

\subsection{Proof of Lemma \ref{lem:7}}

Recalling the definitions of $\alpha, \beta, \gamma$ in \eqref{def:alpha_beta_gamma_t}, we view them as functions of $(\mu, \nu)$ on $(0, \infty)^2$:
\begin{equation*}
\alpha(\mu, \nu) = \frac{1}{1 + \gamma \nu}, \quad \beta(\mu, \nu) = 1 - \sqrt{\frac{\mu}{\nu}}, \quad \gamma(\mu, \nu) = \frac{1}{\sqrt{\mu \nu}},
\end{equation*}
where the only distinction between $(\alpha_t, \beta_t, \gamma_t)$ and $(\alpha^\star, \beta^\star, \gamma^\star)$ lies in their evaluation at $(\mu_t, \nu_t)$ versus $(\mu^\star, \nu^\star)$. Each of $\alpha, \beta, \gamma$ is continuously differentiable on $(0, \infty)^2$. By \eqref{prop_mu_nu} and Lemma~\ref{lem:4},
\begin{equation*}
(\mu_t, \nu_t) \in [\gamma_E \gamma_S / \Upsilon_E,\, 1] \times [1,\, \Upsilon_E / (\gamma_E \gamma_S)] \quad \text{for all } t \geq 0.
\end{equation*}
We further bound the deviation of $Z_t$ from $Z^\star$:
\begin{equation}\label{equ:151}  
    \|Z_t - Z^\star\| = \bigl\|\mE[\tilde{Z}_t - \tilde{Z}^\star \mid \mF_{t-1}]\bigr\| \leq \mE[\|\tilde{Z}_t - \tilde{Z}^\star\| \,\big|\, \mF_{t-1}] \stackrel{\eqref{equ:B_bound_projection}}{\lesssim} \|B_t - B^\star\| + \|E_t - E^\star\| \stackrel{\text{a.s.}}{\longrightarrow} 0,
\end{equation}
where the first inequality uses Jensen's inequality and the almost-sure convergence follows from Lemma~\ref{lem:converge_rate_and_Bt_converge} and Assumption~\ref{ass:5}. Since $Z_t \succeq (\gamma_E \gamma_S / \Upsilon_E) I$ by Lemma~\ref{lem:4}, the limit $Z_t \stackrel{\text{a.s.}}{\to} Z^\star$ from \eqref{equ:151} yields $Z^\star \succeq (\gamma_E \gamma_S / \Upsilon_E) I$. Applying \eqref{prop_mu_nu} to $Z^\star$, we obtain
\begin{equation*}
(\mu^\star, \nu^\star) \in [\gamma_E \gamma_S / \Upsilon_E,\, 1] \times [1,\, \Upsilon_E / (\gamma_E \gamma_S)].
\end{equation*}
The gradients $\nabla\alpha, \nabla\beta, \nabla\gamma$ are continuous and therefore bounded on the compact rectangle $[\gamma_E \gamma_S / \Upsilon_E,\, 1] \times [1,\, \Upsilon_E / (\gamma_E \gamma_S)]$. Since this rectangle is convex and contains both $(\mu_t, \nu_t)$ and $(\mu^\star, \nu^\star)$, the mean value theorem yields
\begin{equation}\label{equ:152}
\max\bigl\{|\alpha_t - \alpha^\star|,\, |\beta_t - \beta^\star|,\, |\gamma_t - \gamma^\star|\bigr\} \lesssim |\mu_t - \mu^\star| + |\nu_t - \nu^\star|.
\end{equation}
Therefore, it is sufficient to investigate the bounds $|\mu_t - \mu^\star|$ and $|\nu_t - \nu^\star|$.

\noindent $\bullet$ {\textbf{Bound for $|\mu_t - \mu^\star|$.}}
By the definitions of $\mu_t$ in \eqref{def:mu_t_nu_t} and $\mu^{\star}$ in Section~\ref{sec:4}, we have $\mu_t = \lambda_{\min}(Z_t)$ and $\mu^{\star} = \lambda_{\min}(Z^{\star})$. Applying Weyl’s perturbation theorem \citep[Corollary~III.2.6]{Bhatia1997Matrix}, we obtain
\begin{equation} \label{equ:mu_bound}
|\mu_t - \mu^{\star}| \leq \|Z_t - Z^{\star}\| \stackrel{\eqref{equ:151}}{\leq} \mE[\|\tilde{Z}_t - \tilde{Z}^\star\| \,\big|\, \mF_{t-1}] \stackrel{\eqref{equ:151}}{\lesssim} \|B_t - B^\star\| + \|E_t - E^\star\| \stackrel{\text{a.s.}}{\longrightarrow} 0.
\end{equation}

\noindent $\bullet$ \textbf{Bound for $|\nu_t - \nu^\star|$.} The expressions for $\nu_t$ in terms of $(Z_t, \tilde Z_t)$ and $\nu^\star$ in terms of $(Z^\star, \tilde Z^\star)$ share the same structural form as their counterparts in \citet[Eq. (2.5)]{Wang2026Inference}. The only difference is that the matrices $Z_t,\tilde Z_t, Z^\star,\tilde Z^\star$ here depend on the general metric $E_t$. Nevertheless, the ingredients used in the perturbation argument of \citet[Lemma~4.1, Eq.~(B.10)]{Wang2026Inference} remain valid in our setting:
$\tilde Z_t$ and $\tilde Z^\star$ are projection matrices, and Lemma~\ref{lem:4} gives the uniform lower bound $Z_t, Z^\star \succeq (\gamma_E \gamma_S / \Upsilon_E) I$. Thus, the same perturbation argument applies verbatim and yields
\begin{equation}\label{equ:nu_bound}
        |\nu_t - \nu^{\star}| \lesssim \mathbb{E} [\|\tilde Z_t - \tilde Z^{\star}\|\mid \mathcal F_{t-1}] \stackrel{\eqref{equ:151}}{\lesssim} \|B_t - B^\star\| + \|E_t - E^\star\| \stackrel{\text{a.s.}}{\longrightarrow} 0.
\end{equation}
Hence, combining  \eqref{equ:152}, \eqref{equ:mu_bound}, \eqref{equ:nu_bound}, we complete the proof of the lemma.

\subsection{Proof of Lemma \ref{lem:2}}

By \eqref{equ:70} and Lemma \ref{lem:converge_rate_and_Bt_converge}, we know $\Omega_t\to\Omega^\star$ almost surely. Therefore, we have
\begin{align} \label{equ:85}
& \left\| \mathbb{E}[(\tilde{K}_t - K_t) \Omega_t (\tilde{K}_t - K_t)^{\top} \mid \mathcal{F}_{t-1}] - \mathbb{E}[(\tilde{K}^{\star} - K^{\star}) \Omega^{\star} (\tilde{K}^{\star} - K^{\star})^{\top}] \right\| \nonumber\\
&\leq \left\| \mathbb{E}[(\tilde{K}_t - K_t) \Omega_t (\tilde{K}_t - K_t)^{\top} \mid \mathcal{F}_{t-1}] - \mathbb{E}[(\tilde{K}_t - K_t) \Omega^{\star} (\tilde{K}_t - K_t)^{\top} \mid \mathcal{F}_{t-1}]\right\| \nonumber\\
& \quad + \left\| \mathbb{E}[(\tilde{K}_t - K_t) \Omega^{\star} (\tilde{K}_t - K_t)^{\top} \mid \mathcal{F}_{t-1}]- \mathbb{E}[(\tilde{K}^{\star} - K^{\star}) \Omega^{\star} (\tilde{K}^{\star} - K^{\star})^{\top}] \right\|.
\end{align}
The expectation in \eqref{equ:85} is taken with respect to the sketching distribution $S$ only. Throughout this proof, we explicitly realize the sketching matrices as independent draws $S_0,S_1,\ldots,S_{\tau-1}\stackrel{i.i.d.}{\sim} S$. Recalling the definition of $\tilde C_t$ in \eqref{def:tilde_C_t}, we write, for this proof only,
\begin{equation}\label{equ:86}
    \tilde{C}_t =
    \prod_{j = 0}^{\tau-1}
    \begin{pmatrix}
        (1 - \alpha_t)(I - Z_{t,j}) 
        & \alpha_t (I - Z_{t,j}) \\
        (1 - \alpha_t)(1 - \beta_t)I - (1 - \alpha_t)\gamma_t Z_{t,j} 
        & (\alpha_t + \beta_t - \alpha_t\beta_t)I - \alpha_t\gamma_t Z_{t,j}
    \end{pmatrix}
    \in \mathbb{R}^{2d \times 2d},
\end{equation}
where $Z_{t,j} = E_t^{-1/2} B_t S_{j} (S_{j}^\top B_t E_t^{-1} B_t S_{j})^\dagger S_{j}^\top B_t E_t^{-1/2}$.
In this representation, the same sketching realization
\(S_0,S_1,\ldots,S_{\tau-1}\) is used to define both \(\tilde C_t\) and \(\tilde C^\star\); see \eqref{equ:77}. The definition of \(\tilde K_t\) remains unchanged from \eqref{def:tilde_H_t}, namely $\tilde K_t = E_t^{-1/2} \bigl([\tilde C_t]_{1,1}+[\tilde C_t]_{1,2}\bigr) E_t^{1/2}$.
This is only a coupling convention for the proof and does not change the quantity in \eqref{equ:85}. For the first term on the right-hand side of~\eqref{equ:85}, we have
\begin{align} \label{equ:87}
& \left\| \mathbb{E}[(\tilde{K}_t - K_t) \Omega_t (\tilde{K}_t - K_t)^{\top} \mid \mathcal{F}_{t-1}] - \mathbb{E}[(\tilde{K}_t - K_t) \Omega^{\star} (\tilde{K}_t - K_t)^{\top} \mid \mathcal{F}_{t-1}]\right\| \nonumber\\
&= \left\|\mathbb{E}[(\tilde{K}_t - K_t) (\Omega_t - \Omega^{\star}) (\tilde{K}_t - K_t)^{\top} \mid \mathcal{F}_{t-1}]\right\| \nonumber\\
&\leq \mathbb{E}\left[\|(\tilde{K}_t - K_t) (\Omega_t - \Omega^{\star}) (\tilde{K}_t - K_t)^{\top} \| \mid \mathcal{F}_{t-1}\right] \nonumber\\
&\leq (C_K + 0.5)^2 \cdot \mathbb{E}\left[\|\Omega_t - \Omega^{\star} \| \mid \mathcal{F}_{t-1}\right] = (C_K + 0.5)^2  \| \Omega_t - \Omega^{\star} \| \stackrel{a.s.}{\rightarrow} 0.	
\end{align}
Here, the second inequality follows from $\|\tilde{K}_t\|\leq C_K$ and $\|K_t\|<0.5$ (under $\tau \cdot (1 - \sqrt{\mu_t/\nu_t})^{\tau - 2} \leq \gamma_H\sqrt{\gamma_E}/(4 \Upsilon_{H}\sqrt{\Upsilon_E})$). For the second term on the right-hand side of~\eqref{equ:85}, we have
\begin{align*}
& \left\|\mathbb{E}[(\tilde{K}_t - K_t) \Omega^{\star} (\tilde{K}_t - K_t)^{\top} \mid \mathcal{F}_{t-1}] - \mathbb{E}[(\tilde{K}^{\star} - K^{\star}) \Omega^{\star}(\tilde{K}^{\star} - K^{\star})^{\top}]\right\| \nonumber\\
& \leq \left\|\mathbb{E}[(\tilde{K}_t - K_t) \Omega^{\star} (\tilde{K}_t - K_t)^{\top} \mid \mathcal{F}_{t-1}] - \mathbb{E}[(\tilde{K}^{\star} - K^{\star}) \Omega^{\star}(\tilde{K}_t - K_t)^{\top} \mid \mathcal{F}_{t-1}]  \right\| \nonumber\\
& \quad + \left\| \mathbb{E}[(\tilde{K}^{\star} - K^{\star}) \Omega^{\star}(\tilde{K}_t - K_t)^{\top} \mid \mathcal{F}_{t-1}] - \mathbb{E}[(\tilde{K}^{\star} - K^{\star}) \Omega^{\star}(\tilde{K}^{\star} - K^{\star})^{\top}] \right\| \nonumber\\
& \leq \|\Omega^{\star}\| \cdot (C_K + 0.5) \cdot (\mathbb{E}[ \|\tilde{K}_t - \tilde{K}^{\star} \| \mid \mathcal{F}_{t-1}] + \|K_t - K^{\star}\|) \nonumber\\
& \quad + \|\Omega^{\star}\| \cdot (C_K + 0.5) \cdot (\mathbb{E}[ \|\tilde{K}_t - \tilde{K}^{\star} \| \mid \mathcal{F}_{t-1}] + \|K_t - K^{\star}\|).	
\end{align*}

By the definition of $\tilde{K}_t$ and $\tilde{K}^{\star}$ in \eqref{def:tilde_H_t} and \eqref{def:tilde_H_star},  we follow the same spirit of \eqref{equ:146}, and have
\begin{align}\label{appequ:1}
    \|\tilde{K}_t - \tilde{K}^{\star}\| \lesssim \|E_t - E^{\star}\| +     \|\tilde{C}_t - \tilde{C}^{\star}\|.
\end{align}
By the definition of $C_{t, j}$ in \eqref{def:tilde_C_t} and $\tilde{M}_j^{\star}$ in \eqref{equ:124}, we know that their spectral norms are bounded, since $(\alpha^\star, \beta^\star, \gamma^\star)$ and $(\alpha_t, \beta_t, \gamma_t)$ are bounded (cf. \eqref{prop_mu_nu}), and both $Z_{t, j}$ and $\tilde{Z}_j^\star$ are projection matrices. Therefore, we follow the same spirit of \eqref{equ:142}, and obtain
\begin{equation}\label{appequ:2}
    \|\tilde{M}_j^{\star}\| \leq 2+ \sqrt{\frac{\Upsilon_E}{\gamma_E \gamma_S}}; \qquad \|C_{t, j}\| \leq 2+ \sqrt{\frac{\Upsilon_E}{\gamma_E \gamma_S}}.
\end{equation}
Hence, for the term $\|\tilde{C}_t - \tilde{C}^{\star}\|$, we have
\begin{align}\label{equ:89}
    \|\tilde{C}_t - \tilde{C}^{\star}\| \quad &\stackrel{\mathclap{\eqref{def:tilde_C_t}, \eqref{equ:77}}}{\lesssim} \quad \  \sum_{j = 0}^{\tau - 1} (2 + \Upsilon_E/\sqrt{\gamma_E \gamma_S})^{j} \cdot ( |\alpha_t - \alpha^{\star}| + |\beta_t - \beta^{\star}| + |\gamma_t - \gamma^{\star}| \nonumber \\
    & \qquad \qquad \qquad \qquad \qquad \qquad \qquad  + \|Z_{t, j} - \tilde{Z}^{\star}_j\| )  \cdot (2 + \Upsilon_E/\sqrt{\gamma_E \gamma_S})^{\tau - j - 1} \nonumber\\
    & \lesssim \sum_{j = 0}^{\tau - 1} (2+\Upsilon_E/\sqrt{\gamma_E \gamma_S})^{\tau - 1} \cdot ( \mE[\|\tilde{Z}_t - \tilde{Z}^{\star}\| \mid \mF_{t-1}] + \|Z_{t, j} - \tilde{Z}^{\star}_j\| ) \nonumber\\
    & =  (2+\Upsilon_E/\sqrt{\gamma_E \gamma_S})^{\tau - 1} \sum_{j = 0}^{\tau - 1} \rbr{\mE[\|\tilde{Z}_t - \tilde{Z}^{\star}\| \mid \mF_{t-1}] + \|Z_{t, j} - \tilde{Z}^{\star}_j\|} \quad (\text{by Lemma \ref{lem:continuity_projection_matrix}}) \nonumber\\
    & \stackrel{\mathclap{\eqref{equ:B_bound_projection}}}{\lesssim} (2 + \Upsilon_E/\sqrt{\gamma_E \gamma_S})^{\tau - 1} \sum_{j = 0}^{\tau - 1} \rbr{ (\|B_t - B^{\star}\| + \|E_t - E^\star\|) \cdot \mE[\|S\|\|S^{\dagger}\|] + \|Z_{t, j} - \tilde{Z}^{\star}_j\|} \nonumber \\
    &\lesssim  \sum_{j = 0}^{\tau - 1} \rbr{ \|B_t - B^{\star}\| + \|E_t - E^\star\| + \|Z_{t, j} - \tilde{Z}^{\star}_j\|},
\end{align}
where the first inequality also uses the fact that 
\begin{equation*}
    \prod_{i=1}^{n} A_i \;-\; \prod_{i=1}^{n} B_i
\;=\;
\sum_{k=1}^{n}
\left( \prod_{i=1}^{k-1} A_i \right)
\,(A_k - B_k)\,
\left( \prod_{j=k+1}^{n} B_j \right),
\end{equation*}
 the second inequality uses Lemma \ref{lem:7}, the third inequality uses Lemma \ref{lem:continuity_projection_matrix}, and the last inequality uses Assumption \ref{ass:4}. Taking conditional expectation on both sides of \eqref{equ:89}, altogether with Lemma \ref{lem:continuity_projection_matrix}, we have
 \begin{equation}\label{appequ:3}
     \mE[\|\tilde{C}_t - \tilde{C}^\star \|| \mF_{t-1}] \lesssim \|B_t - B^\star\| + \|E_t - E^\star\| \stackrel{a.s.}{\to} 0,
 \end{equation}
 where the convergence uses Lemma \ref{lem:converge_rate_and_Bt_converge} and Assumption \ref{ass:5}.
 Thus, the above display leads to 
\begin{equation} \label{equ:91}
\left\|\mathbb{E}[(\tilde{K}_t - K_t) \Omega^{\star} (\tilde{K}_t - K_t)^{\top} \mid \mathcal{F}_{t-1}] - \mathbb{E}[(\tilde{K}^{\star} - K^{\star}) \Omega^{\star}(\tilde{K}^\star - K^{\star})^{\top}]\right\| \stackrel{a.s.}{\rightarrow} 0.
\end{equation}
Substituting \eqref{equ:87} and \eqref{equ:91} into \eqref{equ:85} yields
\begin{equation*}
\mathbb{E}[(\tilde{K}_t - K_t) \Omega_t (\tilde{K}_t - K_t)^{\top} \mid \mathcal{F}_{t-1}] \stackrel{a.s.}{\rightarrow} \mathbb{E}[(\tilde{K}^{\star} - K^{\star}) \Omega^{\star} (\tilde{K}^{\star} - K^{\star})^{\top}] \quad \text{as } t \rightarrow \infty.
\end{equation*}
This completes the proof.

\section{Additional Experimental Details} \label{app:exp}

This section provides the full experimental setup and additional results that supplement the experiments reported in Section~\ref{sec:5.1}.

\subsection{Detailed experimental setup}

For the linear regression problem, we consider the model $\xi_{b} = \xi_{a}^\top \boldsymbol{x}^\star + \varepsilon$, where $\xi = (\xi_a, \xi_b) \in \mathbb{R}^{d+1}$ is the covariate-response pair, $\varepsilon \sim \mathcal{N}(0, \sigma^2)$ is Gaussian noise, and $\bx^{\star} \in \mathbb{R}^{d}$ is the model parameter. We adopt the squared loss $F(\bx; \xi) = \tfrac{1}{2}(\xi_b - \xi_a^\top \boldsymbol{x})^2$ in this case.

For the logistic regression problem, we consider the model $\mathbb{P}(\xi_{b} \mid \xi_a) = \exp(\xi_b \cdot \xi_a^\top \boldsymbol{x}^\star) / \{1 + \exp(\xi_b \cdot \xi_a^\top \boldsymbol{x}^\star)\}$, where $(\xi_a, \xi_b)\in \mR^d\times \{-1,1\}$ is the covariate-response pair and $\boldsymbol{x}^{\star} \in \mathbb{R}^d$ is the model parameter. We adopt the log loss $F(\bx; \xi) = \log( 1 + \exp(-\xi_b \cdot \xi_a^\top \boldsymbol{x}))$ in this case. 

For both models, we apply Hessian regularization from the above and below whenever it becomes~close to singular. However, since $B_t\rightarrow B^\star = \nabla^2 f^\star\succ \0$, the regularization always vanishes in~finite~steps.

\vskip2pt

\noindent $\bullet$ \textbf{Model parameters.}
We vary the dimension over $d \in \{20, 40\}$ for both problems and set the true parameter $\bx^{\star} \in \mR^d$ to be linearly spaced between $0$ and $1$. For the linear model, we fix the noise variance at $\sigma^2 = 1$. Following previous studies \citep{Chen2020Statistical, Zhu2021Online, Chen2024Online, Na2025Statistical, Du2025Online}, we draw the covariate $\xi_a \sim \mN(\0, \Sigma_a)$ under three covariance designs: (i) identity, $\Sigma_a = I$; (ii) Toeplitz, $[\Sigma_a]_{i, j} = r^{|i - j|}$ with $r = 0.4$; and (iii) equi-correlation, $[\Sigma_a]_{i, i} = 1$ and $[\Sigma_a]_{i, j} = r$ for $i \neq j$ with $r = 0.4$. Given $\xi_a$, the response $\xi_b$ is generated from the corresponding linear or logistic model.

\vskip2pt

\noindent $\bullet$ \textbf{Algorithm parameters.}
Both inference procedures operate on the iteration sequence produced by the three optimization methods (EN, ASN, and GASN) under a common algorithmic setup. To be specific, we vary the number of sketching steps over $\tau \in \{5, 10\}$ and consider two sketching distributions: Coordinate sketches \citep{Strohmer2008Randomized}, with sketching vectors drawn from $S \sim \text{Uniform}(\{\be_i\}_{i=1}^{d})$; and Gaussian sketches, with $S \sim \mathcal{N}(0, I)$ (i.e., we set $s=1$ and use sketching vectors). Following \cite{Kuang2025Online, Du2025Online, Wang2026Inference}, we set the decaying step size to $\varphi_t = 1/(t+1)^{0.501}$. For every inference method, we initialize $\bx_0$ as the vector of all ones and run $10^5$ iterations. The nominal coverage probability is fixed at $1 - p = 95\%$, and we conduct inference on the coordinate-wise average of the model parameter, $\sum_{i=1}^d \bx_i^\star / d$. Coverage rates and interval lengths are reported as averages over $200$ independent~runs~per~configuration.

\subsection{Additional results and discussions}

\begin{figure}[t!]
\centering     
\subfigure[EN]{\includegraphics[width=0.325\textwidth]{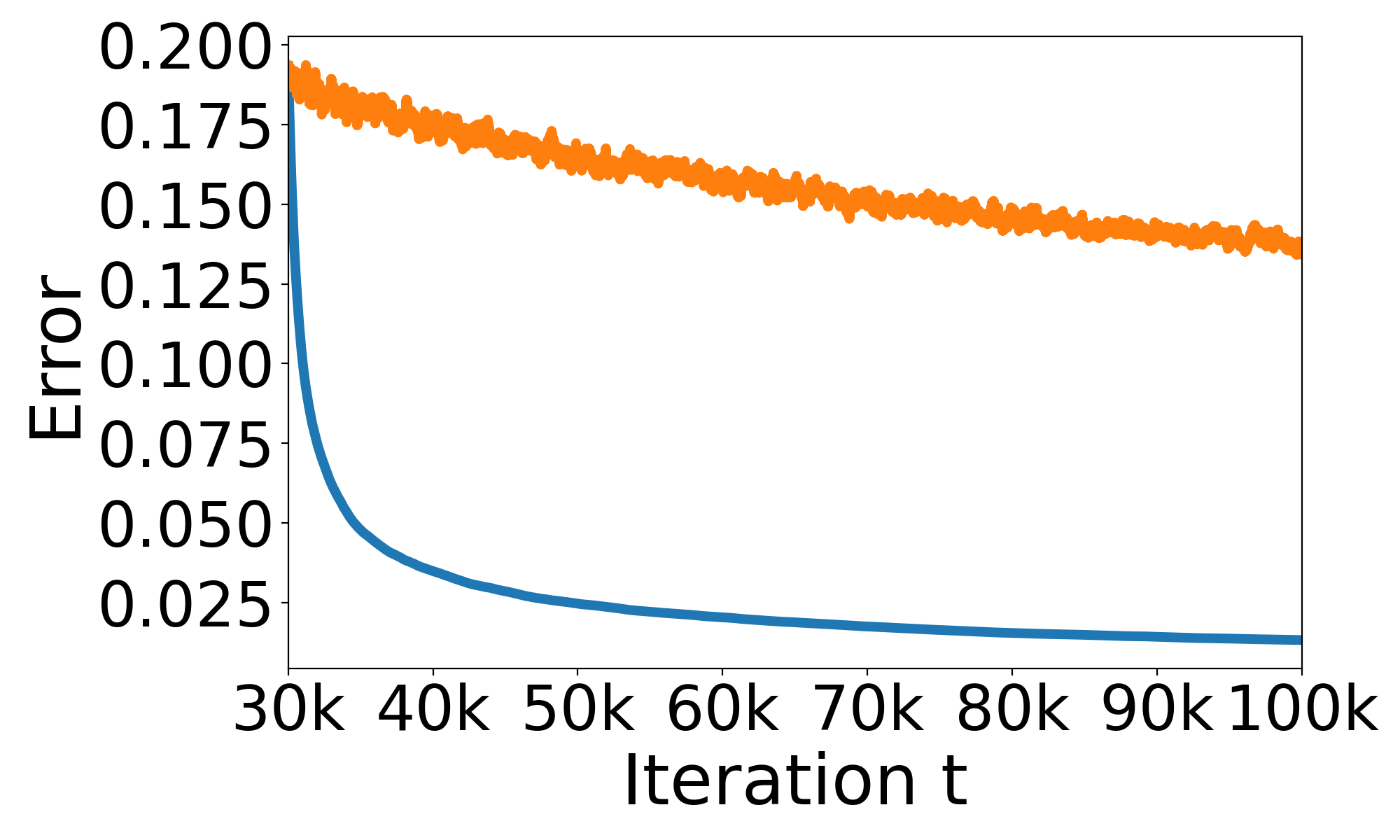}}
\subfigure[ASN$(E_t = I)$]{\includegraphics[width=0.325\textwidth]{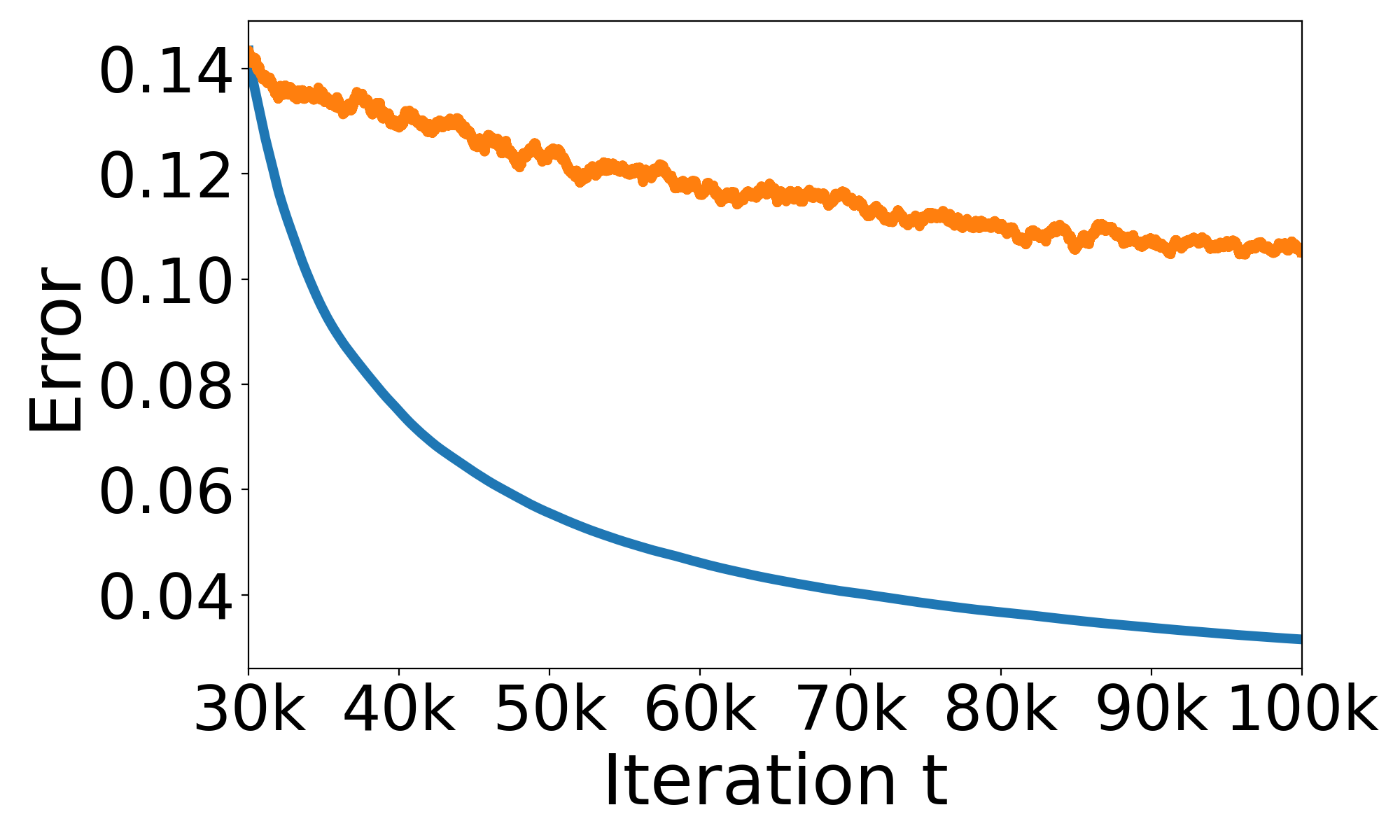}}
\subfigure[GASN$(E_t = B_t)$]{\includegraphics[width=0.325\textwidth]{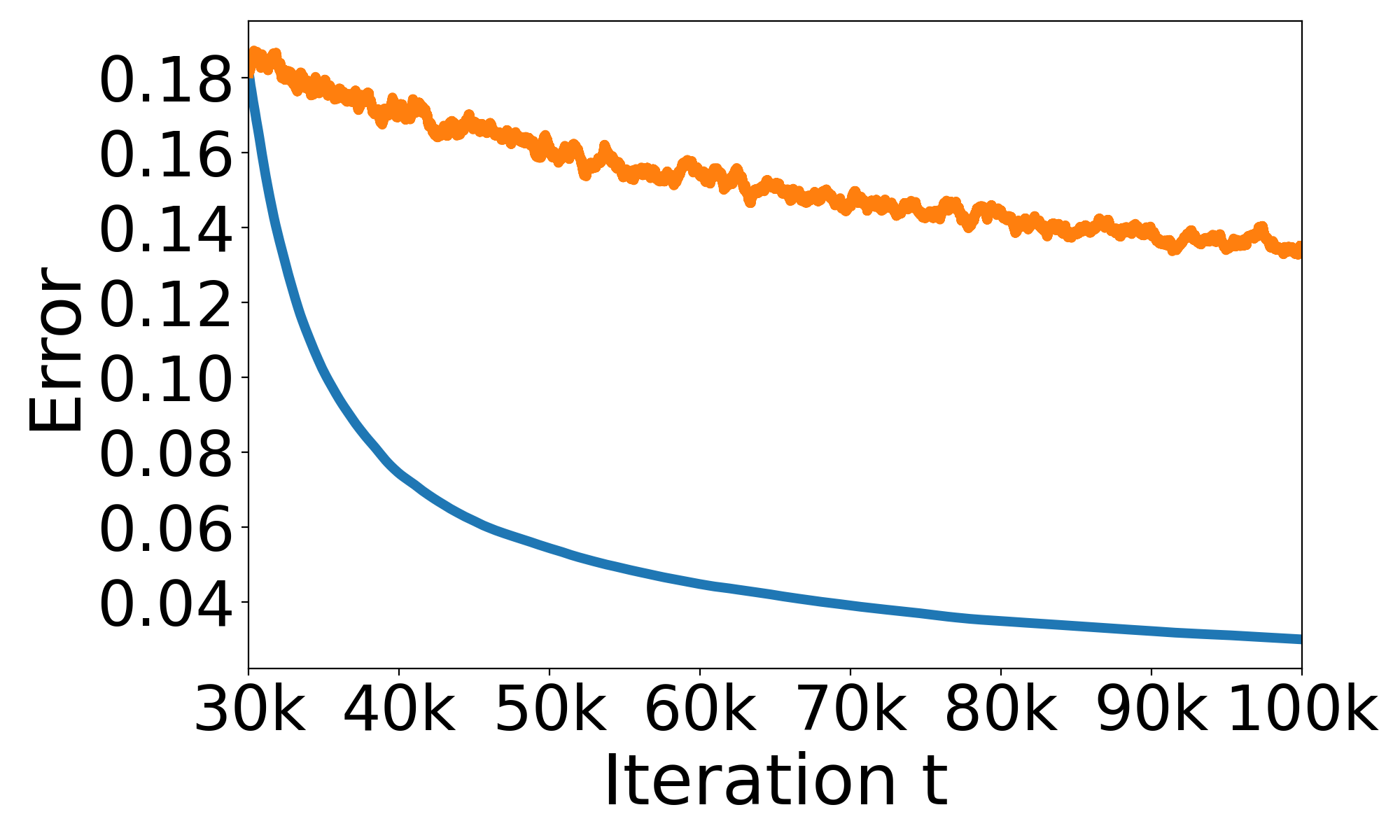}}
\centering{Linear Regression}

\subfigure[EN]{\includegraphics[width=0.325\textwidth]{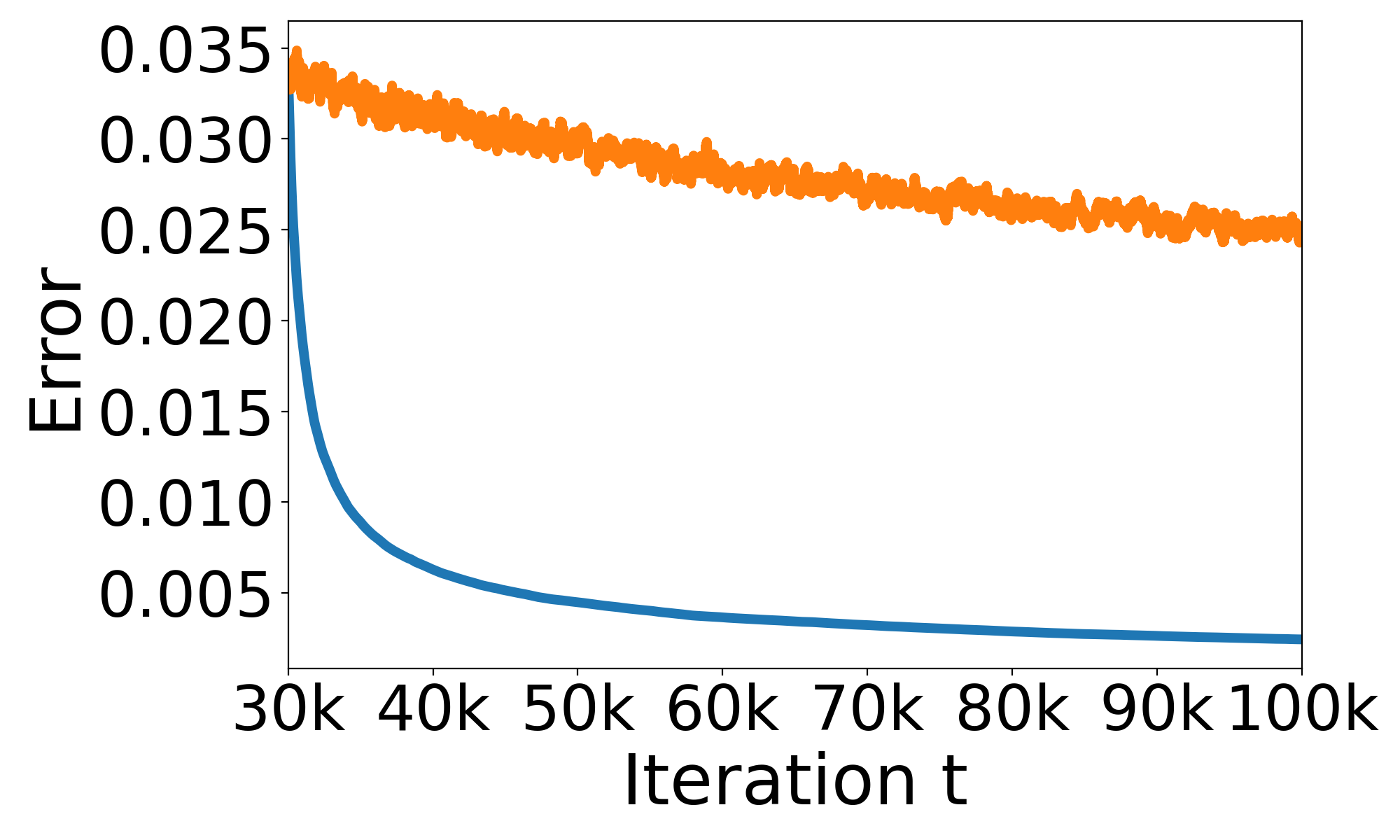}}
\subfigure[ASN$(E_t = I)$]{\includegraphics[width=0.325\textwidth]{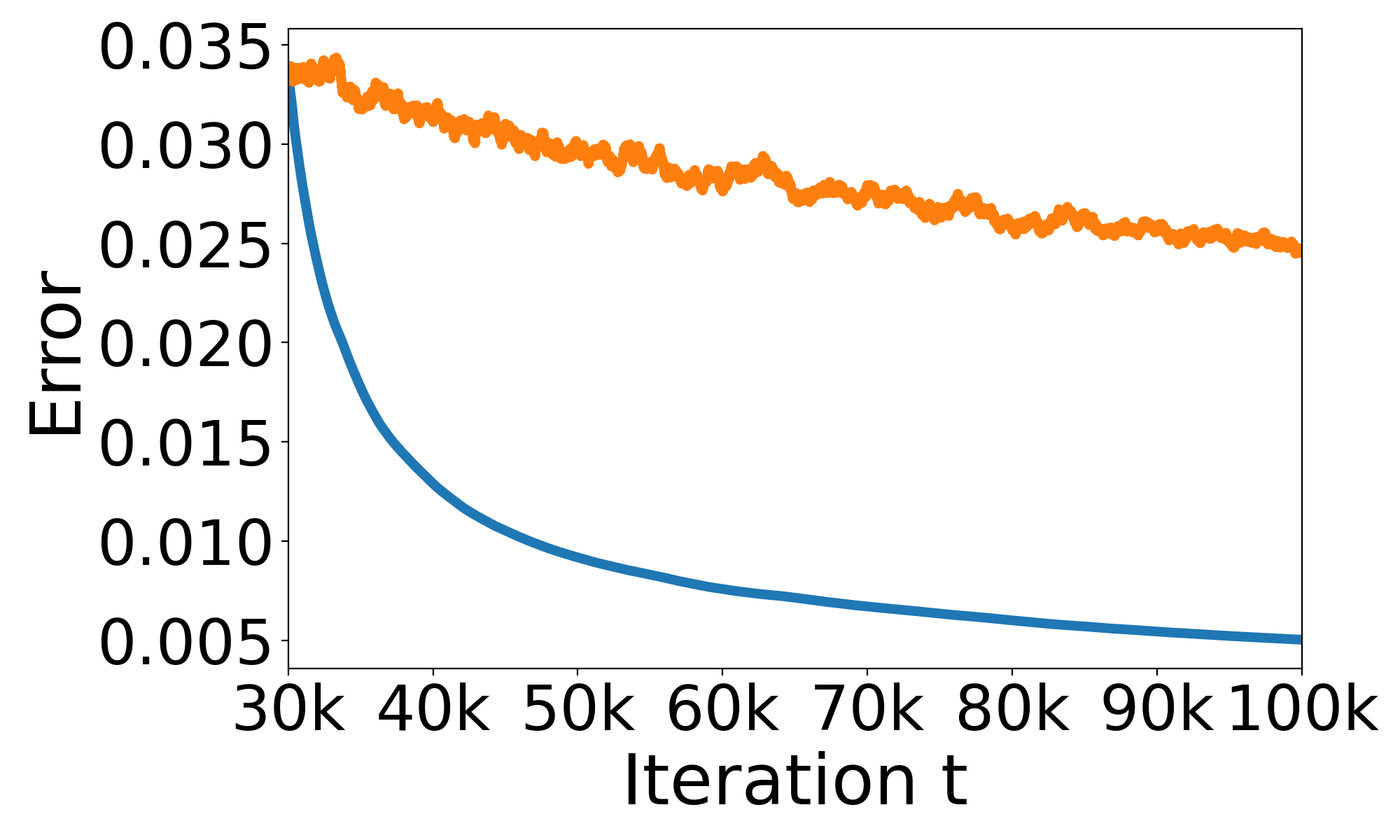}}
\subfigure[GASN$(E_t = B_t)$]{\includegraphics[width=0.325\textwidth]{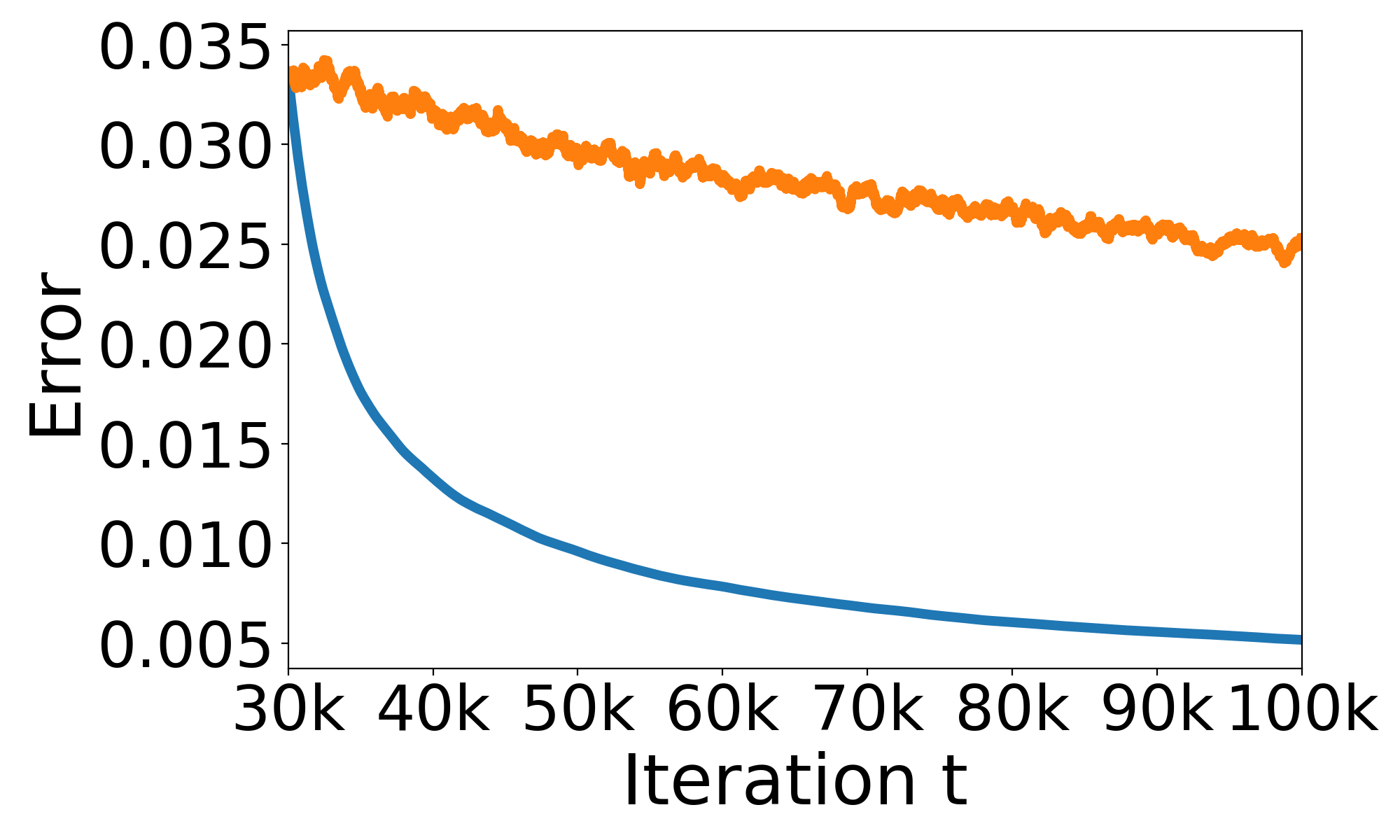}}
\centering{Logistic Regression}
\vskip4pt
\includegraphics[width=0.5\textwidth]{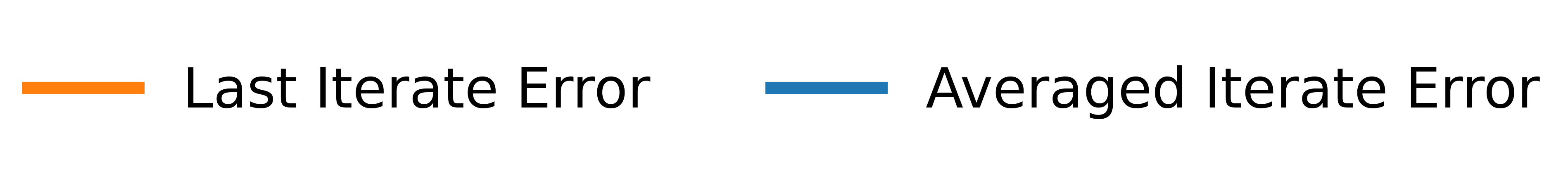}
\caption{\textit{
Comparison of the expected averaged-iterate error $\mE \|\bar{\bx}_t - \bx^{\star}\|$ with the expected last-iterate error $\mE \|\bx_t - \bx^{\star}\|$ for linear and logistic regression under three stochastic Newton methods; both sketched methods are run with $\tau = 5$ sketching steps.
}}\label{fig:error}
\end{figure}
\vspace{-0.2cm}
\noindent$\bullet$ \textbf{Algorithm consistency.} 
In this section, we examine the consistency of stochastic Newton iterates in both linear and logistic regression. Specifically, we compare the expected estimation error of the averaged iterate, $\mE\|\bar{\bx}_t - \bx^{\star}\|$, with that of the last iterate, $\mE\|\bx_t - \bx^{\star}\|$. We approximate these expectations by averaging the iterate errors over 200 independent runs. Figure \ref{fig:error} reports a representative example with $d = 20$ under the equi-correlation design with $r = 0.4$ for three stochastic Newton methods, namely EN, ASN, and GASN; the same qualitative pattern appears in all other settings. The figure shows that the error of the averaged iterate decreases substantially faster than that of the last iterate. This behavior is consistent with Theorem \ref{thm:asymptotic_normality}, which shows that the averaged iterate achieves $\sqrt{t}$-consistency, while the last iterate attains only $\sqrt{1/\varphi_t} \approx O(t^{1/4})$-consistency (see the GASN-vs-ASN comparison in Section~\ref{sec:4.1}). The averaged-iterate curves are also smoother, while the last-iterate curves exhibit larger fluctuations, which suggests lower variability and hence greater statistical efficiency for the averaged iterate.

\vskip4pt

\noindent$\bullet$ \textbf{Complete inference results.}
We provide the complete inference results corresponding to the experiments in Section~\ref{sec:5.1}. Table~\ref{tab:d20_d40_complete_results} reports results for both dimensions $d=20$ and $d = 40$, and sketching steps $\tau\in\{5,10\}$. The green-highlighted rows correspond to the proposed \texttt{AveRS} procedure. Across all these settings, \texttt{AveRS} consistently delivers substantially smaller mean absolute errors and much shorter confidence intervals than the last-iterate covariance-based baseline \texttt{LastBF}, while maintaining comparable empirical coverage. These complete results reinforce the main conclusion in Section~\ref{sec:5.1}: random scaling applied to the averaged GASN iterates provides more accurate and statistically efficient online inference than inference based on last iterates.

The table also illustrates the effect of the inner sketching accuracy. Compared with $\tau=5$, increasing the number of sketching steps to $\tau=10$ generally reduces the mean absolute error and shortens the confidence intervals for the sketched Newton methods. This trend is consistent with Proposition~\ref{prop:comp_stats_tradeoff}, since a larger sketching step $\tau$ gives a more accurate approximation to the Newton direction in~\eqref{equ:Newton_system} and moves the limiting covariance closer to the minimax-optimal covariance $\Omega^{\star}$. Overall, the appendix results show that the advantage of \texttt{AveRS} is robust across dimensions, models, covariance structures, and sketching distributions.

\input{tableAPP}

\input{figureAPP}

\vskip4pt

\noindent$\bullet$ \textbf{Additional covariance comparison.}
We provide additional QQ-plot comparisons for the logistic regression experiments, complementing the covariance comparison in Section~\ref{sec:5.1}. Figure~\ref{fig:qq_logistic} compares the averaged-iterate covariance structures of four stochastic Newton methods: EN, USN, ASN, and GASN, under both coordinate sketches and Gaussian sketches. As a representative configuration, we set $d=20$ with the equi-correlation design $r=0.4$, and use $\tau=5$ sketching steps for the three stochastic Newton methods.

The logistic regression results in Figure \ref{fig:qq_logistic} exhibit the same qualitative patterns as the linear regression results in Figure~\ref{fig:qq_linear}. In the comparisons against EN, the fitted QQ lines for USN, ASN, and GASN are steeper than the reference line $y=x$, indicating that the sketched Newton methods (USN, ASN, GASN) have larger limiting variances than Exact Newton. This agrees with the expected statistical efficiency cost introduced by sketch-and-project. In contrast, the QQ plots comparing USN, ASN, and GASN are closely aligned with the reference line, showing that Nesterov acceleration and the generalized metric choice do not materially increase the uncertainty beyond that already caused by sketch-and-project. These additional results confirm that the covariance behavior observed in~Section~\ref{sec:5.1} is robust across both linear and logistic models.

%% file: tableAPP.tex
\begin{table}[t]
\vskip-0.3cm
\centering
\setlength{\tabcolsep}{4pt}
\renewcommand{\arraystretch}{1.1}
\resizebox{1.0\linewidth}{!}{
\begin{tabular}{|c|c|c|c|c|ccc|ccc|ccc|ccc|}
\hline
\multirow{3}{*}{$d$} & \multirow{3}{*}{\shortstack{Design \\ Cov $\Sigma_a$}} & \multirow{3}{*}{\shortstack{Optimization \\ Algorithm}} & \multirow{3}{*}{$\tau$} & \multirow{3}{*}{\shortstack{Inference \\ Method}} & \multicolumn{3}{c|}{Linear Regression (Coordinate)} & \multicolumn{3}{c|}{Linear Regression (Gaussian)} & \multicolumn{3}{c|}{Logistic Regression (Coordinate)} & \multicolumn{3}{c|}{Logistic Regression (Gaussian)} \\
\cline{6-17}
& & & & & MAE & \multicolumn{1}{|c|}{Ave Cov} & Ave Len & MAE & \multicolumn{1}{|c|}{Ave Cov} & Ave Len & MAE & \multicolumn{1}{|c|}{Ave Cov} & Ave Len & MAE & \multicolumn{1}{|c|}{Ave Cov} & Ave Len \\[-4pt]
& & & & & {\scriptsize $(10^{-2})$} & \multicolumn{1}{|c|}{{\scriptsize $(\%)$}} & {\scriptsize $(10^{-2})$} & {\scriptsize $(10^{-2})$} & \multicolumn{1}{|c|}{{\scriptsize $(\%)$}} & {\scriptsize $(10^{-2})$} & {\scriptsize $(10^{-2})$} & \multicolumn{1}{|c|}{{\scriptsize $(\%)$}} & {\scriptsize $(10^{-2})$} & {\scriptsize $(10^{-2})$} & \multicolumn{1}{|c|}{{\scriptsize $(\%)$}} & {\scriptsize $(10^{-2})$} \\
\hline

\multirow{30}{*}{20} 
& \multirow{10}{*}{\shortstack{Identity}} & \multirow{2}{*}{EN} & \multirow{2}{*}{--} & \multicolumn{1}{|c|}{\texttt{LastBF}} & 17.89 & \multicolumn{1}{|c|}{96.50} & 3.54 & 17.71 & \multicolumn{1}{|c|}{96.00} & 3.55 & 2.95 & \multicolumn{1}{|c|}{95.50} & 0.42 & 2.96 & \multicolumn{1}{|c|}{94.00} & 0.42 \\
\cline{5-17}
& & & & \gmc{\texttt{AveRS}} & \Gthree{1.69}{95.00}{0.46} & \Gthree{1.72}{94.00}{0.44} & \Gthree{0.29}{96.50}{0.06} & \Gthree{0.29}{92.00}{0.05} \\
\cline{3-17}
& & \multirow{4}{*}{\shortstack{ASN \\ ($E_t=I$)}} & \multirow{2}{*}{5} & \multicolumn{1}{|c|}{\texttt{LastBF}} & 17.86 & \multicolumn{1}{|c|}{95.50} & 3.50 & 18.20 & \multicolumn{1}{|c|}{95.00} & 3.61 & 3.01 & \multicolumn{1}{|c|}{92.50} & 0.54 & 2.94 & \multicolumn{1}{|c|}{94.50} & 0.55 \\
\cline{5-17}
& & & & \gmc{\texttt{AveRS}} & \Gthree{3.62}{94.00}{0.89} & \Gthree{3.60}{92.50}{0.93} & \Gthree{0.62}{94.00}{0.15} & \Gthree{0.61}{91.50}{0.14} \\
\cline{4-17}
& & & \multirow{2}{*}{10} & \multicolumn{1}{|c|}{\texttt{LastBF}} & 17.98 & \multicolumn{1}{|c|}{94.00} & 3.54 & 18.68 & \multicolumn{1}{|c|}{98.00} & 3.76 & 3.02 & \multicolumn{1}{|c|}{92.50} & 0.52 & 3.11 & \multicolumn{1}{|c|}{97.00} & 0.54 \\
\cline{5-17}
& & & & \gmc{\texttt{AveRS}} & \Gthree{2.74}{92.50}{0.71} & \Gthree{2.65}{95.50}{0.70} & \Gthree{0.47}{94.00}{0.11} & \Gthree{0.46}{94.00}{0.11} \\
\cline{3-17}
& & \multirow{4}{*}{\shortstack{GASN \\ ($E_t=B_t$)}} & \multirow{2}{*}{5} & \multicolumn{1}{|c|}{\texttt{LastBF}} & 17.65 & \multicolumn{1}{|c|}{96.00} & 3.51 & 18.02 & \multicolumn{1}{|c|}{93.00} & 3.60 & 2.98 & \multicolumn{1}{|c|}{93.00} & 0.55 & 3.03 & \multicolumn{1}{|c|}{96.50} & 0.55 \\
\cline{5-17}
& & & & \gmc{\texttt{AveRS}} & \Gthree{3.61}{94.00}{0.94} & \Gthree{3.66}{94.00}{0.91} & \Gthree{0.61}{93.00}{0.15} & \Gthree{0.61}{95.50}{0.15} \\
\cline{4-17}
& & & \multirow{2}{*}{10} & \multicolumn{1}{|c|}{\texttt{LastBF}} & 17.78 & \multicolumn{1}{|c|}{93.00} & 3.52 & 18.71 & \multicolumn{1}{|c|}{93.00} & 3.71 & 2.95 & \multicolumn{1}{|c|}{95.50} & 0.53 & 3.17 & \multicolumn{1}{|c|}{93.50} & 0.55 \\
\cline{5-17}
& & & & \gmc{\texttt{AveRS}} & \Gthree{2.70}{94.50}{0.70} & \Gthree{2.69}{95.50}{0.69} & \Gthree{0.46}{95.50}{0.11} & \Gthree{0.46}{94.50}{0.11} \\
\cline{2-17}\cline{2-17}

& \multirow{10}{*}{\shortstack{Toeplitz\\$r=0.4$}} & \multirow{2}{*}{EN} & \multirow{2}{*}{--} & \multicolumn{1}{|c|}{\texttt{LastBF}} & 20.50 & \multicolumn{1}{|c|}{97.50} & 2.38 & 20.46 & \multicolumn{1}{|c|}{95.00} & 2.38 & 2.42 & \multicolumn{1}{|c|}{95.00} & 0.47 & 2.45 & \multicolumn{1}{|c|}{96.00} & 0.47 \\
\cline{5-17}
& & & & \gmc{\texttt{AveRS}} & \Gthree{1.91}{94.50}{0.30} & \Gthree{1.97}{96.50}{0.31} & \Gthree{0.24}{95.00}{0.06} & \Gthree{0.25}{96.50}{0.06} \\
\cline{3-17}
& & \multirow{4}{*}{\shortstack{ASN \\ ($E_t=I$)}} & \multirow{2}{*}{5} & \multicolumn{1}{|c|}{\texttt{LastBF}} & 15.89 & \multicolumn{1}{|c|}{95.00} & 3.05 & 16.73 & \multicolumn{1}{|c|}{95.50} & 3.05 & 2.50 & \multicolumn{1}{|c|}{94.50} & 0.48 & 2.57 & \multicolumn{1}{|c|}{93.50} & 0.49 \\
\cline{5-17}
& & & & \gmc{\texttt{AveRS}} & \Gthree{5.10}{93.50}{0.46} & \Gthree{5.12}{93.50}{0.51} & \Gthree{0.52}{91.00}{0.13} & \Gthree{0.52}{92.50}{0.13} \\
\cline{4-17}
& & & \multirow{2}{*}{10} & \multicolumn{1}{|c|}{\texttt{LastBF}} & 16.58 & \multicolumn{1}{|c|}{97.00} & 3.02 & 17.36 & \multicolumn{1}{|c|}{97.00} & 3.04 & 2.56 & \multicolumn{1}{|c|}{92.50} & 0.48 & 2.61 & \multicolumn{1}{|c|}{94.00} & 0.51 \\
\cline{5-17}
& & & & \gmc{\texttt{AveRS}} & \Gthree{3.73}{95.00}{0.39} & \Gthree{3.76}{95.00}{0.41} & \Gthree{0.39}{93.50}{0.10} & \Gthree{0.38}{96.50}{0.10} \\
\cline{3-17}
& & \multirow{4}{*}{\shortstack{GASN \\ ($E_t=B_t$)}} & \multirow{2}{*}{5} & \multicolumn{1}{|c|}{\texttt{LastBF}} & 20.46 & \multicolumn{1}{|c|}{94.00} & 2.50 & 21.35 & \multicolumn{1}{|c|}{96.00} & 2.51 & 2.49 & \multicolumn{1}{|c|}{95.50} & 0.48 & 2.55 & \multicolumn{1}{|c|}{94.00} & 0.50 \\
\cline{5-17}
& & & & \gmc{\texttt{AveRS}} & \Gthree{5.03}{95.00}{0.47} & \Gthree{4.92}{95.00}{0.48} & \Gthree{0.52}{95.00}{0.13} & \Gthree{0.51}{93.50}{0.13} \\
\cline{4-17}
& & & \multirow{2}{*}{10} & \multicolumn{1}{|c|}{\texttt{LastBF}} & 20.01 & \multicolumn{1}{|c|}{97.00} & 2.58 & 21.35 & \multicolumn{1}{|c|}{97.50} & 2.60 & 2.55 & \multicolumn{1}{|c|}{92.50} & 0.48 & 2.66 & \multicolumn{1}{|c|}{95.50} & 0.51 \\
\cline{5-17}
& & & & \gmc{\texttt{AveRS}} & \Gthree{3.66}{95.00}{0.42} & \Gthree{3.66}{95.00}{0.39} & \Gthree{0.38}{96.50}{0.10} & \Gthree{0.38}{97.00}{0.10} \\
\cline{2-17}\cline{2-17}

& \multirow{10}{*}{\shortstack{Equi-Corr\\$r=0.4$}} & \multirow{2}{*}{EN} & \multirow{2}{*}{--} & \multicolumn{1}{|c|}{\texttt{LastBF}} & 13.66 & \multicolumn{1}{|c|}{94.00} & 1.13 & 13.81 & \multicolumn{1}{|c|}{96.00} & 1.14 & 2.54 & \multicolumn{1}{|c|}{94.50} & 0.41 & 2.53 & \multicolumn{1}{|c|}{95.50} & 0.41 \\
\cline{5-17}
& & & & \gmc{\texttt{AveRS}} & \Gthree{1.32}{95.50}{0.14} & \Gthree{1.30}{96.50}{0.15} & \Gthree{0.25}{96.00}{0.05} & \Gthree{0.24}{95.00}{0.05} \\
\cline{3-17}
& & \multirow{4}{*}{\shortstack{ASN \\ ($E_t=I$)}} & \multirow{2}{*}{5} & \multicolumn{1}{|c|}{\texttt{LastBF}} & 10.50 & \multicolumn{1}{|c|}{94.50} & 1.74 & 12.82 & \multicolumn{1}{|c|}{95.50} & 1.77 & 2.48 & \multicolumn{1}{|c|}{95.50} & 0.47 & 2.60 & \multicolumn{1}{|c|}{95.50} & 0.48 \\
\cline{5-17}
& & & & \gmc{\texttt{AveRS}} & \Gthree{3.22}{96.50}{0.22} & \Gthree{3.18}{94.00}{0.23} & \Gthree{0.50}{92.50}{0.12} & \Gthree{0.50}{94.00}{0.12} \\
\cline{4-17}
& & & \multirow{2}{*}{10} & \multicolumn{1}{|c|}{\texttt{LastBF}} & 10.71 & \multicolumn{1}{|c|}{94.00} & 1.69 & 13.17 & \multicolumn{1}{|c|}{95.50} & 1.69 & 2.49 & \multicolumn{1}{|c|}{95.50} & 0.46 & 2.58 & \multicolumn{1}{|c|}{94.50} & 0.48 \\
\cline{5-17}
& & & & \gmc{\texttt{AveRS}} & \Gthree{2.40}{94.50}{0.22} & \Gthree{2.35}{96.50}{0.22} & \Gthree{0.39}{97.50}{0.09} & \Gthree{0.38}{97.50}{0.10} \\
\cline{3-17}
& & \multirow{4}{*}{\shortstack{GASN \\ ($E_t=B_t$)}} & \multirow{2}{*}{5} & \multicolumn{1}{|c|}{\texttt{LastBF}} & 13.31 & \multicolumn{1}{|c|}{96.00} & 1.26 & 13.78 & \multicolumn{1}{|c|}{96.00} & 1.25 & 2.55 & \multicolumn{1}{|c|}{92.50} & 0.47 & 2.62 & \multicolumn{1}{|c|}{96.50} & 0.48 \\
\cline{5-17}
& & & & \gmc{\texttt{AveRS}} & \Gthree{3.03}{95.00}{0.19} & \Gthree{2.91}{97.00}{0.19} & \Gthree{0.50}{92.50}{0.13} & \Gthree{0.50}{94.50}{0.13} \\
\cline{4-17}
& & & \multirow{2}{*}{10} & \multicolumn{1}{|c|}{\texttt{LastBF}} & 13.39 & \multicolumn{1}{|c|}{95.50} & 1.28 & 14.23 & \multicolumn{1}{|c|}{93.50} & 1.30 & 2.54 & \multicolumn{1}{|c|}{94.50} & 0.46 & 2.63 & \multicolumn{1}{|c|}{95.00} & 0.49 \\
\cline{5-17}
& & & & \gmc{\texttt{AveRS}} & \Gthree{2.19}{95.00}{0.16} & \Gthree{2.21}{96.00}{0.17} & \Gthree{0.39}{97.00}{0.09} & \Gthree{0.39}{98.00}{0.09} \\
\hline

\multirow{30}{*}{40} 
& \multirow{10}{*}{\shortstack{Identity}} & \multirow{2}{*}{EN} & \multirow{2}{*}{--} & \multicolumn{1}{|c|}{\texttt{LastBF}} & 25.79 & \multicolumn{1}{|c|}{95.50} & 2.56 & 25.69 & \multicolumn{1}{|c|}{96.00} & 2.57 & 3.75 & \multicolumn{1}{|c|}{96.50} & 0.24 & 3.78 & \multicolumn{1}{|c|}{98.50} & 0.24 \\
\cline{5-17}
& & & & \gmc{\texttt{AveRS}} & \Gthree{2.48}{98.00}{0.33} & \Gthree{2.45}{95.50}{0.33} & \Gthree{0.37}{95.00}{0.03} & \Gthree{0.37}{94.50}{0.03} \\
\cline{3-17}
& & \multirow{4}{*}{\shortstack{ASN \\ ($E_t=I$)}} & \multirow{2}{*}{5} & \multicolumn{1}{|c|}{\texttt{LastBF}} & 26.04 & \multicolumn{1}{|c|}{92.50} & 2.46 & 26.61 & \multicolumn{1}{|c|}{94.50} & 2.58 & 3.75 & \multicolumn{1}{|c|}{93.00} & 0.35 & 3.91 & \multicolumn{1}{|c|}{93.50} & 0.35 \\
\cline{5-17}
& & & & \gmc{\texttt{AveRS}} & \Gthree{7.16}{91.50}{0.82} & \Gthree{7.31}{91.50}{0.88} & \Gthree{1.07}{92.00}{0.13} & \Gthree{1.07}{94.00}{0.12} \\
\cline{4-17}
& & & \multirow{2}{*}{10} & \multicolumn{1}{|c|}{\texttt{LastBF}} & 25.65 & \multicolumn{1}{|c|}{93.50} & 2.54 & 27.79 & \multicolumn{1}{|c|}{97.00} & 2.72 & 3.71 & \multicolumn{1}{|c|}{93.00} & 0.34 & 4.03 & \multicolumn{1}{|c|}{94.00} & 0.36 \\
\cline{5-17}
& & & & \gmc{\texttt{AveRS}} & \Gthree{5.21}{92.50}{0.64} & \Gthree{5.23}{94.00}{0.66} & \Gthree{0.78}{94.00}{0.09} & \Gthree{0.79}{90.50}{0.09} \\
\cline{3-17}
& & \multirow{4}{*}{\shortstack{GASN \\ ($E_t=B_t$)}} & \multirow{2}{*}{5} & \multicolumn{1}{|c|}{\texttt{LastBF}} & 25.73 & \multicolumn{1}{|c|}{93.50} & 2.49 & 27.17 & \multicolumn{1}{|c|}{94.00} & 2.57 & 3.81 & \multicolumn{1}{|c|}{93.50} & 0.35 & 3.88 & \multicolumn{1}{|c|}{94.50} & 0.36 \\
\cline{5-17}
& & & & \gmc{\texttt{AveRS}} & \Gthree{7.19}{95.00}{0.84} & \Gthree{7.13}{91.50}{0.86} & \Gthree{1.05}{92.00}{0.12} & \Gthree{1.05}{95.50}{0.12} \\
\cline{4-17}
& & & \multirow{2}{*}{10} & \multicolumn{1}{|c|}{\texttt{LastBF}} & 25.82 & \multicolumn{1}{|c|}{94.00} & 2.52 & 27.42 & \multicolumn{1}{|c|}{91.00} & 2.73 & 3.72 & \multicolumn{1}{|c|}{95.00} & 0.34 & 4.05 & \multicolumn{1}{|c|}{94.00} & 0.36 \\
\cline{5-17}
& & & & \gmc{\texttt{AveRS}} & \Gthree{5.18}{95.50}{0.70} & \Gthree{5.33}{90.50}{0.66} & \Gthree{0.77}{92.00}{0.09} & \Gthree{0.79}{94.00}{0.09} \\
\cline{2-17}\cline{2-17}

& \multirow{10}{*}{\shortstack{Toeplitz\\$r=0.4$}} & \multirow{2}{*}{EN} & \multirow{2}{*}{--} & \multicolumn{1}{|c|}{\texttt{LastBF}} & 29.99 & \multicolumn{1}{|c|}{95.00} & 1.70 & 30.17 & \multicolumn{1}{|c|}{95.50} & 1.70 & 3.12 & \multicolumn{1}{|c|}{94.00} & 0.27 & 3.09 & \multicolumn{1}{|c|}{95.00} & 0.27 \\
\cline{5-17}
& & & & \gmc{\texttt{AveRS}} & \Gthree{2.90}{96.00}{0.22} & \Gthree{2.90}{93.00}{0.23} & \Gthree{0.30}{96.00}{0.04} & \Gthree{0.30}{94.50}{0.03} \\
\cline{3-17}
& & \multirow{4}{*}{\shortstack{ASN \\ ($E_t=I$)}} & \multirow{2}{*}{5} & \multicolumn{1}{|c|}{\texttt{LastBF}} & 22.24 & \multicolumn{1}{|c|}{96.50} & 2.17 & 23.77 & \multicolumn{1}{|c|}{96.00} & 2.18 & 3.11 & \multicolumn{1}{|c|}{92.00} & 0.29 & 3.24 & \multicolumn{1}{|c|}{96.50} & 0.30 \\
\cline{5-17}
& & & & \gmc{\texttt{AveRS}} & \Gthree{11.33}{93.00}{0.42} & \Gthree{11.31}{92.00}{0.46} & \Gthree{0.87}{95.00}{0.11} & \Gthree{0.87}{94.50}{0.10} \\
\cline{4-17}
& & & \multirow{2}{*}{10} & \multicolumn{1}{|c|}{\texttt{LastBF}} & 22.72 & \multicolumn{1}{|c|}{96.50} & 2.16 & 24.59 & \multicolumn{1}{|c|}{95.00} & 2.24 & 3.12 & \multicolumn{1}{|c|}{96.00} & 0.29 & 3.37 & \multicolumn{1}{|c|}{95.50} & 0.32 \\
\cline{5-17}
& & & & \gmc{\texttt{AveRS}} & \Gthree{7.43}{96.00}{0.33} & \Gthree{7.52}{95.00}{0.35} & \Gthree{0.64}{91.50}{0.08} & \Gthree{0.64}{92.00}{0.08} \\
\cline{3-17}
& & \multirow{4}{*}{\shortstack{GASN \\ ($E_t=B_t$)}} & \multirow{2}{*}{5} & \multicolumn{1}{|c|}{\texttt{LastBF}} & 29.89 & \multicolumn{1}{|c|}{97.00} & 1.73 & 31.39 & \multicolumn{1}{|c|}{93.50} & 1.75 & 3.15 & \multicolumn{1}{|c|}{95.00} & 0.30 & 3.27 & \multicolumn{1}{|c|}{94.50} & 0.30 \\
\cline{5-17}
& & & & \gmc{\texttt{AveRS}} & \Gthree{10.67}{96.00}{0.41} & \Gthree{10.45}{91.00}{0.42} & \Gthree{0.89}{93.00}{0.11} & \Gthree{0.87}{93.50}{0.11} \\
\cline{4-17}
& & & \multirow{2}{*}{10} & \multicolumn{1}{|c|}{\texttt{LastBF}} & 30.03 & \multicolumn{1}{|c|}{95.00} & 1.78 & 31.73 & \multicolumn{1}{|c|}{97.00} & 1.86 & 3.13 & \multicolumn{1}{|c|}{95.50} & 0.29 & 3.31 & \multicolumn{1}{|c|}{94.50} & 0.32 \\
\cline{5-17}
& & & & \gmc{\texttt{AveRS}} & \Gthree{7.42}{94.00}{0.33} & \Gthree{7.59}{97.00}{0.33} & \Gthree{0.65}{95.00}{0.08} & \Gthree{0.65}{94.50}{0.08} \\
\cline{2-17}\cline{2-17}

& \multirow{10}{*}{\shortstack{Equi-Corr\\$r=0.4$}} & \multirow{2}{*}{EN} & \multirow{2}{*}{--} & \multicolumn{1}{|c|}{\texttt{LastBF}} & 20.13 & \multicolumn{1}{|c|}{95.00} & 0.61 & 20.08 & \multicolumn{1}{|c|}{94.00} & 0.61 & 2.66 & \multicolumn{1}{|c|}{95.00} & 0.19 & 2.63 & \multicolumn{1}{|c|}{94.50} & 0.19 \\
\cline{5-17}
& & & & \gmc{\texttt{AveRS}} & \Gthree{1.93}{95.00}{0.08} & \Gthree{1.94}{95.00}{0.08} & \Gthree{0.26}{94.00}{0.02} & \Gthree{0.25}{97.00}{0.03} \\
\cline{3-17}
& & \multirow{4}{*}{\shortstack{ASN \\ ($E_t=I$)}} & \multirow{2}{*}{5} & \multicolumn{1}{|c|}{\texttt{LastBF}} & 12.63 & \multicolumn{1}{|c|}{95.00} & 1.04 & 17.69 & \multicolumn{1}{|c|}{95.00} & 1.12 & 2.69 & \multicolumn{1}{|c|}{93.00} & 0.24 & 2.75 & \multicolumn{1}{|c|}{92.00} & 0.25 \\
\cline{5-17}
& & & & \gmc{\texttt{AveRS}} & \Gthree{6.61}{97.00}{0.13} & \Gthree{6.85}{95.00}{0.15} & \Gthree{0.73}{94.50}{0.09} & \Gthree{0.74}{92.50}{0.08} \\
\cline{4-17}
& & & \multirow{2}{*}{10} & \multicolumn{1}{|c|}{\texttt{LastBF}} & 13.20 & \multicolumn{1}{|c|}{97.00} & 1.02 & 18.86 & \multicolumn{1}{|c|}{96.00} & 1.09 & 2.66 & \multicolumn{1}{|c|}{94.50} & 0.24 & 2.83 & \multicolumn{1}{|c|}{93.50} & 0.26 \\
\cline{5-17}
& & & & \gmc{\texttt{AveRS}} & \Gthree{4.88}{96.50}{0.13} & \Gthree{4.98}{96.50}{0.13} & \Gthree{0.55}{94.00}{0.07} & \Gthree{0.54}{95.50}{0.07} \\
\cline{3-17}
& & \multirow{4}{*}{\shortstack{GASN \\ ($E_t=B_t$)}} & \multirow{2}{*}{5} & \multicolumn{1}{|c|}{\texttt{LastBF}} & 19.77 & \multicolumn{1}{|c|}{94.00} & 0.68 & 20.31 & \multicolumn{1}{|c|}{97.50} & 0.67 & 2.64 & \multicolumn{1}{|c|}{92.00} & 0.25 & 2.73 & \multicolumn{1}{|c|}{94.00} & 0.25 \\
\cline{5-17}
& & & & \gmc{\texttt{AveRS}} & \Gthree{5.94}{96.00}{0.10} & \Gthree{5.98}{93.00}{0.10} & \Gthree{0.74}{93.00}{0.09} & \Gthree{0.74}{92.50}{0.09} \\
\cline{4-17}
& & & \multirow{2}{*}{10} & \multicolumn{1}{|c|}{\texttt{LastBF}} & 19.32 & \multicolumn{1}{|c|}{94.00} & 0.70 & 21.49 & \multicolumn{1}{|c|}{93.50} & 0.71 & 2.64 & \multicolumn{1}{|c|}{96.00} & 0.24 & 2.86 & \multicolumn{1}{|c|}{93.00} & 0.26 \\
\cline{5-17}
& & & & \gmc{\texttt{AveRS}} & \Gthree{4.37}{92.50}{0.09} & \Gthree{4.34}{96.50}{0.09} & \Gthree{0.54}{92.50}{0.06} & \Gthree{0.54}{94.50}{0.07} \\
\hline
\end{tabular}
}
\vspace{0.1cm}
\caption{\textit{Evaluation results for the inference procedures under different parameter settings. For Exact Newton (EN), sketching steps are not applicable, so we denote it by ``--''.}}
\label{tab:d20_d40_complete_results}
\vspace{-0.3cm}
\end{table}

%% file: figureAPP.tex
\begin{figure}[t!]
\centering     
\subfigure{\includegraphics[width=0.16\textwidth]{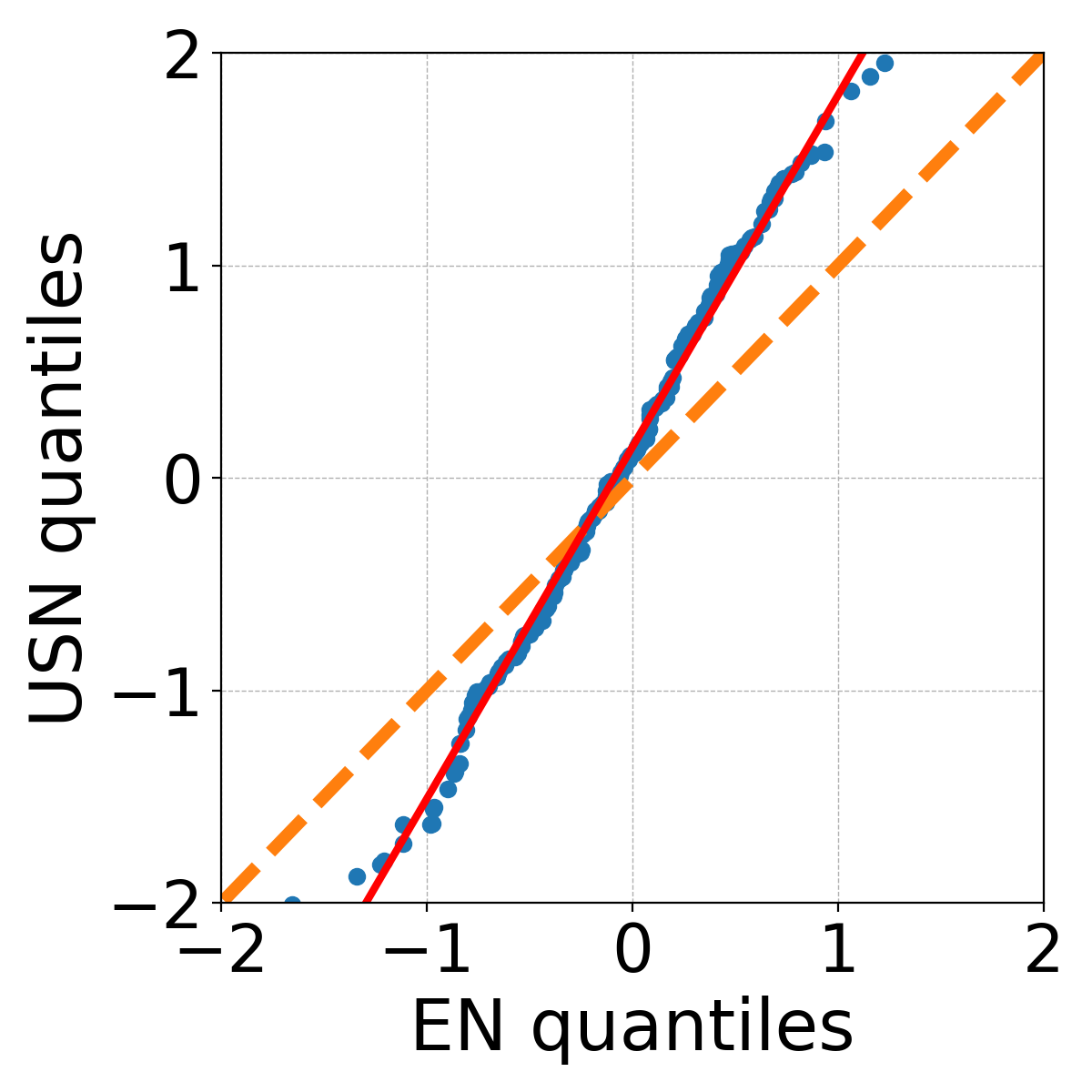}}
\subfigure{\includegraphics[width=0.16\textwidth]{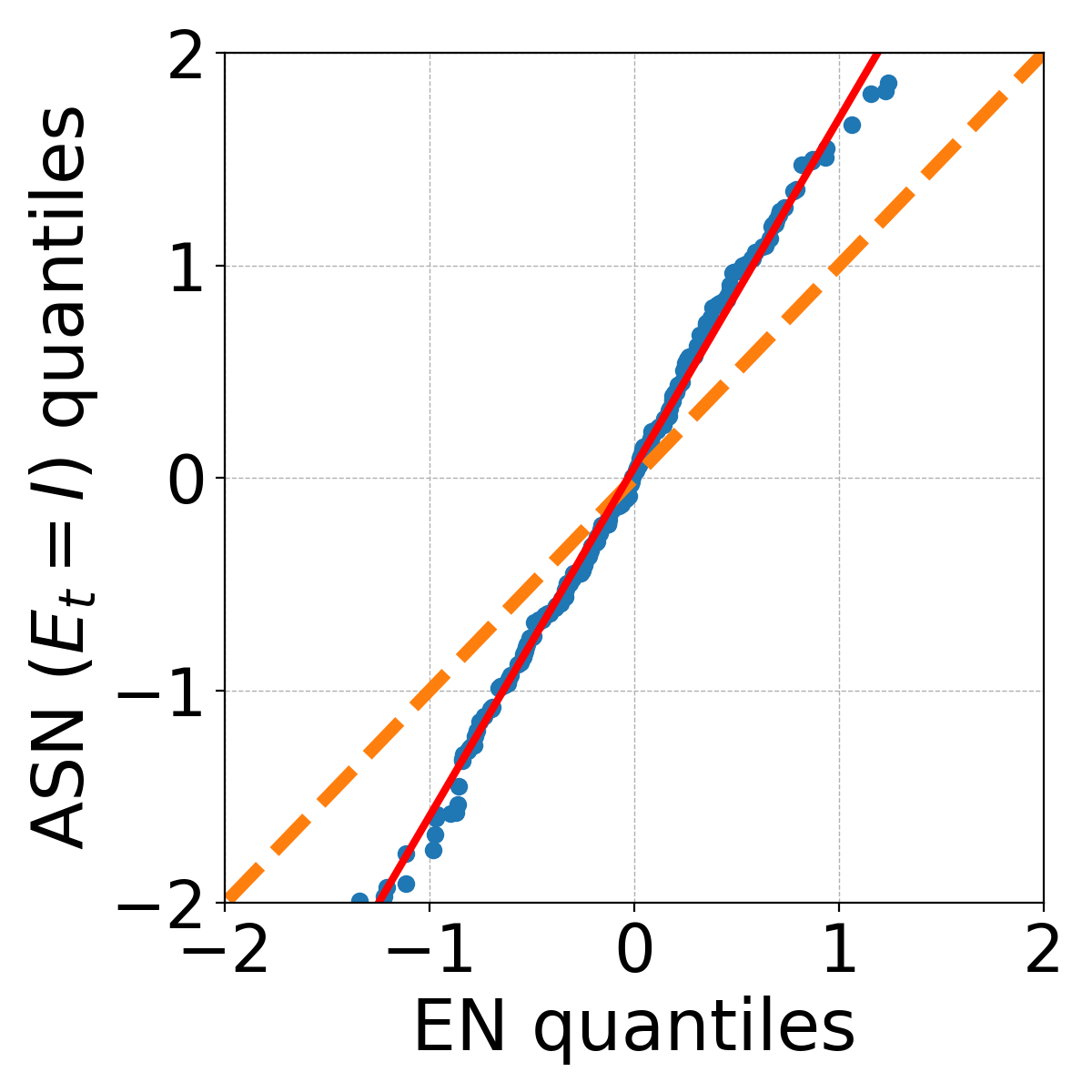}}
\subfigure{\includegraphics[width=0.16\textwidth]{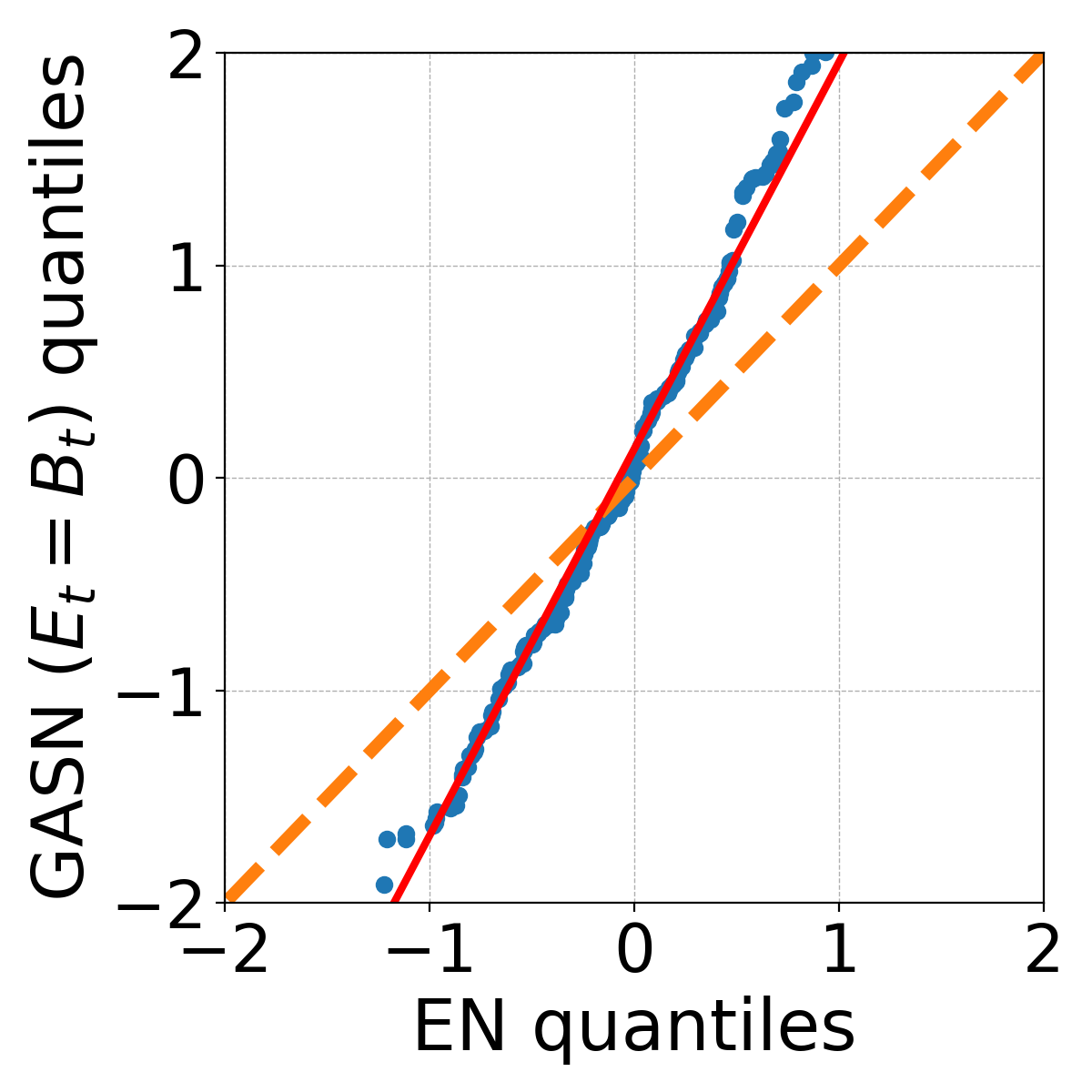}}
\subfigure{\includegraphics[width=0.16\textwidth]{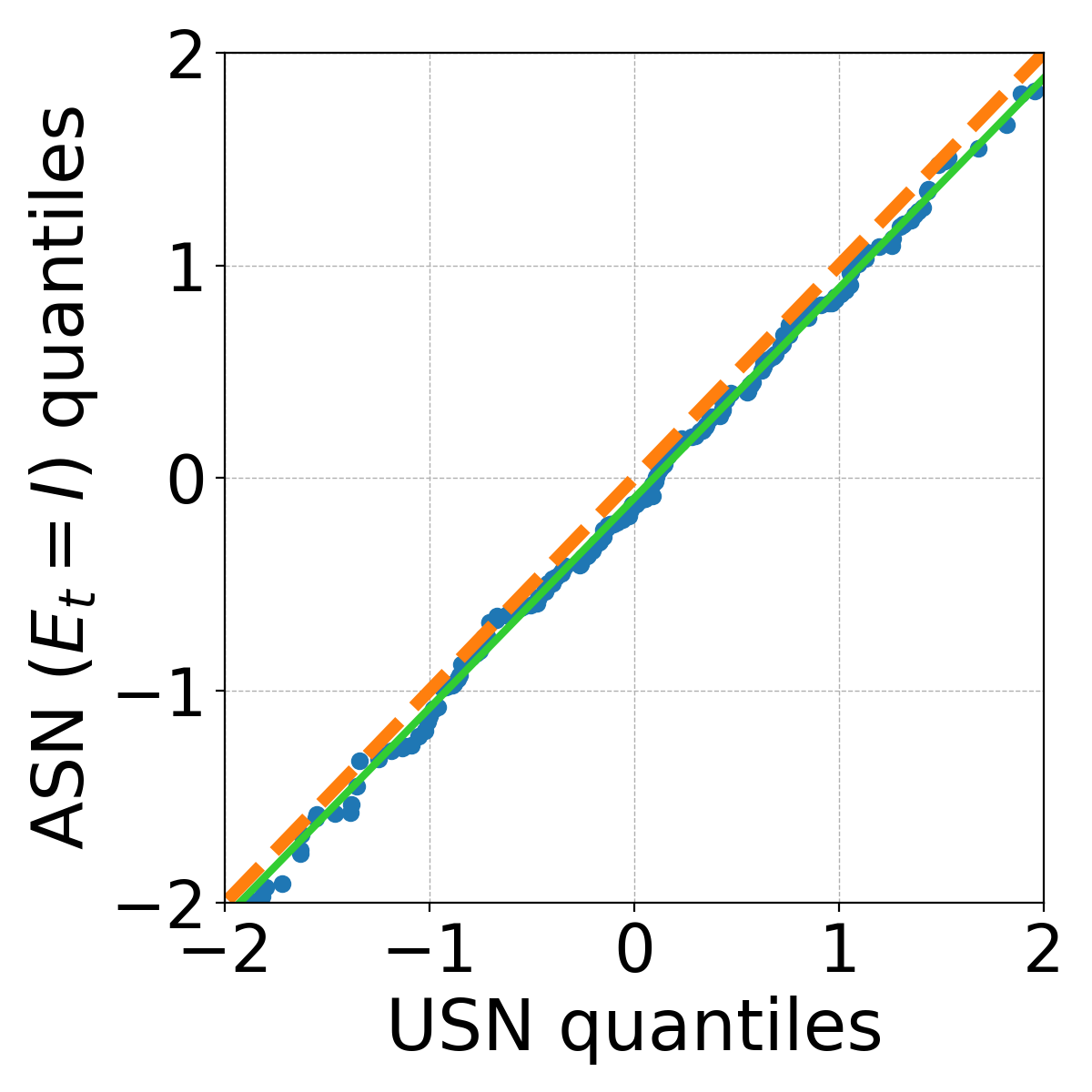}}
\subfigure{\includegraphics[width=0.16\textwidth]{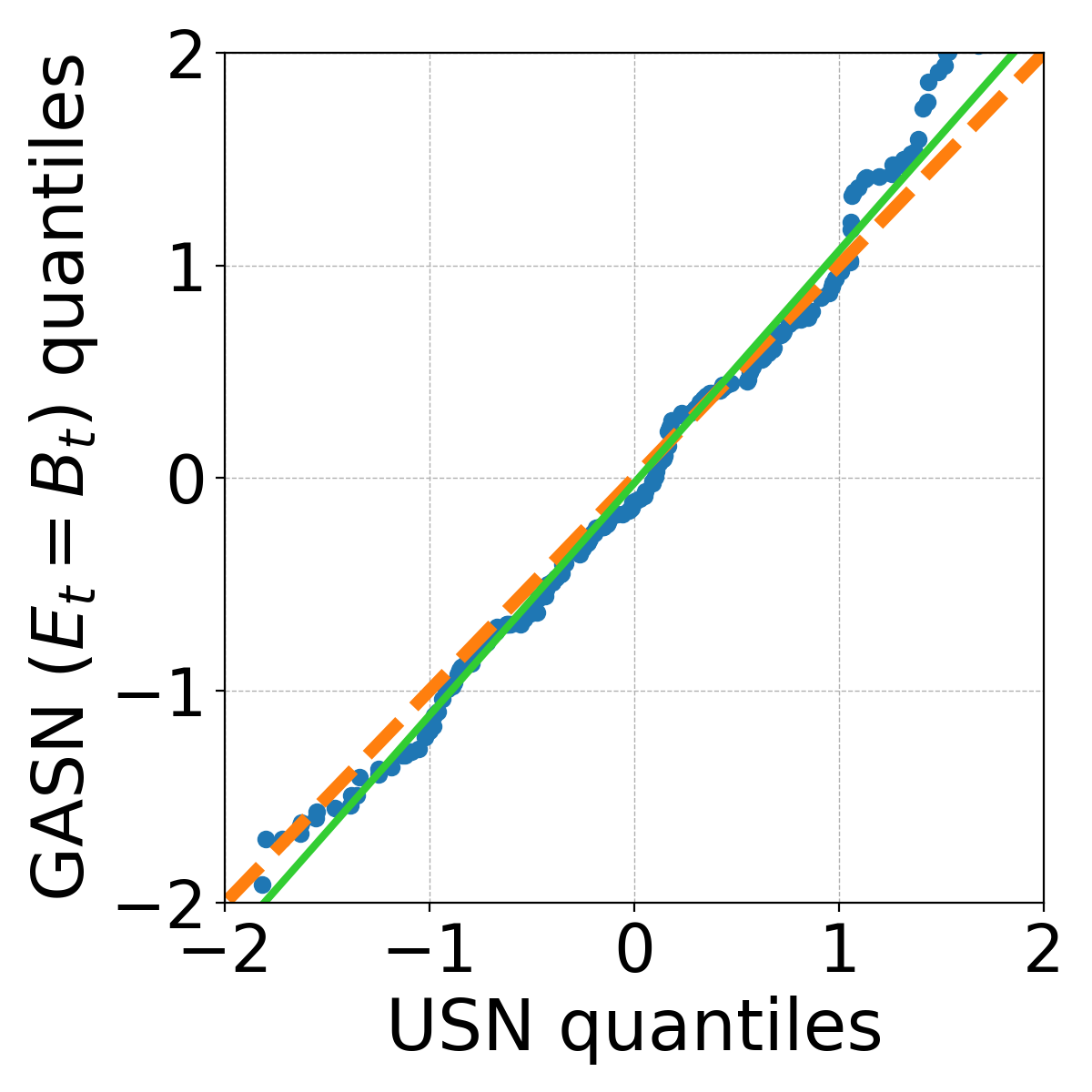}}
\subfigure{\includegraphics[width=0.16\textwidth]{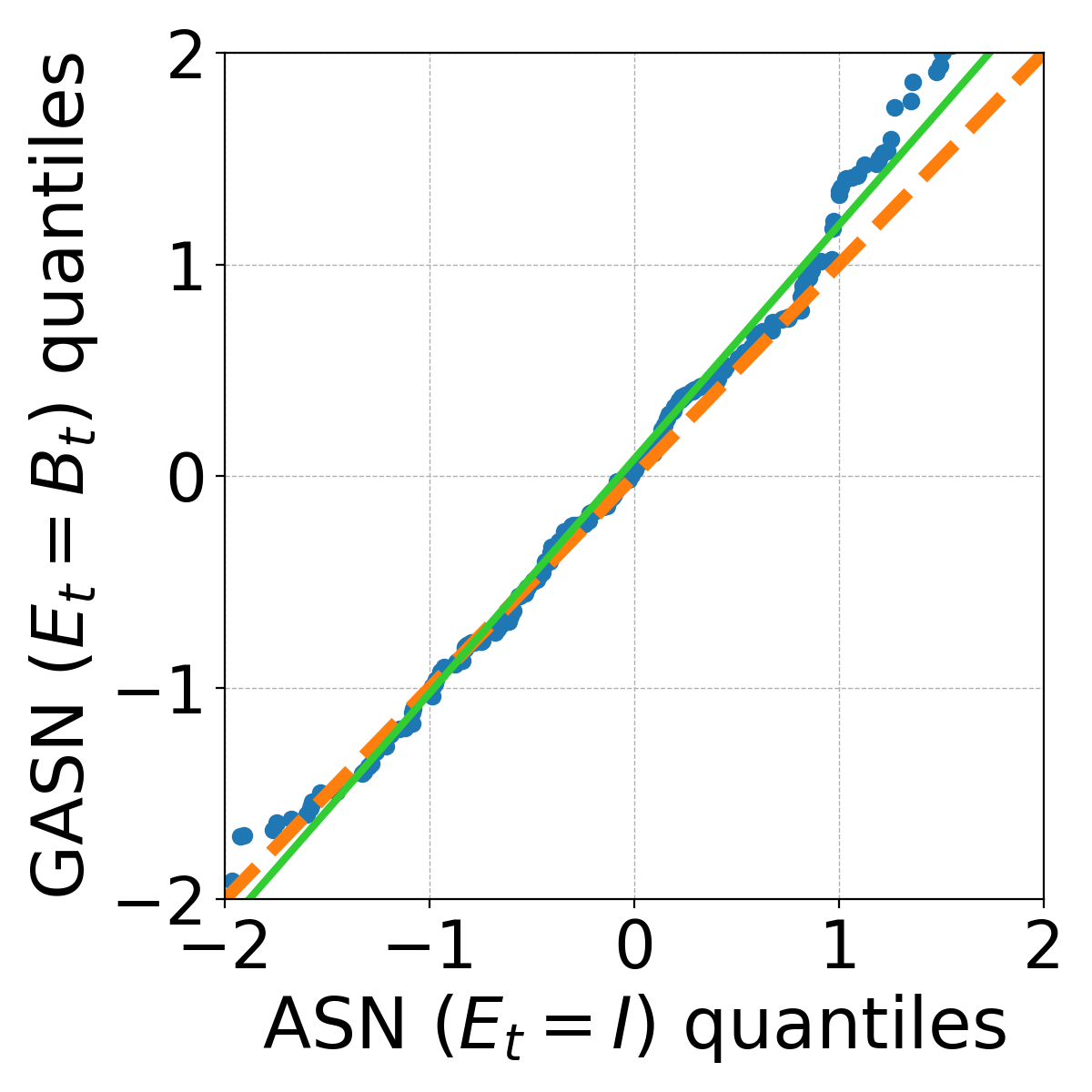}}
\centering{\textbf{Logistic Regression + Coordinate Sketches}}

\subfigure{\includegraphics[width=0.16\textwidth]{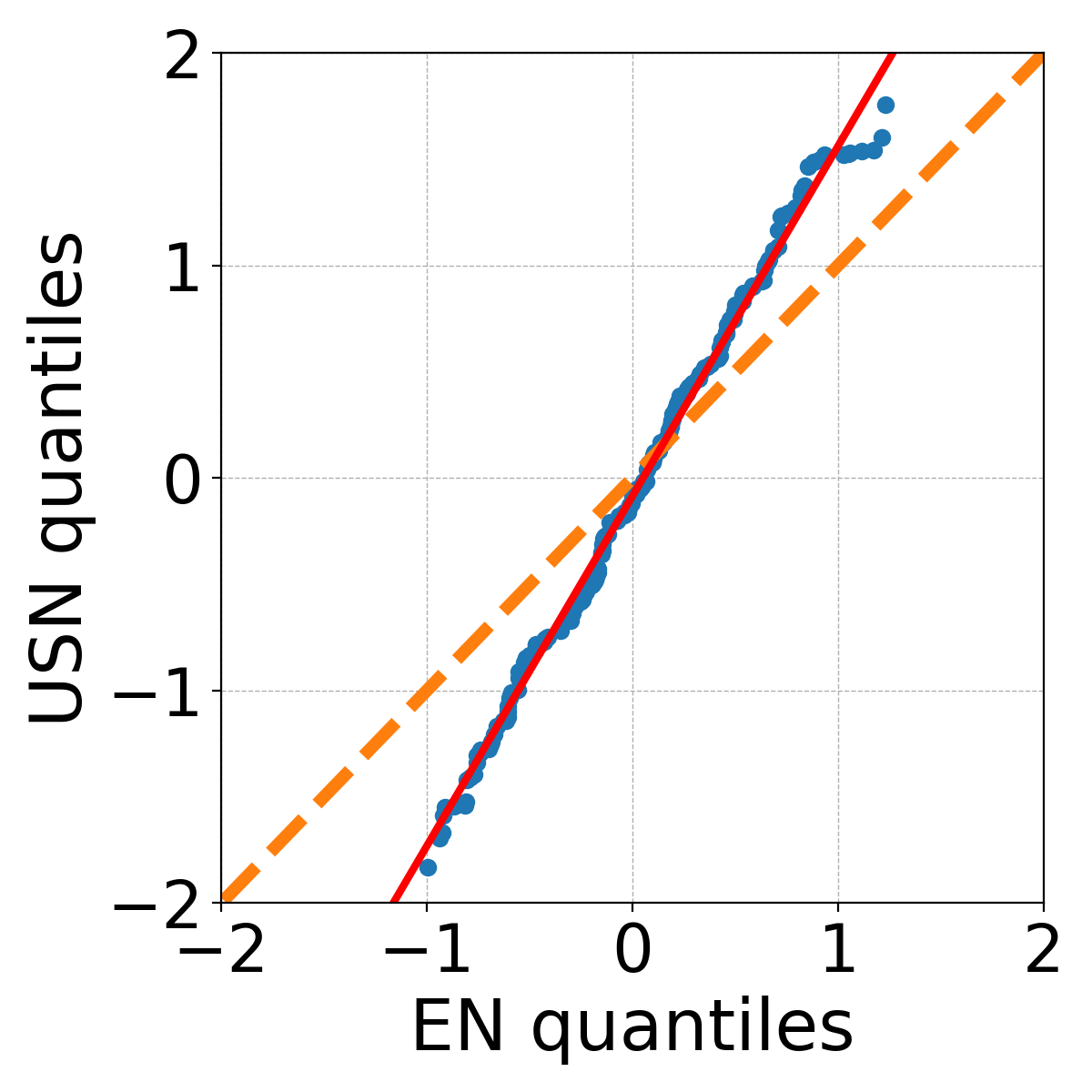}}
\subfigure{\includegraphics[width=0.16\textwidth]{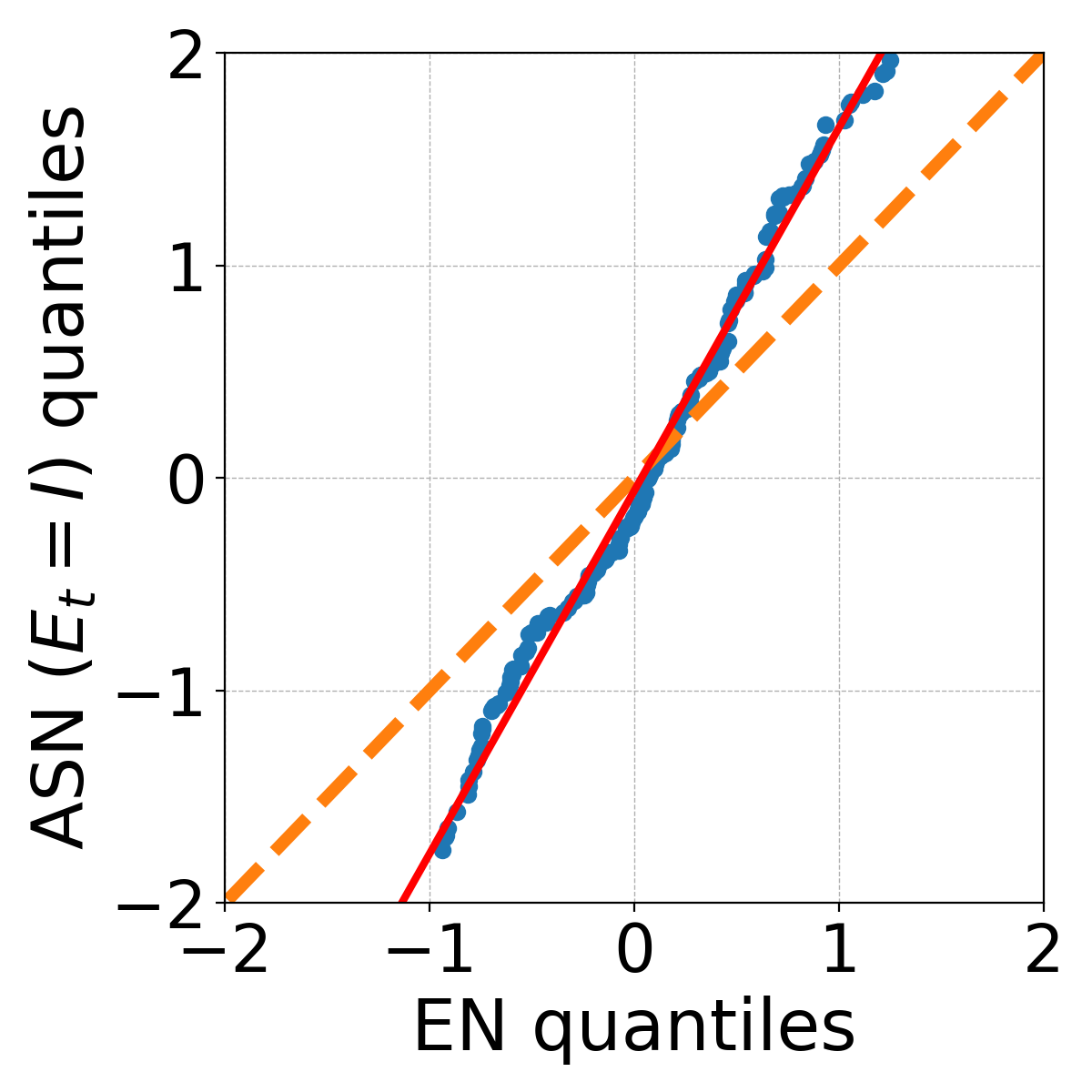}}
\subfigure{\includegraphics[width=0.16\textwidth]{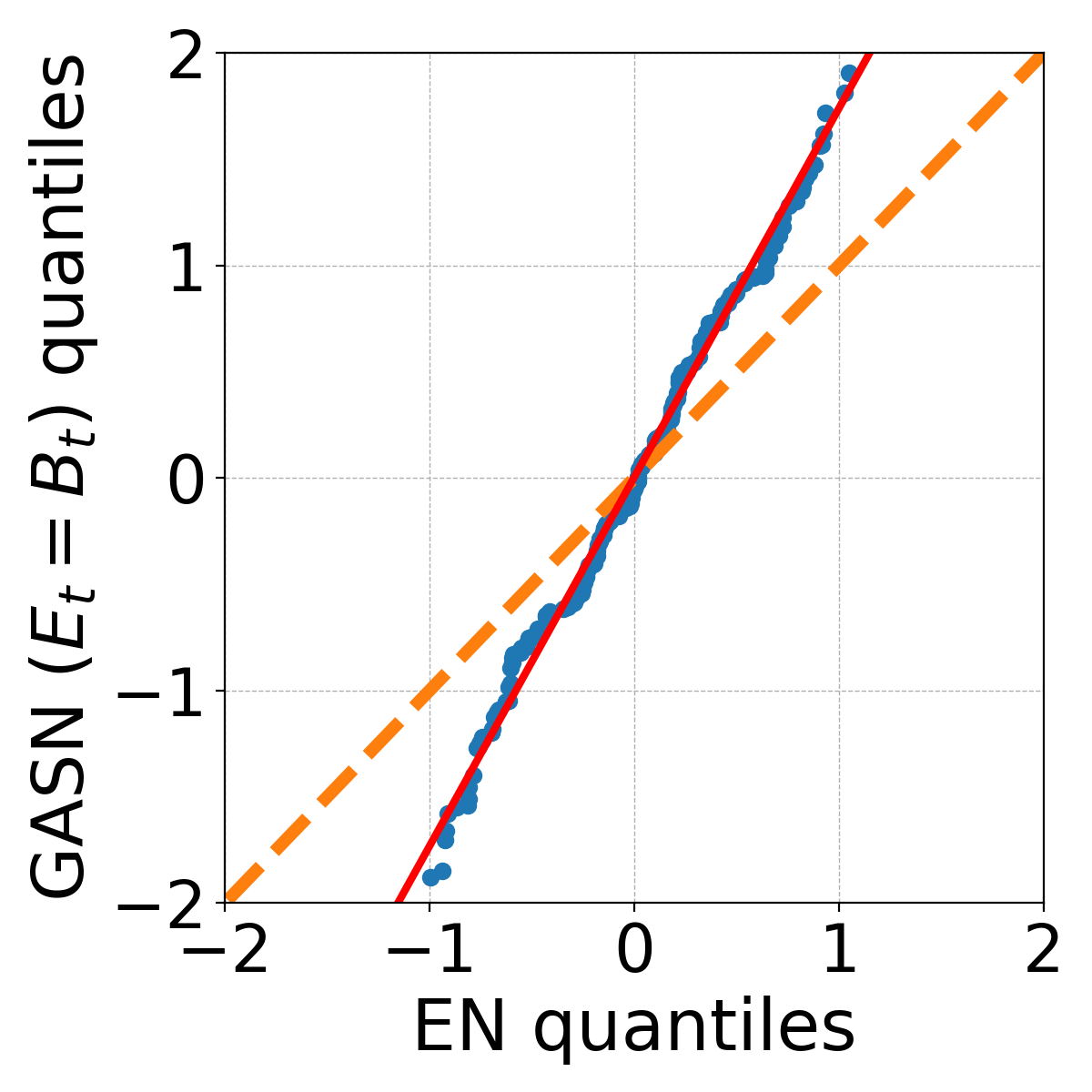}}
\subfigure{\includegraphics[width=0.16\textwidth]{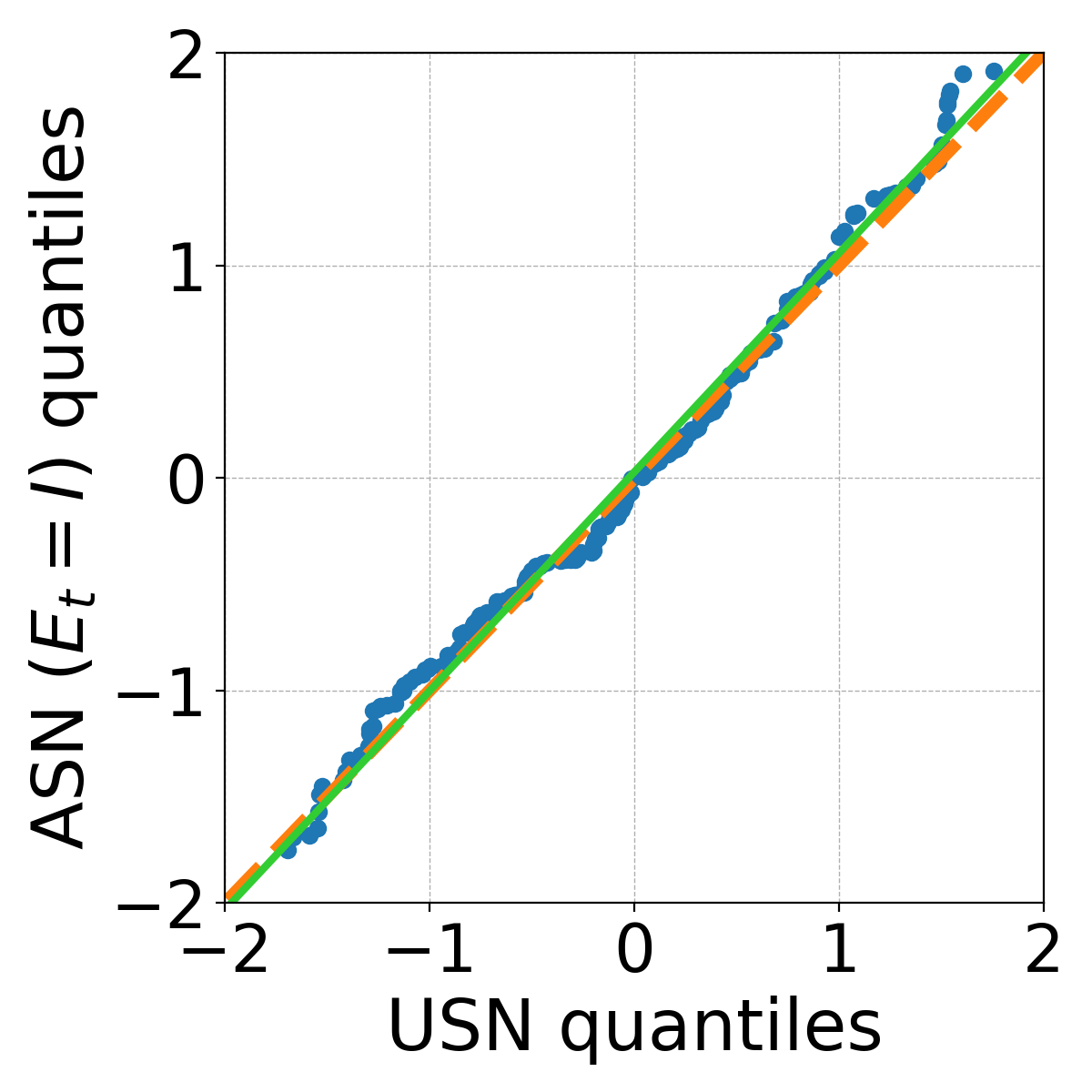}}
\subfigure{\includegraphics[width=0.16\textwidth]{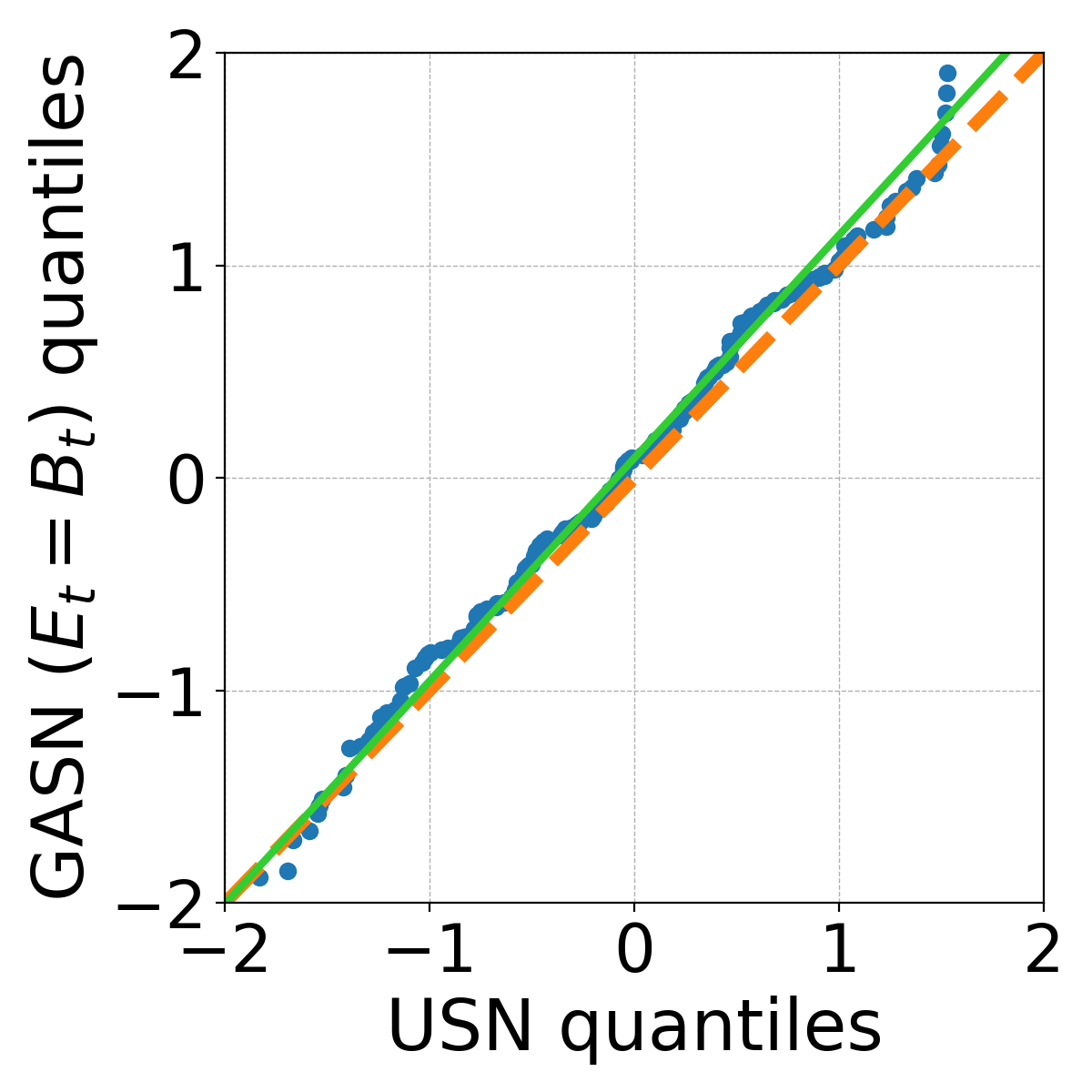}}
\subfigure{\includegraphics[width=0.16\textwidth]{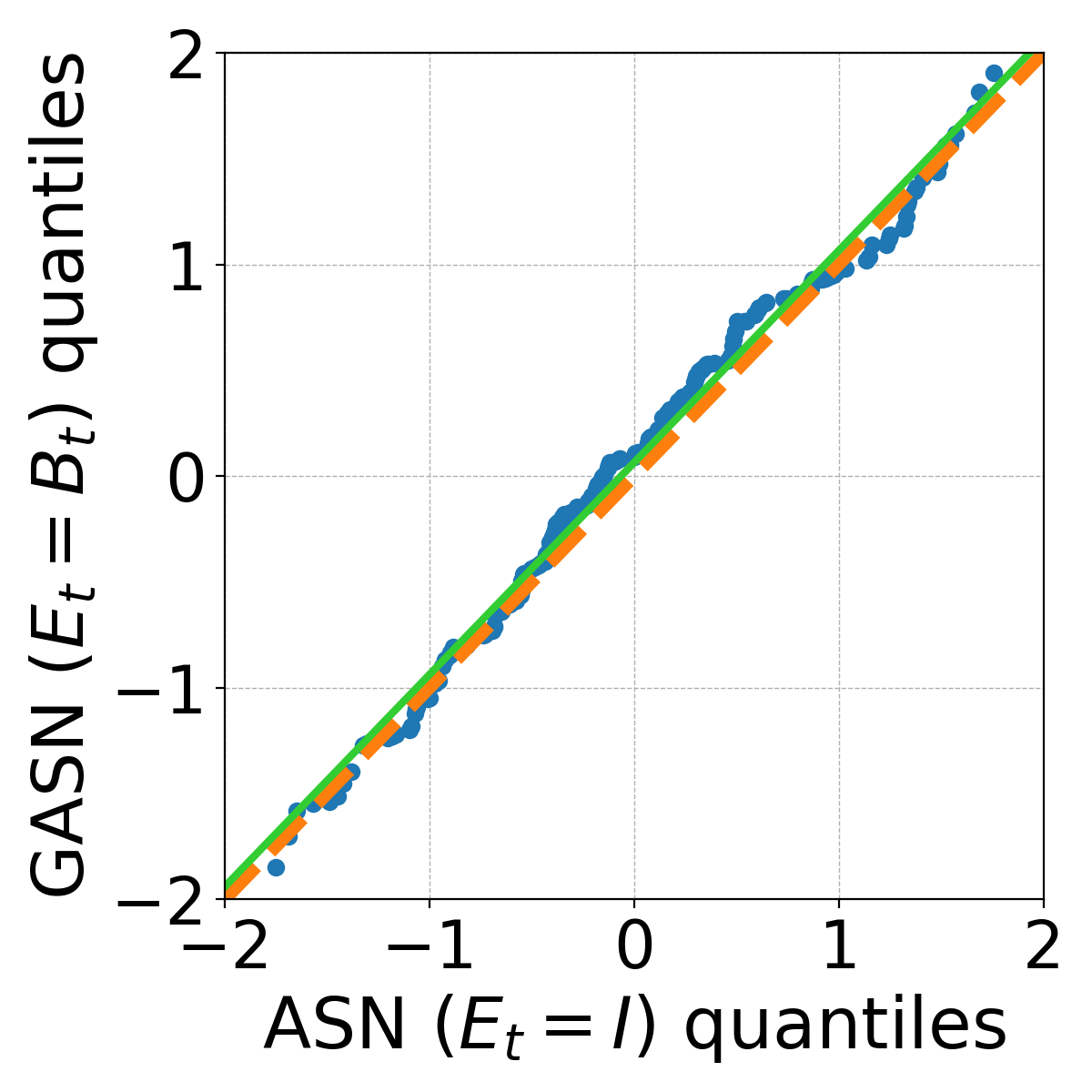}}
\centering{\textbf{Logistic Regression + Gaussian Sketches}}
\vskip5pt
\caption{\textit{QQ plots for four stochastic Newton methods. The orange line denotes the reference line $y = x$. The red line denotes the fitted line through the blue data points.}}\label{fig:qq_logistic}
\end{figure}

%% file: ref.bib
@Article{Abadir1997Two,
  author    = {Abadir, Karim M. and Paruolo, Paolo},
  journal   = {Econometrica},
  title     = {Two Mixed Normal Densities from Cointegration Analysis},
  year      = {1997},
  issn      = {0012-9682},
  month     = may,
  number    = {3},
  pages     = {671--680},
  volume    = {65},
  doi       = {10.2307/2171758},
  publisher = {JSTOR},
}

@Article{Bottou2018Optimization,
  author     = {L{\'{e}}on Bottou and Frank E. Curtis and Jorge Nocedal},
  journal    = {{SIAM} Review},
  title      = {Optimization Methods for Large-Scale Machine Learning},
  year       = {2018},
  issn       = {0036-1445},
  month      = {jan},
  number     = {2},
  pages      = {223--311},
  volume     = {60},
  doi        = {10.1137/16m1080173},
  fjournal   = {SIAM Review},
  mrclass    = {65K05 (68Q25 68T05 90C06 90C30 90C90)},
  mrnumber   = {3797719},
  mrreviewer = {Boualem Alleche},
  publisher  = {Society for Industrial {\&} Applied Mathematics ({SIAM})},
}

@Article{Bercu2020Efficient,
  author    = {Bernard Bercu and Antoine Godichon and Bruno Portier},
  journal   = {{SIAM} Journal on Control and Optimization},
  title     = {An Efficient Stochastic {N}ewton Algorithm for Parameter Estimation in Logistic Regressions},
  year      = {2020},
  month     = {jan},
  number    = {1},
  pages     = {348--367},
  volume    = {58},
  doi       = {10.1137/19m1261717},
  publisher = {Society for Industrial {\&} Applied Mathematics ({SIAM})},
}

@Article{Davis2024Asymptotic,
  author    = {Davis, Damek and Drusvyatskiy, Dmitriy and Jiang, Liwei},
  journal   = {The Annals of Statistics},
  title     = {Asymptotic normality and optimality in nonsmooth stochastic approximation},
  year      = {2024},
  issn      = {0090-5364},
  month     = aug,
  number    = {4},
  volume    = {52},
  doi       = {10.1214/24-aos2401},
  publisher = {Institute of Mathematical Statistics},
  pages     = {1485--1508},
}

@Article{Na2022Hessian,
  author    = {Sen Na and Micha{\l} Derezi{\'{n}}ski and Michael W. Mahoney},
  journal   = {Mathematical Programming},
  title     = {Hessian averaging in stochastic {N}ewton methods achieves superlinear convergence},
  year      = {2023},
  month     = {sep},
  doi       = {10.1007/s10107-022-01913-5},
  publisher = {Springer Science and Business Media {LLC}},
  volume    = {201},
  pages     = {473--520},
}

@Article{Toulis2017Asymptotic,
  author    = {Panos Toulis and Edoardo M. Airoldi},
  journal   = {The Annals of Statistics},
  title     = {Asymptotic and finite-sample properties of estimators based on stochastic gradients},
  year      = {2017},
  month     = {aug},
  number    = {4},
  pages     = {1694--1727},
  volume    = {45},
  doi       = {10.1214/16-aos1506},
  publisher = {Institute of Mathematical Statistics},
}

@Article{Liang2019Statistical,
  author    = {Tengyuan Liang and Weijie J. Su},
  journal   = {Journal of the Royal Statistical Society: Series B (Statistical Methodology)},
  title     = {Statistical inference for the population landscape via moment-adjusted stochastic gradients},
  year      = {2019},
  month     = {feb},
  number    = {2},
  pages     = {431--456},
  volume    = {81},
  doi       = {10.1111/rssb.12313},
  publisher = {Wiley},
}

@TechReport{Ruppert1988Efficient,
  author = {Ruppert, David},
  title     = {Efficient estimations from a slowly convergent {Robbins-Monro} process},
  year   = {1988},
  institution = {Cornell University Operations Research and Industrial Engineering},
}

@Article{Duchi2021Asymptotic,
  author    = {Duchi, John C. and Ruan, Feng},
  journal   = {The Annals of Statistics},
  title     = {Asymptotic optimality in stochastic optimization},
  year      = {2021},
  issn      = {0090-5364},
  month     = feb,
  number    = {1},
  volume    = {49},
  doi       = {10.1214/19-aos1831},
  publisher = {Institute of Mathematical Statistics},
  pages     = {21--48},
}

@Article{Karniadakis2021Physics,
  author    = {George Em Karniadakis and Ioannis G. Kevrekidis and Lu Lu and Paris Perdikaris and Sifan Wang and Liu Yang},
  journal   = {Nature Reviews Physics},
  title     = {Physics-informed machine learning},
  year      = {2021},
  month     = {may},
  number    = {6},
  pages     = {422--440},
  volume    = {3},
  doi       = {10.1038/s42254-021-00314-5},
  publisher = {Springer Science and Business Media {LLC}},
}

@Article{Du2022High,
  author    = {Du, Jin-Hong and Guo, Yifeng and Wang, Xueqin},
  journal   = {Journal of the American Statistical Association},
  title     = {High-Dimensional Portfolio Selection with Cardinality Constraints},
  year      = {2023},
  issn      = {1537-274X},
  number    = {542},
  pages     = {779--791},
  volume    = {118},
  doi       = {10.1080/01621459.2022.2133718},
  publisher = {Informa UK Limited},
}

@Article{Abadir2002Simple,
  author    = {Abadir, Karim M. and Paruolo, Paolo},
  journal   = {Econometrica},
  title     = {Simple Robust Testing of Regression Hypotheses: A Comment},
  year      = {2002},
  issn      = {1468-0262},
  month     = sep,
  number    = {5},
  pages     = {2097--2099},
  volume    = {70},
  doi       = {10.1111/1468-0262.00367},
  publisher = {The Econometric Society},
}

@Article{Chen2020Statistical,
  author    = {Chen, Xi and Lee, Jason D. and Tong, Xin T. and Zhang, Yichen},
  journal   = {The Annals of Statistics},
  title     = {Statistical inference for model parameters in stochastic gradient descent},
  year      = {2020},
  issn      = {0090-5364},
  month     = feb,
  number    = {1},
  volume    = {48},
  doi       = {10.1214/18-aos1801},
  publisher = {Institute of Mathematical Statistics},
  pages     = {251--273},
}

@Book{Duflo2013Random,
  author    = {Duflo, Marie},
  publisher = {Springer Science \& Business Media},
  title     = {Random Iterative Models},
  year      = {2013},
  address   = {Berlin New York},
  isbn      = {9783540571001},
  volume    = {34},
}

@Article{Gower2015Randomized,
  author    = {Gower, Robert M. and Richtárik, Peter},
  journal   = {SIAM Journal on Matrix Analysis and Applications},
  title     = {Randomized Iterative Methods for Linear Systems},
  year      = {2015},
  issn      = {1095-7162},
  month     = jan,
  number    = {4},
  pages     = {1660--1690},
  volume    = {36},
  doi       = {10.1137/15m1025487},
  publisher = {Society for Industrial & Applied Mathematics (SIAM)},
}

@Book{Khalil2002Nonlinear,
  author    = {Khalil, H.K.},
  publisher = {Prentice Hall},
  title     = {Nonlinear Systems},
  year      = {2002},
  isbn      = {9780130673893},
  series    = {Pearson Education},
  lccn      = {95045804},
}

@Article{Kiefer1952Stochastic,
  author    = {Kiefer, J. and Wolfowitz, J.},
  journal   = {The Annals of Mathematical Statistics},
  title     = {Stochastic Estimation of the Maximum of a Regression Function},
  year      = {1952},
  issn      = {0003-4851},
  month     = sep,
  number    = {3},
  pages     = {462--466},
  volume    = {23},
  doi       = {10.1214/aoms/1177729392},
  publisher = {Institute of Mathematical Statistics},
}

@Article{Kiefer2000Simple,
  author    = {Nicholas M. Kiefer and Timothy J. Vogelsang and Helle Bunzel},
  journal   = {Econometrica},
  title     = {Simple Robust Testing of Regression Hypotheses},
  year      = {2000},
  issn      = {00129682, 14680262},
  number    = {3},
  pages     = {695--714},
  volume    = {68},
  publisher = {[Wiley, Econometric Society]},
  urldate   = {2025-02-01},
}

@InProceedings{Lee2022Fast,
  author    = {Lee, Sokbae and Liao, Yuan and Seo, Myung Hwan and Shin, Youngki},
  booktitle = {Proceedings of the AAAI Conference on Artificial Intelligence},
  title     = {Fast and robust online inference with stochastic gradient descent via random scaling},
  year      = {2022},
  pages     = {7381--7389},
  volume    = {36},
  doi       = {10.1609/aaai.v36i7.20701},
}

@Article{Li2023Online,
  author  = {Li, Xiang and Liang, Jiadong and Zhang, Zhihua},
  journal = {arXiv preprint arXiv:2302.07690},
  title     = {Online statistical inference for nonlinear stochastic approximation with {Markovian} data},
  year    = {2023},
}

@Article{Luo2022Covariance,
  author  = {Luo, Yiling and Huo, Xiaoming and Mei, Yajun},
  journal = {arXiv preprint arXiv:2212.01259},
  title     = {Covariance estimators for the {ROOT-SGD} algorithm in online learning},
  year    = {2022},
}

@Article{Polyak1992Acceleration,
  author    = {Polyak, B. T. and Juditsky, A. B.},
  journal   = {SIAM Journal on Control and Optimization},
  title     = {Acceleration of Stochastic Approximation by Averaging},
  year      = {1992},
  issn      = {1095-7138},
  month     = jul,
  number    = {4},
  pages     = {838--855},
  volume    = {30},
  doi       = {10.1137/0330046},
  publisher = {Society for Industrial & Applied Mathematics (SIAM)},
}

@Article{Robbins1951Stochastic,
  author    = {Robbins, Herbert and Monro, Sutton},
  journal   = {The Annals of Mathematical Statistics},
  title     = {A Stochastic Approximation Method},
  year      = {1951},
  issn      = {0003-4851},
  month     = sep,
  number    = {3},
  pages     = {400--407},
  volume    = {22},
  doi       = {10.1214/aoms/1177729586},
  publisher = {Institute of Mathematical Statistics},
}

@InBook{Robbins1971convergence,
  author    = {Robbins, H. and Siegmund, D.},
  pages     = {233--257},
  publisher = {Elsevier},
  title     = {A convergence theorem for non negative almost supermartingales and some applications},
  year      = {1971},
  isbn      = {9780126045505},
  booktitle = {Optimizing Methods in Statistics},
  doi       = {10.1016/b978-0-12-604550-5.50015-8},
}

@Article{Roy2023Online,
  author  = {Roy, Abhishek and Balasubramanian, Krishnakumar},
  journal = {arXiv preprint arXiv:2308.01481},
  title     = {Online covariance estimation for stochastic gradient descent under {Markovian} sampling},
  year    = {2023},
}

@Article{Strohmer2008Randomized,
  author    = {Strohmer, Thomas and Vershynin, Roman},
  journal   = {Journal of Fourier Analysis and Applications},
  title     = {A Randomized {Kaczmarz} Algorithm with Exponential Convergence},
  year      = {2009},
  issn      = {1531-5851},
  month     = apr,
  number    = {2},
  pages     = {262--278},
  volume    = {15},
  doi       = {10.1007/s00041-008-9030-4},
  publisher = {Springer Science and Business Media LLC},
}

@Article{Wei2023Weighted,
  author  = {Wei, Ziyang and Zhu, Wanrong and Wu, Wei Biao},
  journal = {arXiv preprint arXiv:2307.06915},
  title   = {Weighted averaged stochastic gradient descent: Asymptotic normality and optimality},
  year    = {2023},
}

@Article{Chen2024Online,
  author    = {Chen, Xi and Lai, Zehua and Li, He and Zhang, Yichen},
  journal   = {Journal of the American Statistical Association},
  title     = {Online Statistical Inference for Stochastic Optimization via {Kiefer-Wolfowitz} Methods},
  year      = {2024},
  issn      = {1537-274X},
  month     = jan,
  number    = {548},
  pages     = {2972--2982},
  volume    = {119},
  doi       = {10.1080/01621459.2023.2296703},
  publisher = {Informa UK Limited},
}

@Article{Zhu2021Online,
  author    = {Zhu, Wanrong and Chen, Xi and Wu, Wei Biao},
  journal   = {Journal of the American Statistical Association},
  title     = {Online Covariance Matrix Estimation in Stochastic Gradient Descent},
  year      = {2023},
  issn      = {1537-274X},
  number    = {541},
  pages     = {393--404},
  volume    = {118},
  doi       = {10.1080/01621459.2021.1933498},
  publisher = {Informa UK Limited},
}

@Book{Hall2014Martingale,
  author    = {Hall, Peter and Heyde, Christopher C},
  publisher = {Academic Press},
  title     = {Martingale Limit Theory and Its Application},
  year      = {2014},
}

@Article{Na2025Statistical,
  author  = {Na, Sen and Mahoney, Michael},
  journal = {Journal of Machine Learning Research},
  title   = {Statistical inference of constrained stochastic optimization via sketched sequential quadratic programming},
  year    = {2025},
  number  = {33},
  pages   = {1--75},
  volume  = {26},
}

@InProceedings{Jiang2025Online,
  title = 	 {Online Covariance Estimation in Nonsmooth Stochastic Approximation},
  author =       {Jiang, Liwei and Roy, Abhishek and Balasubramanian, Krishnakumar and Davis, Damek and Drusvyatskiy, Dmitriy and Na, Sen},
  booktitle = 	 {Proceedings of Thirty Eighth Conference on Learning Theory},
  pages = 	 {3079--3123},
  year = 	 {2025},
  publisher =    {PMLR}
}

@Article{Kuang2025Online,
  author    = {Kuang, Wei and Anitescu, Mihai and Na, Sen},
  journal   = {Information and Inference: A Journal of the IMA},
  title     = {Online covariance matrix estimation in sketched {Newton} methods},
  year      = {2026},
  issn      = {2049-8772},
  month     = {jun},
  number    = {2},
  volume    = {15},
  doi       = {10.1093/imaiai/iaag012},
  publisher = {Oxford University Press (OUP)},
  pages     = {iaag012},
}

@Article{Leluc2023Asymptotic,
  author  = {Leluc, R{\'e}mi and Portier, Fran{\c{c}}ois},
  journal = {Transactions on Machine Learning Research},
  title   = {Asymptotic analysis of conditioned stochastic gradient descent},
  year    = {2023},
}

@Article{Cenac2025efficient,
  author    = {Cénac, Peggy and Godichon-Baggioni, Antoine and Portier, Bruno},
  journal   = {Bernoulli},
  title     = {An efficient averaged stochastic {Gauss-Newton} algorithm for estimating parameters of nonlinear regressions models},
  year      = {2025},
  issn      = {1350-7265},
  month     = feb,
  number    = {1},
  volume    = {31},
  doi       = {10.3150/23-bej1637},
  publisher = {Bernoulli Society for Mathematical Statistics and Probability},
  pages     = {1--29},
}

@InProceedings{Gower2018Accelerated,
  title     = {Accelerated stochastic matrix inversion: general theory and speeding up {BFGS} rules for faster second-order optimization},
  author={Gower, Robert and Hanzely, Filip and Richt{\'a}rik, Peter and Stich, Sebastian U},
  booktitle={Advances in Neural Information Processing Systems},
  volume={31},
  year={2018}
}

@Book{Nesterov2018Lectures,
  author    = {Nesterov, Yurii},
  publisher = {Springer International Publishing},
  title     = {Lectures on Convex Optimization},
  year      = {2018},
  isbn      = {9783319915784},
  doi       = {10.1007/978-3-319-91578-4},
  issn      = {1931-6836},
  journal   = {Springer Optimization and Its Applications},
}

@Book{Bhatia1997Matrix,
  author    = {Bhatia, Rajendra},
  publisher = {Springer New York},
  title     = {Matrix Analysis},
  year      = {1997},
  isbn      = {9781461206538},
  doi       = {10.1007/978-1-4612-0653-8},
  issn      = {0072-5285},
  journal   = {Graduate Texts in Mathematics},
}

@Article{Du2025Online,
  author  = {Du, Xinchen and Zhu, Wanrong and Wu, Wei Biao and Na, Sen},
  journal = {arXiv preprint arXiv:2505.18327},
  title   = {Online Statistical Inference of Constrained Stochastic Optimization via Random Scaling},
  year    = {2025},
}

@Article{Derezinski2025Fine,
  author  = {Derezi{\'n}ski, Michal and LeJeune, Daniel and Needell, Deanna and Rebrova, Elizaveta},
  journal = {Journal of Machine Learning Research},
  title   = {Fine-grained analysis and faster algorithms for iteratively solving linear systems},
  year    = {2025},
  number  = {144},
  pages   = {1--49},
  volume  = {26},
}

@InProceedings{McMahan2013Ad,
  author    = {McMahan, H Brendan and Holt, Gary and Sculley, David and Young, Michael and Ebner, Dietmar and Grady, Julian and Nie, Lan and Phillips, Todd and Davydov, Eugene and Golovin, Daniel and others},
  booktitle = {Proceedings of the 19th ACM SIGKDD International Conference on Knowledge Discovery and Data Mining},
  title     = {Ad click prediction: a view from the trenches},
  year      = {2013},
  pages     = {1222--1230},
}

@InProceedings{Li2010Contextual,
  author    = {Li, Lihong and Chu, Wei and Langford, John and Schapire, Robert E},
  booktitle = {Proceedings of the 19th International Conference on World Wide Web},
  title     = {A contextual-bandit approach to personalized news article recommendation},
  year      = {2010},
  pages     = {661--670},
}

@Article{Bertsekas2000Gradient,
  author    = {Bertsekas, Dimitri P and Tsitsiklis, John N},
  journal   = {SIAM Journal on Optimization},
  title     = {Gradient convergence in gradient methods with errors},
  year      = {2000},
  number    = {3},
  pages     = {627--642},
  volume    = {10},
  publisher = {SIAM},
}

@InProceedings{Moulines2011Non,
  author    = {Moulines, Eric and Bach, Francis R.},
  booktitle = {Advances in Neural Information Processing Systems},
  title   = {Non-asymptotic analysis of stochastic approximation algorithms for machine learning},
  year    = {2011},
  volume  = {24},
}

@Article{Athey2021Policy,
  author    = {Athey, Susan and Wager, Stefan},
  journal   = {Econometrica},
  title     = {Policy Learning With Observational Data},
  year      = {2021},
  issn      = {0012-9682},
  number    = {1},
  pages     = {133--161},
  volume    = {89},
  doi       = {10.3982/ecta15732},
  publisher = {The Econometric Society},
}

@Article{Kleinberg2017Human,
  author    = {Kleinberg, Jon and Lakkaraju, Himabindu and Leskovec, Jure and Ludwig, Jens and Mullainathan, Sendhil},
  journal   = {The Quarterly Journal of Economics},
  title     = {Human Decisions and Machine Predictions},
  year      = {2018},
  issn      = {1531-4650},
  month     = {feb},
  number    = {1},
  pages     = {237--293},
  volume    = {133},
  doi       = {10.1093/qje/qjx032},
  publisher = {Oxford University Press (OUP)},
}

@Article{Psaros2023Uncertainty,
  author    = {Psaros, Apostolos F. and Meng, Xuhui and Zou, Zongren and Guo, Ling and Karniadakis, George Em},
  journal   = {Journal of Computational Physics},
  title     = {Uncertainty quantification in scientific machine learning: Methods, metrics, and comparisons},
  year      = {2023},
  issn      = {0021-9991},
  month     = Mar,
  pages     = {111902},
  volume    = {477},
  doi       = {10.1016/j.jcp.2022.111902},
  publisher = {Elsevier BV},
}

@Article{Garcia2015Comprehensive,
  author    = {Garc{\'i}a, Javier and Fern{\'a}ndez, Fernando},
  journal = {Journal of Machine Learning Research},
  title   = {A comprehensive survey on safe reinforcement learning},
  year    = {2015},
  number  = {42},
  pages   = {1437--1480},
  volume  = {16},
}

@Article{Brunke2022Safe,
  author    = {Brunke, Lukas and Greeff, Melissa and Hall, Adam W. and Yuan, Zhaocong and Zhou, Siqi and Panerati, Jacopo and Schoellig, Angela P.},
  journal   = {Annual Review of Control, Robotics, and Autonomous Systems},
  title     = {Safe Learning in Robotics: From Learning-Based Control to Safe Reinforcement Learning},
  year      = {2022},
  issn      = {2573-5144},
  month     = May,
  number    = {1},
  pages     = {411--444},
  volume    = {5},
  doi       = {10.1146/annurev-control-042920-020211},
  publisher = {Annual Reviews},
}

@Article{Gottesman2019Guidelines,
  author    = {Gottesman, Omer and Johansson, Fredrik and Komorowski, Matthieu and Faisal, Aldo and Sontag, David and Doshi-Velez, Finale and Celi, Leo Anthony},
  journal   = {Nature Medicine},
  title     = {Guidelines for reinforcement learning in healthcare},
  year      = {2019},
  issn      = {1546-170X},
  month     = Jan,
  number    = {1},
  pages     = {16--18},
  volume    = {25},
  doi       = {10.1038/s41591-018-0310-5},
  publisher = {Springer Science and Business Media LLC},
}

@article{Ruder2016Overview,
  title={An overview of gradient descent optimization algorithms},
  author={Ruder, Sebastian},
  journal={arXiv preprint arXiv:1609.04747},
  year={2016}
}

@Article{GodichonBaggioni2025Adaptive,
  author  = {Godichon-Baggioni, Antoine and Werge, Nicklas},
  journal = {Journal of Machine Learning Research},
  title     = {On Adaptive Stochastic Optimization for Streaming Data: A {Newton}'s Method with {$O(dN)$} Operations},
  year    = {2025},
  number  = {59},
  pages   = {1--49},
  volume  = {26},
}

@InProceedings{ShalevShwartz2009Stochastic,
  author    = {Shalev-Shwartz, Shai and Shamir, Ohad and Srebro, Nathan and Sridharan, Karthik},
  booktitle = {Proceedings of the 22nd Annual Conference on Learning Theory},
  title     = {Stochastic Convex Optimization},
  year      = {2009},
  url       = {https://www.learningtheory.org/colt2009/papers/018.pdf},
}

@InProceedings{Hajek1972Local,
  author    = {H{\'a}jek, Jaroslav},
  booktitle = {Proceedings of the Sixth Berkeley Symposium on Mathematical Statistics and Probability},
  title     = {Local asymptotic minimax and admissibility in estimation},
  year      = {1972},
  pages     = {175--194},
  volume    = {1},
}

@InProceedings{LeCam1972Limits,
  author       = {Le Cam, Lucien},
  booktitle    = {Proceedings of the Sixth Berkeley Symposium on Mathematical Statistics and Probability},
  title        = {Limits of experiments},
  year         = {1972},
  organization = {University of California Press Berkeley-Los Angeles},
  pages        = {245--261},
  volume       = {1},
}

@Article{Wang2026Inference,
  author  = {Wang, Haoxuan and Du, Xinchen and Na, Sen},
  journal = {arXiv preprint arXiv:2604.23436},
  title     = {Inference of Online {Newton} Methods with {Nesterov}'s Accelerated Sketching},
  year    = {2026},
}

@Article{Gao2025Online,
  author  = {Gao, Yihang and Ng, Michael K and Mahoney, Michael W and Na, Sen},
  journal = {arXiv preprint arXiv:2512.08948},
  title   = {Online Inference of Constrained Optimization: Primal-Dual Optimality and Sequential Quadratic Programming},
  year    = {2025},
}

@Book{Vaart1996Weak,
author = {van der Vaart, Aad W. and Wellner, Jon A.},
publisher = {Springer New York},
title = {Weak Convergence and Empirical Processes},
year = {1996},
isbn = {9781475725452},
doi = {10.1007/978-1-4757-2545-2},
issn = {0172-7397},
journal = {Springer Series in Statistics},
}

@Misc{Blackard1998Covertype,
  author       = {Blackard, Jock},
  title        = {{Covertype}},
  year         = {1998},
  howpublished = {UCI Machine Learning Repository},
}
